\documentclass[10pt]{article}

\usepackage{fullpage}
\usepackage{mathpazo}
\usepackage[scaled=.90]{helvet}
\usepackage{inconsolata}
\usepackage[T1]{fontenc}
\usepackage[utf8]{inputenc}
\usepackage[dvipsnames]{xcolor}
\usepackage[utf8]{inputenc}   %
\usepackage[T1]{fontenc}      %

\usepackage{titletoc}

\usepackage{amsmath}
\usepackage{amssymb}
\usepackage{amsthm}
\usepackage{mathtools}
\usepackage{mathrsfs}

\usepackage{graphicx}
\usepackage{subcaption}
\usepackage{float}            %

\usepackage{nicefrac}
\usepackage{microtype}        %
\usepackage{xcolor}
\usepackage{mdframed}

\usepackage{enumitem}
\usepackage[colorinlistoftodos]{todonotes}
\usepackage{algorithm}
\usepackage{algorithmic}

\usepackage{booktabs}
\usepackage{adjustbox}
\usepackage{multirow}
\usepackage{natbib}
\usepackage{thmtools,thm-restate}

\definecolor{mydarkblue}{rgb}{0,0.08,0.45}
\usepackage[colorlinks=true,
	linkcolor=mydarkblue,
	citecolor=mydarkblue,
	filecolor=mydarkblue,
	urlcolor=mydarkblue,
	pagebackref=true]{hyperref}
\usepackage[capitalize, noabbrev]{cleveref}
\renewcommand*\backref[1]{\ifx#1\relax \else (cited on #1) \fi}

\declaretheorem[name=Theorem]{theorem}
\declaretheorem[name=Proposition]{proposition}
\declaretheorem[name=Assumption]{assumption}
\declaretheorem[name=Corollary]{corollary}

\declaretheorem[name=Lemma]{lemma}
\declaretheorem[name=Problem]{problem}

\crefname{theorem}{Theorem}{Theorems}
\crefname{proposition}{Proposition}{Propositions}
\crefname{assumption}{Assumption}{Assumptions}
\crefname{corollary}{Corollary}{Corollaries}
\crefname{remark}{Remark}{Remarks}
\crefname{definition}{Definition}{Definitions}
\crefname{lemma}{Lemma}{Lemmas}
\crefname{problem}{Problem}{Problems}

\crefname{ALC@unique}{line}{lines}
\Crefname{ALC@unique}{Line}{Lines}

\newcommand{\red}[1]{\textcolor{red}{#1}}
\newcommand{\green}[1]{\textcolor{olive}{#1}}

\def\1{\bm{1}}

\newcommand{\pistarl}{\pi^*_{\lambda, \tau}}
\newcommand{\pistarlmu}{\pi^*_{\lambda, \mu, \tau}}
\newcommand{\pistarbl}{\pi^*_{\bar{\lambda}, \tau}}
\newcommand{\pistarblT}{\pi^*_{\bar{\lambda}_T, \tau}}
\newcommand{\pistarlp}{\pi^*_{\lambda', \tau}}
\newcommand{\pistargtau}{\pi^*_{\lambda,\tau}}

\newcommand{\gammat}{R_t}
\newcommand{\gammatt}{R_{t+1}}
\newcommand{\pistart}{\pi^*_t}
\newcommand{\pistartt}{\pi^*_{t+1}}
\newcommand{\lag}{\cL_{\tau}}
\newcommand{\dg}{\nabla \cD(\lambda)}
\newcommand{\dgt}{\nabla \cD(\lambda_t)}
\newcommand{\bdg}{\widetilde{\nabla} \cD(\lambda)}
\newcommand{\bdgt}{\widetilde{\nabla} \cD(\lambda_t)}
\newcommand{\bFgt}{\widetilde{\nabla} \F(\lambda_t)}

\newcommand{\lt}{\lambda_t}
\newcommand{\ltt}{\lambda_{t+1}}

\newcommand{\betat}{\beta_t}

\newcommand{\blambda}{\bar{\lambda}}

\newcommand{\projL}{\text{Proj}_{\Lambda}}

\newcommand{\opt}{\mathrm{opt}}
\newcommand{\bias}{\mathrm{bias}}

\newcommand\numberthis{\addtocounter{equation}{1}\tag{\theequation}}
\newcommand{\saom}{\mu^{\pi}}

\newcommand{\KL}{\mathrm{KL}}

\def\pistar{\pi^*}

\DeclarePairedDelimiter\parens{\lparen}{\rparen}
\DeclarePairedDelimiter\abs{\lvert}{\rvert}
\DeclarePairedDelimiter\norm{\lVert}{\rVert}
\DeclarePairedDelimiter\braces{\lbrace}{\rbrace} %
\DeclarePairedDelimiter\bracks{\lbrack}{\rbrack} %
\DeclarePairedDelimiter\angles{\langle}{\rangle}

\def\dpd#1{{\angles*{ #1 }}}
\def\eps{{\epsilon}}

\def\supnorm#1{\norm*{ #1 }_\infty}
\def\normsq#1{{\norm*{ #1 }^2_2}}

\DeclareMathAlphabet{\mathsfit}{\encodingdefault}{\sfdefault}{m}{sl}
\SetMathAlphabet{\mathsfit}{bold}{\encodingdefault}{\sfdefault}{bx}{n}

\def\gC{{\mathcal{C}}}

\def\gE{{\mathcal{E}}}

\def\gH{{\mathcal{H}}}

\def\gK{{\mathcal{K}}}

\def\gO{{\mathcal{O}}}

\def\gS{{\mathcal{S}}}
\def\gT{{\mathcal{T}}}

\newcommand{\E}{\mathbb{E}}

\newcommand{\R}{\mathbb{R}}

\DeclareMathOperator*{\argmax}{arg\,max}
\DeclareMathOperator*{\argmin}{arg\,min}

\def\cA{{\mathcal{A}}}

\def\cS{{\mathcal{S}}}

\def\cT{{\mathcal{T}}}

\def\cD{{\mathcal{D}_{\tau}}}
\def\cP{{\mathcal{P}}}

\newcommand{\x}{x}

\newcommand{\cC}{\mathcal{C}}
\newcommand{\cL}{\mathcal{L}}

\newcommand{\lstar}{\lambda^*_\tau}
\newcommand{\lstarmu}{\lambda^*_{\tau, \mu}}

\newcommand{\F}{\mathcal{D}_{\tau, \mu}}

\begin{document}
\title{Last-Iterate Convergence for Constrained MDPs: \\ From Dual Descent to Primal-Dual}
\author{
    \begin{tabular}{cc}
        Michael Lu & Max Qiushi Lin \\
        Simon Fraser University & Simon Fraser University \\
        \texttt{michael\_lu\_3@sfu.ca} & \texttt{maxqslin@gmail.com} \\[1em]
        Mo Chen & Sharan Vaswani \\
        Canada CIFAR AI Chair, Amii, 
        & Simon Fraser University \\ Simon Fraser University & \texttt{vaswani.sharan@gmail.com} \\
        \texttt{mochen@cs.sfu.ca} & 
    \end{tabular}
}

\date{}
\maketitle

\begin{abstract}
We study policy optimization for infinite-horizon, discounted constrained Markov decision processes. While primal-dual ($\texttt{PD}$) methods are widely used in practice, their theoretical guarantees typically hold for an impractical mixture policy. In contrast, dual descent (\texttt{DD}) methods output a single policy  (``last iterate'') with guarantees on its performance. However, such methods require iteratively solving entropy-regularized MDPs in an inner-loop, limiting their practical effectiveness. Moreover, existing analyses rely on strong coverage assumptions that are difficult to satisfy in practice. Consequently, we first provide a new analysis of $\texttt{DD}$ methods that does not require coverage assumptions. We then employ entropy-regularized natural policy gradient and dual regularization to eliminate the inner loop, resulting in an efficient \texttt{PD} algorithm. We develop a novel tracking analysis which shows that, compared to existing methods, the proposed algorithm achieves improved last-iterate convergence guarantees. Furthermore, our theoretical framework allows us to analyze a truncated bilevel method that interpolates between $\texttt{DD}$ and $\texttt{PD}$ methods. Finally, we generalize our analysis to handle linear function approximation. Compared to prior last-iterate methods, we prove stronger guarantees under less restrictive assumptions.
\end{abstract}

\section{Introduction}
Reinforcement learning (RL) has found success across a wide range of applications, including video games~\citep{mnih2015human}, robotics~\citep{kober2013reinforcement}, and the alignment of large language models~\citep{uc2023survey}.
Standard RL algorithms are typically designed to maximize a single objective.  However, in many real-world applications, operational or safety constraints must also be considered.
For example, in wireless communication systems, a network controller seeks to maximize overall throughput while ensuring that the average connectivity exceeds a target rate~\citep{julian2002qos,buratti2009overview}.
Such problems can be modeled as a constrained Markov decision process (CMDP)~\citep{altman2021constrained}.
In this setting, the goal of an agent is to maximize the cumulative reward (e.g. wireless throughput) while ensuring that the cumulative constraint reward (e.g. connectivity rate) is above a prescribed threshold.
However, since an optimal policy of a CMDP is, in general, stochastic~\citep{altman2021constrained}, designing principled algorithms that efficiently output a stochastic policy remains a central challenge.

A common approach to solving CMDPs are primal-dual (\texttt{PD}) methods~\citep{jain2022towards,liu2025sample,ding2020natural,bura2022dope}. 
These methods construct the Lagrangian of the CMDP and then iteratively optimize the policy and the associated dual variable (Lagrange multipliers). 
Their ease of implementation makes them appealing to practitioners. 
However the final policy (normally used in practice) exhibits an undesirable oscillating behaviour (see~\cref{fig:bandit}).
From a theoretical perspective, convergence guarantees of $\texttt{PD}$ methods are typically limited to those for impractical \emph{mixture policies}.
Constructing such a mixture policy requires storing all intermediate policies, a memory intensive approach. 
Furthermore, deploying a mixture policy requires randomly selecting one of these policies.
As a result, the constraint guarantees hold only in expectation over this random selection, rather than for the individual policy that is ultimately deployed.
\begin{figure}[!h]
    \centering
    \includegraphics[scale=1.2]{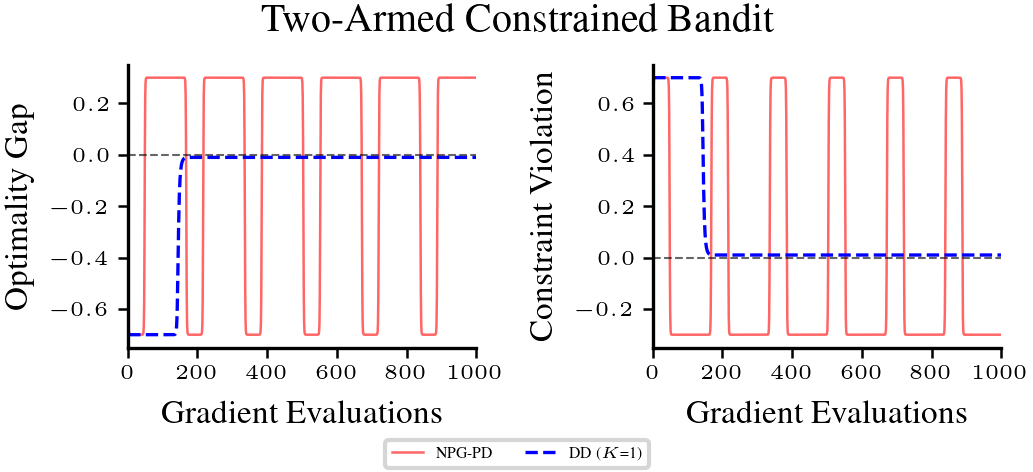}
    \caption{Comparing a \texttt{PD} approach (\texttt{NPG-PD})~\citep{ding2020natural} to the proposed \texttt{DD} method (\texttt{DD(K=1)}) on a 2-armed constrained bandit problem. While the iterates of \texttt{NPG-PD} oscillate, \texttt{DD(K=1)} converges to zero optimality gap and constraint violation. See \cref{appendix:experiments_bandit} for details.}
    \label{fig:bandit}
\end{figure}

\begin{table*}[h!]
\centering
\small
\begin{adjustbox}{width=1.0\textwidth,center}
\begin{tabular}{lcccc}
\toprule
Method &
Number of Constraints &
Rate &
Coverage Assumption &
Other Assumptions \& Caveats \\
\midrule
\multicolumn{5}{l}{\textit{Tabular Policy Parameterization}} \\
\midrule
\citet[Corollary 5.4]{ying2022dual}  & Multiple & $\tilde{\gO}(\nicefrac{1}{\epsilon^2})$ & $\green{\checkmark}$ & Bilevel method \\
\citet[Corollary 6.2]{ying2022dual} & Single & $\gO(\ln(\nicefrac{1}{\epsilon}))$ & $\green{\checkmark}$ & Bilevel method \\
\citet[Corollary 3]{ding2023last} & Single & $\tilde{\gO}(\nicefrac{1}{\epsilon^6})$ & $\green{\checkmark}$ & Requires  projections onto a restricted simplex \\
\citet[Corollary 7]{ding2023last}  & Single & $\gO( \ln(\nicefrac{1}{\epsilon}))$ & $\green{\checkmark}$ & Rate depends on potentially unbounded constants \\
\citet[Corollary 4.5]{gladin2023algorithm} & Multiple & $\gO(\ln(\nicefrac{1}{\epsilon}))$ & $\green{\checkmark}$ & Bilevel method, rate holds only when $\eps$ is very small \\
\citet[Corollary 1]{lu2026augmented}  & Multiple & $\gO(\nicefrac{1}{\epsilon^6})$ & $\green{\checkmark}$ & Bilevel method \\
\citet[Theorem 3.2]{montenegro2024last} & Multiple & $\gO(\nicefrac{1}{\epsilon^2})$ & $\green{\checkmark}$ & Rate depends on potentially unbounded constants \\
\citet[Theorem 3.7]{moskovitz2023reload}   & Multiple & -- & $\red{\times}$ & Rate unspecified \\
\citet[Theorem 4.1]{muller2024truly}  & Multiple & $\gO(\nicefrac{\text{poly}(A, H, n)}{\epsilon^{10}})$ & $\red{\times}$ & Limited to the finite horizon setting \\
\textbf{\cref{theorem:reg_smoothness_l2_rate}} (\textbf{ours}) & Multiple & $\tilde{\gO}(\nicefrac{1}{\epsilon^2})$ & $\red{\times}$ & Bilevel method \\
\textbf{\cref{theorem:l2_last_iterate_K_1}} (\textbf{ours}) & Multiple & $\tilde{\gO}(\nicefrac{1}{\epsilon^5})$ & $\red{\times}$ & -- \\
\textbf{\cref{theorem:l2_last_iterate_K_geq_1}} (\textbf{ours}) & Multiple & $\tilde{\gO}(\nicefrac{1}{\epsilon^{3.5}})$ & $\red{\times}$ & -- \\
\midrule
\multicolumn{5}{l}{\textit{General Policy Parameterization}} \\
\midrule
\citet[Theorem 3.2]{montenegro2024last} & Multiple & $\gO(\nicefrac{1}{\epsilon^6})$ & $\green{\checkmark}$ & \texttt{CFA}, \texttt{FND}, $\epsilon_{\text{bias}}$ depends on an \texttt{FND} constant \\
\citet[Corollary 2]{lu2026augmented} & Multiple & $\gO(\nicefrac{1}{\epsilon^6})$ & $\green{\checkmark}$ & Log-linear policy, $\epsilon_{\text{bias}} \leq \gO(\epsilon^{16})$ \\
\citet[Corollary 5]{ding2023last} & Single & $\gO(\nicefrac{1}{\epsilon^6})$ & $\red{\times}$ & Log-linear policy, \texttt{CFA}, $\epsilon_{\text{bias}} \leq \gO(\epsilon^8)$ \\
\citet[Theorem 1]{mondal2026lastiterate} & Single & $\gO(\nicefrac{1}{\epsilon^4})$ & $\red{\times}$ & \texttt{CFA}, \texttt{FND}, $\epsilon_{\text{bias}} \leq \gO(\eps^6)$ \\
\textbf{\cref{corollary:smooth_l2_last_iterate_K_geq_1_fa}} (\textbf{ours}) & Multiple & $\tilde{\gO}(\nicefrac{1}{\epsilon^{3.5}})$ & $\red{\times}$ & Log-linear policy, $\epsilon_{\text{bias}} \leq \gO(\epsilon^{4})$ \\
\bottomrule
\end{tabular}
\end{adjustbox}
\caption{Comparison of prior last-iterate guarantees. 
To enable a fair comparison, the convergence rate is characterized in terms of the number of policy gradient evaluations.
The coverage assumption requires that initial state distribution has full support, i.e.\ $\min_s \rho(s) > 0$. 
A bilevel methods refers to two-time scale method in which the inner sub-problem must be solved to an sufficient accuracy.
With function approximation, $\epsilon_{\text{bias}}$ denotes the irreducible approximation error of the restricted policy class (i.e.~\cref{assumption:q_realizability}), \texttt{CFA} denotes the requirement of compatible function approximation framework~\citep{agarwal2021theory} and \texttt{FND} denotes the analysis requires the Fisher non-degenerate assumption~\citep{zhang2021convergence}. Note the \texttt{FND} assumption does not hold for log-linear policies~\cite[Proposition C.4]{tingting2025safe}.}\label{table:last}
\end{table*}

To address these issues, recent works~\citep{ding2023last,mondal2026lastiterate,muller2024truly,moskovitz2023reload,ying2022dual,montenegro2024last,lu2026augmented} have developed methods with convergence guarantees for the final policy (``last-iterate''). 
However to establish such theoretical guarantees, these approaches lead to impractical algorithms (see~\cref{table:last} for a detailed comparison).
Among these approaches, we consider dual descent (\texttt{DD}) methods~\citep{ying2022dual,gladin2023algorithm} as our starting point.
By incorporating entropy-regularization into the policy optimization problem, \texttt{DD} methods obtains guarantees for a final stochastic policy. 
However, existing \texttt{DD} methods suffer from three key limitations:
(i) their guarantees relies on a impractical coverage assumption (an assumption that implies that every policy can visit every state);
(ii) they are cumbersome to deploy, as they are bilevel methods for which every update to the dual variable requires accurately solving an entropy-regularized MDP sub-problem in an inner-loop; and
(iii) their guarantees are restricted to the tabular setting~\citep{ying2022dual,gladin2023algorithm}.
We address these limitations, improving the practicality and generality of \texttt{DD} methods, and make the following contributions. 

\paragraph{Contribution 1.}
In~\cref{section:inexact_dd}, we provide a new analysis of \texttt{DD} methods and establishes last-iterate guarantees for the final policy without any impractical coverage assumptions.
Following~\citet{ying2022dual} and~\citet{gladin2023algorithm}, we first consider the tabular setting and analyze the \texttt{DD} method when using entropy-regularized natural policy gradient (\texttt{E-NPG})~\citep{cen2022fast} as the inner sub-problem  solver. Our new analysis achieves an $\tilde{\gO}(\nicefrac{1}{\eps^4})$ last-iterate rate. We then develop a new algorithm that relies on dual-regularization that attains an $\tilde{\gO}(\nicefrac{1}{\eps^2})$ rate matching~\citet{ying2022dual}. 
However, our algorithm does not employ Nesterov acceleration; it only relies on gradient descent on the regularized dual and is consequently simpler to implement.

\paragraph{Contribution 2.}
We then address the second limitation of \texttt{DD} methods. Their bilevel structure requires using multiple \texttt{E-NPG} iterations to solve the inner sub-problem to an sufficient accuracy. In~\cref{section:dd_to_pd}, we consider a more practical algorithm by using a single \texttt{E-NPG} update which results in a simple primal-dual (\texttt{PD}) method. Through a novel tracking analysis that controls the bias (as a result of solving the inner sub-problem inexactly), we prove that this method attains an $\tilde{\gO}(\nicefrac{1}{\eps^5})$ last-iterate rate.In contrast to prior \texttt{PD} methods without coverage assumptions (refer to~\cref{table:last}), our approach achieves a substantially faster last-iterate rate.
While related \texttt{PD} methods have been proposed~\citep{muller2024truly,mondal2026lastiterate}, their analysis is either limited to the finite-horizon setting or supports only a single constraint. 

\paragraph{Contribution 3.}
In~\cref{section:truncate}, we further refine our tracking analysis. For a good initial policy (obtained by solving the first inner sub-problem to sufficiently high accuracy), we prove the inexact bias is bounded uniformly and decreases exponentially with the number of \texttt{E-NPG} iterations. We use this property to show that for a fixed number of $K$ inner-iterations, the resulting algorithm attains an $\widetilde{\mathcal{O}}(\nicefrac{1}{\epsilon^{2+3/2^K}})$ last-iterate rate. In the case when $K=1$, the resulting \texttt{PD} method attains a $\tilde{\gO}(\nicefrac{1}{\epsilon^{3.5}})$ convergence. This method achieves fastest last-iterate rate for \texttt{PD} methods without any coverage assumptions. Furthermore as $K \to \infty$, the method converges (linearly in terms of $K$) to $\tilde{\gO}(\nicefrac{1}{\eps^2})$, matching the bilevel $\texttt{DD}$ approach. Thus, this analysis interpolates between the \texttt{PD} (corresponding to $K=1$) and \texttt{DD} (corresponding as $K \to \infty$) regimes.

\paragraph{Contribution 4.}
Finally, in~\cref{appendix:handling_linear_fa}, we address the final limitation of \texttt{DD} methods --- their restriction to the tabular setting. In particular, we extend our analysis to handle large state-action spaces and consider log-linear policies~\citep{yuan2022general,agarwal2021theory}. For this setting, we solve the inner sub-problem using \texttt{Q-E-NPG}~\citep{agarwal2021theory}, an extension of \texttt{E-NPG} when employing linear function approximation. Notably, this extension only requires reanalyzing how the inner sub-problem is solved while the remainder of the analysis remains unchanged. We use the policy features to construct a coreset to control the resulting approximation error of the soft state-action values functions. For sufficiently expressive features, this yields an $\tilde{\gO}(\nicefrac{1}{\eps^{3.5}})$ last-iterate rate, achieving the best known rate when handling function approximation.

\section{Problem Formulation \& Background}
\looseness=-1
An infinite-horizon discounted constrained Markov decision process (CMDP)~\citep{altman2021constrained} is defined by the tuple $(\cS, \cA, \cP, \rho, \gamma, r, \{c_i\}_{i=1}^m, \{b_i\}_{i=1}^m)$. Here, $\cS$ is the set of states, $\cA$ is the set of actions, $\cP: \cS \times \cA \to \Delta_{\cS}$ is the transition probability function, $\rho \in \Delta_\cS$ is the initial state distribution, $\gamma \in [0, 1)$ is the discount factor and $r: \cS \times \cA \to [0, 1]$ is the reward function. The set of $m$ constraint reward functions is denoted by $\{c_i\}_{i=1}^m$ where $c_i: \cS \times \cA \to [0, 1]$ and $\{b_i\}_{i=1}^m$ is the set of corresponding non-negative thresholds.
For a policy $\pi: \cS \to \Delta_{\cA}$, the \textit{state-action value function} $Q^\pi_\diamond: \cS \times \cA \to \R$ is defined as $Q^\pi_\diamond(s, a) \coloneq \E[\sum_{h=0}^\infty \gamma^h \diamond(s_h, a_h) | s_0 = s , a_0 = a]$ such that for $h \geq 0$, $s_{h+1} \sim \cP(\cdot | s_h, a_h)$, $a_{h+1} \sim \pi(\cdot | s_h)$ where $\diamond : \cS \times \cA \to \R$ is a bounded (pseudo) reward function.
The \textit{value function} $V^\pi_\diamond: \cS \to \R$ is defined as $V^\pi_\diamond(s) \coloneq \E_{a \sim {\pi(\cdot | s)}}[Q^\pi_\diamond(s, a)]$, and we use $V^{\pi}_{\diamond}(\rho) := \E_{s_0 \sim \rho}[V^{\pi}_{\diamond}(s_0)]$ to denote the expected value function. Using this notation, the \textit{reward state-action value function} and \textit{reward value function} are denoted as $Q^\pi_r$ and $V^{\pi}_r$. 
Likewise, for each constraint $i \in [m]$, we denote the \textit{constraint reward state-action value function} as $Q^\pi_{c_i}$ and the \textit{constraint reward value function} as $V^{\pi}_{c_i}$.

For a given CMDP, the objective is to find a policy $\pi$ that maximizes the expected reward value function while satisfying the set of expected constraint reward value functions in expectation. Formally, this corresponds to solving the following  problem:
\begin{problem}\label{problem:cmdp}
$\max_{\pi}  V^{\pi}_{r}(\rho) \text{ subject to } V^{\pi}_{c_i}(\rho) \geq {b_i} \quad \forall i \in [m]$.
\end{problem}

Let $\pistar$ denote the optimal policy of~\cref{problem:cmdp} and note in general, unlike unconstrained MDPs, $\pistar$ is stochastic~\citep{altman2021constrained}. A typical approach to solve~\cref{problem:cmdp} is to 
construct the Lagrangian and consider the following saddle-point problem:
\begin{equation}\label{eq:lag}
    \max_{\pi} \min_{\lambda \in \R_+^m} L(\pi, \lambda) \coloneqq V^{\pi}_{r}(\rho) + \sum_{i=1}^m \lambda_i (V^{\pi}_{c_i}(\rho) - b_i)
\end{equation}
where $\lambda \in \R^m_+$ is the dual variable.
For example, primal-dual (\texttt{PD}) methods~\citep{ding2020natural} solve~\cref{problem:cmdp} by iteratively updating the policy (e.g., via natural policy gradient) and the dual variables (e.g., via gradient descent). 
For such a \texttt{PD} algorithm, prior work~\citep{jain2022towards,ding2020natural,liu2025sample,paternain2019constrained} establishes after $T$ iterations, for a target error $\eps > 0$,  the mixture policy $\bar{\pi}_T \coloneqq \frac{1}{T} \sum_{t=0}^{T-1} \pi_t$ is an $\eps$-optimal policy, i.e.,
\begin{equation*}
V^{\bar{\pi}_T}_{r}(\rho) \geq V^{\pistar}_{r}(\rho) - \eps,  \text{ ; } V^{\bar{\pi}_T}_{c_i}(\rho) \geq b_i - \eps\quad \forall i \in [m] 
\end{equation*}

Despite their strong theoretical guarantees, mixture policies suffer from two practical drawbacks. First, they are memory intensive since they require storing all intermediate policies $\{\pi_t \}_{t=0}^{T-1}$. Second, deploying such a mixture policy requires repeatedly sampling a policy uniformly from this set. As a result, constraint satisfaction is only guaranteed in expectation over this random selection. For safety-critical applications, this behaviour is undesirable as any single policy may significantly violate the constraints. This has motivated the development of principled algorithms that converge to a single deployable stochastic policy.

Prior approaches are developed by introducing additional regularization, either on the policy~\citep{lu2026augmented,ying2022dual,gladin2023algorithm}, dual variables~\citep{montenegro2024last} or to both~\citep{mondal2026lastiterate,ding2023last,muller2024truly}. 
These algorithmic designs provide guarantees for the reward sub-optimality and constraint satisfaction of the final policy (``last-iterate'') produced by the algorithm.
However to establish such guarantees, most prior analysis requires strong coverage assumptions on the initial state space that are difficult to satisfy in practice. Moreover, extending these guarantees to general policy parameterization often requires additional, overly restrictive assumptions on the policy class (refer to~\cref{table:last} for details).
To address these issues, we use the dual descent (\texttt{DD}) method as the starting point.

\section{Inexact Dual Descent}\label{section:inexact_dd}
In~\cref{section:dual-problem}, we first introducing the entropy-regularized CMDP and formulate its dual problem. In~\cref{section:algorithm} we develop an inexact \texttt{DD} algorithm for solving this problem and analyze a generic convergence guarantee without any coverage assumption in~\cref{section:abstract-analysis}. We then specialize the analysis to the tabular setting, where the number of states and actions are finite, and in~\cref{section:tabular} establish last-iterate guarantees for the resulting algorithms. Finally, in~\cref{section:ell_two} we add dual-regularization and attain faster rates.

\subsection{Constructing a Differentiable Dual}\label{section:dual-problem}
An common approach for constrained optimization problems is to consider the dual function of the Lagrangian in~\cref{eq:lag}, $D(\lambda) \coloneqq \max_{\pi} L(\pi, \lambda)$ and solve the convex problem $\min_{\lambda \in \R^m_+} D(\lambda)$ using gradient-based methods~\citep{boyd2004convex}.  However, a key challenge is that for a fixed $\lambda \in \R^m_+$, the maximizer $\argmax_{\pi} L(\pi, \lambda)$ may not be unique, making the dual function non-differentiable. To address this issue, following prior work~\citep{ying2022dual,li2024faster,gladin2023algorithm}, we consider solving an entropy-regularized primal. This enables the dual of the regularized primal to have a unique solution and enables the gradient to be well-defined.

In particular, let $\gH(\pi(\cdot | s)) \coloneqq - \dpd{\pi(\cdot | s), \ln( \pi(\cdot | s))}$ 
denote the Shannon entropy of a policy $\pi$ for any state $s \in \cS$. For the entropy parameter $\tau > 0$, the \emph{soft (entropy-regularized) value function} $V^{\pi}_{\diamond, \tau}: \cS \times \cA \to \R$ is defined as  $V^{\pi}_{\diamond, \tau}(s) \coloneqq V^{\pi}_{\diamond}(s) + \tau \sum_{h=0}^\infty \gamma^h[\gH(\pi(\cdot | s_h) \mid s_0 = s]$ and \emph{soft state-action-value function} $Q^{\pi}_{\diamond, \tau}: \cS \times \cA \to \R$ is defined as  $Q^{\pi}_{\diamond, \tau} \coloneqq \E_{s' \sim \cP(\cdot | s, a)}[\diamond(s, a) + \gamma V^{\pi}_{\diamond, \tau}(s')]$~\citep{geist2019theory,liu2024elementary}.

Using this notation, we define the soft reward value function as $V^{\pi}_{r, \tau}(\rho) \coloneqq \E_{s \sim \rho}[V^{\pi}_{r, \tau}(s)]$, and focus on the following soft Lagrangian formulation:
\begin{problem}\label{problem:soft_max_min_lag}
$\max_{\pi} \min_{\lambda \in \R^m_+} \cL_\tau(\pi, \lambda)$  \text{ where }  $\lag(\pi, \lambda) \coloneqq V^{\pi}_{r, \tau}(\rho) + \sum_{i=1}^m \lambda_i \, (V^{\pi}_{c_i}(\rho) - b_i)$.
\end{problem}
\noindent Hence, for any $\lambda \in \R^m_+$, entropy regularization ensures that the maximizer of the Lagrangian:
\begin{equation}\label{eq:dual_entropy}
\cD(\lambda) := \max_{\pi} \lag(\pi, \lambda) \quad \text{ ; } \quad \lstar := \argmin_{\lambda \in \R^m_+} \cD(\lambda) \, 
\end{equation} 
is unique~\citep{nachum2017bridging}.  Consequently, by Danskin's theorem~\citep{floudas1995nonlinear}, its gradient is well-defined and characterized as follows.
\begin{restatable}{proposition}{propositiondualgradient}\label{proposition:dual_gradient}
For any $\lambda \in \R^m_{+}$ and entropy regularization parameter $\tau > 0$, the dual function $\cD(\lambda)$ in~\cref{eq:dual_entropy} is differentiable such that $\forall i \in [m]$, 
\begin{equation*}
[\nabla \cD(\lambda)]_i \coloneqq V^{\pistarl}_{c_i}(\rho) - b_i \quad  \text{;} \quad \pistar_{\lambda,\tau} \coloneqq \argmax_{\pi} \lag(\pi, \lambda)
\end{equation*} 
\end{restatable}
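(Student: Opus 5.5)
We prove this with Danskin's theorem, applied in a form that avoids compactness issues in policy space. The cleanest way is to pass to occupancy measures. For a policy $\pi$, let $\mu^\pi(s,a) = (1-\gamma)^{-1}\,\cdot$(discounted state-action occupancy under $\rho$). Then $V^\pi_{\diamond}(\rho) = \dpd{\mu^\pi, \diamond}$, and the entropy term $\tau\,\E[\sum_h \gamma^h \gH(\pi(\cdot|s_h))]$ equals the conditional entropy $-\sum_{s,a}\mu^\pi(s,a)\ln\big(\mu^\pi(s,a)/\sum_{a'}\mu^\pi(s,a')\big)$.

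The set $\cM$ of valid occupancy measures is a compact convex polytope, defined by the Bellman flow constraints. The map $\pi\mapsto\mu^\pi$ is onto $\cM$, and it is invertible on visited states. The conditional entropy is continuous and strictly concave in $\mu$ along directions that change the induced conditional distributions. Therefore
\[
\cD(\lambda) = \max_{\mu\in\cM}\; \Phi(\mu,\lambda), \qquad
\Phi(\mu,\lambda) \coloneqq \dpd{\mu, r} + \tau\,\mathcal{H}_c(\mu) + \sum_i \lambda_i\big(\dpd{\mu,c_i}-b_i\big).
\]
Here $\Phi$ is continuous on $\cM\times\R^m_+$ and affine in $\lambda$, with $\nabla_\lambda\Phi(\mu,\lambda)_i = \dpd{\mu,c_i}-b_i$, which is continuous in $(\mu,\lambda)$.

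The key steps are as follows.

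\textbf{Step 1: uniqueness.} The maximizer in $\mu$ is unique. Strict concavity of the conditional entropy is only strict on visited states. However, the values $V^\pi_{c_i}(\rho)$ depend only on $\mu^\pi$, so it suffices that the optimal $\mu$ is unique, not that the optimal policy is unique. The paper cites the uniqueness of $\pistarl$ from the regularized MDP with reward $r+\sum_i\lambda_i c_i$ \citep{nachum2017bridging}. That policy is the softmax of the soft $Q$-function and is unique everywhere, so $\mu^\star_\lambda = \mu^{\pistarl}$ is unique as well.

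\textbf{Step 2: Danskin.} Apply Danskin's theorem \citep{floudas1995nonlinear} with compact $\cM$, continuous $\Phi$, and continuous $\nabla_\lambda\Phi$. It gives the directional derivatives
\[
\cD'(\lambda; d) = \max_{\mu\in\argmax}\dpd{\nabla_\lambda\Phi(\mu,\lambda), d}.
\]
Since the argmax is the singleton $\{\mu^\star_\lambda\}$, $\cD$ is differentiable with $[\nabla\cD(\lambda)]_i = \dpd{\mu^\star_\lambda, c_i}-b_i = V^{\pistarl}_{c_i}(\rho)-b_i$. At boundary points of $\R^m_+$, the statement is read with one-sided derivatives, or equivalently by noting that $\Phi$ extends to all $\lambda\in\R^m$, so $\cD$ is differentiable on an open neighbourhood.

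The main obstacle is justifying the reduction to a compact set on which $\Phi$ is continuous and the maximizer is unique. The policy simplex product is compact, but $\pi\mapsto \lag(\pi,\lambda)$ is not concave in $\pi$. It is continuous, and continuity plus compactness is all Danskin needs. So an alternative route avoids occupancy measures altogether: apply Danskin directly over the compact set of policies, using continuity of $V^\pi_{r,\tau}$ and $V^\pi_{c_i}$ in $\pi$ (finite state–action spaces) together with uniqueness of $\pistarl$. I would present this direct version and mention the occupancy-measure view only as a remark.
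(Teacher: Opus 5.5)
Your route (Danskin's theorem plus uniqueness of the inner maximizer) is the one the paper takes: it asserts uniqueness of $\pi^*_{\lambda,\tau}$ and defers the details to \citet[Proposition 3.6]{ying2022dual}. There is a gap in the uniqueness step, and it matters precisely in this paper's coverage-free setting. When $\rho$ does not reach every state, $\argmax_\pi \lag(\pi,\lambda)$ is \emph{not} a singleton. Changing $\pi^*_{\lambda,\tau}(\cdot\mid s)$ at a state $s$ that $\pi^*_{\lambda,\tau}$ never reaches from $\rho$ leaves $\lag(\cdot,\lambda)$ unchanged, so many policies are maximizers. The version you say you would present invokes ``uniqueness of $\pi^*_{\lambda,\tau}$'' over the product of simplices, and that premise is false. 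Danskin there only gives $\cD'(\lambda;d)=\max_{\pi\in\Pi^*_\lambda}\sum_i d_i\,(V^{\pi}_{c_i}(\rho)-b_i)$, where $\Pi^*_\lambda$ is the whole set of maximizers. In Step~1 you diagnose the issue correctly: what matters is uniqueness of the optimal occupancy measure. But the inference ``the softmax policy is unique everywhere, so $\mu^\star_\lambda=\mu^{\pi^*_{\lambda,\tau}}$ is unique'' does not follow, because it says nothing about other $\rho$-optimal policies, i.e.\ other maximizers in $\mathcal{K}$.

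The missing step is to show that every maximizer induces the same occupancy measure. By \cref{lemma:soft_suboptimality}, $\max_{\pi'}\lag(\pi',\lambda)-\lag(\pi,\lambda)=\frac{\tau}{1-\gamma}\sum_s d^{\pi}(s)\,\KL\bigl(\pi(\cdot\mid s)\,\|\,\pi^*_{\lambda,\tau}(\cdot\mid s)\bigr)$, where $\pi^*_{\lambda,\tau}$ is the softmax policy. Hence any maximizer $\pi$ coincides with $\pi^*_{\lambda,\tau}$ on $\mathrm{supp}(d^{\pi})$. That set contains $\mathrm{supp}(\rho)$ and is closed under $\pi$'s transitions. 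Induction on $h$ then gives $\Pr^{\pi}[s_h=s,a_h=a]=\Pr^{\pi^*_{\lambda,\tau}}[s_h=s,a_h=a]$, so $\mu^{\pi}=\mu^{\pi^*_{\lambda,\tau}}$ and the constraint values agree. Alternatively, use strict concavity of the conditional entropy on $\mathcal{K}$ (\cref{lemma:entropy_saom_strictly_concave}, which the paper already uses for strong duality).

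With this step in place you can finish either way. You can run Danskin on $\mathcal{K}$, where the argmax is genuinely a singleton; this makes the occupancy-measure formulation the main argument rather than a remark. Or you can stay in policy space and note that $\nabla_\lambda \lag$ is constant on $\Pi^*_\lambda$, so $\cD'(\lambda;\cdot)$ is linear and the convex function $\cD$ is differentiable. The rest of your argument is fine: compactness, joint continuity, affinity in $\lambda$, and the extension to an open neighbourhood of $\R^m_+$.
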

\noindent For a fixed dual variable $\lambda$, computing this gradient exactly requires solving an an entropy-regularized MDP as the inner sub-problem. We therefore consider an approximate solution to this maximization problem and will use it to construct \emph{inexact} gradients.

\subsection{Algorithm Design}\label{section:algorithm}
Building on the \texttt{DD} approach of~\citet{ying2022dual}, we now present the following generic inexact
\texttt{DD} algorithm, which employs a black-box entropy-regularized MDP solver to construct the inexact dual gradients. 
\begin{algorithm}[ht]
\begin{algorithmic}[1]
    \STATE \textbf{Input}: $\pi_{-1}$ (initial policy), $\lambda_0 \in \R^m_+$ (initial dual variable), $\tau$ (entropy regularization),  $\beta$ (dual step-size), $\mu$ (dual regularization), $T$ (number of outer-iterations), $K$ (number of inner-iterations), $J$ (number of recovery-iterations)
    \FOR{$t = 0, 1, \dots, T-1$ }
        \STATE $\pi_{t} = \texttt{PolicyOptimization}(\pi_{t-1}, R(\lambda_t), \tau, K)$ \\ where $R(\lambda_t) = r + \sum_{i=1}^m c_i \, \lambda_t[i]$ \label{line:policyoptimization}.
        \STATE Construct inexact dual gradient estimate $[\bdgt]_i \coloneqq V^{\pi_{t}}_{c_i}(\rho) - b_i$, $\forall i \in [m]$. \label{line:consruct_dual}
        \STATE $\lambda_{t+1} = \text{Proj}_{\Lambda}\left[(1 - \beta \mu) \lambda_t - \beta \bdgt\right]$ \label{line:update}
        \ENDFOR
    \STATE $\lambda_T = \frac{1}{T} \sum_{t=1}^{T} \lambda_t$ if $\mu = 0$ else $\lambda_T$
    \STATE Recover deployable policy from dual variable: $\pi_{T} = \texttt{PolicyOptimization}(\pi_{-1}, R(\lambda_T), \tau, J)$ \label{line:recover}
    \STATE \textbf{Return}: Policy $\pi_T$ and (average) dual variable $\lambda_T$
\caption{Inexact \texttt{DD} for CMDPs}
\label{algorithm:dd}
\end{algorithmic}
\end{algorithm}

\paragraph{Algorithmic details.} 
\cref{algorithm:dd} is a bilevel method that uses an approximate solution to the inner sub-problem to construct an inexact gradient for the outer dual update.
By the linearity of the reward $r$ and the weighted constrained reward $\{\lambda_i c_i\}_{i=1}^m$, computing $\pistarl \coloneqq \argmax_{\pi} \lag(\pi, \lambda)$ is equivalent to solving the following inner sub-problem induced by $\lambda$,
\begin{equation}
\textstyle \argmax_{\pi} V^{\pi}_{R(\lambda), \tau}(\rho) \quad \text{ ; } \quad R(\lambda) \coloneqq r + \sum_{i=1}^m \lambda_i \, c_i.
\end{equation}
Thus, constructing the dual gradient of~\cref{eq:dual_entropy} reduces to solving an \emph{unconstrained} entropy-regularized MDP with pseudo-reward $R(\lambda)$. We can thereby obtain inexact dual gradients by using any entropy-regularized MDP solver as a black-box. 
To this end, in~\Cref{line:policyoptimization}, we introduce an abstract method, \\ \texttt{PolicyOptimization}$(\pi_{t-1}, R(\lambda_t), \tau, K)$, which takes in as input an initial policy $\pi_{t-1}$, pseudo-reward $R(\lambda_t)$, an entropy regularizer parameter $\tau$, and iterations $K$
and outputs a policy $\pi_t$.
Then in~\Cref{line:consruct_dual}, using the approximate solution to the entropy-regularized MDP $\pi_t$, we construct the corresponding inexact dual gradient $\bdgt \in \R^m$ such that $[\bdgt]_{i} = V^{\pi_t}_{c_i}(\rho) - b_i$ for all $i \in [m]$. 

We first consider the case when $\mu = 0$ and will handle the regularized case in~\cref{section:ell_two}.
Using the inexact dual gradient estimate, we update the dual variable using projected dual descent (\Cref{line:update}). 
In particular, given a gradient estimator $\bdgt$, we consider the following update $\lambda_{t+1} = \text{Proj}_{\Lambda}\left[\lambda_t - \beta \bdgt\right]$
where $\beta > 0$ is the dual step-size and $\text{Proj}_{\Lambda}[\lambda]$ denotes the Euclidean projection onto a convex set $\Lambda$, i.e., $\text{Proj}_{\Lambda}[\lambda] \coloneqq \argmin_{\lambda' \in \Lambda} \normsq{\lambda - \lambda'}$.
The projection set $\Lambda$ is chosen to ensure the dual iterates remain bounded while containing the optimal dual solution.
To obtain an upper bound on the magnitude $\lstar$, following prior work~\citep{ying2022dual,gladin2023algorithm,ding2023last} we consider the following assumption. 
\begin{assumption}[Slater Condition]\label{assumption:slater_condition}
$\exists$ policy $\bar{\pi}$ and $\zeta \in \R^m_+$ such that $\forall i \in [m]$, $\zeta_i \coloneqq V^{\bar{\pi}}_{c_i}(\rho) - b_i > 0$.
\end{assumption}
\noindent This assumption ensures the existence of a strictly feasible policy.
The constant $\zeta$ is referred to as \emph{Slater constant}~\citep{ding2020natural,bai2022achieving,liu2025convergence,vaswani2022near}. 
This quantity measures the width of the region of feasible policies and is a measure of the problem's difficulty.
Furthermore by~\cref{lemma:strong_duality,lemma:optimal_dual_bounded}, these lemmas show that $\lstar$ is bounded component-wise by 
$\left[\lstar\right]_i \leq \frac{2 + 2 \, \tau \, \ln(A)}{(1-\gamma)\zeta_i}$.
Hence, in order for the set to contain $\lstar$, we may restrict the dual iterates to the compact set $\Lambda \coloneqq \left\{ \lambda \in \R^m_+ \mid 0 \leq \lambda_i \leq \frac{2 + 2 \, \tau \, \ln(A)}{(1 - \gamma) \zeta_i} \right\}$.
Finally, in~\cref{line:recover}, we obtain a deployable policy by solving an entropy-regularized MDP induced by a fixed dual.

\subsection{Theoretical Analysis}
\label{section:abstract-analysis}
\Cref{algorithm:dd} aims to minimize the dual problem.  
We first show how the sub-optimality of the dual solution relates to the reward sub-optimality and constraint violation for~\cref{problem:cmdp}.
\begin{restatable}{lemma}{dualoptimalitytosublemma}\label{lemma:dual_optimality_to_sub}
\looseness=-1
Under~\cref{assumption:slater_condition}, suppose $\overline{\lambda} \in \R^m_{+}$ is an $\Delta$-optimal dual solution, i.e., for $\Delta > 0$, $\cD(\overline{\lambda}) - \cD(\lstar) \leq \Delta$.
Then, the policy 
$\pistarbl \coloneqq \argmax_\pi V^{\pi}_{R(\bar{\lambda}), \tau}(\rho)$ has the following guarantee %
for~\cref{problem:cmdp}: 
\begin{align*}
V^{\pi^*}_r(\rho) - V^{\pistarbl}_{r}(\rho) &\leq \gO\parens*{\frac{\sqrt{\Delta}}{{\sqrt{\tau}}}} + \gO(\tau) 
\quad \text{;} \quad \max\braces*{b_i - V^{\pistarbl}_{c_i}(\rho), 0} \leq \gO\parens*{\frac{\sqrt{\Delta}}{{\sqrt{\tau}}}} \quad \forall i \in [m] \, .
\end{align*}
\end{restatable}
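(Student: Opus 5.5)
The plan is to turn dual suboptimality into a bound on the distance between $\bar{\lambda}$ and a dual point whose associated policy is near-feasible. The tool is a strong-convexity-type property of the entropy-regularized primal, which the coverage-free setting still provides through the occupancy-measure formulation.

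\textbf{Step 1: a quadratic growth / smoothness inequality for $\cD$.} Write $\lag(\pi,\lambda)$ in terms of the discounted state-action occupancy measure $d^\pi$. The reward and constraint parts are linear in $d^\pi$. The discounted entropy term $\tau\sum_s d^\pi(s)\gH(\pi(\cdot|s))$ is concave in $d^\pi$ and, being a conditional entropy of the occupancy measure, is $\frac{\tau}{1-\gamma}$-strongly concave with respect to the $\ell_1$ norm of $d^\pi$ (up to normalization), by Pinsker-type arguments. By Fenchel duality, $\cD$ is then smooth. More usefully, for any $\lambda$,
\begin{equation*}
\cD(\lambda) - \cD(\lstar) \ge \lag(\pistar_{\lambda,\tau},\lambda)-\lag(\pistar_{\lstar,\tau},\lambda) + \dpd{\lambda-\lstar,\nabla\cD(\lstar)} \;\ge\; \frac{c\,\tau}{2}\,\norm{d^{\pistar_{\lambda,\tau}}-d^{\pistar_{\lstar,\tau}}}_1^2 \cdot \text{const}.
\end{equation*}
Here I use first-order optimality of $\lstar$ (so $\dpd{\lambda-\lstar,\nabla\cD(\lstar)}\ge 0$) and strong concavity of $\lag(\cdot,\lambda)$ in $d^\pi$ around its maximizer $\pistar_{\lambda,\tau}$. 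The resulting bound is $\norm{d^{\pistarbl}-d^{\pistar_{\lstar,\tau}}}_1 \le \gO(\sqrt{\Delta/\tau})$. Because the estimate is phrased in occupancy measures and not in per-state policy distances, no lower bound on $\rho$ is needed. This is the key obstacle: getting the modulus right requires the occupancy-measure strong concavity of the discounted entropy, rather than the usual per-state KL, which would need coverage.

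\textbf{Step 2: transfer to values.} Since $V^\pi_\diamond(\rho)=\frac{1}{1-\gamma}\dpd{d^\pi,\diamond}$ with $\diamond\in[0,1]$, Step 1 gives $|V^{\pistarbl}_{\diamond}(\rho)-V^{\pistar_{\lstar,\tau}}_\diamond(\rho)|\le \gO(\sqrt{\Delta/\tau})$ for $\diamond\in\{r,c_i\}$.

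\textbf{Step 3: properties of $\pistar_{\lstar,\tau}$.} By strong duality for the regularized problem (\cref{lemma:strong_duality}) and complementary slackness, $\pistar_{\lstar,\tau}$ is feasible, so $V^{\pistar_{\lstar,\tau}}_{c_i}(\rho)\ge b_i$. Combined with Step 2, this gives the constraint bound. Moreover, $\pistar_{\lstar,\tau}$ maximizes $V_{r,\tau}$ over the feasible set. Since $\pistar$ is feasible, $V^{\pistar_{\lstar,\tau}}_{r}(\rho)\ge V^{\pistar}_{r}(\rho) - \frac{\tau\ln A}{1-\gamma}$, because the entropy bonus lies in $[0,\frac{\tau \ln A}{1-\gamma}]$. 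Adding the Step 2 reward deviation yields the optimality bound $\gO(\sqrt{\Delta/\tau})+\gO(\tau)$.

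If the occupancy strong concavity in Step 1 proves delicate, I would fall back on the following. Write the gap $\lag(\pistar_{\lambda},\lambda)-\lag(\pi,\lambda)$ via the performance difference lemma as $\frac{\tau}{1-\gamma}\E_{s\sim d^{\pi}}\KL(\pi(\cdot|s)\|\pistar_\lambda(\cdot|s))$, evaluated with $\pi=\pistar_{\lstar,\tau}$. Then apply Pinsker together with the simulation-lemma bound $\norm{d^\pi-d^{\pi'}}_1\lesssim \frac{1}{1-\gamma}\E_{s\sim d^\pi}\norm{\pi-\pi'}_1$. Both weightings are under $d^{\pi}$, so again no coverage assumption is required.
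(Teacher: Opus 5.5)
Your proposal is correct, and in its fallback form it is the paper's argument. The paper also compares $\pistarbl$ with the regularized optimum $\pi^*_\tau$ (which coincides with $\pi^*_{\lstar,\tau}$). It gets $\lag(\pistarbl,\blambda)-\lag(\pi^*_\tau,\blambda)\le\Delta$ from the saddle-point property, which is equivalent to your $\dpd{\blambda-\lstar,\nabla\cD(\lstar)}\ge 0$. It converts this gap into value differences via the soft performance-difference identity, Pinsker, and a value-difference bound, all weighted by $d^{\pi^*_\tau}$ (\cref{lemma:generic_vdl_upper_bound}); your simulation-lemma version gives the same constants. It then finishes with feasibility and $\gO(\tau)$-optimality of $\pi^*_\tau$, as in your Step~3.

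Treat the fallback as the real proof of Step~1, though. The conditional entropy is not strongly concave in the occupancy measure by a Pinsker argument alone: it is linear along directions that move the state marginal while keeping the conditional policy fixed. It becomes strongly concave only on the Bellman-flow polytope, and showing that is exactly your simulation-lemma chain. That chain gives quadratic growth with modulus $\tau(1-\gamma)$ for normalized $d^\pi$, not the $\tau/(1-\gamma)$ you claimed, which is harmless inside $\gO(\cdot)$.
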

\noindent We next characterize the convergence of~\cref{algorithm:dd} and bound the dual sub-optimality. In order to do so, we first establish that the dual objective is smooth by leveraging the Lipschitz property of entropy-regularized policies with respect to $\lambda$. Unlike prior derivations~\citep{ying2022dual,gladin2023algorithm}, this result does not rely on any strong coverage assumptions on either the initial state distribution or the optimal policy. \begin{restatable}{lemma}{dualsmooth}\label{lemma:dual_smooth}
\looseness=-1
For $\tau > 0$ and any $\lambda \in \R^m_+$, the dual function $\cD(\lambda)$ is $G_\tau \coloneqq \frac{m}{\tau(1 - \gamma)^3}$-smooth.
\end{restatable}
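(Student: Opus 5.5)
By \cref{proposition:dual_gradient}, $[\nabla\cD(\lambda)]_i = V^{\pistarl}_{c_i}(\rho)-b_i$. So smoothness reduces to a Lipschitz bound on $\lambda\mapsto V^{\pistarl}_{c_i}(\rho)$. Concretely, I will show that for any $\lambda,\lambda'\in\R^m_+$ and each $i$,
\[
\abs{V^{\pistarl}_{c_i}(\rho)-V^{\pistarlp}_{c_i}(\rho)}\le \frac{\norm{\lambda-\lambda'}_1}{\tau(1-\gamma)^3}.
\]
Given this, $\norm{\nabla\cD(\lambda)-\nabla\cD(\lambda')}_2\le \sqrt{m}\,\frac{\norm{\lambda-\lambda'}_1}{\tau(1-\gamma)^3}\le \frac{m\norm{\lambda-\lambda'}_2}{\tau(1-\gamma)^3}$, which is $G_\tau$. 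The key point is to avoid any argument via strong concavity of $\lag$ in occupancy measures weighted by $\rho$, since that is exactly where coverage assumptions enter.

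\textbf{Step 1: softmax Lipschitzness.} I use the closed form of the regularized optimum: $\pistarl(\cdot|s)\propto\exp(Q^*_{\lambda}(s,\cdot)/\tau)$, where $Q^*_\lambda$ is the optimal soft $Q$-function for reward $R(\lambda)$. The soft Bellman optimality operator is a $\gamma$-contraction in sup-norm. Its dependence on the reward is $1$-Lipschitz, and $\supnorm{R(\lambda)-R(\lambda')}\le\norm{\lambda-\lambda'}_1$ because $c_i\in[0,1]$. Hence
\[
\supnorm{Q^*_\lambda-Q^*_{\lambda'}}\le \frac{\norm{\lambda-\lambda'}_1}{1-\gamma}.
\]
The softmax map with temperature $\tau$ is $\frac1\tau$-Lipschitz from $\ell_\infty$ to $\ell_1$. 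A crude bound via log-sum-exp gives factor $2/\tau$, while the mean-value theorem with Jacobian $\mathrm{diag}(p)-pp^\top$ gives $\ell_\infty\to\ell_1$ norm at most $1$. Therefore, for every state $s$,
\[
\norm{\pistarl(\cdot|s)-\pistarlp(\cdot|s)}_1\le \frac{\norm{\lambda-\lambda'}_1}{\tau(1-\gamma)}.
\]

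\textbf{Step 2: performance difference.} By the performance difference lemma,
\[
V^{\pi}_{c_i}(\rho)-V^{\pi'}_{c_i}(\rho)=\frac{1}{1-\gamma}\E_{s\sim d^{\pi}_\rho}\dpd{\pi(\cdot|s)-\pi'(\cdot|s),Q^{\pi'}_{c_i}(s,\cdot)}.
\]
Since $\supnorm{Q^{\pi'}_{c_i}}\le\frac1{1-\gamma}$, this is at most $\frac{1}{(1-\gamma)^2}\max_s\norm{\pi(\cdot|s)-\pi'(\cdot|s)}_1$. The bound is uniform over states, so no coverage of $\rho$ is needed. Combined with Step 1, it yields the claimed factor $\frac{1}{\tau(1-\gamma)^3}$.

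The main obstacle is Step 1, because it must hold uniformly in $s$ without using $\rho$. The contraction argument on $Q^*$ handles this cleanly. The only care needed is the softmax constant, where I would use the mean-value bound $\norm{\mathrm{diag}(p)-pp^\top}_{\infty\to1}\le 1$. Collecting constants then gives $G_\tau=\frac{m}{\tau(1-\gamma)^3}$.
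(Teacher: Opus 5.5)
Your proposal is correct and matches the paper's proof step for step. Both use a state-uniform performance-difference bound $\frac{1}{(1-\gamma)^2}\max_s\norm{\pistarl(\cdot|s)-\pistarlp(\cdot|s)}_1$, the soft Bellman contraction giving $\supnorm{Q^*_\lambda-Q^*_{\lambda'}}\le\norm{\lambda-\lambda'}_1/(1-\gamma)$, a $1/\tau$ Lipschitz bound for the softmax from $\ell_\infty$ to $\ell_1$, and the same two $\sqrt{m}$ norm conversions. The only deviation is that the paper proves the softmax bound via $1$-strong convexity of negative entropy in $\ell_1$; your Jacobian (integral mean-value) argument is also valid, since $\norm{(\mathrm{diag}(p)-pp^\top)x}_1=\sum_a p_a\abs{x_a-\dpd{p,x}}\le\sqrt{\mathrm{Var}_p(x)}\le 1$ whenever $\supnorm{x}\le 1$.
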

Assuming that the inexact dual gradients are bounded at each iteration,
using this smoothness property, we now prove a generic convergence guarantee for~\cref{algorithm:dd}. Specifically, the following result corresponds to analyzing projected gradient descent on a convex, smooth function with bounded, inexact gradients. 
\begin{restatable}{lemma}{smoothmetalemma}\label{lemma:meta_lemma_smooth}
Under~\cref{assumption:slater_condition}, for any $\tau > 0$, assuming that for all $t \in [0,T-1]$, $\norm{\nabla \cD(\lambda_t) - \widetilde{\nabla} \cD(\lambda_t)} \leq \delta_t$,~\cref{algorithm:dd} using step-size $\beta \leq \frac{1}{G_\tau} $, $\lambda_0 = 0$, and $\mu = 0$ converges as: for $T \geq 1$, 
\begin{equation}
\cD(\bar{\lambda}_T) - \cD(\lstar) \leq  \frac{\normsq{\lstar}}{2 \, \beta \, T}  +  \frac{\abs{\Lambda}}{T} \textstyle \sum_{t=0}^{T-1} \delta_t  \numberthis  \label{eq:dd_smooth_average_sub_opt}
\end{equation}
where $\bar{\lambda}_T \coloneqq \frac{1}{T} \sum_{t=1}^{T} \lambda_t$, and $\abs*{\Lambda} = \frac{\sqrt{m} \, (2 + 2 \tau \, \ln(A))}{(1 - \gamma) \, \min_{i \in [m]}\zeta_i}$.
\end{restatable}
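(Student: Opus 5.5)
The plan is the textbook analysis of projected gradient descent with inexact gradients on an $L$-smooth convex function. Here the function is $\cD$, smooth with constant $G_\tau$ by \cref{lemma:dual_smooth}, and the comparator is $\lstar$. By \cref{lemma:strong_duality,lemma:optimal_dual_bounded}, $\lstar \in \Lambda$. Throughout write $g_t \coloneqq \bdgt$ and $e_t \coloneqq \bdgt - \dgt$, so that $\norm{e_t} \leq \delta_t$. Since $\mu = 0$, the update is $\ltt = \projL[\lt - \beta g_t]$.

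\textbf{Step 1 (descent inequality).} Apply the $G_\tau$-smoothness upper bound:
\[
\cD(\ltt) \leq \cD(\lt) + \dpd{\dgt, \ltt - \lt} + \frac{G_\tau}{2}\normsq{\ltt - \lt}.
\]
Convexity gives $\cD(\lt) \leq \cD(\lstar) + \dpd{\dgt, \lt - \lstar}$. Adding the two yields
\[
\cD(\ltt) \leq \cD(\lstar) + \dpd{\dgt, \ltt - \lstar} + \frac{G_\tau}{2}\normsq{\ltt - \lt}.
\]
Now split $\dgt = g_t - e_t$.

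\textbf{Step 2 (projection inequality).} The first-order optimality condition of the projection gives $\dpd{\lt - \beta g_t - \ltt, \lstar - \ltt} \leq 0$. Equivalently,
\[
\dpd{g_t, \ltt - \lstar} \leq \frac{1}{\beta}\dpd{\lt - \ltt, \ltt - \lstar}.
\]
The three-point identity rewrites the right-hand side as
\[
\frac{1}{2\beta}\left(\normsq{\lt - \lstar} - \normsq{\ltt - \lstar} - \normsq{\ltt - \lt}\right).
\]
Because $\beta \leq 1/G_\tau$, the term $-\frac{1}{2\beta}\normsq{\ltt - \lt}$ cancels the smoothness term $\frac{G_\tau}{2}\normsq{\ltt - \lt}$.

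\textbf{Step 3 (error term).} What remains is $-\dpd{e_t, \ltt - \lstar} \leq \delta_t \norm{\ltt - \lstar}$ by Cauchy--Schwarz. Both $\ltt$ and $\lstar$ lie in the box $\Lambda$. Their difference therefore has $i$-th coordinate bounded in absolute value by $\frac{2 + 2\tau\ln(A)}{(1-\gamma)\zeta_i} \leq \frac{2 + 2\tau\ln(A)}{(1-\gamma)\min_i\zeta_i}$. This gives $\norm{\ltt - \lstar} \leq \abs*{\Lambda}$, i.e., the diameter of $\Lambda$ as stated. Combining Steps 1--3, each iteration satisfies
\[
\cD(\ltt) - \cD(\lstar) \leq \frac{1}{2\beta}\left(\normsq{\lt - \lstar} - \normsq{\ltt - \lstar}\right) + \abs*{\Lambda}\,\delta_t.
\]

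\textbf{Step 4 (telescoping and averaging).} Sum over $t = 0, \dots, T-1$ and telescope the distance terms, using $\lambda_0 = 0$ so that $\normsq{\lambda_0 - \lstar} = \normsq{\lstar}$. Divide by $T$. By convexity (Jensen), $\cD(\bar{\lambda}_T) \leq \frac{1}{T}\sum_{t=1}^{T}\cD(\lambda_t)$, and this yields \eqref{eq:dd_smooth_average_sub_opt}.

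The main subtlety is bookkeeping rather than difficulty. The inexactness must be paired with $\ltt$, not $\lt$, so that the smoothness term cancels exactly against the projection term. Boundedness of $\Lambda$ is what makes the error contribution linear in $\delta_t$ with no $\beta\delta_t^2$ term.
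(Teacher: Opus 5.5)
Your proposal is correct and follows the paper's proof essentially step for step. Both combine $G_\tau$-smoothness with convexity at $\lstar$ and split $\dgt$ into $\bdgt$ plus the error. Both then use the projection optimality condition and the three-point identity, so that $\beta \leq 1/G_\tau$ cancels the $\frac{G_\tau}{2}\normsq{\lt - \ltt}$ term. Finally, both bound the error inner product by $\delta_t \abs*{\Lambda}$ via Cauchy--Schwarz and \cref{lemma:optimal_dual_bounded}, telescope using $\lambda_0 = 0$, and apply Jensen.
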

\noindent The above results shows that the dual sub-optimality decreases as $\gO(\nicefrac{1}{T})$ up to a bias introduced by the inexact gradients. Hence, the remaining step is to control the bias introduced by using inexact dual gradients. Specifically, in order to combine~\cref{lemma:dual_optimality_to_sub} with the dual sub-optimality guarantee of~\cref{lemma:meta_lemma_smooth}, we require the accumulated bias term to be sufficiently small. To this end, in~\cref{lemma:bound_bias_subgradient} we bound the bias by the sub-optimality of the inner sub-problem.
\begin{restatable}{lemma}{boundbiassubgradientlemma}\label{lemma:bound_bias_subgradient}
For any $\lambda \in \R^m_+$, $\tau > 0$ and policy $\tilde{\pi}$,
the inexact gradient such that $[\bdg]_i = V^{\tilde{\pi}}_{c_i}(\rho) - b_i$ for all $i \in [m]$ results in the following bias:
\begin{align*}
\norm{\dg - \bdg} \leq 
\frac{\sqrt{ 2 \, m} \, \sqrt{\max_\pi V^{\pi}_{R(\lambda), \tau}(\rho) - V^{\tilde{\pi}}_{R(\lambda), \tau}(\rho)}}{(1 - \gamma)^{\nicefrac{3}{2}} \, \sqrt{\tau}} \,
\end{align*}
\end{restatable}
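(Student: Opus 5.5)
We reduce the bias to a distance between occupancy measures and control that distance through the strong concavity that entropy regularization induces in occupancy space. Write $d^\pi$ for the normalized discounted state-action occupancy measure, $d^\pi(s,a) = (1-\gamma)\sum_h \gamma^h \Pr(s_h=s,a_h=a \mid \rho,\pi)$, and $d^\pi_s$ for its state marginal. By \cref{proposition:dual_gradient}, $[\dg - \bdg]_i = V^{\pistarl}_{c_i}(\rho) - V^{\tilde\pi}_{c_i}(\rho) = \frac{1}{1-\gamma}\langle d^{\pistarl} - d^{\tilde\pi}, c_i\rangle$. Since $c_i \in [0,1]$, each component is at most $\frac{1}{1-\gamma}\norm{d^{\pistarl} - d^{\tilde\pi}}_1$, and hence
\[
\norm{\dg-\bdg} \le \frac{\sqrt{m}}{1-\gamma}\norm{d^{\pistarl}-d^{\tilde\pi}}_1 .
\]
It therefore suffices to show $\norm{d^{\pistarl}-d^{\tilde\pi}}_1^2 \le \frac{2(1-\gamma)}{\tau}\bigl(\max_\pi V^\pi_{R(\lambda),\tau}(\rho) - V^{\tilde\pi}_{R(\lambda),\tau}(\rho)\bigr)$. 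Importantly, no coverage assumption is needed, because we never pass through per-state policy distances weighted by $\rho$.

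To obtain this inequality, we rewrite the soft value as a function of the occupancy measure $d$ alone. The policy is recovered as $\pi(a|s) = d(s,a)/d_s(s)$, so the entropy term becomes $\sum_s d_s(s)\,\gH(\pi(\cdot|s)) = -\sum_{s,a} d(s,a)\ln\frac{d(s,a)}{d_s(s)}$. This gives
\[
V^{\pi}_{R(\lambda),\tau}(\rho) = F(d) \coloneqq \frac{1}{1-\gamma}\Bigl[\langle d, R(\lambda)\rangle - \tau \sum_{s,a} d(s,a)\ln\frac{d(s,a)}{d_s(s)}\Bigr],
\]
a conditional negative entropy, maximized over the convex polytope of valid occupancy measures. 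We then apply the performance difference lemma for entropy-regularized MDPs. Since $\pistarl$ is the soft-optimal policy, $\pistarl(\cdot|s) \propto \exp(Q^{\pistarl}_{R(\lambda),\tau}(s,\cdot)/\tau)$, and the lemma gives the exact identity
\[
V^{\pistarl}_{R(\lambda),\tau}(\rho) - V^{\tilde\pi}_{R(\lambda),\tau}(\rho) = \frac{\tau}{1-\gamma}\,\E_{s\sim d^{\tilde\pi}_s}\bigl[\KL(\tilde\pi(\cdot|s)\,\|\,\pistarl(\cdot|s))\bigr].
\]
Here the expectation is under the visitation of $\tilde\pi$ itself, not under $\rho$, which is why no coverage condition enters.

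The remaining and main step is to upper bound $\norm{d^{\pistarl}-d^{\tilde\pi}}_1$ by this $d^{\tilde\pi}$-weighted KL. We would decompose $d^{\tilde\pi} - d^{\pistarl}$ through the resolvent identity $d^{\pi}_s = (1-\gamma)\rho + \gamma P_\pi^\top d^\pi_s$, which yields
\[
\norm{d^{\tilde\pi}-d^{\pistarl}}_1 \le \frac{1}{1-\gamma}\,\E_{s\sim d^{\tilde\pi}_s}\norm{\tilde\pi(\cdot|s)-\pistarl(\cdot|s)}_1 .
\]
The $\frac{1}{1-\gamma}$ factor arises from simulation-lemma-type error propagation, and choosing the weighting under $d^{\tilde\pi}$ rather than $d^{\pistarl}$ is what matches the KL identity above. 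Pinsker's inequality followed by Jensen's inequality then gives
\[
\E_{s\sim d^{\tilde\pi}_s}\norm{\cdot}_1 \le \sqrt{2\,\E_{s\sim d^{\tilde\pi}_s}\KL(\tilde\pi\|\pistarl)} = \sqrt{\frac{2(1-\gamma)\,\mathrm{gap}}{\tau}},
\]
where $\mathrm{gap}$ denotes the suboptimality. Combining,
\[
\norm{\dg-\bdg} \le \frac{\sqrt m}{1-\gamma}\cdot\frac{1}{1-\gamma}\sqrt{\frac{2(1-\gamma)\,\mathrm{gap}}{\tau}} = \frac{\sqrt{2m}\,\sqrt{\mathrm{gap}}}{(1-\gamma)^{3/2}\sqrt\tau},
\]
which is exactly the claimed constant.

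The delicate point is the occupancy-difference bound with the $\tilde\pi$-weighting. We would verify it by writing $d^{\tilde\pi}_s - d^{\pistarl}_s = \gamma (I-\gamma P_{\pistarl}^\top)^{-1}(P_{\tilde\pi}-P_{\pistarl})^\top d^{\tilde\pi}_s$ and using that $(1-\gamma)(I-\gamma P^\top)^{-1}$ is stochastic, so it preserves $\ell_1$ norms up to the $\frac{1}{1-\gamma}$ factor. The action-level difference then adds the local term $\sum_s d^{\tilde\pi}_s(s)\norm{\tilde\pi-\pistarl}_1$. If the constants come out slightly off, we would instead carry out the entire argument as strong concavity of $F$ over the occupancy polytope in $\ell_1$.
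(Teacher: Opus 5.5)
Your proof is correct and follows the paper's argument in its essentials. Both proofs reduce the bias to $\frac{\sqrt m}{(1-\gamma)^2}\,\E_{s\sim d^{\tilde\pi}}\norm{\tilde\pi(\cdot|s)-\pistarl(\cdot|s)}_1$ and then finish with Jensen, Pinsker and the soft suboptimality identity $V^{\pistarl}_{R(\lambda),\tau}(\rho)-V^{\tilde\pi}_{R(\lambda),\tau}(\rho)=\frac{\tau}{1-\gamma}\E_{s\sim d^{\tilde\pi}}\KL(\tilde\pi(\cdot|s)\|\pistarl(\cdot|s))$, packaged in \cref{lemma:generic_vdl_upper_bound}. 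The only difference is the first reduction: the paper gets it in one line from the performance difference lemma (weighted by $d^{\tilde\pi}$), H\"older, and $\supnorm{Q^{\pistarl}_{c_i}}\le\frac{1}{1-\gamma}$, whereas you use the resolvent bound $\norm{d^{\tilde\pi}-d^{\pistarl}}_1\le\frac{1}{1-\gamma}\E_{s\sim d^{\tilde\pi}}\norm{\tilde\pi(\cdot|s)-\pistarl(\cdot|s)}_1$, which does hold with exactly that constant, so your occupancy-polytope fallback is not needed.
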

\noindent Consequently, the bias can be controlled through the accuracy of the inner black-box policy optimization method in~\cref{algorithm:dd}. 
Hence, to obtain a single deployable policy, we instantiate the abstract \texttt{PolicyOptimization} method with entropy-regularized natural policy gradient (\texttt{E-NPG})~\citep{cen2022fast,lan2023policy}.

\subsection{Instantiating the Algorithm}\label{section:tabular}
For a generic bounded pseudo-reward $\diamond\in[0,U]^{SA}$, $U>0$, we employ \texttt{E-NPG} for $K$ iterations.
At inner-iteration $k \in[0,K-1]$, \texttt{E-NPG} iteratively updates policy $\pi_k$ using an estimate of the soft state-action value function $\hat{Q}^{\pi_k}_{\diamond, \tau} \in \R^{SA}$ using the following update:
$\forall (s, a) \in \cS \times \cA$
\begin{equation}
    \pi_{k+1}(a | s) \propto \parens*{\pi_k(a | s)}^{1 - \frac{\eta \, \tau}{1 - \gamma}} \, \exp\parens*{\frac{\eta \, \hat{Q}^{\pi_k}_{\diamond, \tau}(s, a)}{1 - \gamma}}  \label{eq:inexact_entropy_npg_update}
\end{equation}
where $\eta > 0$ denotes the policy step-size.
Furthermore, the following result from Cen et al.~\cite[Theorem 1]{cen2022fast} shows that \texttt{E-NPG} converges linearly for any bounded pseudo-reward up to an floor that depends on inexactness in the state-action value function.
For example, such an error can arise when evaluating the policy $\pi_k$ using a finite number of trajectories.
\begin{restatable}{lemma}{inexactentropynpgsupnormconvergencelemma}\label{lemma:inexact_npg}
\looseness=-1  
For any pseudo-reward $\diamond \in [0, U]^{SA}$, $U > 0$, 
assuming that $\sup_{k \geq 0}\|\hat{Q}^{\pi_k}_{\diamond, \tau} - Q^{\pi_k}_{\diamond, \tau}\|_\infty \leq \gE$, the inexact \texttt{E-NPG} update in~\cref{eq:inexact_entropy_npg_update} with $\tau > 0$, $\eta \leq \frac{1 - \gamma}{\tau}$ results in the following convergence: for $K \geq 0$
\begin{align*}
    \|Q^{\pi^*_{\diamond,\tau}}_{\diamond, \tau} - Q^{\pi_{K+1}}_{\diamond, \tau}\|_\infty &\leq \gamma \bracks*{(1 -\eta \, \tau)^{K} \, C + E} \\
    \|\ln(\pi^*_{\diamond,\tau}) - \ln(\pi_{K+1})\|_\infty &\leq \frac{2}{\tau} \bracks*{(1 -\eta \, \tau)^{K} \, C + E} \\
    V^{\pi^*_{\diamond,\tau}}_{\diamond, \tau}(\rho) - V^{\pi_{K+1}}_{\diamond, \tau}(\rho) &\leq (2 + \gamma) \, [C \, (1 - \eta \, \tau)^{K} + E]
\end{align*}
where $\pi^*_{\diamond,\tau} \coloneqq \argmax_\pi V^{\pi}_{\diamond, \tau}(\rho)$, $C \coloneqq \|Q^{\pi^*_{\diamond,\tau}}_{\diamond, \tau} - Q^{\pi_0}_{\diamond, \tau}\|_\infty + 2 \, \tau \, (1 - \frac{\eta \, \tau}{1 - \gamma}) \, \|\ln( \pi^*_{\diamond, \tau}) - \ln(\pi_0)\|_\infty$ and $E \coloneqq \frac{2 \, \gE}{1 - \gamma} \, (1 + \frac{\gamma}{\eta \, \tau})$.
\end{restatable}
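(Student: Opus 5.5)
The plan is to follow the contraction argument of \citet[Theorem 1]{cen2022fast}, carrying the evaluation error $\gE$ along. We write $\pi^* \coloneqq \pi^*_{\diamond,\tau}$, $\alpha \coloneqq 1 - \frac{\eta\tau}{1-\gamma}$, and define the ideal soft-Bellman target $\xi^* \coloneqq Q^{\pi^*}_{\diamond,\tau}/\tau$. We use the soft-optimality characterization $\pi^*(a|s) \propto \exp(Q^{\pi^*}_{\diamond,\tau}(s,a)/\tau)$.

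\textbf{Step 1: log-linear recursion.} Write the update~\cref{eq:inexact_entropy_npg_update} in logit form. Up to a state-dependent normalizer, $\ln \pi_{k+1} = \alpha \ln \pi_k + (1-\alpha)\hat{Q}^{\pi_k}_{\diamond,\tau}/\tau$. Subtracting the fixed-point identity $\ln\pi^* = \alpha\ln\pi^* + (1-\alpha) Q^{\pi^*}_{\diamond,\tau}/\tau$, we get
\[
\ln\pi_{k+1} - \ln\pi^* = \alpha(\ln\pi_k - \ln\pi^*) + \tfrac{1-\alpha}{\tau}\bigl(\hat{Q}^{\pi_k}_{\diamond,\tau} - Q^{\pi^*}_{\diamond,\tau}\bigr) + c_k(s),
\]
where $c_k(s)$ is independent of the action. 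Normalizers are controlled by the standard fact that an action-independent shift of two logit vectors changes the sup-norm gap of the normalized log-policies by at most a factor $2$. This is the origin of the factor $2/\tau$ in the second bound.

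\textbf{Step 2: coupled contraction.} Next we bound $\|Q^{\pi^*}_{\diamond,\tau} - Q^{\pi_{k+1}}_{\diamond,\tau}\|_\infty$. We use the soft Bellman equation together with the monotone improvement of entropy-regularized mirror descent steps (a performance-difference argument as in Cen et al., Lemma 1). This gives
\[
\|Q^{\pi^*}_{\diamond,\tau} - Q^{\pi_{k+1}}_{\diamond,\tau}\|_\infty \le \gamma\bigl(\|Q^{\pi^*}_{\diamond,\tau} - \tau \xi_{k+1}\|_\infty\bigr) + \text{error},
\]
where $\xi_{k+1}$ is the unnormalized logit iterate. Iterating the linear recursion of Step 1 then yields a contraction with rate $1-\eta\tau$ for $\|Q^{\pi^*}_{\diamond,\tau} - \tau\xi_k\|_\infty$. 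This uses $\|\hat Q - Q^{\pi_k}\| \le \gE$ and $\|Q^{\pi^*} - Q^{\pi_k}\|_\infty \le \gamma\|Q^{\pi^*}-\tau\xi_k\|_\infty + \dots$.

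The error injected at each step is at most $(1-\alpha)\gE/\tau \cdot \tau$ plus a term produced by the improvement lemma's dependence on $\hat Q$. Summing the geometric series $\sum_j (1-\eta\tau)^j$ gives a floor of order $\frac{\gE}{1-\gamma}(1+\frac{\gamma}{\eta\tau})$, which is $E$. The initial quantity is $\|Q^{\pi^*} - \tau\xi_0\|_\infty$. Using $\xi_0 = \ln\pi_0$ plus the $\alpha$-weighted first step, it is bounded by $C$, which carries the factor $2\tau\alpha\|\ln\pi^*-\ln\pi_0\|_\infty$.

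\textbf{Step 3: the three conclusions.} The first bound follows directly from Step 2. The second follows from Step 1 by the normalization argument: $\|\ln\pi^* - \ln\pi_{K+1}\|_\infty \le \frac{2}{\tau}\|Q^{\pi^*}-\tau\xi_{K+1}\|_\infty$. For the value gap, write $V_\tau(s) = \E_{a\sim\pi}[Q_\tau(s,a) - \tau\ln\pi(a|s)]$ and decompose
\[
V^{\pi^*}_\tau - V^{\pi_{K+1}}_\tau = \E_{\pi^*}[Q^{\pi^*}_\tau - \tau\ln\pi^*] - \E_{\pi_{K+1}}[Q^{\pi_{K+1}}_\tau - \tau\ln\pi_{K+1}].
\]
Controlling the first-order difference in $Q$ by the first bound, and the log-policy terms by the second bound times $\tau$, gives a total of at most $(2+\gamma)[C(1-\eta\tau)^K + E]$ after averaging over $\rho$.

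\textbf{Main obstacle.} The delicate part is Step 2. The inexact $\hat Q$ breaks the exact monotone improvement $Q^{\pi_{k+1}} \ge Q^{\pi_k}$ used by Cen et al., so the approximate improvement lemma must be re-derived with an additive $\frac{2\gE}{1-\gamma}\cdot\frac{\gamma}{\eta\tau}$-type slack. I would follow their Appendix (inexact case) closely and keep the constants as stated.
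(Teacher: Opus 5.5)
Your plan is the paper's route. The paper gives no argument of its own and simply imports \citet[Theorems~1--2]{cen2022fast}, and the $(2+\gamma)$ value bound is exactly what your Step~3 yields via the per-state identity $V^{\pi^*}(s)-V^{\pi}(s)=\tau\,\mathrm{KL}\bigl(\pi(\cdot|s)\,\|\,\pi^*(\cdot|s)\bigr)+\mathbb{E}_{a\sim\pi(\cdot|s)}\bigl[Q^{\pi^*}(s,a)-Q^{\pi}(s,a)\bigr]$ (subscripts $\diamond,\tau$ dropped) together with $\mathrm{KL}\le\|\ln\pi^*-\ln\pi\|_\infty$.

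If you write Step~2 out yourself, two details need fixing:
\begin{enumerate}
\item With $\xi_0=\ln\pi_0$, the initial gap $\|Q^{\pi^*}-\tau\xi_0\|_\infty$ contains $V^{\pi^*}$ and is not bounded by $C$. Cen et al.\ normalize $\tau\xi_0=\tau\ln\pi_0+V^{\pi^*}$, so that $Q^{\pi^*}-\tau\xi_0=\tau(\ln\pi^*-\ln\pi_0)$.
\item Even with this normalization, $\|Q^{\pi^*}-Q^{\pi_{k+1}}\|_\infty\le\gamma\|Q^{\pi^*}-\tau\xi_{k+1}\|_\infty$ fails at early iterates with exact $Q$ (take a one-state MDP with $\alpha\to1$). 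You must add the transient $\gamma\alpha^{k+1}\bigl(\|Q^{\pi^*}-Q^{\pi_0}\|_\infty+\tau\|\ln\pi^*-\ln\pi_0\|_\infty\bigr)$. It comes from monotone improvement through $Q^{\pi_k}-\tau\xi_k\ge\alpha^{k}(Q^{\pi_0}-\tau\xi_0)$, and it is precisely what produces the factor $2$ in $C$.
\end{enumerate}
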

\noindent For a cleaner exposition, we first consider the case where the update is exact,  i.e., $\hat{Q}^{\pi_k}_{\diamond,\tau}=Q^{\pi_k}_{\diamond,\tau}$ which implies that $E=0$. We defer handle a general form when $E > 0$ in~\cref{appendix:linear_fa}. Due to the projection in~\cref{algorithm:dd}, the pseudo-reward $R(\lambda_t)$ remains bounded for all outer-iterations $t \in [0,T-1]$. Therefore, applying the above lemma with $\diamond = R(\lambda_t)$ results in the following convergence result,
\begin{equation}
 \max_{\pi} V^{\pi}_{R(\lambda_t), \tau}(\rho) - V^{\pi_{K+1}}_{R(\lambda_t), \tau}(\rho) \leq \gO((1 - \eta\,\tau)^{K}) \, \label{eq:npg_grad_error}.
\end{equation}
Hence, \texttt{E-NPG} serves as an efficient solver for
the inner sub-problem.

Putting everything together, we characterize the convergence rate of~\cref{algorithm:dd} and prove the following theorem in~\cref{appendix:smooth_k_ln_eps}. To enable a fair comparison to prior work, we characterize the convergence using the total number of policy gradient evaluations across the entire procedure. This corresponds to $K \, T$ gradient evaluations (corresponding to $K$ inner-loops for each of the $T$ outer-iterations) and the final $J$ iterations to recover a deployable policy.
\begin{restatable}{theorem}{smoothddlastiteratetheorem}\label{theorem:meta_smooth}
Under~\cref{assumption:slater_condition}, for a given $\eps \in (0, 1)$,~\cref{algorithm:dd} (i) with $\tau = \gO(\eps)$, $T = \gO(\nicefrac{1}{\eps^4})$, $\beta = \frac{\tau \, (1 - \gamma)^3}{m}$, $\mu = 0$, $\lambda_0 = 0$ and (ii) using the exact \texttt{E-NPG} update with $\eta = \gO\parens*{\frac{1 - \gamma}{\tau}}$ and $K = J = \gO(\ln(\nicefrac{1}{\eps}))$ returns an 
$\gO(\eps)$-optimal policy to~\cref{problem:cmdp} after $\gO\parens*{\nicefrac{1}{\eps^4} \ln(\nicefrac{1}{\eps})}$ gradient evaluations.
\end{restatable}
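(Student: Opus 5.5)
The plan is to chain the four lemmas of \cref{section:abstract-analysis} together: \cref{lemma:inexact_npg} controls the inner error, \cref{lemma:bound_bias_subgradient} turns that error into a gradient bias $\delta_t$, \cref{lemma:meta_lemma_smooth} turns the biases into dual sub-optimality $\Delta$ of $\bar\lambda_T$, and \cref{lemma:dual_optimality_to_sub} turns $\Delta$ into the primal guarantee. The recovery step (\cref{line:recover}) then requires one more perturbation argument.

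\textbf{Inner-loop bias.} Since $\lambda_t\in\Lambda$, each pseudo-reward $R(\lambda_t)$ lies in $[0,U]$ with $U=1+\sum_i[\Lambda]_i=\gO(1)$ (recall $\tau\le 1$). I need a uniform bound on the constant $C$ in \cref{lemma:inexact_npg}.
\begin{itemize}
\item The $Q$-difference part of $C$ is at most $\gO((U+\tau\ln A)/(1-\gamma))$.
\item The log-policy part, $\|\ln\pi^*_{\lambda_t,\tau}-\ln\pi_{t-1}\|_\infty$, is bounded provided the warm start $\pi_{t-1}$ has log-probabilities bounded below. The simplest route is to choose $\eta=(1-\gamma)/\tau$, which is allowed. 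Then the update \eqref{eq:inexact_entropy_npg_update} gives $\pi_{k+1}\propto\exp(Q/\tau)$, so the warm start is forgotten and $\|\ln\pi\|_\infty\le \gO(U/(\tau(1-\gamma)))$ after one step.
\item Alternatively, the softmax form of every iterate shows that $\tau\ln\pi$ is uniformly bounded by $\gO(U/(1-\gamma))$.
\end{itemize}
Either way, $\tau\|\ln\pi^*-\ln\pi_0\|_\infty=\gO(1)$, so $C=\gO(1/(1-\gamma))$ independently of $t$. \cref{lemma:inexact_npg} with $E=0$ then gives an inner sub-optimality of at most $3C(1-\eta\tau)^{K}$, and \cref{lemma:bound_bias_subgradient} gives
\[\delta_t\le \frac{\sqrt{6mC}\,(1-\eta\tau)^{K/2}}{(1-\gamma)^{3/2}\sqrt{\tau}}.\]

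\textbf{Dual rate.} Set $\beta=1/G_\tau$ as stated and apply \cref{lemma:meta_lemma_smooth}. Using $\|\lstar\|^2\le|\Lambda|^2=\gO(1)$, this gives
\[\Delta\le \frac{m|\Lambda|^2}{2\tau(1-\gamma)^3T}+|\Lambda|\max_t\delta_t.\]
By \cref{lemma:dual_optimality_to_sub}, $\eps$-optimality needs $\sqrt{\Delta/\tau}\le\eps$, i.e.\ $\Delta\le\tau\eps^2=\Theta(\eps^3)$.
\begin{itemize}
\item The first term is $\gO(1/(\eps T))$, which forces $T=\gO(1/\eps^4)$.
\item The second term is $\gO(\rho^{K}/\sqrt{\eps})$ with $\rho=(1-\eta\tau)^{1/2}<1$ a constant, since $\eta\tau=\Theta(1-\gamma)$. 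This forces $K=\gO(\ln(1/\eps))$.
\end{itemize}
Taking $\tau=c\eps$ also makes the $\gO(\tau)$ term in \cref{lemma:dual_optimality_to_sub} equal to $\gO(\eps)$. The policy $\pistarblT$ is therefore $\gO(\eps)$-optimal.

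\textbf{Recovery (expected main obstacle).} The algorithm outputs $\pi_T$, the result of $J$ \texttt{E-NPG} steps on $R(\bar\lambda_T)$, not the exact $\pistarblT$. I would bound the difference in each of $V_r$ and $V_{c_i}$ by
\[\frac{1}{1-\gamma}\max_s\|\pi_T(\cdot|s)-\pistarblT(\cdot|s)\|_1\]
via the performance difference lemma. I would then control this $\ell_1$ distance by $\|\ln\pistarblT-\ln\pi_T\|_\infty$ from the second bound of \cref{lemma:inexact_npg}, which is at most $\frac{2}{\tau}C(1-\eta\tau)^{J-1}$. The $1/\tau=\gO(1/\eps)$ blow-up is absorbed by taking $J=\gO(\ln(1/\eps))$. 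Here the initial policy is $\pi_{-1}$, so $C$ is again $\gO(1/(1-\gamma))$ provided $\pi_{-1}$ is uniform or any full-support policy.

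\textbf{Counting.} The total gradient evaluations are $KT+J=\gO(\eps^{-4}\ln(1/\eps))$. The delicate points are the uniform-in-$t$ bound on $C$, i.e.\ the warm start's log-probabilities, and the $1/\tau$ factors in the recovery step. Both are only logarithmic in cost because of linear convergence.
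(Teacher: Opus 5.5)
Your proposal is correct and follows the paper's proof. It uses the same chain \cref{lemma:inexact_npg} $\to$ \cref{lemma:bound_bias_subgradient} $\to$ \cref{lemma:meta_lemma_smooth} (with $\beta = 1/G_\tau$) $\to$ \cref{lemma:dual_optimality_to_sub}, with $\tau=\Theta(\eps)$, $T=\gO(\eps^{-4})$ and $K=\gO(\ln(1/\eps))$. Your uniform-in-$t$ bound on $C$ makes explicit a step the paper leaves implicit. It is in fact trivial here: with $\eta=(1-\gamma)/\tau$ the log-policy term of $C$ carries the factor $1-\frac{\eta\tau}{1-\gamma}=0$, so $C=\supnorm{Q^{\pi^*_{\diamond,\tau}}_{\diamond,\tau}-Q^{\pi_0}_{\diamond,\tau}}$.

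The only difference is the recovery step. You use the performance difference lemma with the log-policy bound of \cref{lemma:inexact_npg}, whereas the paper uses \cref{proposition:transfer_rate}, which goes through the Pinsker-based bound \cref{eq:vdl_upper_bound_lag}. Both give $J=\gO(\ln(1/\eps))$. Your version has two harmless constant slips: the performance-difference factor should be $\frac{1}{(1-\gamma)^2}$, and converting the log sup-norm to $\ell_1$ costs a factor $A$.
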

\paragraph{Comparison to prior work.}\citet{ying2022dual} employ Nesterov acceleration to minimize the dual and attain a faster $\tilde{\gO}(\nicefrac{1}{\eps^2})$ last-iterate rate.  
However, this rate comes at the cost of a more complicated algorithm. In the next section, we investigate an alternative approach that uses an $\ell_2$ regularization term in the dual objective, and achieves the same improved convergence rate.

\subsection{Faster Rates with $\ell_2$ regularization}\label{section:ell_two}
For the $\ell_2$-regularization parameter $\mu > 0$, the update in~\cref{line:update} 
of~\cref{algorithm:dd} corresponds to inexact gradient descent on the regularized dual:
\begin{equation}\label{eq:dual_entropy_ell2}
\F(\lambda) \coloneqq \max_{\pi} \lag(\pi, \lambda) + \frac{\mu}{2} \, \normsq{\lambda} .
\end{equation} 
Furthermore, by~\cref{lemma:reg_optimal_dual_bounded}, we can verify that the unique minimizer $\lstarmu  \coloneqq \argmin_{\lambda \in \R^m_+} \F(\lambda)$ is contained within $\Lambda$.
Analogous to~\cref{lemma:dual_optimality_to_sub}, we first relate the accuracy of the regularized dual solution to the reward sub-optimality and constraint violation for~\cref{problem:cmdp}.
By exploiting the KKT conditions in~\cref{lemma:reg_dual_kkt}, we obtain the following result. 
\begin{restatable}{lemma}{regdualdisttosublemma}\label{lemma:reg_dual_distance_to_sub}
Under~\cref{assumption:slater_condition}, for any $\mu > 0$, $\tau > 0$ and $\lambda \in \Lambda$, the policy $\pistarl \coloneqq \argmax_\pi V^{\pi}_{R(\lambda), \tau}(\rho)$ satisfies: 
\begin{align*}
    V^{\pi^*}_{r}(\rho) - V^{\pistarl}_{r}(\rho) 
    &\leq \gO\parens*{G_\tau \, \norm{\lambda - \lstarmu}} + \gO(\tau) \, , \\
    \max\braces*{b_i - V^{\pistarl}_{c_i}(\rho), 0} 
   &\leq \gO\parens*{G_\tau \norm{\lambda - \lstarmu}} + \gO(\mu)  \quad \forall i \in [m]
\end{align*}
where $G_\tau \coloneqq \frac{m}{\tau \, (1 - \gamma)^3}$ and $\lstarmu = \argmin_{\lambda \in \Lambda} \F(\lambda)$.
\end{restatable}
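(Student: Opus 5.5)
The plan is to compare $\pistarl$ with $\pistarlmu$, where $\pistarlmu := \pi^*_{\lstarmu,\tau}$ is the entropy-regularized maximizer at the regularized dual optimum. I will first bound the suboptimality and violation of $\pistarlmu$ using the KKT conditions of the regularized dual (\cref{lemma:reg_dual_kkt}). I will then transfer these bounds to $\pistarl$ using the Lipschitz continuity of the value functions of entropy-regularized optimal policies with respect to the dual variable. That Lipschitz property is the one underlying the smoothness constant $G_\tau$ in \cref{lemma:dual_smooth}.

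\textbf{Step 1 (KKT at $\lstarmu$).} By \cref{lemma:reg_optimal_dual_bounded}, $\lstarmu$ lies in $\Lambda$ and also minimizes $\F$ over $\R^m_+$. Combining \cref{proposition:dual_gradient} with the first-order optimality conditions for $\F$ on $\R^m_+$ gives, for every $i$, $V^{\pistarlmu}_{c_i}(\rho) - b_i + \mu[\lstarmu]_i \geq 0$. This inequality holds with equality whenever $[\lstarmu]_i > 0$. Two consequences follow.
\begin{itemize}
\item Violation: $b_i - V^{\pistarlmu}_{c_i}(\rho) \leq \mu [\lstarmu]_i \leq \mu \max_{\lambda\in\Lambda}\lambda_i = \gO(\mu)$.
\item Complementary slackness: $\sum_i [\lstarmu]_i (V^{\pistarlmu}_{c_i}(\rho) - b_i) = -\mu\normsq{\lstarmu} \leq 0$.
\end{itemize}

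\textbf{Step 2 (reward at $\lstarmu$).} Since $\pistarlmu$ maximizes $\lag(\cdot,\lstarmu)$, comparing with $\pi^*$ gives
\[
V^{\pi^*}_{r,\tau}(\rho) + \textstyle\sum_i [\lstarmu]_i (V^{\pi^*}_{c_i}(\rho)-b_i) \leq V^{\pistarlmu}_{r,\tau}(\rho) + \textstyle\sum_i [\lstarmu]_i(V^{\pistarlmu}_{c_i}(\rho) - b_i).
\]
On the left, each constraint term is nonnegative because $\pi^*$ is feasible. On the right, the constraint sum is at most zero by complementary slackness. Hence $V^{\pi^*}_{r,\tau}(\rho)\le V^{\pistarlmu}_{r,\tau}(\rho)$. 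The entropy contribution to the soft value lies in $[0, \tau\ln(A)/(1-\gamma)]$, so $V^{\pi^*}_r(\rho) - V^{\pistarlmu}_r(\rho) \leq \tau\ln(A)/(1-\gamma) = \gO(\tau)$. Notably, the reward bound picks up no $\mu$ term, which matches the statement.

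\textbf{Step 3 (transfer to $\lambda$) — the main obstacle.} I need, for each $\diamond \in \{r, c_1,\dots,c_m\}$,
\[
\abs{V^{\pistarl}_\diamond(\rho) - V^{\pistarlmu}_\diamond(\rho)} \leq \gO(G_\tau)\norm{\lambda - \lstarmu}.
\]
For the constraints this follows directly from smoothness: by \cref{proposition:dual_gradient}, $V^{\pistarl}_{c_i}(\rho) - b_i = [\nabla\cD(\lambda)]_i$, and \cref{lemma:dual_smooth} makes $\nabla \cD$ $G_\tau$-Lipschitz. Adding the Step 1 bound then yields the violation claim.

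For the reward, I would invoke the Lipschitz bound that the proof of \cref{lemma:dual_smooth} establishes. That proof bounds $\|\ln\pistarl - \ln\pi^*_{\lambda',\tau}\|_\infty$ via a soft Bellman contraction in which the pseudo-reward changes by at most $\norm{\lambda-\lambda'}_1$ in sup norm. A performance-difference argument then converts the policy difference into a value difference of order $\frac{\sqrt m\,\norm{\lambda-\lambda'}}{\tau(1-\gamma)^3}$ for any $[0,1]$-valued reward, in particular $r$. This is $\gO(G_\tau\norm{\lambda-\lambda'})$.

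If that intermediate bound is not directly reusable, I would rederive it. The optimal soft $Q$ function is $\frac{1}{1-\gamma}$-Lipschitz in the pseudo-reward, and $\ln\pistarl = (Q^*_\lambda - V^*_\lambda)/\tau$. Therefore the log-policies differ by at most $\frac{2\sqrt m\norm{\lambda-\lambda'}}{\tau(1-\gamma)}$ in sup norm, which bounds the TV distance at each state. The performance difference lemma then contributes an extra factor $\frac{1}{(1-\gamma)^2}$. Combining this transfer bound with Step 2 gives the reward claim.
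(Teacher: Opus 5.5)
Your proof is correct. The constraint-violation part matches the paper: the KKT identity at $\lstarmu$ followed by the $G_\tau$-Lipschitzness of $\nabla\cD$ from \cref{lemma:dual_smooth}.

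For the reward bound you use a different decomposition. The paper never compares the rewards of $\pistarl$ and $\pistarlmu$. It starts from the optimality of $\pistarl$ for $\lag(\cdot,\lambda)$ against $\pi^*$, which gives
$V^{\pi^*}_r(\rho)-V^{\pistarl}_r(\rho)\le \frac{\tau\ln A}{1-\gamma}+\sum_i\lambda_i\,(V^{\pistarl}_{c_i}(\rho)-b_i)$.
It then splits the multiplier sum into three terms:
\begin{itemize}
\item $\langle\lambda-\lstarmu,\nabla\cD(\lambda)\rangle$, bounded by $\frac{\sqrt m}{1-\gamma}\norm{\lambda-\lstarmu}$;
\item $\langle\lstarmu,\nabla\cD(\lambda)-\nabla\cD(\lstarmu)\rangle$, bounded by $\norm{\lstarmu}\,G_\tau\norm{\lambda-\lstarmu}$;
\item a complementary-slackness term that is nonpositive by \cref{lemma:reg_dual_kkt}.
\end{itemize}
That route only needs Lipschitzness of the constraint values, so \cref{lemma:dual_smooth} is used purely as a black box.

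Your route goes the other way: first show $\pistarlmu$ is $\gO(\tau)$-suboptimal, then perturb from $\lstarmu$ to $\lambda$. This needs Lipschitzness of $\lambda\mapsto V^{\pistarl}_r(\rho)$, which the paper does not state as a lemma. It does follow verbatim from the proof of \cref{lemma:dual_smooth}. The estimate $\max_s\norm{\pistarl(\cdot|s)-\pi^*_{\lambda',\tau}(\cdot|s)}_1\le\frac{\sqrt m}{\tau(1-\gamma)}\norm{\lambda-\lambda'}$ combined with the value-difference lemma applies to any $[0,1]$-valued reward, including $r$.

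In exchange you get a cleaner structure and a sharper constant. Your reward coefficient is $\frac{\sqrt m}{\tau(1-\gamma)^3}$. The paper's dominant term is larger by the factor $\sqrt m\,\norm{\lstarmu}$, where $\norm{\lstarmu}\le\frac{\sqrt m\,(2+2\tau\ln A)}{(1-\gamma)\min_i\zeta_i}$. So the paper's reward bound depends on the Slater constant and yours does not.
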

\noindent Hence, controlling the reward sub-optimality and constraint violation requires controlling the dual sub-optimality $\norm{\lambda - \lstarmu}$. Assuming that the inexactness in the gradients are bounded,
similar to~\cref{lemma:meta_lemma_smooth}, we consider projected gradient descent on a smooth, strongly-convex objective with inexact gradients.
\begin{restatable}{lemma}{regduallinearconvergencelemma}\label{lemma:reg_dual_linear_convergence}
Under~\cref{assumption:slater_condition}, assuming that for all $t \in [0,T-1]$, $\norm{\bdgt - \dgt} \leq \delta_t$,~\cref{algorithm:dd} using step-size $\beta \leq \nicefrac{2}{(G_{\tau} + 2 \, \mu)}$, $\lambda_0 = 0$ results in the following convergence: for $T \geq 1$
\looseness=-1
\begin{align*}
    \norm{\lambda_T - \lstarmu} \numberthis \label{eq:reg_gd_general}
    \leq \exp\parens*{-\frac{\beta \mu T}{2}} \norm{\lstarmu} + \beta \sum_{t=0}^{T-1} \parens*{1 - \frac{\beta \, \mu}{2}}^{T - t - 1} \, \delta_t 
\end{align*}
\end{restatable}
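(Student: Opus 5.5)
The plan is to treat \cref{algorithm:dd} with $\mu>0$ as inexact projected gradient descent on $\F$, and to derive a one-step contraction for $\norm{\lambda_t - \lstarmu}$ that unrolls into \cref{eq:reg_gd_general}.

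\textbf{Step 1: properties of $\F$.} By \cref{lemma:dual_smooth}, $\cD$ is convex and $G_\tau$-smooth, so $\F = \cD + \frac{\mu}{2}\normsq{\cdot}$ is $\mu$-strongly convex and $(G_\tau+\mu)$-smooth. Its exact gradient is $\nabla\F(\lambda) = \nabla\cD(\lambda) + \mu\lambda$. The update in \cref{line:update} can be rewritten as
\begin{equation*}
\lambda_{t+1} = \projL\bracks*{\lambda_t - \beta\parens*{\nabla\F(\lambda_t) + e_t}}, \qquad e_t \coloneqq \bdgt - \dgt, \qquad \norm{e_t}\le\delta_t .
\end{equation*}
By \cref{lemma:reg_optimal_dual_bounded}, $\lstarmu\in\Lambda$. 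Since $\lstarmu$ minimizes $\F$ over $\R^m_+\supseteq\Lambda$, it also minimizes $\F$ over $\Lambda$. Hence it is a fixed point of the projected step: $\lstarmu = \projL[\lstarmu - \beta\nabla\F(\lstarmu)]$.

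\textbf{Step 2: one-step contraction.} Using nonexpansiveness of $\projL$ and the triangle inequality,
\begin{equation*}
\norm{\lambda_{t+1}-\lstarmu} \le \norm{(\lambda_t - \beta\nabla\F(\lambda_t)) - (\lstarmu - \beta\nabla\F(\lstarmu))} + \beta\delta_t .
\end{equation*}
The main technical point is bounding the first term by $(1-\frac{\beta\mu}{2})\norm{\lambda_t-\lstarmu}$ for $\beta\le 2/(G_\tau+2\mu)$. I would first use the standard co-coercivity inequality for a $\mu$-strongly convex, $L$-smooth function with $L = G_\tau+\mu$:
\begin{equation*}
\dpd{\nabla\F(x)-\nabla\F(y),\, x-y} \ge \frac{\mu L}{\mu+L}\normsq{x-y} + \frac{1}{\mu+L}\normsq{\nabla\F(x)-\nabla\F(y)} .
\end{equation*}
Expanding the square of the first term then gives, for $\beta\le 2/(\mu+L)=2/(G_\tau+2\mu)$, a factor of at most $1-\frac{2\beta\mu L}{\mu+L}$. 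Since $L\ge\mu$, we have $\frac{L}{\mu+L}\ge\frac12$, so the factor is at most $1-\beta\mu$. Taking square roots and using $\sqrt{1-x}\le 1-x/2$ yields the contraction factor $1-\frac{\beta\mu}{2}$. This matches the statement exactly.

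An alternative route is to split the operator into $\lambda\mapsto\lambda-\beta\nabla\cD(\lambda)$, which is nonexpansive for $\beta\le 2/G_\tau$, composed with scaling by $(1-\beta\mu)$. That gives the stronger factor $1-\beta\mu$. I would keep it as a fallback, but the co-coercivity argument directly respects the stated step-size condition.

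\textbf{Step 3: unroll.} The recursion $a_{t+1}\le(1-\frac{\beta\mu}{2})a_t + \beta\delta_t$ with $a_0 = \norm{\lambda_0 - \lstarmu} = \norm{\lstarmu}$ (since $\lambda_0=0$) gives
\begin{equation*}
a_T\le (1-\tfrac{\beta\mu}{2})^T\norm{\lstarmu} + \beta\sum_{t=0}^{T-1}(1-\tfrac{\beta\mu}{2})^{T-t-1}\delta_t .
\end{equation*}
Applying $1-x\le e^{-x}$ to the first term yields \cref{eq:reg_gd_general}.

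The only delicate step is the contraction constant in Step 2. Everything else is routine.
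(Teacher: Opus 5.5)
Your proposal is correct and follows essentially the same argument as the paper. Both use the fixed-point property of $\lstarmu$, nonexpansiveness of $\projL$, and the co-coercivity inequality for $\mu$-strongly convex, $(G_\tau+\mu)$-smooth functions (Nesterov's Theorem 2.1.12) with $\beta \le 2/(G_\tau+2\mu)$ to obtain the factor $1-\frac{\beta\mu}{2}$, add $\beta\delta_t$, and unroll using $1-x\le e^{-x}$. The differences are cosmetic: the paper isolates the gradient error through an auxiliary exact-gradient iterate $\lambda'_{t+1}=\projL[\lambda_t-\beta\nabla\F(\lambda_t)]$ rather than splitting off $\beta e_t$ inside the projection, and you justify the fixed-point identity via $\lstarmu\in\Lambda$, which the paper only asserts.
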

\noindent In~\cref{lemma:reg_optimal_dual_bounded}, we show that the magnitude of $\lstarmu$ is bounded.
In contrast to~\cref{lemma:meta_lemma_smooth} where the bias $\delta_t$ accumulates additively, here, the accumulated bias decays geometrically over the previous iterations. This discounting is key to obtaining a faster convergence rate. Combining the regularized dual convergence in~\cref{lemma:reg_dual_distance_to_sub} with~\cref{lemma:reg_dual_linear_convergence}, and using \texttt{E-NPG} as the policy optimization method, we now present a faster last-iterate rate in the following theorem.
The proof is similar to~\cref{theorem:meta_smooth} and is provided in~\cref{appendix:k_ln_eps_smooth_reg}.
\begin{restatable}{theorem}{regsmoothnessltworate}\label{theorem:reg_smoothness_l2_rate}
\looseness=-1
Under~\cref{assumption:slater_condition}, for a given $\eps \in (0, 1)$,~\cref{algorithm:dd} with (i) $\tau = \mu = \gO(\eps)$, $T = \tilde{\gO}(\nicefrac{1}{\eps^2})$ $\beta = \gO(\eps)$, $\lambda_0 = 0$
(ii) using the exact \texttt{E-NPG} update with $\eta = \gO(\frac{1-\gamma}{\tau})$ and $K = J = \gO(\ln(\nicefrac{1}{\eps}))$ returns $\gO(\eps)$-optimal policy to~\cref{problem:cmdp} after $\tilde{\gO}\parens*{\nicefrac{1}{\eps^2}}$ gradient evaluations.
\end{restatable}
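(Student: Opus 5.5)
The plan is to chain three earlier results: \cref{lemma:reg_dual_distance_to_sub} converts closeness of $\lambda_T$ to $\lstarmu$ into reward sub-optimality and constraint violation; \cref{lemma:reg_dual_linear_convergence} bounds $\norm{\lambda_T - \lstarmu}$ by an exponentially decaying term plus a discounted bias; and \cref{lemma:bound_bias_subgradient} combined with \cref{lemma:inexact_npg} (exact case, $E=0$) controls both that bias and the final recovery error. Set $\tau=\mu=c\,\eps$. By \cref{lemma:reg_dual_distance_to_sub}, the recovered policy, if it were exactly $\pi^*_{\lambda_T,\tau}$, would have error $\gO(G_\tau \norm{\lambda_T-\lstarmu}) + \gO(\eps)$ with $G_\tau = \frac{m}{\tau(1-\gamma)^3} = \gO(1/\eps)$. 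Hence it suffices to ensure $\norm{\lambda_T-\lstarmu} \le \gO(\eps^2)$, together with a small recovery error.

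\textbf{Dual distance.} Take $\beta = \frac{1}{G_\tau+2\mu} = \Theta(\eps)$, which satisfies the condition of \cref{lemma:reg_dual_linear_convergence}. The first term there is $\exp(-\beta\mu T/2)\norm{\lstarmu}$. Since $\beta\mu = \Theta(\eps^2)$ and $\norm{\lstarmu}$ is bounded by \cref{lemma:reg_optimal_dual_bounded} (it lies in $\Lambda$, so it is $\gO(1)$ for $\tau\le 1$), choosing $T = \Theta(\eps^{-2}\ln(1/\eps))$ makes this term $\gO(\eps^2)$. For the bias term, if $\delta_t \le \delta$ for all $t$, then $\beta\sum_t (1-\beta\mu/2)^{T-t-1}\delta_t \le \frac{2\delta}{\mu}$. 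So we need $\delta = \gO(\eps^3)$.

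\textbf{Bias control.} By \cref{lemma:bound_bias_subgradient}, $\delta_t \le \gO\big(\sqrt{\text{subopt}_t/\tau}\big)$. We therefore need the inner sub-optimality to be $\gO(\eps^7)$. By \cref{lemma:inexact_npg} with $\diamond = R(\lambda_t)$, $U = 1+\sum_i \Lambda_i = \gO(1)$, and $\eta\tau = \Theta(1-\gamma)$ (e.g. $\eta=\frac{1-\gamma}{\tau}$ gives rate $\gamma$), the sub-optimality is at most $(2+\gamma)C_t(1-\eta\tau)^K$. This gives a sub-optimality of $\gO(\eps^7)$ with $K = \gO(\ln(1/\eps))$, provided $C_t$ is polynomially bounded in $1/\eps$.

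\textbf{Main obstacle.} The main obstacle is bounding $C_t$ uniformly in $t$ without coverage assumptions. $C_t$ involves $\|\ln \pi^*_{R(\lambda_t),\tau} - \ln \pi_{t-1}\|_\infty$ with warm start $\pi_{t-1}$. I would bound it using the softmax form $\pi^*(a|s)\propto \exp(Q^*_{\tau}(s,a)/\tau)$. This gives $|\ln \pi^*| \le \gO\big(U/(\tau(1-\gamma)) + \ln A\big)$, and the same bound holds inductively for $\pi_{t-1}$: the E-NPG update is a geometric mixture of log-policies and bounded $Q/\tau$ terms, so its log-range stays $\gO(1/\tau)$. Therefore $C_t = \gO(1/\eps)$, which is polynomial, and the logarithmic $K$ suffices.

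\textbf{Recovery and final count.} Since $\lambda_T \in \Lambda$, the same argument with $J = \gO(\ln(1/\eps))$ gives $\|\ln\pi^*_{\lambda_T,\tau} - \ln \pi_T\|_\infty \le \gO(\eps)$, or directly a small value gap. This yields $|V^{\pi_T}_{\diamond}(\rho) - V^{\pi^*_{\lambda_T,\tau}}_{\diamond}(\rho)| \le \gO(\eps)$ for $\diamond \in \{r, c_i\}$, via a performance-difference bound $\frac{1}{(1-\gamma)^2}\|\pi_T-\pi^*\|_1$. Adding up the pieces gives $\gO(\eps)$-optimality. The gradient evaluations total $KT + J = \tilde{\gO}(\eps^{-2})$.
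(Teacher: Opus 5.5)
Your proposal is correct and follows the paper's proof essentially step for step. It obtains a uniform bias bound from \cref{lemma:bound_bias_subgradient} plus linear convergence of \texttt{E-NPG} with $K=\gO(\ln(1/\eps))$, uses the geometric-sum bound $2\hat{\delta}/\mu$ in \cref{lemma:reg_dual_linear_convergence}, converts through \cref{lemma:reg_dual_distance_to_sub} with $G_\tau=\gO(1/\eps)$, and recovers the policy in $\gO(\ln(1/\eps))$ steps as in \cref{proposition:transfer_rate}; your uniform bound on the \texttt{E-NPG} constant $C_t$ makes explicit a point the paper folds into its $\gO(\cdot)$ (and with $\eta=(1-\gamma)/\tau$ the log term in $C_t$ vanishes, so in fact $C_t=\gO(1)$).
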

\paragraph{Comparison to prior work.} Compared with other bilevel methods in~\cref{table:last}, our method achieves an $\tilde{\gO}(\nicefrac{1}{\eps^2})$ last-iterate rate without any coverage assumptions. This matches the rate of the inexact \texttt{DD} method of~\citet{ying2022dual}, while avoiding a more involved Nesterov acceleration scheme.

The previous results require solving the inner sub-problem to a high accuracy. This makes the resulting algorithm cumbersome to deploy in practice. Consequently, in the next section, we show only a constant (even a single) number of policy optimization steps of \texttt{E-NPG} suffice for attain last-iterate convergence.

\section{From Dual Descent with $\texttt{NPG}$ to Primal-Dual}\label{section:dd_to_pd}
We first present the main idea and show warm-starting \texttt{E-NPG} from the previous policy iterate allows a constant number of updates to control the accumulated bias.
We then establish the corresponding last-iterate guarantee.
For clarity, we consider when $\mu > 0$, which yields a sharper convergence rate. The case when $\mu = 0$ follows from a similar analysis and we defer the corresponding details to~\cref{appendix:k_one_smooth}.

\looseness=-1
In order to gain some intuition as to why using a constant number of policy updates is possible, recall from~\cref{lemma:reg_dual_linear_convergence}, 
obtaining an accurate dual solution using~\cref{algorithm:dd} 
requires control of the accumulated discounted bias: $\beta \, \sum_{t=0}^{T-1} (1 - \nicefrac{\beta \, \mu}{2})^{T-t-1} \delta_t$  where $\norm{\nabla \cD(\lambda_t)-\widetilde{\nabla}\cD(\lambda_t)} \leq \delta_t$. Instead of bounding the bias $\delta_t$ uniformly for each iteration $t \geq 0$, we instead control the aggregate bias across iterations.
To this end, observe that for each $i \in [m]$, $[\bdgt]_i = V^{\pi_t}_{c_i}(\rho) - b_i \leq \frac{1}{1 - \gamma}$ since $c_i \in [0, 1]^{SA}$. Hence, the inexact dual gradients are uniformly bounded which implies consecutive dual iterates remain close, i.e., $\norm{\lambda_t - \lambda_{t+1}} \leq \gO(\beta \, \mu)$. As a result, the pseudo-reward and primal iterates across consecutive iterations do not change significantly. This suggests that warm-starting the \texttt{E-NPG} update in~\cref{eq:inexact_entropy_npg_update} at $\pi_t$ allows the policy iterates to efficiently track the target policy $\pi^*_{R(\lambda_{t+1}), \tau}$. Therefore, we do not require solving each inner sub-problem to high-accuracy, and consequently show that a constant (independent of $\eps$) number of \texttt{E-NPG} updates will suffice.

We now formalize this intuition. For a fixed $t \geq 0$, let 
$R_t \coloneqq R(\lambda_t)$, $\pi^*_t \coloneqq \argmax_{\pi} V^{\pi}_{R_t, \tau}(\rho)$ and $Q^{\pi_t}_{t} \coloneqq Q^{\pi_t}_{R_t, \tau}$.
The following lemma introduces the tracking potential used in our analysis. 
In particular, $\Phi_t$ quantifies how well the initial policy $\pi_t$ tracks the optimal policy $\pi_t^*$ of the inner sub-problem at each iteration $t$. 
\begin{restatable}{lemma}{biasboundedbyphilemma}\label{lemma:bias_bounded_by_phi}
For all $t \geq 0$,    
\vspace{-1ex}
\begin{equation*}
\norm{\dgt - \bdgt} \leq \frac{4 \sqrt{m} \, A \, \Phi_t}{\tau \, (1 - \gamma)^2}  \numberthis \label{eq:tracking_bound10}
\end{equation*}
where $\Phi_t =\frac{1}{1 - \gamma} \|Q^{\pistart}_{t} - Q^{\pi_{t}}_{t}\|_\infty + \frac{\tau}{4}  \|\ln(\pistart) - \ln(\pi_t)\|_\infty$.
\end{restatable}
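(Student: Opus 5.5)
The plan is to bound each coordinate of the bias separately and then sum over the $m$ coordinates. By \cref{proposition:dual_gradient} and the definition of the inexact gradient in \cref{algorithm:dd},
\[
[\dgt - \bdgt]_i = V^{\pistart}_{c_i}(\rho) - V^{\pi_t}_{c_i}(\rho).
\]
Hence it suffices to show, for every $i \in [m]$,
\[
\abs{V^{\pistart}_{c_i}(\rho) - V^{\pi_t}_{c_i}(\rho)} \leq \frac{4 A \Phi_t}{\tau (1-\gamma)^2}.
\]
Since the Euclidean norm of a vector in $\R^m$ is at most $\sqrt{m}$ times its largest entry in absolute value, this yields \cref{eq:tracking_bound10}.

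\textbf{Step 1 (performance difference).} For a fixed constraint $c_i$, I apply the performance difference lemma to the unregularized value $V_{c_i}$:
\[
V^{\pistart}_{c_i}(\rho) - V^{\pi_t}_{c_i}(\rho) = \frac{1}{1-\gamma}\,\E_{s\sim d^{\pistart}_\rho}\bracks*{\dpd{\pistart(\cdot|s) - \pi_t(\cdot|s),\, Q^{\pi_t}_{c_i}(s,\cdot)}}.
\]
Because $c_i \in [0,1]^{SA}$, we have $\norm{Q^{\pi_t}_{c_i}}_\infty \leq \frac{1}{1-\gamma}$. Applying H\"older's inequality inside the expectation then gives
\[
\abs{V^{\pistart}_{c_i}(\rho) - V^{\pi_t}_{c_i}(\rho)} \leq \frac{1}{(1-\gamma)^2}\,\max_s \norm{\pistart(\cdot|s) - \pi_t(\cdot|s)}_1.
\]
This step uses no coverage assumption: the visitation distribution is averaged out, and only a sup over states is taken.

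\textbf{Step 2 (from $\ell_1$ to log-policies).} Both policies are strictly positive: $\pistart$ is an entropy-regularized optimum, and $\pi_t$ is produced by \texttt{E-NPG}. For each $(s,a)$, the mean value theorem applied to $\exp$ gives
\[
\abs{\pistart(a|s) - \pi_t(a|s)} \leq \max\{\pistart(a|s), \pi_t(a|s)\}\,\abs{\ln \pistart(a|s) - \ln \pi_t(a|s)} \leq \norm{\ln(\pistart) - \ln(\pi_t)}_\infty,
\]
where the last inequality uses that probabilities are at most $1$. Summing over the $A$ actions gives
\[
\max_s \norm{\pistart(\cdot|s) - \pi_t(\cdot|s)}_1 \leq A \norm{\ln(\pistart) - \ln(\pi_t)}_\infty.
\]

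\textbf{Step 3 (relating to $\Phi_t$).} The $Q$-term in $\Phi_t$ is nonnegative, so
\[
\norm{\ln(\pistart) - \ln(\pi_t)}_\infty \leq \frac{4}{\tau}\Phi_t.
\]
Chaining Steps 1 to 3 gives the per-coordinate bound above, and multiplying by $\sqrt{m}$ finishes the proof.

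There is no real obstacle here. The only point needing care is to use the unregularized $Q_{c_i}$, bounded by $\frac{1}{1-\gamma}$, in the performance difference step, together with positivity of the policies for the log bound. The $Q^{\pistart}_t$-component of $\Phi_t$ is not needed for this lemma; it is presumably included so that $\Phi_t$ matches the potential controlled by \cref{lemma:inexact_npg} in the subsequent tracking recursion.
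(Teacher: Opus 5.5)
Your proposal is correct and follows essentially the same route as the paper's proof. Both arguments use the same steps:
\begin{itemize}
\item a per-coordinate performance-difference bound with H\"older and $\norm{Q_{c_i}}_\infty \le 1/(1-\gamma)$;
\item the $1$-Lipschitzness of $\exp$ on $(-\infty,0]$ to pass from $\ell_1$ policy differences to log-policies;
\item the inequality $\norm{\cdot}_1 \le A\,\norm{\cdot}_\infty$ together with a max over states;
\item dropping the nonnegative $Q$-term of $\Phi_t$.
\end{itemize}
The only difference is cosmetic. You apply the performance-difference lemma with $d^{\pi^*_t}$ and $Q^{\pi_t}_{c_i}$, whereas the paper uses $d^{\pi_t}$ and $Q^{\pi^*_t}_{c_i}$. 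Both give the same bound, because the state distribution is removed by the max over states.
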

\noindent The goal is now to derive a recursive bound that relates $\Phi_{t+1}$ to $\Phi_t$.
Combining the contraction property of the \texttt{E-NPG} update from~\cref{lemma:inexact_npg} and accounting for the change in the target policy between consecutive dual iterates yields the following tracking bound.
\begin{restatable}{lemma}{phitrackinglemma}\label{lemma:phi_tracking_main}
Let $\pi_{t+1}$ denote the policy obtained from $K>0$ iterations of the exact \texttt{E-NPG} update with step-size $\eta=(1-\gamma)/\tau$ and $\diamond = R_{t+1}$. Then,
\begin{equation*}
\Phi_{t+1} \leq \frac{(1 + \gamma) \, \gamma^{K-1}}{2} \, \parens*{\Phi_t+ \frac{2 \, m \,  \supnorm{\lt - \ltt}}{(1 - \gamma)^2}}
\end{equation*}
\end{restatable}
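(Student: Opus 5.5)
The plan has three steps: split off a target-shift term, contract it with \cref{lemma:inexact_npg}, and bound the shift by the size of the dual step. First, since $\pi_{t+1}$ is produced by $K$ exact \texttt{E-NPG} steps on $R_{t+1}$ warm-started at $\pi_t$, I would apply the sharper contraction form underlying \cref{lemma:inexact_npg} (Cen et al.'s Theorem 1, with $E=0$). With $\eta=(1-\gamma)/\tau$, the factor $1-\frac{\eta\tau}{1-\gamma}$ vanishes and $1-\eta\tau=\gamma$, so the update is soft policy iteration. In the original analysis the quantity
\begin{equation*}
\Psi_k:=\supnorm{Q^{\pi^*_{t+1}}_{t+1}-Q^{\pi_k}_{t+1}}
\end{equation*}
contracts by $\gamma$ per step. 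Moreover, $\supnorm{\ln\pi^*_{t+1}-\ln\pi_{k+1}}\le \frac{2}{\tau}\Psi_k$, because $\pi_{k+1}\propto\exp(Q^{\pi_k}_{t+1}/\tau)$ and $\pi^*_{t+1}\propto\exp(Q^{\pi^*_{t+1}}_{t+1}/\tau)$, and a log-softmax map is $2$-Lipschitz in sup norm.

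Chaining these, I would get $\Psi_K\le\gamma^{K-1}\Psi_1\le \gamma^{K}\Psi_0$ for the $Q$ part and $\frac{\tau}{4}\supnorm{\ln\pi^*_{t+1}-\ln\pi_K}\le \frac12\gamma^{K-1}\Psi_0$ for the log part. Combining them gives
\begin{equation*}
\Phi_{t+1}\le \Big(\tfrac{\gamma}{1-\gamma}+\tfrac12\Big)\gamma^{K-1}\Psi_0 .
\end{equation*}
Since $\Psi_0\le (1-\gamma)\cdot\frac{1}{1-\gamma}\Psi_0$, I would look for bookkeeping that yields $\frac{1+\gamma}{2}\gamma^{K-1}\widetilde\Phi_t$. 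Here $\widetilde\Phi_t$ is $\Phi$ evaluated with the new target $\pi^*_{t+1}$ but the old policy $\pi_t$. I expect to need the one-step relation $Q^{\pi}-Q^{\pi'}=\gamma P(V^{\pi}-V^{\pi'})$ together with the decomposition
\begin{equation*}
V^{\pi^*}-V^{\pi}=\big\langle \pi^*,Q^{\pi^*}-\tau\ln\pi^*\big\rangle-\big\langle\pi,Q^{\pi}-\tau\ln\pi\big\rangle,
\end{equation*}
bounding the differences by the $Q$ gap and $\tau$ times the log gap. This is where the specific weights $\frac{1}{1-\gamma}$ and $\frac{\tau}{4}$ in $\Phi$, and the constant $\frac{1+\gamma}{2}$, should come from. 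I expect this constant-matching step to be the main obstacle. If the direct chain gives a slightly worse constant, I would redo the one-step contraction keeping both the $Q$ and log errors jointly, as in Cen et al.'s proof.

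Second, I would bound $\widetilde\Phi_t\le \Phi_t+\frac{2m\supnorm{\lt-\ltt}}{(1-\gamma)^2}$ by the triangle inequality, inserting $\pi^*_t$ and $Q_t$. The required changes are as follows.
\begin{itemize}
\item $\supnorm{Q^{\pi^*_{t+1}}_{t+1}-Q^{\pi^*_t}_{t}}$: the optimal soft $Q$ is the fixed point of a $\gamma$-contraction, and $\supnorm{R_{t+1}-R_t}\le m\supnorm{\lt-\ltt}$, so this difference is at most $\frac{m\supnorm{\lt-\ltt}}{1-\gamma}$.
\item $\supnorm{Q^{\pi_t}_{t+1}-Q^{\pi_t}_{t}}\le \frac{m\supnorm{\lt-\ltt}}{1-\gamma}$, since this is a fixed policy with rewards differing by that amount.
\item $\supnorm{\ln\pi^*_{t+1}-\ln\pi^*_t}\le\frac{2}{\tau}\cdot\frac{m\supnorm{\lt-\ltt}}{1-\gamma}$, by the softmax Lipschitz property.
\end{itemize}
Weighting these by $\frac{1}{1-\gamma}$ and $\frac{\tau}{4}$ gives a shift of order $\frac{m\supnorm{\lt-\ltt}}{(1-\gamma)^2}$ with constant at most $2$, using $\frac{1}{2(1-\gamma)}\le\frac{1}{(1-\gamma)^2}$.

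Finally, I would substitute the second step into the first to obtain the claimed recursion.
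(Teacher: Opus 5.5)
The architecture of your plan matches the paper's proof. Step 1 is exactly the paper's starting point: Cen et al.'s contraction run from the warm start $\pi_t$ toward the new target, with $\eta=(1-\gamma)/\tau$. Your three drift estimates in Step 2 are individually correct. As written, however, the proposal does not deliver the stated constant.

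First, the ``constant-matching obstacle'' is not real. Since $\frac{\gamma}{1-\gamma}+\frac12=\frac{1+\gamma}{2(1-\gamma)}$, your Step 1 bound already reads
\begin{equation*}
\Phi_{t+1}\le \frac{(1+\gamma)\,\gamma^{K-1}}{2}\cdot\frac{\Psi_0}{1-\gamma},\qquad \Psi_0=\supnorm{Q^{\pistartt}_{t+1}-Q^{\pi_t}_{t+1}} .
\end{equation*}
No value-difference decomposition or joint re-analysis is needed.

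Second, and this is the actual error, Step 2 does not give constant $2$. You route through $\widetilde\Phi_t$, which also carries the log term $\frac{\tau}{4}\supnorm{\ln\pistartt-\ln\pi_t}$. That term contributes an extra $\frac{\tau}{4}\cdot\frac{2m}{\tau(1-\gamma)}\supnorm{\lt-\ltt}=\frac{m}{2(1-\gamma)}\supnorm{\lt-\ltt}$. This comes on top of the $\frac{2m}{(1-\gamma)^2}\supnorm{\lt-\ltt}$ from your two $Q$-shifts. The total coefficient is $\frac{(2+(1-\gamma)/2)\,m}{(1-\gamma)^2}$, and your inequality $\frac{1}{2(1-\gamma)}\le\frac{1}{(1-\gamma)^2}$ only bounds it by $\frac{3m}{(1-\gamma)^2}$. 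You would therefore prove a strictly weaker recursion than the one stated.

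The repair, which is what the paper does via \cref{lemma:generic_phi_recurse}, is to shift only $\Psi_0$, since that is all the Step 1 bound involves. Your first two drift bullets and the triangle inequality give $\Psi_0\le\supnorm{Q^{\pistart}_t-Q^{\pi_t}_t}+\frac{2m}{1-\gamma}\supnorm{\lt-\ltt}$. Hence $\frac{\Psi_0}{1-\gamma}\le\Phi_t+\frac{2m}{(1-\gamma)^2}\supnorm{\lt-\ltt}$, with the log part of $\Phi_t$ entering only as a nonnegative term added for free. Your third bullet, the log-drift of $\pi^*$, is then not needed at all. With this change the argument yields exactly the claimed inequality.
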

\noindent Hence $\Phi_{t+1}$ depends on $\Phi_t$ (the error at iteration $t$), a drift term $\supnorm{\lt - \ltt}$ that captures the error incurred because the target policy changes as the dual variable moves from $\lambda_t$ to $\lambda_{t+1}$ and a multiplicative $\gO(\gamma^{K-1})$ term which captures the effect of the \texttt{E-NPG} updates. Since projections are non-expansive, we have $\supnorm{\lt - \ltt} \leq \gO(\beta \mu)$.  Thus, the drift term can be controlled by choosing a small enough dual step-size $\beta$ and regularization $\mu$. Moreover, since $\gamma < 1$, ~\cref{lemma:bias_bounded_by_phi,lemma:phi_tracking_main} imply that even a single \texttt{E-NPG} update can control the gradient bias across outer-iterations. 

We now characterize the convergence of~\cref{algorithm:dd} when using a single \texttt{E-NPG} update and prove the following theorem in~\cref{appendix:k_one_smooth_reg}. 
With $K=1$, each iteration of~\cref{algorithm:dd} performs one \texttt{E-NPG} update followed by one dual update.
Hence, the bilevel \texttt{DD} method reduces to a simple \texttt{PD} method.
\begin{restatable}{theorem}{elltwolastiterateKone}\label{theorem:l2_last_iterate_K_1}
Under~\cref{assumption:slater_condition}, for a given $\eps \in (0, 1)$,~\cref{algorithm:dd} with (i) $\tau = \mu = \gO(\eps)$, $T = \tilde{\gO}(\nicefrac{1}{\eps^5})$, $\beta = \gO(\eps^4)$,  $\lambda_0 = 0$ and (ii) using the exact \texttt{E-NPG} update with $\eta = \frac{1 - \gamma}{\tau}$, $K = 1$, $J = \gO(\ln(\nicefrac{1}{\eps}))$ returns an $\gO(\eps)$-optimal policy to~\cref{problem:cmdp} after $\tilde{\gO}(\nicefrac{1}{\eps^5})$ gradient evaluations.
\end{restatable}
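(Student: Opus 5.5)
The plan is to combine \cref{lemma:reg_dual_distance_to_sub} (dual distance to primal guarantee), \cref{lemma:reg_dual_linear_convergence} (inexact projected GD on the strongly convex regularized dual), and the tracking bounds \cref{lemma:bias_bounded_by_phi,lemma:phi_tracking_main} with $K=1$. The final recovery step with $J=\gO(\ln(\nicefrac{1}{\eps}))$ is handled separately by \cref{lemma:inexact_npg}.

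\textbf{Step 1: unroll the tracking recursion.} With $K=1$, \cref{lemma:phi_tracking_main} gives
\[
\Phi_{t+1}\le \rho\,\Phi_t+\rho\,\frac{2m\supnorm{\lt-\ltt}}{(1-\gamma)^2},
\qquad \rho\coloneqq\frac{1+\gamma}{2}<1 .
\]
Since projection is non-expansive and $\lambda_t\in\Lambda$, we have $\supnorm{\lt-\ltt}\le \beta(\mu\norm{\lambda_t}+\norm{\bdgt})\le \beta\,\gO\parens*{\frac{\mu}{(1-\gamma)\min_i\zeta_i}+\frac{1}{1-\gamma}}=\gO(\beta)$. Unrolling yields
\[
\Phi_t\le \rho^t\Phi_0+\gO\parens*{\frac{m\beta}{(1-\gamma)^4}} .
\]
Here $\Phi_0$ is bounded by constants of order $\frac{1}{(1-\gamma)^2}(1+\norm{\lambda_0})+\tau\ln$-terms. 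With $\lambda_0=0$ and $\pi_{-1}$ uniform, the log-ratio is bounded by $\gO(\frac{1}{\tau(1-\gamma)})$, so $\tau\supnorm{\cdot}$ is $\gO(1)$.

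\textbf{Step 2: bound the accumulated bias.} By \cref{lemma:bias_bounded_by_phi}, $\delta_t\le \frac{4\sqrt m A}{\tau(1-\gamma)^2}\Phi_t$. Plugging into \cref{eq:reg_gd_general} and using $\beta\sum_{t}(1-\beta\mu/2)^{T-t-1}\le \nicefrac{2}{\mu}$ together with $\beta\sum_t \rho^t(1-\beta\mu/2)^{T-t-1}\le \frac{\beta}{1-\rho}(1-\beta\mu/2)^{T-1}\cdot\gO(1)$ gives
\[
\norm{\lambda_T-\lstarmu}\le e^{-\beta\mu T/2}\norm{\lstarmu}+\gO\parens*{\frac{\beta\,\Phi_0}{\tau}}+\gO\parens*{\frac{\beta}{\tau\mu}} .
\]
The second term comes from the transient, which is harmless. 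The third term is the steady-state tracking bias.

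\textbf{Step 3: choose parameters.} By \cref{lemma:reg_dual_distance_to_sub}, we need $G_\tau\norm{\lambda_T-\lstarmu}\le\eps$ with $G_\tau=\gO(1/\tau)=\gO(1/\eps)$, i.e.\ $\norm{\lambda_T-\lstarmu}\le\gO(\eps^2)$. With $\tau=\mu=\eps$, the bias term $\beta/(\tau\mu)=\beta/\eps^2\le\eps^2$ forces $\beta=\gO(\eps^4)$. The contraction term then needs $T\ge \frac{2}{\beta\mu}\ln(\norm{\lstarmu}/\eps^2)=\tilde\gO(\nicefrac{1}{\eps^5})$. We also check that $\beta\le 2/(G_\tau+2\mu)$, which holds since $G_\tau=\gO(1/\eps)$. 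The $\gO(\tau)$ and $\gO(\mu)$ terms are $\gO(\eps)$.

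\textbf{Step 4: account for the recovery step.} \cref{lemma:dual_optimality_to_sub}-type guarantees concern $\pi^*_{\lambda_T,\tau}$, not the returned policy. We run $J$ \texttt{E-NPG} steps from $\pi_{-1}$ on $R(\lambda_T)$. By \cref{lemma:inexact_npg}, $\supnorm{Q^{\pi^*}-Q^{\pi_J}}$ and the value gap shrink as $(1-\eta\tau)^J C$, so $J=\gO(\ln(1/\eps))$ makes $|V^{\pi_T}_{c_i}-V^{\pi^*_{\lambda_T,\tau}}_{c_i}|$ and the reward gap $\gO(\eps)$. This can be done, for example, through \cref{lemma:bound_bias_subgradient}-style bounds, which need inner suboptimality $\gO(\tau\eps^2)$; this is still logarithmic in $J$. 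The entropy term contributes $\gO(\tau\ln A/(1-\gamma))$.

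The total count is $T+J=\tilde\gO(\nicefrac{1}{\eps^5})$.

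The main obstacle is Step 1–2: the bias constant scales like $1/\tau$ from \cref{lemma:bias_bounded_by_phi} and like $1/\mu$ from the geometric sum. Tracking these powers, and verifying that the drift bound $\supnorm{\lt-\ltt}=\gO(\beta)$ carries no hidden $1/\eps$ factor from $\Lambda$'s radius, is what pins $\beta=\eps^4$.
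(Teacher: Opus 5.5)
Your proposal is correct and follows essentially the same route as the paper. Both arguments bound the bias through $\Phi_t$ (\cref{lemma:bias_bounded_by_phi}), then unroll the $K=1$ tracking recursion with drift $\|\lambda_t-\lambda_{t+1}\|_\infty=\gO(\beta)$ inside the geometrically weighted sum of \cref{lemma:reg_dual_linear_convergence}. From this they obtain the steady-state term $\gO(\beta/(\tau\mu))$, choose $\tau=\mu=\eps$, $\beta=\eps^4$, $T=\tilde{\gO}(1/\eps^5)$, and recover the policy with $\gO(\ln(1/\eps))$ \texttt{E-NPG} steps as in \cref{proposition:transfer_rate}. The paper packages the weighted sum as \cref{proposition:recursion-const}, which requires $1-\beta\mu/2>\frac{1+\gamma}{2}$; your ``$\gO(1)$'' factor uses this condition implicitly. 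The remaining differences are cosmetic: you drop the $e^{-\beta\mu T/2}$ factor on the transient term, which is harmless since that term is already $\gO(\eps^3)$, and you justify $\Phi_0=\gO(1)$ explicitly where the paper leaves it implicit.
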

\paragraph{Comparison to prior work.} We now compare to prior \texttt{PD} methods with last-iterate guarantees in~\cref{table:last}. 
Without any coverage assumptions our method achieves a fast $\tilde{\gO}(\nicefrac{1}{\eps^5})$ last-iterate rate while avoiding any coverage assumptions and retaining a practical policy update.
In contrast to~\citet{ding2023last} and~\citet{montenegro2024last}, our rate does not depend on potentially unbounded constants and is much faster than specialized finite-horizon method of~\citet{muller2024truly}.

\cref{theorem:l2_last_iterate_K_1} reveals that the inner MDP does not need to be solved to high accuracy at every iteration. 
We next refine the tracking analysis to obtain a sharper characterization of the benefit of additional inner-iterations, yielding an improved convergence rate when $K=1$ and progressively faster rates as $K$ increases.

\section{Tighter Tracking Analysis}\label{section:truncate}
We first motivate the how warming-starting the initial policy yields a tighter bound on the gradient bias then establish the corresponding last-iterate guarantee. Similar to~\cref{section:dd_to_pd}, we consider when $\mu > 0$ and defer the case when $\mu = 0$ to~\cref{appendix:k_geq_one_smooth}.

The previous tracking bound holds for an arbitrary initialization. In contrast, we show that by warm-starting the initial policy to be sufficiently close to the solution of the first sub-problem, we can obtain a faster convergence rate. In particular, warm-starting ensures that all future policy iterates remain within a neighbourhood of the corresponding target policy (the solution to the inner sub-problem). In this neighbourhood, we show that \texttt{E-NPG} exhibits a quadratic rate of convergence. Consequently, the algorithm guarantees a uniform bias bound that improves exponentially with the number of inner-iterations.

To begin, we first relate the bias to the log-policy error between the current policy $\pi_t$ and the target policy $\pi^*_t$. 
\begin{restatable}{lemma}{biasboundedbyLlemma}\label{lemma:bias_bounded_by_L}
For any $\tau > 0$ and $t \geq 0,$
\begin{align*}
   \norm{\nabla \cD(\lambda_t) - \bdgt} &\leq \frac{\sqrt{m} \, \abs{\cA}}{(1 - \gamma)^2} \, \underbrace{\supnorm{\ln(\pi^*_t) - \ln(\pi_t)}}_{\coloneqq L_t}
\end{align*}
\end{restatable}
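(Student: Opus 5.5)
By Proposition~\ref{proposition:dual_gradient}, the $i$-th component of the bias is $[\nabla \cD(\lambda_t) - \bdgt]_i = V^{\pi^*_t}_{c_i}(\rho) - V^{\pi_t}_{c_i}(\rho)$. The plan is to bound each component uniformly and then sum the $m$ components: since $\norm{x} \leq \sqrt{m}\,\supnorm{x}$, it suffices to show that for every $i \in [m]$,
\begin{equation*}
\abs*{V^{\pi^*_t}_{c_i}(\rho) - V^{\pi_t}_{c_i}(\rho)} \leq \frac{\abs{\cA}}{(1-\gamma)^2}\, L_t .
\end{equation*}

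For a fixed constraint reward $c_i \in [0,1]^{SA}$, apply the performance difference lemma:
\begin{equation*}
V^{\pi^*_t}_{c_i}(\rho) - V^{\pi_t}_{c_i}(\rho) = \frac{1}{1-\gamma}\,\E_{s\sim d^{\pi^*_t}_\rho}\bigl[\dpd{\pi^*_t(\cdot|s) - \pi_t(\cdot|s),\, Q^{\pi_t}_{c_i}(s,\cdot)}\bigr].
\end{equation*}
Since $0 \leq Q^{\pi_t}_{c_i} \leq \frac{1}{1-\gamma}$, each inner product is bounded in absolute value by $\frac{1}{1-\gamma}\norm{\pi^*_t(\cdot|s) - \pi_t(\cdot|s)}_1$. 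No coverage is needed at this point, because the expectation is over a probability distribution and we bound the integrand uniformly in $s$.

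It remains to convert the $\ell_1$ policy gap into the log-policy gap $L_t$. For each $(s,a)$, the mean value theorem applied to $\exp$ on the log scale gives
\begin{equation*}
\abs{\pi^*_t(a|s) - \pi_t(a|s)} \leq \max\{\pi^*_t(a|s),\pi_t(a|s)\}\,\abs{\ln \pi^*_t(a|s) - \ln\pi_t(a|s)} \leq L_t,
\end{equation*}
using that probabilities are at most $1$. Summing over actions gives $\norm{\pi^*_t(\cdot|s)-\pi_t(\cdot|s)}_1 \leq \abs{\cA}\, L_t$. This is exactly where the $\abs{\cA}$ factor in the statement comes from; the sharper bound $\sum_a \max\{\cdot\} \leq 2$ would give $2L_t$, but the stated bound does not need it. Both policies are softmax-type with full support (as \texttt{E-NPG} iterates and the regularized optimum), so the logarithms are finite.

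Combining these steps gives the componentwise bound $\frac{\abs{\cA} L_t}{(1-\gamma)^2}$, and summing over the $m$ components gives the lemma. The only delicate step is the conversion from log-policy to policy differences. The rest follows directly from Proposition~\ref{proposition:dual_gradient} and the performance difference lemma.
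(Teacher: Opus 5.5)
Your proposal is correct and follows essentially the same argument the paper uses in the proof of \cref{lemma:bias_bounded_by_phi}, which this lemma inherits. That argument takes the componentwise value difference, applies H\"older with $\supnorm{Q_{c_i}} \leq 1/(1-\gamma)$, and passes to log-policy differences via the $1$-Lipschitzness of $\exp$ on $(-\infty,0]$ (your mean value step) together with $\norm{x}_1 \leq \abs{\cA}\supnorm{x}$. The only difference is cosmetic: you weight by $d^{\pi^*_t}$ with $Q^{\pi_t}_{c_i}$, whereas the paper weights by $d^{\pi_t}$ with $Q^{\pi^*_t}_{c_i}$, and this is immaterial since the integrand is bounded uniformly in $s$.
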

\noindent The above bound follows from the proof of~\cref{lemma:bias_bounded_by_phi}, but notably does not depend on $\tau$. In~\cref{appendix:tools_kgeq1}, we prove that each \texttt{E-NPG} update decreases $L_t$ quadratically. This result follows by first relating $L_{t+1}$, the log-policy error  after one update to the KL divergence. Then using Hoeffding's lemma, we prove that this divergence is bounded by $L_t^2$, and obtain the following result. 
\begin{restatable}{lemma}{lrackinglemma}\label{lemma:l_tracking_main}
Let $\pi_{t+1}$ denote the policy obtained from $K \geq 1$ iterations of the exact \texttt{E-NPG} update with step-size $\eta=(1-\gamma)/\tau$ and $\diamond = R_{t+1}$. Then,
\begin{equation*}
L_{t+1} \leq  \left(\frac{\gamma}{1 - \gamma}\right)^{2^K -1}\left(\frac{2 \, \sqrt{m}}{\tau \, (1- \gamma)} \, \norm{\ltt - \lt} + L_t\right)^{2^K}
\end{equation*}
\end{restatable}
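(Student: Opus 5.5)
The plan is to prove a one-step quadratic contraction for the log-policy error of exact \texttt{E-NPG} with $\eta=(1-\gamma)/\tau$. We then iterate it $K$ times and bound the initial error of the $(t+1)$-th sub-problem via the drift of the target policy. With $\eta\tau/(1-\gamma)=1$, the update in~\cref{eq:inexact_entropy_npg_update} becomes soft policy iteration: $\pi_{k+1}(\cdot|s)\propto\exp(Q^{\pi_k}_{\diamond,\tau}(s,\cdot)/\tau)$. The optimal policy satisfies the same relation with $Q^{\pi^*}_{\diamond,\tau}$. Writing $\ell_k\coloneqq\supnorm{\ln\pi^*-\ln\pi_k}$ for the fixed target $\pi^*=\pi^*_{R_{t+1},\tau}$, the log-partition terms cancel up to a constant. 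This gives
\[
\ell_{k+1}\le \frac{2}{\tau}\supnorm{Q^{\pi^*}_{\diamond,\tau}-Q^{\pi_k}_{\diamond,\tau}}.
\]
A factor-two loss here may have to be absorbed; more carefully, one bounds the oscillation of the difference rather than its sup.

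The key step is to bound the soft-$Q$ gap quadratically in $\ell_k$. By the soft performance-difference identity, $Q^{\pi^*}_{\diamond,\tau}-Q^{\pi_k}_{\diamond,\tau}=\gamma P\,(V^{\pi^*}_{\diamond,\tau}-V^{\pi_k}_{\diamond,\tau})$. Also, $V^{\pi^*}_{\diamond,\tau}(s)-V^{\pi_k}_{\diamond,\tau}(s)$ equals $\frac{\tau}{1-\gamma}$ times a discounted average over states of $\KL(\pi_k(\cdot|s')\,\|\,\pi^*(\cdot|s'))$, since $\pi^*\propto\exp(Q^*/\tau)$ makes the one-step regret exactly $\tau\,\KL$. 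This yields
\[
\supnorm{Q^{\pi^*}-Q^{\pi_k}}\le\frac{\gamma\tau}{1-\gamma}\max_{s}\KL(\pi_k(\cdot|s)\|\pi^*(\cdot|s)).
\]
Next we bound the KL quadratically. Set $f=\ln\pi_k-\ln\pi^*$, so $\abs{f}\le\ell_k$ pointwise. Then $\KL(\pi_k\|\pi^*)=\E_{\pi_k}[f]$, while the normalization $\E_{\pi_k}[e^{-f}]=1$ holds. Hoeffding's lemma applied to $-f$ under $\pi_k$, whose range is at most $2\ell_k$, gives $0=\ln\E e^{-f}\le -\E f+\frac{(2\ell_k)^2}{8}$, hence $\KL\le \ell_k^2/2$. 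Combining the pieces gives $\ell_{k+1}\le\frac{\gamma}{1-\gamma}\ell_k^2$, where the constant is tuned so that the factor $2/\tau$ times $\gamma\tau/(1-\gamma)$ times $1/2$ works out.

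Iterating from $\ell_0$ gives $\ell_K\le(\frac{\gamma}{1-\gamma})^{2^K-1}\ell_0^{2^K}$ by induction. Here $\ell_K=L_{t+1}$ and $\ell_0=\supnorm{\ln\pi^*_{t+1}-\ln\pi_t}$. By the triangle inequality, $\ell_0\le L_t+\supnorm{\ln\pi^*_{t+1}-\ln\pi^*_t}$. For the drift term, $\ln\pi^*_\lambda=(Q^*_\lambda-V^*_\lambda)/\tau$. The soft optimal $Q$ is $\frac{1}{1-\gamma}$-Lipschitz in the reward in sup norm, and $\supnorm{R_{t+1}-R_t}\le\sum_i\abs{\lambda_{t+1,i}-\lambda_{t,i}}\le\sqrt m\norm{\ltt-\lt}$. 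Since both $Q^*$ and $V^*$ move by at most this amount, we get $\supnorm{\ln\pi^*_{t+1}-\ln\pi^*_t}\le\frac{2\sqrt m}{\tau(1-\gamma)}\norm{\ltt-\lt}$, which is the claimed form.

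The main obstacle is the constant in the one-step contraction. A naive sup-norm bound for the log-ratio loses a factor $2$, because normalization constants differ. I would handle this by using the span: the log-ratio differs from $(Q^*-Q^{\pi_k})/\tau$ by a state-dependent constant, and the span of a nonnegative function is at most its sup. This lets $\ell_{k+1}$ be bounded by $\frac1\tau\supnorm{Q^*-Q^{\pi_k}}$, after which a Hoeffding constant of $\ell_k^2/2$ yields exactly $\frac{\gamma}{1-\gamma}\ell_k^2$.
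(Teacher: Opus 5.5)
Your proof is correct and follows the paper's argument essentially step for step: the $\frac{2}{\tau}$ log-softmax Lipschitz bound for soft policy iteration, the soft sub-optimality identity reducing $\supnorm{Q^{\pi^*_{t+1}}_{t+1}-Q^{\hat\pi_k}_{t+1}}$ to $\frac{\gamma\tau}{1-\gamma}\max_s\KL$, Hoeffding's lemma giving $\KL\le L^2/2$, the squaring recursion over the $K$ inner steps, and the triangle-inequality drift bound $\frac{2\sqrt m}{\tau(1-\gamma)}\norm{\ltt-\lt}$. Your closing ``main obstacle'' is not an obstacle. Since $\frac{2}{\tau}\cdot\frac{\gamma\tau}{1-\gamma}\cdot\frac12=\frac{\gamma}{1-\gamma}$, the naive bound already gives the stated constant, and it is the bound the paper uses. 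Your span refinement, which relies on $Q^{\pi^*}\ge Q^{\pi_k}$, is valid, but it gives the sharper constant $\frac{\gamma}{2(1-\gamma)}$ rather than ``exactly'' $\frac{\gamma}{1-\gamma}$; either way the lemma follows.
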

\noindent In contrast to~\cref{lemma:phi_tracking_main} which corresponds to a linear contraction in $\Phi_t$, the log-policy error $L_t$ contracts superlinearly with the number of inner-iterations $K$.
However, this sharper recursion yields a contraction only when both the log-policy error and the drift term are sufficiently small. As before, the drift term can be controlled by choosing a small step-size $\beta$ and dual-regularizer $\mu$.
Now in order to control the log-policy error uniformly, we require the initial error $L_0$ to be sufficiently small.
Hence, if the initial policy satisfies $L_0 \leq \gO((\nicefrac{\beta}{\tau})^{2^K})$, then, we show that for a fixed $K \geq 1$,  
the above result ensures that $L_t \leq \gO(\nicefrac{\beta}{\tau})^{2^K})$ for all $t \geq 0$, and thus yields a uniform bound on the bias. Using this bound with~\cref{lemma:bias_bounded_by_L} and the dual sub-optimality of~\cref{lemma:reg_dual_distance_to_sub} we prove the following theorem in~\cref{appendix:k_geq_one_smooth_reg}.
\begin{restatable}{theorem}{ellinterpolatelastiteratetheorem}
\label{theorem:l2_last_iterate_K_geq_1}
Under~\cref{assumption:slater_condition}, for a given $\eps \in (0, 1)$,~\cref{algorithm:dd} with (i) $\tau = \mu = \gO(\eps)$, $T = \gO(\nicefrac{1}{\eps^{2 + 3/2^K}})$, $\beta = \gO\parens*{\eps^{\nicefrac{3 + 2^K}{2^K}}}$, $\lambda_0 = 0$ (ii) using the exact \texttt{E-NPG} update with $\eta = \frac{1 - \gamma}{\tau}$, $K \geq 1$, $J=\gO(\ln(\nicefrac{1}{\eps}))$ and (iii) warm-starting the initial policy such that $L_0 \leq \gO(\nicefrac{\beta\mu}{\tau})^{2^K})$ 
return an $\gO(\eps)$-optimal policy for~\cref{problem:cmdp} after $\widetilde{\gO}(\nicefrac{1}{\eps^{2 + 3/2^K}})$ gradient evaluations.
\end{restatable}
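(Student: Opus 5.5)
The plan is to chain three earlier results. The regularized dual convergence in \cref{lemma:reg_dual_linear_convergence} controls $\norm{\lambda_T - \lstarmu}$. \cref{lemma:bias_bounded_by_L} controls each bias $\delta_t$ through the log-policy error $L_t$. \cref{lemma:reg_dual_distance_to_sub}, applied to the recovered policy, converts dual accuracy into reward sub-optimality and constraint violation. The new ingredient is an induction showing that the warm start keeps $L_t$ uniformly small for all $t$, using the superlinear recursion of \cref{lemma:l_tracking_main}.

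\textbf{Step 1 (drift).} Since $\abs{[\bdgt]_i} \le \nicefrac{1}{1-\gamma}$, $\lambda_t \in \Lambda$ with $\norm{\lambda_t} = \gO(1)$ for $\tau \le 1$, and projection is non-expansive, the update gives
\[
\norm{\ltt - \lt} \le \beta\parens*{\mu \norm{\lt} + \norm{\bdgt}} = \gO(\beta).
\]
I will carry constants through in the form $\gO(\beta(\mu + \sqrt{m}/(1-\gamma)))$, which is $\gO(\beta)$ for fixed $m,\gamma$. The statement writes the threshold with $\beta\mu$, so I will track which form is actually needed; a crude bound suffices for the induction.

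\textbf{Step 2 (uniform tracking by induction).} Set $a \coloneqq \frac{2\sqrt{m}}{\tau(1-\gamma)} \sup_t \norm{\ltt-\lt} = \gO(\beta/\tau)$ and $q \coloneqq \frac{\gamma}{1-\gamma}$. \cref{lemma:l_tracking_main} gives $L_{t+1} \le q^{2^K-1}(a+L_t)^{2^K}$. The claim is that if $L_0 \le \bar L \coloneqq c\,a^{2^K}$, then $L_t \le \bar L$ for all $t$. Assuming $L_t \le \bar L$ and $\bar L \le a$ (true once $a$ is small, i.e.\ $\beta/\tau$ small, which the parameter choice ensures), we get $L_{t+1} \le q^{2^K-1}(2a)^{2^K}$. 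Choosing $c = q^{2^K-1}2^{2^K}$ (absorbed in $\gO(\cdot)$ for fixed $K$) closes the induction; any $L_0$ obtained by running the first sub-problem for $\gO(2^K\ln(\tau/\beta))$ E-NPG steps satisfies the hypothesis. This step is the main obstacle. The fixed-point argument requires the neighbourhood to be self-mapping, i.e.\ $q(2a)$ bounded by a small constant. When $\gamma > 1/2$ we have $q>1$, so I will need $a \le \gO((1-\gamma)/\gamma)$, which again holds because $\beta/\tau = \gO(\eps^{3/2^K}) \to 0$. Then \cref{lemma:bias_bounded_by_L} gives the uniform bound $\delta_t \le \delta \coloneqq \frac{\sqrt m A}{(1-\gamma)^2}\bar L = \gO((\beta/\tau)^{2^K})$.

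\textbf{Step 3 (dual rate and recovery).} Plugging the uniform $\delta$ into \cref{lemma:reg_dual_linear_convergence} and summing the geometric series gives
\[
\norm{\lambda_T - \lstarmu} \le e^{-\beta\mu T/2}\norm{\lstarmu} + \frac{2\delta}{\mu}.
\]
The recovery step runs $J=\gO(\ln(1/\eps))$ E-NPG iterations on $R(\lambda_T)$. By \cref{lemma:inexact_npg} and \cref{lemma:bound_bias_subgradient}-type arguments (value difference in terms of the soft sub-optimality), the recovered $\pi_T$ is within $\gO(\eps)$ of $\pi^*_{\lambda_T,\tau}$ in reward and constraint values. \cref{lemma:reg_dual_distance_to_sub} then yields error $\gO(G_\tau\norm{\lambda_T-\lstarmu}) + \gO(\tau+\mu)$ with $G_\tau = \gO(1/\eps)$. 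It therefore suffices to have $\norm{\lambda_T - \lstarmu} \le \eps^2$. This requires (a) $\delta/\mu \lesssim \eps^2$, i.e.\ $(\beta/\eps)^{2^K} \lesssim \eps^3$, i.e.\ $\beta = \eps^{1+3/2^K} = \eps^{(3+2^K)/2^K}$, matching the statement; and (b) $T \gtrsim \frac{\ln(1/\eps)}{\beta\mu} = \tilde\gO(\eps^{-(2+3/2^K)})$. The total gradient count is $KT + J$ plus the warm start cost, which is logarithmic, giving $\widetilde\gO(\eps^{-(2+3/2^K)})$ for fixed $K$.
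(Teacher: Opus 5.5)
Your proposal is correct and matches the paper's proof essentially step for step. The bias is controlled through $L_t$ via \cref{lemma:bias_bounded_by_L}, and the geometric accumulation in \cref{lemma:reg_dual_linear_convergence} yields the $2\delta/\mu$ term. Your invariant-set induction, with constant $c = 2^{2^K} q^{2^K-1}$ and closing condition $2^{n}(qa)^{n-1}\le 1$ for $n=2^K$, is exactly the paper's \cref{lemma:quadratic_recursion} with $e=0$. After that, \cref{lemma:reg_dual_distance_to_sub} and the recovery step (\cref{proposition:transfer_rate}) give the same choices of $\beta$ and $T$.

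You are also right that the argument only uses the warm start $L_0 \le \gO((\beta/\tau)^{2^K})$, with $\mu$ entering only through the drift constant. The $\beta\mu/\tau$ form in the statement is therefore just a stronger sufficient condition.
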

\noindent When $K \geq 1$, the initial condition on $L_0$ is satisfied by running $\gO(\ln(\nicefrac{1}{\eps}))$ \texttt{E-NPG} iterations on $R(\lambda_0)$ since \texttt{E-NPG} converges linearly (\cref{lemma:inexact_npg}). 
Let us now consider when $K=1$. In this case, the algorithm attains an $\tilde{\gO}(\nicefrac{1}{\eps^{3.5}})$ last-iterate rate, improving the result of~\cref{theorem:l2_last_iterate_K_1} by exploiting a more accurate initialization. To the best of our knowledge, this method results achieves the fastest last-iterate convergence for \texttt{PD} methods in the tabular setting and does not require any coverage assumptions (refer to~\cref{table:last}). We further experimentally evaluate our method (see~\cref{fig:compare}).
More generally, the rate improves exponentially with $K$ and as $K \to \infty$, the method attains an $\tilde{\gO}(\nicefrac{1}{\eps^2})$ convergence, matching~\cref{theorem:reg_smoothness_l2_rate}.
Hence, combining these two cases, this analysis interpolates between the \texttt{PD} and \texttt{DD} regimes.
\begin{figure}[H]
    \centering
    \includegraphics[scale=1.1]{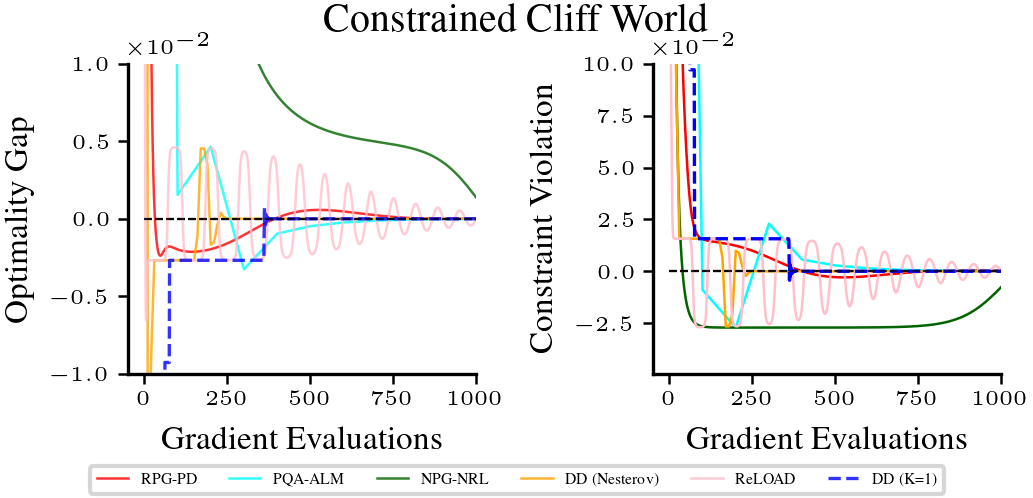}
    \caption{Comparing our regularized inexact \texttt{DD} method $\texttt{DD (K=1)}$ when $K=1$, $\mu > 0$ and warm-starting the initial policy with $50$ iterations. 
    We compare against the following practical last-iterate methods from prior work: \texttt{RPG-PD}~\citep{ding2023last}, \texttt{PQA-ALM}~\citep{lu2026augmented}, \texttt{NPG-NRL}~\citep{muller2024truly}, \texttt{ReLoad}~\citep{moskovitz2023reload}, and the related DD method of~\citet{ying2022dual} (\texttt{DD (Nesterov)}) 
    For all methods, the hyper-parameters were selected via a grid-search (see~\cref{appendix:experiments_mdp} for details)
    We select the best variant such that the final policy satisfies $b - V^{\pi}_c(\rho) \leq 0.001$.   
    Our method converges towards zero optimality gap and constraint violation the fastest without any oscillations.
    }
    \label{fig:compare}
\end{figure}

\section{Conclusion}
We provide a new analysis of \texttt{DD} methods that establishes last-iterate guarantees without requiring impractical coverage assumptions. 
Building on this, we further introduce a truncated bilevel algorithm that interpolates between \texttt{PD} and \texttt{DD} methods, yielding an practical algorithm with theoretical guarantees.
In the tabular setting, this approach achieves a $\widetilde{\gO}(\nicefrac{1}{\eps^{3.5}})$ last-iterate rate and extends to the linear function approximation setting through a black-box reduction.
Our proposed \texttt{PD} method attains the best known last-iterate rates without relying on any coverage assumptions.
For future work, we plan to validate our method on large-scale control tasks and extend the theoretical analysis to consider the general function approximation setting.

\bibliographystyle{plainnat}
\bibliography{ref}

@book{altman2021constrained,
    title = {Constrained Markov decision processes},
    author = {Altman, Eitan},
    year = {2021},
    publisher = {Routledge},
}

@article{ding2020natural,
    title = {Natural policy gradient primal-dual method for constrained markov
             decision processes},
    author = {Ding, Dongsheng and Zhang, Kaiqing and Basar, Tamer and Jovanovic,
              Mihailo},
    journal = {Advances in Neural Information Processing Systems},
    volume = {33},
    pages = {8378--8390},
    year = {2020},
}

@inproceedings{moskovitz2023reload,
    title = {Reload: Reinforcement learning with optimistic ascent-descent for
             last-iterate convergence in constrained mdps},
    author = {Moskovitz, Ted and O’Donoghue, Brendan and Veeriah, Vivek and
              Flennerhag, Sebastian and Singh, Satinder and Zahavy, Tom},
    booktitle = {International Conference on Machine Learning},
    pages = {25303--25336},
    year = {2023},
    organization = {PMLR},
}

@article{ding2023last,
    title = {Last-iterate convergent policy gradient primal-dual methods for
             constrained mdps},
    author = {Ding, Dongsheng and Wei, Chen-Yu and Zhang, Kaiqing and Ribeiro,
              Alejandro},
    journal = {Advances in Neural Information Processing Systems},
    volume = {36},
    pages = {66138--66200},
    year = {2023},
}

@article{agarwal2021theory,
    title = {On the theory of policy gradient methods: Optimality, approximation
             , and distribution shift},
    author = {Agarwal, Alekh and Kakade, Sham M and Lee, Jason D and Mahajan,
              Gaurav},
    journal = {Journal of Machine Learning Research},
    volume = {22},
    number = {98},
    pages = {1--76},
    year = {2021},
}

@inproceedings{mei2020global,
    title = {On the global convergence rates of softmax policy gradient methods},
    author = {Mei, Jincheng and Xiao, Chenjun and Szepesvari, Csaba and
              Schuurmans, Dale},
    booktitle = {International conference on machine learning},
    pages = {6820--6829},
    year = {2020},
    organization = {PMLR},
}

@article{paternain2019constrained,
    title = {Constrained reinforcement learning has zero duality gap},
    author = {Paternain, Santiago and Chamon, Luiz and Calvo-Fullana, Miguel and
              Ribeiro, Alejandro},
    journal = {Advances in Neural Information Processing Systems},
    volume = {32},
    year = {2019},
}

@book{sutton2018reinforcement,
    title = {Reinforcement learning: An introduction},
    author = {Sutton, Richard S and Barto, Andrew G},
    year = {2018},
    publisher = {MIT press},
}

@article{zhang2021convergence,
    title = {On the convergence and sample efficiency of variance-reduced policy
             gradient method},
    author = {Zhang, Junyu and Ni, Chengzhuo and Szepesvari, Csaba and Wang,
              Mengdi and others},
    journal = {Advances in Neural Information Processing Systems},
    volume = {34},
    pages = {2228--2240},
    year = {2021},
}

@inproceedings{jain2022towards,
    title = {Towards painless policy optimization for constrained mdps},
    author = {Jain, Arushi and Vaswani, Sharan and Babanezhad, Reza and
              Szepesvari, Csaba and Precup, Doina},
    booktitle = {Uncertainty in Artificial Intelligence},
    pages = {895--905},
    year = {2022},
    organization = {PMLR},
}

@misc{yuan2022general,
      title={A general sample complexity analysis of vanilla policy gradient}, 
      author={Rui Yuan and Robert M. Gower and Alessandro Lazaric},
      year={2022},
      eprint={2107.11433},
      archivePrefix={arXiv},
      primaryClass={cs.LG}
}

@article{lan2023policy,
  title={Policy mirror descent for reinforcement learning: Linear convergence, new sampling complexity, and generalized problem classes},
  author={Lan, Guanghui},
  journal={Mathematical programming},
  volume={198},
  number={1},
  pages={1059--1106},
  year={2023},
  publisher={Springer}
}

@article{yuan2022linear,
  title={Linear convergence of natural policy gradient methods with log-linear policies},
  author={Yuan, Rui and Du, Simon S and Gower, Robert M and Lazaric, Alessandro and Xiao, Lin},
  journal={arXiv preprint arXiv:2210.01400},
  year={2022}
}

@article{liu2025sample,
  title={Sample Complexity Bounds for Linear Constrained MDPs with a Generative Model},
  author={Liu, Xingtu and Yang, Lin F and Vaswani, Sharan},
  journal={arXiv preprint arXiv:2507.02089},
  year={2025}
}

@article{kakade2001natural,
  title={A natural policy gradient},
  author={Kakade, Sham M},
  journal={Advances in neural information processing systems},
  volume={14},
  year={2001}
}

@article{kiefer1960equivalence,
  title={The equivalence of two extremum problems},
  author={Kiefer, Jack and Wolfowitz, Jacob},
  journal={Canadian Journal of Mathematics},
  volume={12},
  pages={363--366},
  year={1960},
  publisher={Cambridge University Press}
}

@article{
mondal2026lastiterate,
title={Last-Iterate Convergence of General Parameterized Policies in Constrained {MDP}s},
author={Washim Uddin Mondal and Vaneet Aggarwal},
journal={Transactions on Machine Learning Research},
issn={2835-8856},
year={2026},
url={https://openreview.net/forum?id=JedrMCZC6l},
note={}
}

@InProceedings{lin2025rethinking,
  title = 	 { Optimistic Actor-Critic with Parametric Policies for Linear Markov Decision Processes },
  author =       {Lin, Max Qiushi and Asad, Reza and Tan, Kevin and Ishfaq, Haque and Szepesvari, Csaba and Vaswani, Sharan},
  booktitle = 	 {Proceedings of The 29th International Conference on Artificial Intelligence and Statistics},
  pages = 	 {2575--2583},
  year = 	 {2026},
  editor = 	 {Khan, Emtiyaz and Li, Yingzhen and Solin, Arno and Ramdas, Aaditya},
  volume = 	 {300},
  series = 	 {Proceedings of Machine Learning Research},
  month = 	 {02--05 May},
  publisher =    {PMLR},
  url = 	 {https://proceedings.mlr.press/v300/lin26d.html}
}

@article{liu2025convergence,
  title={On the convergence of projected policy gradient for any constant step sizes},
  author={Liu, Jiacai and Li, Wenye and Lin, Dachao and Wei, Ke and Zhang, Zhihua},
  journal={Journal of Machine Learning Research},
  volume={26},
  number={193},
  pages={1--35},
  year={2025}
}

@inproceedings{julian2002qos,
  title={QoS and fairness constrained convex optimization of resource allocation for wireless cellular and ad hoc networks},
  author={Julian, David and Chiang, Mung and O'Neill, Daniel and Boyd, Stephen},
  booktitle={Proceedings. twenty-first annual joint conference of the IEEE computer and communications societies},
  volume={2},
  pages={477--486},
  year={2002},
  organization={IEEE}
}

@article{buratti2009overview,
  title={An overview on wireless sensor networks technology and evolution},
  author={Buratti, Chiara and Conti, Andrea and Dardari, Davide and Verdone, Roberto},
  journal={Sensors},
  volume={9},
  number={9},
  pages={6869--6896},
  year={2009},
  publisher={Molecular Diversity Preservation International (MDPI)}
}

@article{muller2024truly,
  title={Truly no-regret learning in constrained mdps},
  author={M{\"u}ller, Adrian and Alatur, Pragnya and Cevher, Volkan and Ramponi, Giorgia and He, Niao},
  journal={arXiv preprint arXiv:2402.15776},
  year={2024}
}

@inproceedings{ying2022dual,
  title={A dual approach to constrained markov decision processes with entropy regularization},
  author={Ying, Donghao and Ding, Yuhao and Lavaei, Javad},
  booktitle={International Conference on Artificial Intelligence and Statistics},
  pages={1887--1909},
  year={2022},
  organization={PMLR}
}

@article{zhan2023policy,
  title={Policy mirror descent for regularized reinforcement learning: A generalized framework with linear convergence},
  author={Zhan, Wenhao and Cen, Shicong and Huang, Baihe and Chen, Yuxin and Lee, Jason D and Chi, Yuejie},
  journal={SIAM Journal on Optimization},
  volume={33},
  number={2},
  pages={1061--1091},
  year={2023},
  publisher={SIAM}
}

@article{osband2019behaviour,
  title={Behaviour suite for reinforcement learning},
  author={Osband, Ian and Doron, Yotam and Hessel, Matteo and Aslanides, John and Sezener, Eren and Saraiva, Andre and McKinney, Katrina and Lattimore, Tor and Szepesvari, Csaba and Singh, Satinder and others},
  journal={arXiv preprint arXiv:1908.03568},
  year={2019}
}

@article{kober2013reinforcement,
  title={Reinforcement learning in robotics: A survey},
  author={Kober, Jens and Bagnell, J Andrew and Peters, Jan},
  journal={The International Journal of Robotics Research},
  volume={32},
  number={11},
  pages={1238--1274},
  year={2013},
  publisher={SAGE Publications Sage UK: London, England}
}

@article{uc2023survey,
  title={Survey on reinforcement learning for language processing},
  author={Uc-Cetina, Victor and Navarro-Guerrero, Nicol{\'a}s and Martin-Gonzalez, Anabel and Weber, Cornelius and Wermter, Stefan},
  journal={Artificial Intelligence Review},
  volume={56},
  number={2},
  pages={1543--1575},
  year={2023},
  publisher={Springer}
}

@article{mnih2015human,
  title={Human-level control through deep reinforcement learning},
  author={Mnih, Volodymyr and Kavukcuoglu, Koray and Silver, David and Rusu, Andrei A and Veness, Joel and Bellemare, Marc G and Graves, Alex and Riedmiller, Martin and Fidjeland, Andreas K and Ostrovski, Georg and others},
  journal={nature},
  volume={518},
  number={7540},
  pages={529--533},
  year={2015},
  publisher={Nature Publishing Group}
}

@article{bura2022dope,
  title={DOPE: Doubly optimistic and pessimistic exploration for safe reinforcement learning},
  author={Bura, Archana and HasanzadeZonuzy, Aria and Kalathil, Dileep and Shakkottai, Srinivas and Chamberland, Jean-Francois},
  journal={Advances in neural information processing systems},
  volume={35},
  pages={1047--1059},
  year={2022}
}

@article{montenegro2024last,
  title={Last-iterate global convergence of policy gradients for constrained reinforcement learning},
  author={Montenegro, Alessandro and Mussi, Marco and Papini, Matteo and Metelli, Alberto Maria},
  journal={Advances in Neural Information Processing Systems},
  volume={37},
  pages={126363--126416},
  year={2024}
}

@article{cen2022fast,
  title={Fast global convergence of natural policy gradient methods with entropy regularization},
  author={Cen, Shicong and Cheng, Chen and Chen, Yuxin and Wei, Yuting and Chi, Yuejie},
  journal={Operations Research},
  volume={70},
  number={4},
  pages={2563--2578},
  year={2022},
  publisher={INFORMS}
}

@article{nachum2017bridging,
  title={Bridging the gap between value and policy based reinforcement learning},
  author={Nachum, Ofir and Norouzi, Mohammad and Xu, Kelvin and Schuurmans, Dale},
  journal={Advances in neural information processing systems},
  volume={30},
  year={2017}
}

@book{boyd2004convex,
  title={Convex optimization},
  author={Boyd, Stephen and Vandenberghe, Lieven},
  year={2004},
  publisher={Cambridge university press}
}

@article{paternain2022safe,
  title={Safe policies for reinforcement learning via primal-dual methods},
  author={Paternain, Santiago and Calvo-Fullana, Miguel and Chamon, Luiz FO and Ribeiro, Alejandro},
  journal={IEEE Transactions on Automatic Control},
  volume={68},
  number={3},
  pages={1321--1336},
  year={2022},
  publisher={IEEE}
}

@article{li2024faster,
  title={Faster algorithm and sharper analysis for constrained Markov decision process},
  author={Li, Tianjiao and Guan, Ziwei and Zou, Shaofeng and Xu, Tengyu and Liang, Yingbin and Lan, Guanghui},
  journal={Operations Research Letters},
  volume={54},
  pages={107107},
  year={2024},
  publisher={Elsevier}
}

@inproceedings{gladin2023algorithm,
  title={Algorithm for constrained markov decision process with linear convergence},
  author={Gladin, Egor and Lavrik-Karmazin, Maksim and Zainullina, Karina and Rudenko, Varvara and Gasnikov, Alexander and Takac, Martin},
  booktitle={International Conference on Artificial Intelligence and Statistics},
  pages={11506--11533},
  year={2023},
  organization={PMLR}
}

@article{ho2016generative,
  title={Generative adversarial imitation learning},
  author={Ho, Jonathan and Ermon, Stefano},
  journal={Advances in neural information processing systems},
  volume={29},
  year={2016}
}

@inproceedings{bai2022achieving,
  title={Achieving zero constraint violation for constrained reinforcement learning via primal-dual approach},
  author={Bai, Qinbo and Bedi, Amrit Singh and Agarwal, Mridul and Koppel, Alec and Aggarwal, Vaneet},
  booktitle={Proceedings of the AAAI Conference on Artificial Intelligence},
  volume={36},
  number={4},
  pages={3682--3689},
  year={2022}
}

@article{vaswani2022near,
  title={Near-optimal sample complexity bounds for constrained mdps},
  author={Vaswani, Sharan and Yang, Lin and Szepesv{\'a}ri, Csaba},
  journal={Advances in Neural Information Processing Systems},
  volume={35},
  pages={3110--3122},
  year={2022}
}

@inproceedings{geist2019theory,
  title={A theory of regularized markov decision processes},
  author={Geist, Matthieu and Scherrer, Bruno and Pietquin, Olivier},
  booktitle={International conference on machine learning},
  pages={2160--2169},
  year={2019},
  organization={PMLR}
}

@article{liu2024elementary,
  title={Elementary analysis of policy gradient methods},
  author={Liu, Jiacai and Li, Wenye and Wei, Ke},
  journal={arXiv preprint arXiv:2404.03372},
  year={2024}
}

@book{floudas1995nonlinear,
  title={Nonlinear and mixed-integer optimization: fundamentals and applications},
  author={Floudas, Christodoulos A},
  year={1995},
  publisher={Oxford University Press}
}

@book{nesterov2013introductory,
  title={Introductory lectures on convex optimization: A basic course},
  volume={87},
  author={Nesterov, Yurii},
  year={2013},
  publisher={Springer Science \& Business Media}
}

@article{lu2026augmented,
  title={Augmented Lagrangian Method for Last-Iterate Convergence for Constrained MDPs},
  author={Lu, Michael and Lin, Max Qiushi and Chen, Mo and Vaswani, Sharan},
  journal={arXiv preprint arXiv:2605.11694},
  year={2026}
}

@article{frank1956algorithm,
  title={An algorithm for quadratic programming},
  author={Frank, Marguerite and Wolfe, Philip},
  journal={Naval research logistics quarterly},
  volume={3},
  number={1-2},
  pages={95--110},
  year={1956},
  publisher={Wiley Online Library}
}

@InProceedings{tingting2025safe,
  title = 	 {A Safe Exploration Approach to Constrained Markov Decision Processes},
  author =       {Ni, Tingting and Kamgarpour, Maryam},
  booktitle = 	 {Proceedings of The 28th International Conference on Artificial Intelligence and Statistics},
  pages = 	 {3592--3600},
  year = 	 {2025},
  editor = 	 {Li, Yingzhen and Mandt, Stephan and Agrawal, Shipra and Khan, Emtiyaz},
  volume = 	 {258},
  series = 	 {Proceedings of Machine Learning Research},
  month = 	 {03--05 May},
  publisher =    {PMLR},
  url = 	 {https://proceedings.mlr.press/v258/ni25a.html}
}

\newpage
\appendix
\crefalias{section}{appendix}
\crefalias{subsection}{appendix}
\newcommand{\appendixTitle}{%
\vbox{
    \centering
	\hrule height 4pt
	\vskip 0.2in
	{\LARGE \bf Supplementary Material}
	\vskip 0.2in
	\hrule height 1pt 
}}
\appendixTitle
\startcontents        %
\printcontents{}{1}{} %
\pagebreak 
\section{Handling Linear Function Approximation}\label{appendix:handling_linear_fa}
We first show that the inexact \texttt{E-NPG} analysis extends to function approximation by treating the soft state-action value-function estimation error as an additional bias term. We then bound this error for log-linear policies and derive the corresponding last-iterate convergence rate.

To generalize across large state-action spaces, we consider the log-linear policy~\citep{agarwal2021theory,yuan2022linear}.
Given policy features $\varphi: \cS \times \cA \to \R^d$ and parameters $\theta \in \R^d$ with $d \ll SA$, a log-linear policy is expressed as: 
\begin{equation}\label{eq:log_linear_policy}
\pi(a | s) = \frac{\exp(\dpd{\varphi(s, a), \theta})}{\sum_{a'} \exp(\dpd{\varphi(s, a'), \theta})} \, , \quad \forall (s, a) \in \cS \times \cA
\end{equation}
where the parameterization is implicit in the notation.
For this policy class, we employ \texttt{Q-E-NPG}~\citep{agarwal2021theory}, a counterpart of \texttt{E-NPG} when employing linear function approximation (FA).
In particular, \texttt{Q-E-NPG} projects the soft Q-function onto the span of the policy features then updates the policy in the direction of this projection.

To start, we first define $\Pr^{\pi}[s_h = s, a_h = a | s_0]$ as the probability of visiting state $s$ and taking action $a$ at timestep $h$ when starting at the initial state $s_0$ and the define state-action occupancy measure $\nu^{\pi} \coloneqq 1 - \gamma) \, \sum_{s_0} \rho(s_0) \, \sum_{h=0}^\infty \gamma^h \Pr^{\pi}[s_h = s, a_h = a | s_0]$.
Furthermore, for a fixed out-iteration $t \geq 0$ and dual variable $\lambda_t \in \R^m_{+}$, we consider the pseudo-reward $R_t \coloneqq R(\lambda_t) = r + \sum_{i=1}^m c_i \, [\lambda_t]_i$. For each inner-iteration $k \in [0, K-1]$
\texttt{Q-E-NPG} iteratively considers the approximate solution to the following weighted least-squares problem:
\begin{equation}\label{eq:bad_least_sqaures}
\bar{w}_k \approx\argmin_{w \in \R^d} \sum_{(s, a) \in \cS \times \cA}\left[\nu^{\pi_k}(s, a) \, ( Q^{\pi_k}_{R_t, \tau}(s, a) - \dpd{\varphi(s, a), w} )^2\right]    
\end{equation}
then updates the parameters in the direction of $\dpd{\varphi,\bar{w}_k}$ using the step-size $\eta$,
\begin{equation*}
    \theta_{k+1} = \theta_k + \eta \, \bar{w}_k
\end{equation*} 
Here, $\dpd{\varphi,\bar{w}_k} \in \R^{SA}$ is a linear approximation of the soft state-action value-function $Q^{\pi_k}_{R_t, \tau} \in \R^{SA}$.
Hence for the log-linear policy parameterization, the above update is equivalent to the inexact \texttt{E-NPG} update in~\cref{eq:inexact_entropy_npg_update}, with $\diamond = R_t$ and  $\hat Q^{\pi_k}_{\diamond,\tau} = \langle\varphi, \bar{w}_k\rangle$.
Thus, the tracking analysis of the inexact dual gradients (\cref{lemma:l_tracking_main,lemma:phi_tracking_main}) applies with an additional error term arising from function approximation.
We characterize how this error propagates through the accumulated inexact gradient bias and defer the technical details to~\cref{appendix:tools_k1,appendix:tools_kgeq1}.
Therefore, the remaining challenge is to control the approximation error $\norm{Q^{\pi_k}_{R_t,\tau}-\dpd{\varphi,\bar{w}_k}}_\infty$.

Standard analyses of typically characterize this error under the compatible function approximation (CFA) framework~\citep{kakade2001natural,agarwal2021theory,ding2023last}. 
In particular, the resulting guarantee depend on the transfer bias~\citep{agarwal2021theory}, which quantifies how well a function learned under the current policy distribution generalizes to the state-action distribution of the optimal policy.
Controlling this quantity requires sufficient coverage of the optimal policy's state-action distribution and can be arbitrarily small when $\nu^{\pi_k}$ has poor coverage~\citep{agarwal2021theory,yuan2022linear}. 
Hence, to avoid introducing any policy-dependent coverage assumptions, we instead estimate the linear approximation of $Q^{\pi_k}_{R_t, \tau}$ under a fixed state-action distribution chosen independently of the current policy.

In order to do so, we select a (small) subset of the entire state-action space that has provides sufficient coverage based on the policies features.
To this end, let $\cC \subseteq \cS \times \cA$ denote a coreset and $\varsigma \in \Delta(\cC)$ denote its corresponding state-action distribution. 
Using $\varsigma$, we now consider the following \texttt{Q-E-NPG} update with $\eta = \frac{1 - \gamma}{\tau}$:
\begin{align*}
    w_k \approx \argmin_{w \in \R^d} \ell^k_{t, \tau}(w, \varsigma) \, 
    \text{ ; } \ell^k_{t, \tau}(w) &= \sum_{(s, a) \in \cC}[\varsigma(s, a) [Q^{\pi_k}_{R_t, \tau}(s,a) - \dpd{w, \varphi(s, a)}]^2] 
   \numberthis \label{eq:general_regression_loss}  \\
    \pi_{k+1}(\cdot | s) &\propto \exp\left(\tfrac{1}{\tau} \, \dpd{\varphi(s, \cdot), w_k}\right) \numberthis \label{eq:q_npg_update} \,.
\end{align*}
In contrast to~\cref{eq:bad_least_sqaures}, the least-squares problem in~\cref{eq:general_regression_loss} is no longer minimization over the entire state-action space, resulting in a more practical update.
Furthermore, following prior work~\citep{jain2022towards,lin2025rethinking,ding2023last},
we make the following assumptions to control the approximation error.
\begin{assumption}[Bias]\label{assumption:q_realizability}
$\sup_{t, k} \, \min_{w \in \mathbb{R}^d}  
\max_{(s, a)} \abs{Q^{\pi_k}_{R_t, \tau}(s, a) - \dpd{w, \varphi(s, a)}}
\leq \eps_{\bias}$.
\end{assumption}
\begin{assumption}[Optimization Error]\label{assumption:eps_opt}
 $\sup_{t, k} \abs*{\ell^k_{t, \tau}(w_k, \varsigma) - \min_{w \in \mathbb{R}^d} \ell^k_{t, \tau}(w, \varsigma)} \leq \eps_{\opt}$.
\end{assumption}
\noindent \cref{assumption:q_realizability} depends on the expressivity of the policy feature $\varphi$. 
For instance, in the case when $\varphi$ are ``one-hot'' vectors of dimension $d=SA$, $\eps_{\bias} = 0$. 
Furthermore, for sufficiently expressive features, $\eps_{\bias}$ can be small.
\cref{assumption:eps_opt} accounts for the optimization error when minimizing the least-squares loss of~\cref{eq:general_regression_loss}. 
In general, this procedure admits linear convergence, and hence, $\eps_{\opt}$ can be made arbitrary small.
Under these assumptions, the approximation error admits the following bound.
\begin{restatable}{lemma}{coresetlemma}\label{lemma:coreset}
Under~\cref{assumption:slater_condition,assumption:q_realizability,assumption:eps_opt}, for a fixed outer-iteration $t \geq 0$ and entropy parameter $\tau > 0$, for all $k \in [0, K-1]$:
\begin{align*}
    \supnorm{Q^{\pi_k}_{R_t, \tau} - \dpd{\varphi, w_k}} \leq \eps_{\bias} \parens*{1 + \sup_{(s, a) \in \cS \times \cA} \norm{\varphi(s, a)}_{G^{\dagger}_\varsigma}} + \sqrt{\eps_{\opt}}\,,
\end{align*}
where $G_\varsigma \coloneqq \sum_{(s, a) \in \gC} \varsigma(s, a) \, \varphi(s, a) \, \varphi(s, a)^\top$ and $G_\varsigma^\dagger$ is the pseudo-inverse of $G_\varsigma$.
\end{restatable}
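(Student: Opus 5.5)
Fix $t$ and $k$ and write $Q \coloneqq Q^{\pi_k}_{R_t,\tau}$ and $\ell(w) \coloneqq \ell^k_{t,\tau}(w,\varsigma)$. The plan is to compare $w_k$ with the best sup-norm approximator $w^\star \in \argmin_w \max_{(s,a)} \abs{Q(s,a) - \dpd{w,\varphi(s,a)}}$. By \cref{assumption:q_realizability}, $w^\star$ has uniform error at most $\eps_{\bias}$. For every $(s,a)$, the triangle inequality gives
\begin{equation*}
\abs{Q(s,a) - \dpd{\varphi(s,a), w_k}} \leq \eps_{\bias} + \abs{\dpd{\varphi(s,a), w^\star - w_k}}.
\end{equation*}
It therefore suffices to control $\abs{\dpd{\varphi(s,a), w^\star - w_k}}$ uniformly. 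This quantity is at most $\norm{\varphi(s,a)}_{G_\varsigma^\dagger}\,\norm{w^\star - w_k}_{G_\varsigma}$, by Cauchy--Schwarz in the $G_\varsigma$ geometry. This step needs $\varphi(s,a)$ to lie in the range of $G_\varsigma$. Implicitly, the coreset is assumed to span the features, and otherwise the sup on the right-hand side is infinite and the bound is vacuous. For $v$ in the range of $G_\varsigma$ we write $\dpd{\varphi, v} = \dpd{G_\varsigma^{\dagger/2}\varphi, G_\varsigma^{1/2} v}$. The component of $w^\star - w_k$ in the null space contributes nothing, since $\varphi$ is orthogonal to it.

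The remaining task is to bound $\norm{w^\star - w_k}_{G_\varsigma}$. Let $\hat w \in \argmin_w \ell(w)$. The loss $\ell$ is a quadratic with Hessian $2G_\varsigma$. Its first-order optimality condition gives $\ell(w) - \ell(\hat w) = \norm{w - \hat w}^2_{G_\varsigma}$ for all $w$. Applying this at $w_k$, \cref{assumption:eps_opt} yields $\norm{w_k - \hat w}_{G_\varsigma} \leq \sqrt{\eps_{\opt}}$. Applying it at $w^\star$ gives $\norm{w^\star - \hat w}^2_{G_\varsigma} = \ell(w^\star) - \ell(\hat w) \leq \ell(w^\star) \leq \eps_{\bias}^2$, because $\varsigma$ is a probability distribution and the residuals of $w^\star$ are at most $\eps_{\bias}$. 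The triangle inequality then gives $\norm{w^\star - w_k}_{G_\varsigma} \leq \eps_{\bias} + \sqrt{\eps_{\opt}}$.

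Combining these bounds gives $\supnorm{Q - \dpd{\varphi,w_k}} \leq \eps_{\bias}\bigl(1 + \sup_{(s,a)}\norm{\varphi(s,a)}_{G_\varsigma^\dagger}\bigr) + \sup_{(s,a)}\norm{\varphi(s,a)}_{G_\varsigma^\dagger}\,\sqrt{\eps_{\opt}}$. The main obstacle is that the stated bound has a bare $\sqrt{\eps_{\opt}}$ with no feature-norm factor in front. I would first try to recover this by interpreting \cref{assumption:eps_opt} as a suboptimality measured in the transferred norm. Failing that, I would absorb the factor using a coreset chosen by a Kiefer--Wolfowitz-style G-optimal design, for which $\sup\norm{\varphi}_{G_\varsigma^\dagger}^2 \leq d$, into a rescaled $\eps_{\opt}$. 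The point to check carefully is the range-space condition on $\varphi$ relative to $G_\varsigma$, which is what makes the pseudo-inverse step valid. \cref{assumption:slater_condition} is used only to guarantee that $R_t$ is bounded, so that $w^\star$ exists.
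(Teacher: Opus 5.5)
Your argument is correct, and on the one point where it disagrees with the statement it is more careful than the paper. The paper decomposes through the coreset least-squares solution $\hat w^\star = G_\varsigma^\dagger \sum_{\cC} \varsigma\, Q\, \varphi$. It handles the bias term by writing $\hat w^\star - w^\star = G_\varsigma^\dagger \sum_{\cC} \varsigma\, Y \varphi$, with residuals $\abs{Y} \le \eps_{\bias}$, and applying Jensen to get $\eps_{\bias} \norm{\varphi(s,a)}_{G_\varsigma^\dagger}$. You instead use the exact quadratic identity $\ell(w) - \ell(\hat w) = \norm{w - \hat w}_{G_\varsigma}^2$ together with Cauchy--Schwarz in the $G_\varsigma$ geometry. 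This gives the same bias term and handles the optimization error with the same tool.

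For the optimization error, the paper simply asserts $\abs{\dpd{\varphi(s,a), w_k - \hat w^\star}} \le \sqrt{\eps_{\opt}}$ ``under \cref{assumption:eps_opt}''. That is, it reads the assumption as a pointwise guarantee, which is your option (1). Under the literal loss-gap assumption this does not follow. Take $d = 1$, with $\varphi = \delta$ on a single coreset point and $\varphi = 1$ elsewhere. Even with $\eps_{\bias} = 0$, a loss gap of $\eps_{\opt}$ then allows a pointwise error of $\sqrt{\eps_{\opt}}/\delta$ off the coreset.

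So the factor $\sup \norm{\varphi}_{G_\varsigma^\dagger}$ you obtain in front of $\sqrt{\eps_{\opt}}$ is the honest bound. It is also what the paper actually uses downstream: \cref{corollary:ineact_npg_coreset} sets $\eps_\cC = \eps_{\bias}(1 + \sqrt{d}) + \sqrt{d\, \eps_{\opt}}$, which is exactly your bound under the Kiefer--Wolfowitz design. Your fallback (2) therefore does not prove the lemma for an arbitrary $\varsigma$, but nothing later in the paper needs more than it gives.

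One correction concerns the range condition. You are right that it is needed; the paper uses it silently when it replaces $G_\varsigma^\dagger G_\varsigma w^\star$ by $w^\star$. However, your justification is wrong. With the Moore--Penrose pseudo-inverse, $\varphi^\top G_\varsigma^\dagger \varphi$ stays finite when $\varphi \notin \mathrm{range}(G_\varsigma)$, so the bound is not made vacuous. It can simply fail, because $w_k$ is unconstrained along $\mathrm{null}(G_\varsigma)$. The condition must therefore be stated as an explicit hypothesis. The Kiefer--Wolfowitz coreset satisfies it, since its $G_\varsigma$ spans the span of all features.
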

By the Kiefer-Wolfowitz theorem~\citep{kiefer1960equivalence}, there always exists an coreset $\cC$ with $\abs{\gC} \leq \frac{d (d+1)}{2}$ and a corresponding distribution $\varsigma \in \Delta(\cC)$ such that $\sup_{(s, a)} \norm{\varphi(s, a)}_{G^{\dagger}_\varsigma} \leq \sqrt{d}$.  
Hence, by~\cref{lemma:coreset} this implies the approximation error $\supnorm{Q^{\pi_k}_{R_t, \tau} - \dpd{\varphi, w_k}}$ can be bounded by $\gO(\sqrt{d} \, \eps_{\bias} + \sqrt{\eps_{\opt}})$.
For simplicity, we will assume such a coreset is given but note its constructing can be viewed as a standard experimental design problem in statistics (e.g., G-optimal design via the Frank-Wolfe algorithm~\citep{frank1956algorithm}).
Importantly, this construction relies only on the policy features $\varphi$. Hence, despite the changing pseudo-rewards, the coreset can be precomputed and reused across all outer-iterations.

Putting everything together, we attain the following last-iterate rate when extending~\cref{theorem:l2_last_iterate_K_geq_1} to the linear FA setting. 
\begin{restatable}{corollary}{smoothscddlastitkgeqoneratecorollary} \label{corollary:smooth_l2_last_iterate_K_geq_1_fa}
Under~\cref{assumption:slater_condition,assumption:q_realizability,assumption:eps_opt}, for a given $\eps \in (0, 1)$ assuming $\eps_{\bias} \leq \gO(\eps^4)$ and $\eps_{\opt} \leq \gO(\eps^8)$,~\cref{algorithm:dd} with (i) $\tau = \mu = \gO(\eps)$, $T = \tilde{\gO}(\nicefrac{1}{\eps^{2 + 3/2^K}})$, $\beta = \gO\parens*{\eps^\frac{3 + 2^K}{2^K}}$, $\lambda_0 = 0$ (ii) using the \texttt{Q-E-NPG} update with $\eta = \frac{1 - \gamma}{\tau}$, $K \geq 1$ and
(iii) the initial policy satisfies $L_{0} \leq B$ for a known constant $B > 0$,~\cref{algorithm:dd} returns an $\gO(\eps)$-policy to~\cref{problem:cmdp} after
$\widetilde{\gO}(\nicefrac{1}{\eps^{2 + 3/2^K}})$ gradient evaluations.
\end{restatable}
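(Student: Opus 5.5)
The plan is to reproduce the proof of \cref{theorem:l2_last_iterate_K_geq_1}, with the inexact \texttt{E-NPG} analysis (\cref{lemma:inexact_npg} with $E>0$) in place of the exact one. The error level $\gE$ comes from \cref{lemma:coreset}.

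\textbf{Step 1: a uniform floor on the $Q$-estimation error.} Fix the Kiefer--Wolfowitz coreset, so that $\sup_{(s,a)}\norm{\varphi(s,a)}_{G^\dagger_\varsigma}\le\sqrt d$. \cref{lemma:coreset} then gives, for all $t,k$,
\[
\gE\le(1+\sqrt d)\,\eps_{\bias}+\sqrt{\eps_{\opt}}=\gO(\eps^4).
\]
Because \texttt{Q-E-NPG} with $\eta=(1-\gamma)/\tau$ is exactly the inexact update \cref{eq:inexact_entropy_npg_update} with $\hat Q=\dpd{\varphi,w_k}$, this is the only change relative to the tabular argument. With $\eta\tau=1-\gamma$ we have $E=\frac{2\gE}{1-\gamma}(1+\frac{\gamma}{1-\gamma})=\gO(\gE)$. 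A point to check: $\ln\pi$ is controlled in sup-norm over all of $\cS\times\cA$, not only on the coreset. This holds because the bound in \cref{lemma:coreset} is itself a sup-norm bound.

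\textbf{Step 2: the inexact version of \cref{lemma:l_tracking_main}.} Each inner step first maps $\pi_k$ to the exact \texttt{E-NPG} iterate, which by the Hoeffding/KL argument obeys the quadratic contraction. The log-policy then moves by at most
\[
\frac{2}{\tau}\supnorm{\hat Q-Q}\le\frac{2\gE}{\tau}.
\]
The resulting one-step recursion is
\[
\ell_{k+1}\le\frac{\gamma}{1-\gamma}\,\ell_k^2+\frac{2\gE}{\tau}.
\]
Iterating it $K$ times, starting from $\ell_0\le\frac{2\sqrt m}{\tau(1-\gamma)}\norm{\ltt-\lt}+L_t$, gives $L_{t+1}\le(\text{exact bound})+\gO(\gE/\tau)$. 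This holds as long as every $\ell_k$ stays below a constant, say $\frac{1-\gamma}{2\gamma}$, so that squaring contracts.

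\textbf{Step 3: the warm start from $L_0\le B$.} Run $J_0=\gO(\ln(1/\eps))$ inexact \texttt{E-NPG} iterations on $R(\lambda_0)$. By the second inequality of \cref{lemma:inexact_npg}, starting from $L_0\le B$ gives
\[
L\le\frac{2}{\tau}\big[(1-\eta\tau)^{J_0}C+E\big],
\]
where $C$ is bounded in terms of $B$ and $\supnorm{Q}$. This is below the target $\gO((\beta\mu/\tau)^{2^K})+\gO(\gE/\tau)$. These $\gO(\ln(1/\eps))$ extra evaluations do not affect the $\tilde\gO$ count.

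\textbf{Step 4: induction over $t$.} Show by induction that $L_t\le\bar L$ for all $t$, with
\[
\bar L:=c\,(\beta\mu/\tau)^{2^K}+c'\gE/\tau .
\]
Use $\norm{\ltt-\lt}=\gO(\beta)$ (non-expansiveness together with bounded gradients). Under the chosen $\beta$, $\tau$, $\mu$, and with $\gE/\tau=\gO(\eps^3)$ small, both the drift and $\bar L$ lie in the contraction region, so the recursion from Step 2 maps $\bar L$ into itself.

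\textbf{Step 5: assemble the rate.} \cref{lemma:bias_bounded_by_L} gives $\delta_t=\gO(\bar L)$ uniformly in $t$. Plugging this into \cref{lemma:reg_dual_linear_convergence} yields
\[
\norm{\lambda_T-\lstarmu}\le e^{-\beta\mu T/2}\norm{\lstarmu}+\gO(\bar L/\mu).
\]
Then \cref{lemma:reg_dual_distance_to_sub} with $G_\tau=\gO(1/\eps)$ bounds the suboptimality and the violation by $\gO(G_\tau\norm{\lambda_T-\lstarmu})+\gO(\eps)$. We need $\bar L/(\mu\tau)\lesssim\eps$.
\begin{itemize}[leftmargin=*]
\item The $\beta$-part is handled exactly as in \cref{theorem:l2_last_iterate_K_geq_1}; it fixes $\beta$ and $T=\tilde\gO(1/(\beta\mu))$.
\item The $\gE$-part needs $\gE/(\tau^2\mu)\lesssim\eps$, that is, $\gE\lesssim\eps^4$. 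This is what forces $\eps_{\bias}=\gO(\eps^4)$ and $\sqrt{\eps_{\opt}}=\gO(\eps^4)$, i.e.\ $\eps_{\opt}=\gO(\eps^8)$.
\end{itemize}
The final recovery step runs $J$ iterations from $\pi_{-1}$. Its output is within $\gO(\sqrt{(\text{linear term})+E})$ in value, using \cref{lemma:bound_bias_subgradient}-type reasoning. Equivalently, use the $Q$ and $\ln\pi$ bounds of \cref{lemma:inexact_npg} directly; they translate into a value error of order $E$ with no square root loss, since $V_c$ is controlled through $\supnorm{\ln\pi}$ as in \cref{lemma:bias_bounded_by_L}.

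The main obstacle is Step 2: interleaving the additive $\gE$ error with the quadratic contraction over $K$ inner steps while keeping every iterate inside the contraction region. I would handle it by a crude induction on $k$ with the threshold $\frac{1-\gamma}{2\gamma}$, accepting constant-factor losses.
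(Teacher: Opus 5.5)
Your route is the paper's. You bound the bias by $\frac{\sqrt m A}{(1-\gamma)^2}L_t$ (\cref{lemma:bias_bounded_by_L}), iterate the inexact quadratic recursion $\ell_{k+1}\le\frac{\gamma}{1-\gamma}\ell_k^2+\frac{2\gE}{\tau}$ inside a basin (\cref{lemma:genericLrecurse}(ii)), prove an invariant bound on $L_t$ by induction over $t$ (\cref{lemma:invariant_recursion}), and finish with \cref{lemma:reg_dual_linear_convergence} and \cref{lemma:reg_dual_distance_to_sub}.

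\textbf{Step 4 does not close as written.} You take the invariant level to be $\bar L=c(\beta\mu/\tau)^{2^K}+c'\gE/\tau$. But, as you yourself note, $\norm{\ltt-\lt}=\gO(\beta)$, not $\gO(\beta\mu)$. The step is driven by the inexact gradient $V^{\pi_t}_{c}(\rho)-b$, which is only $\gO(1)$. The paper's bound is $\frac{2\sqrt m}{\tau(1-\gamma)}\norm{\ltt-\lt}\le\frac{\beta}{\tau}\cdot\frac{2\sqrt m}{1-\gamma}\bigl(\frac{\sqrt m}{1-\gamma}+\mu\abs{\Lambda}\bigr)$.

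Your Step~2 recursion therefore reads $L_{t+1}\le\kappa(\nu+L_t)^{2^K}+e$, with $\kappa=(\frac{\gamma}{1-\gamma})^{2^K-1}$, $\nu=\Theta(\beta/\tau)$ and $e=\frac{4\gE}{\tau(1-\gamma)^2}$. Its right-hand side is at least $\kappa\nu^{2^K}=\Theta((\beta/\tau)^{2^K})$. This exceeds $c(\beta\mu/\tau)^{2^K}$ once $\mu=\gO(\eps)$ is small. The $\gE$ term cannot absorb the gap: take one-hot features, so $\eps_{\bias}=\eps_{\opt}=0$.

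The level that is actually invariant is the paper's $B=2^{n}\kappa(\nu+e)^{n}+e$ with $n=2^K$. This matters for the rate. The requirement $\frac{1}{\mu\tau}(\beta/\tau)^{2^K}\lesssim\eps$ is exactly what forces $\beta=\eps^{(3+2^K)/2^K}$, and hence $T=\tilde\gO(\eps^{-2-3/2^K})$. Carried through consistently, your $\bar L$ would allow a larger $\beta$ and a spuriously faster rate. (The $L_0$ condition in \cref{theorem:l2_last_iterate_K_geq_1} and the main text's $\gO(\beta\mu)$ drift claim contain the same slip; the appendix proof uses $\beta/\tau$.) Replace $\beta\mu$ by $\beta$ in $\bar L$ and in your Step~3 target. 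Since you take $\beta$ from the statement, Step~5 then goes through.

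\textbf{Step 3 is a genuine difference from the paper.} The paper does not warm-start in this corollary. It takes the $B$ in (iii) to be the small invariant level itself, assumes $L_0\le B$, and verifies the basin and invariance conditions for small $\eps$. Your warm start is valid and slightly stronger:
\begin{itemize}
\item With $\eta=(1-\gamma)/\tau$, the log-policy term in the constant $C$ of \cref{lemma:inexact_npg} vanishes. So $C$ is bounded by the range of the soft $Q$-function for $R(\lambda_0)=r$, independently of $B$.
\item The floor $\frac{2}{\tau}E=\frac{4\gE}{\tau(1-\gamma)^2}$ coincides exactly with the paper's $e$.
\end{itemize}
Hence $\gO(\ln(1/\eps))$ iterations reach $B$ from any initial policy, and (iii) becomes dispensable at no cost in the $\tilde\gO$ count.

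A minor point on the recovery step: the errors are $\gO\bigl(\sqrt{(\cdot)/\tau}\bigr)$ and $\gO(E/\tau)$, rather than $\gO\bigl(\sqrt{(\cdot)}\bigr)$ and $\gO(E)$. Both are still $\gO(\eps)$, since $\gE=\gO(\eps^4)$ and $\tau=\gO(\eps)$.
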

\paragraph{Comparison to prior work.} 
Under the compatible function approximation~\citep{agarwal2021theory} (CFA) framework,~\citet{ding2023last}, propose a related \texttt{PD} method and attain a $\gO(\nicefrac{1}{\eps^6})$ last-iterate rate. However, their analysis is limited to a single constraint and requires projecting onto a restricted simplex. 
More recently, under the same CFA framework,~\citet{montenegro2024last} establish a similar $\gO(\nicefrac{1}{\eps^6})$ last-iterate rate and is recently improved to $\gO(\nicefrac{1}{\eps^4})$ by~\citet{mondal2026lastiterate}. 
To achieve these faster rates, both methods additionally imposing a Fisher non-degeneracy assumption~\citep{zhan2023policy}.  
This assumption, however, fails to hold for log-linear policies~\citep[Proposition C.4]{tingting2025safe}.
Furthermore, under only a coverage assumption,~\citet{lu2026augmented} attains an $\gO(\nicefrac{1}{\eps^6})$ rate but requires $\eps_{\bias} \leq \gO(\eps^{16})$. 
In contrast to all prior results (refer to~\cref{table:last}), with $K=1$, we achieve superior $\gO(\nicefrac{1}{\eps^{3.5}})$ rate without any converge assumptions and a tighter dependence of $\eps_{\bias}$.

\section{Summary of Results}\label{appendix:results}
\newcommand{\ratecell}[2]{%
  \begin{tabular}[c]{@{}c@{}}#1\\[-1pt]\scriptsize{[\cref{#2}]}\end{tabular}
}

\begin{table}[H]
\caption{Total number of gradient evaluations required to obtain an $\gO(\eps)$-optimal policy to~\cref{problem:cmdp}. The columns correspond to different numbers of \texttt{NPG} steps $K$ used for each dual update. The dagger ($\dagger$) indicates result rate requires an additional $\gO(\ln(\nicefrac{1}{\eps}))$ gradient evaluations.}
\centering
\renewcommand{\arraystretch}{1.35}
\setlength{\tabcolsep}{7pt}
\begin{tabular}{@{}l c c c@{}}
\toprule
& \multicolumn{3}{c}{\textbf{Number of Inner-Iterations Using NPG }} \\
\cmidrule(l){2-4}
\textbf{Dual Objective}
& $K=\gO(\ln(\nicefrac{1}{\eps}))$
& $K= 1$
& $K\geq 1^\dagger$ \\
\midrule
Smooth
& \ratecell{$\tilde{\gO}(\frac{1}{\eps^4})$}{theorem:meta_smooth}
& \ratecell{$\tilde{\gO}(\frac{1}{\eps^7})$}{theorem:smooth_last_iterate_K_1}
& \ratecell{$\tilde{\gO}\parens*{\frac{1}{\eps^{4 + \frac{3}{2^K}}}}$}{theorem:smooth_last_iterate_K_geq_1} \\
\addlinespace
Smooth + $\ell_2$-regularized
& \ratecell{$\tilde{\gO}(\frac{1}{\eps^2})$}{theorem:reg_smoothness_l2_rate}
& \ratecell{$\tilde{\gO}(\frac{1}{\eps^5})$}{theorem:l2_last_iterate_K_1}
& \ratecell{$\tilde{\gO}\parens*{\frac{1}{\eps^{2 + \frac{3}{2^K}}}}$}{theorem:l2_last_iterate_K_geq_1}
\\
\bottomrule
\end{tabular}
\label{table:rates_summary}
\end{table}

\section{Framework}\label{appendix:inexact_dd}
\subsection{Proof of~\cref{lemma:dual_optimality_to_sub}}
\dualoptimalitytosublemma*
\begin{proof}
Let $\pistar$ denote the optimal policy of~\cref{problem:cmdp}. 
To characterize the reward sub-optimality of $\pistar$ and $\pistarbl \coloneqq \argmax_\pi V^{\pi}_{R(\bar{\lambda}), \tau}(\rho)$, we first 
relate the performance  between $\pistar$ and $\pi^*_\tau$:
\begin{align*}
    V^{\pistar}_r(\rho) &\leq V^{\pistar}_{r, \tau}(\rho)  \tag{Entropy is non-negative} \\
    &\leq V^{\pi^*_\tau}_{r, \tau}(\rho)  \tag{$\pi^*_\tau$ is the optimal policy of~\cref{problem:soft_max_min_lag}} \\
    &= V^{\pi^*_\tau}_{r}(\rho) + \tau \, \gH(\rho, \pi^*_\tau) \tag{Definition of $V^{\pi}_{r, \tau}(\rho)$}\\
    &\leq V^{\pi^*_\tau}_{r}(\rho) + \frac{\ln(A) \, \tau}{1 - \gamma} \tag{Since $\gH(\rho, \pi) \leq \frac{\ln(A)}{1 - \gamma}$}\\
\intertext{Subtracting $V^{\pistarbl}_r(\rho)$ from both sides,}
\implies V^{\pistar}_{r}(\rho) - V^{\pistarbl}_r(\rho) &\leq V^{\pi^*_\tau}_{r}(\rho) - V^{\pistarbl}_r(\rho) +  \frac{\ln(A) \, \tau}{1 - \gamma} \numberthis \label{eq:sub_opt_lag_bound_0}
\end{align*}
We now bound the sub-optimality gap.
Since $r \in [0, 1]^{SA}$, using~\cref{eq:vdl_upper_bound_lag} from~\cref{lemma:generic_vdl_upper_bound} with $\diamond = r$, $U=1$, $\pi' = \pistarbl$, $\pi = \pi^*_\tau$ and $\lambda = \bar{\lambda}$,
\begin{equation}
   \abs*{V^{\pi^*_\tau}_{r}(\rho) - V^{\pistarbl}_r(\rho)}  \leq 
  \frac{\sqrt{2 }}{(1 - \gamma)^{\nicefrac{3}{2}}} \, \frac{\sqrt{\abs{\lag(\pi^*_\tau, \blambda) - \lag(\pistarbl, \blambda)}}}{\sqrt{\tau}} \,  \label{eq:delta_reward_bound}. 
\end{equation}
Next, we relate $\abs{\lag(\pistarbl, \blambda) - \lag(\pi^*_\tau, \blambda)}$ in terms of the dual sub-optimality gap. 
By the saddle-point property, $\lstar$ minimizes the Lagrangian at $\pi^*_\tau$, and hence,
\begin{align*}
   \cL_\tau(\pi^*_\tau, \lstar) = \min_{\lambda' \in \R^m_+} \cL_\tau(\pi^*_\tau, \lambda') \leq \cL_\tau(\pi^*_\tau, \blambda)  \,.
\end{align*}
Rearranging and adding $\lag(\pistarbl, \blambda)$ to both sides,
\begin{align*}
    \implies \lag(\pistarbl, \blambda) - \lag(\pi^*_\tau, \blambda) &\leq \lag(\pistarbl, \blambda) - \lag(\pi^*_\tau, \lstar) \, .
\end{align*}
Furthermore under~\cref{assumption:slater_condition}, by~\cref{lemma:strong_duality}, strong duality holds for~\cref{problem:soft_max_min_lag}, i.e.,
\begin{align*}
    \cL_\tau(\pi^*_\tau, \lstar) &= \cD(\lstar)
\end{align*}
Moreover, since $\pistarbl = \argmax_{\pi} \lag(\pi, \blambda)$, we have 
\begin{equation*}
 \lag(\pistarbl, \blambda) = \cD(\blambda)   
\end{equation*}
Substituting these relations into the previous inequality and assuming $\cD(\blambda) - \cD(\lstar) \leq \Delta$, for a $\Delta > 0$
\begin{align*}
    \implies \lag(\pistarbl, \blambda) - \lag(\pi^*_\tau, \blambda)  \leq \cD(\blambda) - \cD(\lstar) \leq \Delta \numberthis \label{eq:sub_opt_lag_bound_1}
\end{align*}
Putting~\cref{eq:delta_reward_bound,eq:sub_opt_lag_bound_0,eq:sub_opt_lag_bound_1} together,
\begin{align*}
   \implies V^{\pistar}_{r}(\rho) - V^{\pistarbl}_r(\rho)  \leq \gO\parens*{\frac{\sqrt{\Delta}}{\sqrt{\tau}}} + \gO(\tau) \, .
\end{align*}
We now bound the constraint violation.  For any fixed $i \in [m]$ we have
\begin{align*}
    \max\braces*{b_i - V^{\pistarbl}_{c_i}(\rho), 0} &\leq \max\braces*{V^{\pi^*_\tau}_{c_i}(\rho) - V^{\pistarbl}_{c_i}(\rho), 0} \tag{Since $\pi^*_\tau$ is the optimal policy of~\cref{problem:soft_max_min_lag}, $V^{\pi^*_\tau}_{c_i}(\rho) \geq b_i$} \\
    &\leq \abs*{V^{\pi^*_\tau}_{c_i}(\rho) - V^{\pistarbl}_{c_i}(\rho)} \numberthis \label{eq:sub_opt_lag_bound_3} 
\end{align*}
Since $c_i \in [0, 1]^{SA}$, using~\cref{eq:vdl_upper_bound_lag} from~\cref{lemma:generic_vdl_upper_bound} with $\diamond = c_i$, $U = 1$, $\pi' = \pistarbl$, $\pi = \pi^*_\tau$ and $\lambda = \bar{\lambda}$
\begin{equation}
   \abs*{V^{\pi^*_\tau}_{c_i}(\rho) - V^{\pistarbl}_{c_i}(\rho)}  \leq 
   \frac{\sqrt{2 }}{(1 - \gamma)^{\nicefrac{3}{2}}} \frac{\sqrt{\abs{\lag(\pi^*_\tau, \blambda) - \lag(\pistarbl, \blambda)}}}{\sqrt{\tau}} \,  \label{eq:delta_reward_bound_2}. 
\end{equation}
Putting~\cref{eq:delta_reward_bound_2,eq:sub_opt_lag_bound_1,eq:sub_opt_lag_bound_3} we have
\begin{align*}
\max\braces*{b_i - V^{\pistarbl}_{c_i}(\rho), 0} \leq \gO\parens*{\frac{\sqrt{\Delta}}{\sqrt{\tau}}}
\end{align*}
\end{proof}

\subsection{Proof of~\cref{lemma:dual_smooth}}
We now prove $\cD(\lambda)$ is smooth without any converge assumptions. In order to do so, we first introduce and prove a couple of helpful technical tools. 
For clarity, denote the Boltzmann distribution 
for any $q \in \R^A$ and $\tau > 0$ as
\begin{equation*}
\sigma_\tau^*(q) \coloneqq \frac{\exp\parens*{\frac{1}{\tau}q}}{\sum_{a'} \exp\parens*{\frac{1}{\tau}q(a')}}.
\end{equation*}

\begin{proposition}\label{proposition:pistarl_lipschitz}
For any $\tau > 0$ and $q \in \R^A$, let
$\sigma_\tau^*(q) \coloneqq \argmax_{p \in \Delta_A} \braces*{\dpd{p, q} + \tau \, H(p)}$
where $H(p) \coloneqq \sum_{a} - p(a) \ln p(a)$. 
Then for any $q ,q' \in \R^A$ 
\begin{align}
    \norm{\sigma_\tau^*(q) - \sigma_\tau^*(q')}_1 &\leq \frac{1}{\tau} \, \supnorm{q - q'} \\ 
    \supnorm{\ln \sigma_\tau^*(q) - \ln \sigma_\tau^*(q')} &\leq \frac{2}{\tau} \, \supnorm{q - q'} \, \label{eq:sup_to_sup_lipschitz}.
\end{align}
\end{proposition}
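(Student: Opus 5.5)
For the $\ell_\infty$ bound on log-probabilities, I would use the explicit form of the Boltzmann distribution. Writing $Z(q) \coloneqq \sum_{a'} \exp(q(a')/\tau)$, we have $\ln \sigma_\tau^*(q)(a) = q(a)/\tau - \ln Z(q)$. Hence, for every $a$,
\begin{equation*}
\ln \sigma_\tau^*(q)(a) - \ln \sigma_\tau^*(q')(a) = \frac{q(a) - q'(a)}{\tau} - \parens*{\ln Z(q) - \ln Z(q')}.
\end{equation*}
The first term is at most $\supnorm{q-q'}/\tau$ in absolute value. For the second, the map $q \mapsto \tau \ln Z(q)$ is the log-sum-exp, which is $1$-Lipschitz in $\ell_\infty$. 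This can be seen directly from $\exp(q(a)/\tau) \leq \exp(q'(a)/\tau)\exp(\supnorm{q-q'}/\tau)$, summed over $a$, together with the symmetric inequality. So $\abs{\ln Z(q) - \ln Z(q')} \leq \supnorm{q-q'}/\tau$. The triangle inequality then gives \cref{eq:sup_to_sup_lipschitz} with constant $2/\tau$.

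For the $\ell_1$ bound, I would interpolate along $q_s \coloneqq q' + s(q - q')$ for $s \in [0,1]$. With $p_s \coloneqq \sigma_\tau^*(q_s)$ and $\Delta \coloneqq q - q'$, the Jacobian of softmax gives
\begin{equation*}
\frac{d}{ds} p_s = \frac{1}{\tau}\parens*{\mathrm{diag}(p_s) - p_s p_s^\top}\Delta,
\end{equation*}
so $\frac{d}{ds}p_s(a) = \frac{1}{\tau} p_s(a)\parens*{\Delta(a) - \dpd{p_s, \Delta}}$. Taking $\ell_1$ norms, $\norm{\tfrac{d}{ds}p_s}_1 = \frac{1}{\tau}\E_{a\sim p_s}\abs{\Delta(a) - \E_{p_s}\Delta}$.

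This is the one delicate step, since a naive bound gives $2\supnorm{\Delta}/\tau$ rather than $\supnorm{\Delta}/\tau$. To get the sharp constant, I would use the fact that the mean absolute deviation is at most the standard deviation. A random variable taking values in an interval of length at most $2\supnorm{\Delta}$ has standard deviation at most $\supnorm{\Delta}$, by Popoviciu's inequality, so $\E\abs{\Delta - \E\Delta} \leq \supnorm{\Delta}$. Integrating over $s \in [0,1]$ then yields $\norm{\sigma_\tau^*(q) - \sigma_\tau^*(q')}_1 \leq \supnorm{q-q'}/\tau$.

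A backup route uses the variational characterization. The objective $\dpd{p,q} + \tau H(p)$ is $\tau$-strongly concave with respect to $\norm{\cdot}_1$ by Pinsker's inequality. Combining the first-order optimality conditions at the two maximizers gives $\tau\norm{p-p'}_1^2 \leq \dpd{p-p', q-q'} \leq \norm{p-p'}_1\supnorm{q-q'}$, which also yields the constant $1/\tau$. This argument requires care at the boundary of the simplex, but softmax outputs lie in its interior, so the gradient conditions hold with equality up to a multiple of $\mathbf{1}$, which is annihilated by $p - p'$. I expect the constant in the $\ell_1$ bound to be the only nontrivial point. Either the Popoviciu argument or the strong-concavity argument settles it, and I would present the strong-concavity version as it is shorter.
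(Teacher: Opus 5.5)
Your proposal is correct and essentially matches the paper's proof. The log-probability bound uses the same explicit Boltzmann decomposition and log-partition sandwich; the paper keeps the oscillation $\max_a \delta_\tau(a) - \min_a \delta_\tau(a)$ before relaxing it to $2\|\delta_\tau\|_\infty$. The strong-concavity argument you chose for the $\ell_1$ bound is also exactly the paper's: it adds the two $1$-strong-convexity inequalities for negative entropy and cancels the $\mathbf{1}$-component via $\langle \mathbf{1}, p - p'\rangle = 0$, so your Popoviciu interpolation is a valid but unnecessary alternative.
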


\begin{proof}    
We first prove the first inequality. Using the fact that negative entropy $-H(p) \coloneqq \sum_{a} p(a) \, \ln(p(a))$ is $1$-strongly convex with respect to the $1$-norm, we have
for any $p, p' \in \Delta_A$
\begin{align*}
   -\tau H(p) + \tau H(p') + \dpd{ \tau \, \nabla H(p'), p - p'} &\geq \frac{\tau}{2} \, \norm{p - p'}_1^2
    \intertext{Exchanging $p$ and $p'$, we have}
   -\tau H(p') + \tau H(p) + \dpd{ \tau \, \nabla H(p), p' - p} &\geq \frac{\tau}{2} \, \norm{p' - p}_1^2  \, .
   \end{align*}
Adding the above inequalities together and rearranging, 
   \begin{align*}
   \norm{p - p'}^2_1 \leq \, \frac{1}{\tau} \, \dpd{\tau \,\nabla H(p') - \tau \, \nabla H(p), p - p'} \numberthis \label{eq:rel_smooth_entropy}.
\end{align*}
For a fixed $q \in \R^A$ note that
\begin{equation*}
   \sigma_\tau^*(q) = \frac{\exp\parens*{\frac{1}{\tau}q}}{\underbrace{\sum_{a'} \exp\parens*{\frac{1}{\tau}q(a')}}_{\coloneqq Z(q)}   }= \argmax_{p \in \Delta_A} \braces*{\dpd{p, q} + \tau \, H(p)} 
\end{equation*}
Hence for any $q, q' \in \R^A$, setting~\cref{eq:rel_smooth_entropy} with $p' = \sigma^*_\tau(q')$ and $p = \sigma^*_\tau(q)$ we have,
\begin{align*}
   \norm{\sigma^*_\tau(q) - \sigma^*_\tau(q')}^2_1  & \leq \frac{1}{\tau}
  \dpd{\tau \, \nabla H(\sigma^*_\tau(q')) - \tau \, \nabla H(\sigma^*_\tau(q)), \sigma^*_\tau(q) - \sigma^*_\tau(q')} 
\end{align*}
Given that $[\nabla H(\sigma^*_\tau(q))]_a = -q(a) / \tau + \ln Z(q) - 1$ for any $q \in \R^A$, we can further simplify the inner-product,
\begin{align*}
\MoveEqLeft
\dpd{\tau \, \nabla H(\sigma^*_\tau(q')) - \tau \, \nabla H(\sigma^*_\tau(q)), \sigma^*_\tau(q) - \sigma^*_\tau(q')} \\ &= 
\dpd{q - q' + \tau \, (\ln(Z(q')) - \ln(Z(q)) \, 1), \sigma^*_\tau(q) - \sigma^*_\tau(q')}\\
&= \dpd{q - q', \sigma^*_\tau(q) - \sigma^*_\tau(q')}  \tag{$\dpd{1, \sigma^*_\tau(q) - \sigma^*_\tau(q')} = 0$}
\end{align*}
\begin{align*}
\implies \norm{\sigma^*_\tau(q) - \sigma^*_\tau(q')}^2_1  &\leq \frac{1}{\tau}\,\supnorm{q' - q} \, \norm{\sigma^*_\tau(q) - \sigma^*_\tau(q')}_1 \tag{Using H\"older's inequality}
\end{align*}
Dividing both sides by $\norm{\sigma^*_\tau(q) - \sigma^*_\tau(q')}_1$ proves the first inequality.

We now prove the second inequality. Given that for any $a \in \cA$,
    $[\ln\parens*{\sigma^*_\tau(q)}]_a = \frac{q(a)}{\tau} - \ln(Z(q))$, we have
\begin{align*}
    [\ln\parens*{\sigma^*_\tau(q)} - \ln\parens*{\sigma^*_\tau(q')}]_a
    &= \underbrace{\frac{1}{\tau} \, [q - q']_a}_{\coloneqq \delta_\tau(a)} - \parens*{\ln(Z(q)) - \ln(Z(q'))} \numberthis \label{eq:log_diff_decomp} \, .
\end{align*}
We now bound the difference of the logarithms.
Since $q(a') \leq q'(a') + \tau \, \max_{a} \delta_\tau(a)$ for every $a' \in \cA$, by the monotonicity of
$x \to \exp(x/\tau)$ we have
\begin{align*}
    Z(q) = \sum_{a'} \exp\parens*{\frac{q(a')}{\tau}}
    \leq \exp\parens*{\max_{a} \delta_\tau(a)} \sum_{a'} \exp\parens*{\frac{q'(a')}{\tau}}
    = \exp\parens*{\max_{a} \delta_\tau(a)} \, Z(q') \, .
\end{align*}
Similarly, since $q'(a') \leq q(a') - \tau \min_{a} \delta_\tau(a)$ for every $a' \in \cA$, we have
\begin{equation*}
   Z(q') \leq \exp\parens*{-\min_{a} \delta_\tau(a)} \, Z(q)  \implies Z(q) \geq \exp\parens*{\min_{a} \delta_\tau(a)} \, Z(q')
\end{equation*}
Hence, combining both inequalities, 
\begin{align*}
   \exp\parens*{\min_{a} \delta_\tau(a)} \, Z(q')  &\leq Z(q) \leq \exp\parens*{\max_{a} \delta_\tau(a)} \, Z(q') \\
   \intertext{Taking the logarithms and rearranging,}
\implies    \min_{a} \delta_\tau(a) & \leq \ln(Z(q)) - \ln(Z(q')) \leq  \max_{a} \delta_\tau(a)
    \numberthis \label{eq:logZ_sandwich} \, .
\end{align*}
Combining~\cref{eq:log_diff_decomp} and~\cref{eq:logZ_sandwich}, for every $a \in \cA$,
\begin{align*}
    \delta_\tau(a) - \max_{a'} \delta_\tau(a')
     \leq  [\ln\parens*{\sigma^*_\tau(q)} - \ln\parens*{\sigma^*_\tau(q')}]_a
     &\leq \delta_\tau(a) - \min_{a'} \delta_\tau(a') \\ 
    \implies \abs*{[\ln\parens*{\sigma^*_\tau(q)} - \ln\parens*{\sigma^*_\tau(q')}]_a} &\leq \max\braces*{\abs*{\delta_\tau(a) - \min_{a'} \delta_\tau(a')}, \abs*{\delta_\tau(a) - \max_{a'} \delta_\tau(a')}} \\
    &\leq \abs*{\max_{a'} \delta_\tau(a') - \min_{a''} \delta_\tau(a'')} \\
    &\leq \abs*{\max_{a'} \delta_\tau(a')} + \abs*{\min_{a''} \delta_\tau(a'')}\tag{Using triangle inequality} \\
    &\leq 2 \, \supnorm{\delta_\tau}
\end{align*}
Taking the maximum over $a$,
\begin{align*}
    \implies \supnorm{\ln\parens*{\sigma^*_\tau(q)} - \ln\parens*{\sigma^*_\tau(q')}}
     \leq 2 \, \supnorm{\delta_\tau}
    = \frac{2}{\tau} \, \supnorm{q - q'} \, .
\end{align*}
\end{proof}

\begin{lemma}\label{lemma:qstar_lipschitz}
For any $\tau > 0$, and $\lambda, \lambda' \in \R_+^m$,
\begin{equation*}
    \supnorm{Q^{\pistarl}_{R(\lambda), \tau} - Q^{\pistarlp}_{R(\lambda'), \tau}} \leq \frac{1}{1 - \gamma} \, \norm{\lambda - \lambda'}_1 \, .
\end{equation*}
\end{lemma}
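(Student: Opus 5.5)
Since $Q^{\pistarl}_{R(\lambda), \tau}$ is the optimal soft state-action value function for the pseudo-reward $R(\lambda)$, the plan is to characterize it as the unique fixed point of the soft Bellman optimality operator and compare the two fixed points. For a reward $\diamond$, define the operator $\mathcal{T}_{\diamond}$ acting on $q \in \R^{SA}$ by
\begin{equation*}
[\mathcal{T}_{\diamond} q](s,a) \coloneqq \diamond(s,a) + \gamma \, \E_{s' \sim \cP(\cdot | s,a)}\bracks*{\tau \ln \textstyle\sum_{a'} \exp\parens*{q(s',a')/\tau}} .
\end{equation*}
The standard characterization of entropy-regularized MDPs \citep{geist2019theory,nachum2017bridging} gives $Q^{\pistarl}_{R(\lambda), \tau} = \mathcal{T}_{R(\lambda)} Q^{\pistarl}_{R(\lambda), \tau}$. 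It also gives $\pistarl(\cdot|s) = \sigma^*_\tau(Q^{\pistarl}_{R(\lambda),\tau}(s,\cdot))$, which is consistent with \cref{proposition:pistarl_lipschitz}. I will take this as known.

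The key step is to show that the log-sum-exp map $q \mapsto \tau \ln \sum_{a'} \exp(q(a')/\tau)$ is $1$-Lipschitz in $\supnorm{\cdot}$. This follows from the same sandwich argument as \cref{eq:logZ_sandwich} in the proof of \cref{proposition:pistarl_lipschitz}: with $Z$ as defined there,
\begin{equation*}
\min_a (q-q')(a) \leq \tau \ln Z(q) - \tau \ln Z(q') \leq \max_a (q - q')(a),
\end{equation*}
so $\abs{\tau \ln Z(q) - \tau \ln Z(q')} \leq \supnorm{q-q'}$. Consequently, for any $q, q'$ and rewards $\diamond, \diamond'$,
\begin{equation*}
\supnorm{\mathcal{T}_{\diamond} q - \mathcal{T}_{\diamond'} q'} \leq \supnorm{\diamond - \diamond'} + \gamma \supnorm{q - q'} .
\end{equation*}

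Now write $Q \coloneqq Q^{\pistarl}_{R(\lambda),\tau}$ and $Q' \coloneqq Q^{\pistarlp}_{R(\lambda'),\tau}$. Applying the fixed-point identities and the bound above gives
\begin{equation*}
\supnorm{Q - Q'} = \supnorm{\mathcal{T}_{R(\lambda)} Q - \mathcal{T}_{R(\lambda')} Q'} \leq \supnorm{R(\lambda) - R(\lambda')} + \gamma \supnorm{Q - Q'} .
\end{equation*}
Rearranging yields $\supnorm{Q - Q'} \leq \frac{1}{1-\gamma}\supnorm{R(\lambda) - R(\lambda')}$. Since $R(\lambda) - R(\lambda') = \sum_i (\lambda_i - \lambda'_i) c_i$ and $c_i \in [0,1]^{SA}$, we have $\supnorm{R(\lambda) - R(\lambda')} \leq \norm{\lambda - \lambda'}_1$, which proves the claim.

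The only point needing care is the fixed-point characterization of the optimal soft $Q$ under the paper's convention. There, $Q^\pi_{\diamond,\tau}(s,a) = \diamond(s,a) + \gamma \E[V^\pi_{\diamond,\tau}(s')]$, with entropy counted from the next state onward. Under this convention, $\max_p \{\langle p, q\rangle + \tau H(p)\} = \tau \ln Z(q)$, so the operator above is exactly the right one.

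An alternative route, if one prefers to avoid Bellman operators, is a two-step comparison: since each $Q$ is optimal for its own reward, $Q^{\pistarlp}_{R(\lambda),\tau} \leq Q$ pointwise and $Q^{\pistarl}_{R(\lambda'),\tau} \leq Q'$. Linearity of $Q^\pi_{\diamond,\tau}$ in the reward then gives $\abs{Q^\pi_{R(\lambda),\tau} - Q^\pi_{R(\lambda'),\tau}} \leq \norm{\lambda-\lambda'}_1/(1-\gamma)$ for each fixed $\pi$. Sandwiching yields the same bound, and this is what I would use as a backup.
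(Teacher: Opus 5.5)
Your proposal is correct and takes the same route as the paper. Both arguments write the optimal soft $Q$-functions as fixed points of the soft Bellman optimality operator, bound the log-sum-exp difference by $\supnorm{Q - Q'}$ with the same sandwich argument to get the $\gamma$ factor, bound the reward perturbation by $\norm{\lambda - \lambda'}_1$ using $c_i \in [0,1]^{SA}$, and rearrange. Your backup argument, which uses per-policy optimality and the fact that the reward difference separates out for a fixed policy, is also valid: it is essentially the paper's \cref{lemma:soft_bellman_lipschitz} combined with the uniform (all-state) optimality of $\pistarl$.
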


\begin{proof}
\begin{align*}
&\supnorm{Q^{\pistarl}_{R(\lambda), \tau} - Q^{\pistarlp}_{R(\lambda'), \tau}} \\
&= \supnorm{\cT_{R(\lambda), \tau} Q^{\pistarl}_{R(\lambda), \tau} - \cT_{R(\lambda'), \tau} Q^{\pistarlp}_{R(\lambda'), \tau}} \tag{$Q^{\pistarl}_{R(\lambda),\tau}$ is a fixed point of $\cT_{R(\lambda), \tau}$} \\
&=\supnorm{\cT_{R(\lambda), \tau} Q^{\pistarl}_{R(\lambda), \tau} -  \cT_{R(\lambda), \tau} Q^{\pistarlp}_{R(\lambda'), \tau} +\cT_{R(\lambda), \tau} Q^{\pistarlp}_{R(\lambda'), \tau} - \cT_{R(\lambda'), \tau} Q^{\pistarlp}_{R(\lambda'), \tau}} \tag{Add/Subtract $\cT_{R(\lambda), \tau} Q^{\pistarlp}_{R(\lambda'), \tau}$}\\ 
&\leq \supnorm{\cT_{R(\lambda), \tau} Q^{\pistarl}_{R(\lambda), \tau} -  \cT_{R(\lambda), \tau} Q^{\pistarlp}_{R(\lambda'), \tau}}  + \supnorm{\cT_{R(\lambda), \tau} Q^{\pistarlp}_{R(\lambda'), \tau} - \cT_{R(\lambda'), \tau} Q^{\pistarlp}_{R(\lambda'), \tau}}
\tag{Using triangle inequality}
\end{align*}
To bound the first term $\supnorm{\cT_{R(\lambda), \tau} Q^{\pistarl}_{R(\lambda), \tau} -  \cT_{R(\lambda), \tau} Q^{\pistarlp}_{R(\lambda'), \tau}}$, recall that for any $Q \in \R^{SA}$, the soft Bellman operator $\cT_{R(\lambda), \tau}$ is defined as
\begin{equation*}
    \bracks*{\cT_{R(\lambda), \tau} Q}(s,a) = [R(\lambda)](s,a) + \gamma \sum_{s'} P(s' | s, a) \, \tau \ln \sum_{a'} \exp\parens*{\frac{1}{\tau} Q(s', a')} \, .
\end{equation*}
Then for a fixed $(s, a) \in (\cS \times \cA)$ pair
\begin{align*}
    &\abs*{\bracks*{\cT_{R(\lambda), \tau} Q^{\pistarl}_{R(\lambda), \tau} - \cT_{R(\lambda), \tau} Q^{\pistarlp}_{R(\lambda'), \tau}}(s,a)} \\
    &= \gamma \abs*{\sum_{s'} P(s'|s,a) \bracks*{\tau \ln \sum_{a'} \exp\parens*{\frac{1}{\tau} Q^{\pistarl}_{R(\lambda), \tau}(s', a')} - \tau \ln \sum_{a'} \exp\parens*{\frac{1}{\tau} Q^{\pistarlp}_{R(\lambda'), \tau}(s', a')}}}  \\
    \intertext{Using triangle inequality }
    &\leq  \gamma \sum_{s'} P(s'|s,a) \abs*{\tau \ln \sum_{a'} \exp\parens*{\frac{1}{\tau} Q^{\pistarl}_{R(\lambda),\tau}(s', a')} - \tau \ln \sum_{a'} \exp\parens*{\frac{1}{\tau} Q^{\pistarlp}_{R(\lambda'),\tau}(s', a')}} \numberthis \label{eq:lip_qlambda_bound_0}
\end{align*}
Furthermore, for a fixed $(s', a') \in \cS \times \cA$ pair, since 
\begin{equation*}
Q^{\pistarl}_{R(\lambda), \tau}(s', a') - Q^{\pistarlp}_{R(\lambda'),\tau}(s', a') \leq
\supnorm{Q^{\pistarl}_{R(\lambda), \tau}(s', \cdot)  -Q^{\pistarlp}_{R(\lambda'), \tau}(s', \cdot)}   
\end{equation*}
we have
\begin{align*}
\implies \exp\parens*{\frac{1}{\tau}  Q^{\pistarl}_{R(\lambda), \tau}(s', a')} &\leq  \exp\parens*{\frac{1}{\tau} \parens*{Q^{\pistarlp}_{R(\lambda'),\tau}(s', a') +
\supnorm{Q^{\pistarl}_{R(\lambda), \tau}(s', \cdot)  -Q^{\pistarlp}_{R(\lambda'), \tau}(s', \cdot)}}} \\
    \implies \tau \ln \sum_{a'} \exp\parens*{\frac{1}{\tau} Q^{\pistarl}_{R(\lambda), \tau}(s', a')}     &\leq \tau \ln \sum_{a'} \exp\parens*{\frac{1}{\tau} \parens*{Q^{\pistarlp}_{R(\lambda'), \tau}(s', a') + \supnorm{Q^{\pistarl}_{R(\lambda), \tau}(s', \cdot) - Q^{\pistarlp}_{R(\lambda'), \tau}(s', \cdot)}}} \tag{Summing over all $a' \in \cA$ and by the monotonicity of  $\ln(\cdot)$}\\
&\leq \supnorm{Q^{\pistarl}_{R(\lambda), \tau}(s', \cdot) - Q^{\pistarlp}_{R(\lambda'), \tau}(s', \cdot)} + \tau \ln \sum_{a'} \exp\parens*{\frac{1}{\tau} Q^{\pistarlp}_{R(\lambda'), \tau}(s', a')} \, , 
\intertext{Swapping $Q^{\pistarl}_{R(\lambda), \tau}$ and $Q^{\pistarlp}_{R(\lambda'),\tau}$ and following a similar argument}
\implies \tau \ln \sum_{a'} \exp(\frac{1}{\tau} Q^{\pistarlp}_{R(\lambda'), \tau}(s', a')) &\leq \supnorm{Q^{\pistarl}_{R(\lambda), \tau}(s', \cdot) - Q^{\pistarlp}_{R(\lambda'), \tau}(s', \cdot)} + \tau \ln \sum_{a'} \exp(\frac{1}{\tau} Q^{\pistarl}_{R(\lambda), \tau}(s', a'))
\end{align*}
\begin{align*}
\implies \abs*{
\tau \ln \sum_{a'} \exp\parens*{\frac{1}{\tau} Q^{\pistarl}_{R(\lambda), \tau}(s', a')} - \tau \ln \sum_{a'} \exp\parens*{\frac{1}{\tau} Q^{\pistarlp}_{R(\lambda'), \tau}(s', a')}} &\leq \supnorm{Q^{\pistarl}_{R(\lambda), \tau} - Q^{\pistarlp}_{R(\lambda'), \tau}} \numberthis \label{eq:lip_qlambda_bound_1}\, ,   
\end{align*}
Combining~\cref{eq:lip_qlambda_bound_0,eq:lip_qlambda_bound_1}, we have
\begin{align*}
    &\abs*{\parens*{\cT_{R(\lambda), \tau} Q^{\pistarl}_{R(\lambda), \tau} - \cT_{R(\lambda), \tau} Q^{\pistarlp}_{R(\lambda'), \tau}}(s,a)} \\ &\leq \gamma \sum_{s'} P(s' | s, a) \,\supnorm{Q^{\pistarlp}_{R(\lambda'), \tau}(s', \cdot) - Q^{\pistarl}_{R(\lambda), \tau}(s', \cdot)}  \leq \gamma \supnorm{Q^{\pistarlp}_{R(\lambda'), \tau} - Q^{\pistarl}_{R(\lambda), \tau}} 
    \tag{$\sum_{s'} P(s' | s, a) = 1$}
\end{align*}
Taking the max over all $(s, a) \in \cS \times \cA$ pairs,
\begin{align*}
    \implies  \supnorm{\cT_{R(\lambda), \tau} Q^{\pistarl}_{R(\lambda), \tau} -  \cT_{R(\lambda), \tau} Q^{\pistarlp}_{R(\lambda'), \tau}}  &\leq \gamma \supnorm{Q^{\pistarlp}_{R(\lambda'), \tau} - Q^{\pistarl}_{R(\lambda), \tau}}  \, . \numberthis \label{eq:gamma_contraction}
\end{align*}
We now bound the second term $\supnorm{\cT_{R(\lambda), \tau} Q^{\pistarlp}_{R(\lambda'), \tau} - \cT_{R(\lambda'), \tau} Q^{\pistarlp}_{R(\lambda'), \tau}}$.
Since $\cT_{R(\lambda), \tau}$ and $\cT_{R(\lambda'), \tau}$ only differ by their pseudo-reward, for any $(s, a) \in \cS \times \cA$ 
\begin{align*}
&\abs*{\bracks*{\cT_{R(\lambda), \tau} Q^{\pistarlp}_{R(\lambda'), \tau}}(s,a) - \bracks*{\cT_{R(\lambda'), \tau} Q^{\pistarlp}_{R(\lambda'), \tau}}(s,a)}  \\
&= \abs*{[R(\lambda)](s,a) - [R(\lambda')](s,a)}  
= \abs*{\sum_{i=1}^m (\lambda_i - \lambda'_i) \, \, c_i(s,a)} \   \tag{$[R(\lambda)](s, a) = r(s, a) + \sum_{i=1}^m \lambda_i \, c_i(s, a)$} \\
&\leq  \sum_{i=1}^m \abs*{\lambda_i - \lambda'_i} \tag{Since $c_i \in [0, 1]^{SA}$ and using triangle inequality}
\end{align*}
Taking the max over all $(s, a) \in \cS \times \cA$ pairs,
\begin{align*}
\implies \supnorm{\cT_{R(\lambda), \tau} Q^{\pistarlp}_{R(\lambda'), \tau} - \cT_{R(\lambda'), \tau} Q^{\pistarlp}_{R(\lambda'), \tau}} &\leq \norm{\lambda - \lambda'}_1 \numberthis \label{eq:q_r_lambda_norm_bound}
\end{align*}
Putting everything together, we have
\begin{align*}
   &\supnorm{Q^{\pistarl}_{R(\lambda), \tau} - Q^{\pistarlp}_{R(\lambda'), \tau}} \\ &\leq
  \supnorm{\cT_{R(\lambda), \tau} Q^{\pistarl}_{R(\lambda), \tau} -  \cT_{R(\lambda), \tau} Q^{\pistarlp}_{R(\lambda'), \tau}}  + \supnorm{\cT_{R(\lambda), \tau} Q^{\pistarlp}_{R(\lambda'), \tau} - \cT_{R(\lambda'), \tau} Q^{\pistarlp}_{R(\lambda'), \tau}}  \\
  &\leq \gamma \,  \supnorm{Q^{\pistarl}_{R(\lambda), \tau} - Q^{\pistarlp}_{R(\lambda'), \tau}} + \supnorm{\cT_{R(\lambda), \tau} Q^{\pistarlp}_{R(\lambda'), \tau} - \cT_{R(\lambda'), \tau} Q^{\pistarlp}_{R(\lambda'), \tau}} \tag{Using~\cref{eq:gamma_contraction}} \\
  &\leq \gamma \,  \supnorm{Q^{\pistarl}_{R(\lambda), \tau} - Q^{\pistarlp}_{R(\lambda'), \tau}} + \norm{\lambda - \lambda'}_1\, \tag{Using~\cref{eq:q_r_lambda_norm_bound}}  \, .
  \end{align*}
and rearranging the above inequality completes the proof.
\end{proof}

\dualsmooth*
\begin{proof}
For any $\tau > 0$ and $\lambda, \lambda' \in \R^m_+$,
\begin{align*}
\norm{\nabla \cD(\lambda) - \nabla \cD(\lambda')} &\leq \sqrt{m} \, \norm{\nabla \cD(\lambda) - \nabla \cD(\lambda')}_\infty \\
&= \sqrt{m} \max_{i \in [m]}\abs*{V^{\pistarl}_{c_i}(\rho) - b_i -(V^{\pistarlp}_{c_i}(\rho) - b_i)} \tag{By~\cref{proposition:dual_gradient}, $[\nabla \cD(\lambda)]_i = V^{\pistarl}_{c_i}(\rho) - b_i$} \\
&= \sqrt{m} \max_{i \in [m]}\abs*{V^{\pistarl}_{c_i}(\rho) - V^{\pistarlp}_{c_i}(\rho)}  \, .
\intertext{By the value difference lemma, for any $i \in [m]$, }
V^{\pistarl}_{c_i}(\rho) - V^{\pistarlp}_{c_i}(\rho) &= \frac{1}{1 - \gamma} \sum_{s \in \cS} d^{\pistarlp}(s) \dpd{\pistarl(\cdot | s) - \pistarlp(\cdot | s), Q^{\pistarl}_{c_i}(s, \cdot)} \\
 &\leq  \frac{1}{1 - \gamma} \sum_{s \in \cS} d^{\pistarlp}(s) 
\norm{\pistarl(\cdot | s) - \pistarlp(\cdot | s)}_1 \supnorm{Q^{\pistarl}_{c_i}(s, \cdot)}  \tag{Using H\"older's inequality} \\
 &\leq  \frac{1}{(1 - \gamma)^2} \sum_{s \in \cS} d^{\pistarlp}(s) 
\norm{\pistarl(\cdot | s) - \pistarlp(\cdot | s)}_1 \tag{$c_i \in [0, 1]^{SA}$} \\
&\leq \frac{1}{(1 - \gamma)^2} \max_s  \norm{\pistarl(\cdot | s) - \pistarlp(\cdot | s)}_1 \, \underbrace{\sum_{s \in \cS} d^{\pistarlp}(s)}_{=1} \\
\implies \norm{\nabla \cD(\lambda) - \nabla \cD(\lambda')} &\leq \frac{\sqrt{m}}{(1 - \gamma)^2}  \, \max_s \norm{\pistarl(\cdot | s) - \pistarlp(\cdot | s)}_1 \numberthis \label{eq:dual_smooth_bound_by_max_pi}
\end{align*}
We now relate
$\max_s \norm{\pistarl(\cdot | s) - \pistarlp(\cdot | s)}_1$ to $\norm{\lambda - \lambda'}$. 
Since 
$\pistarl$ is the optimal policy of an entropy-regularized MDP, for any $s \in \cS$, 
\begin{align*}
 \pistarl(\cdot | s)  = \frac{\exp\parens*{\frac{1}{\tau}Q^{\pistarl}_{R(\lambda), \tau}(s, \cdot)}}{\sum_{a'} \exp\parens*{\frac{1}{\tau}Q^{\pistarl}_{R(\lambda), \tau}(s, a')}} = \argmax_{p \in \Delta_A} \, \braces*{\dpd{p, Q^{\pistarl}_{R(\lambda), \tau}(s, \cdot)} + \tau H(p)}
\end{align*}
Hence, according to~\cref{proposition:pistarl_lipschitz} with $q =Q^{\pistarl}_{R(\lambda), \tau}(s, \cdot)$ and $q' = Q^{\pistarlp}_{R(\lambda'), \tau}(s, \cdot)$
\begin{align*}
   \norm{\pistarl(\cdot | s) - \pistarlp(\cdot | s)}_1 &\leq \frac{1}{\tau} \, \supnorm{Q^{\pistarl}_{R(\lambda), \tau}(s, \cdot) - Q^{\pistarlp}_{R(\lambda'), \tau}(s, \cdot)} \\
   &\leq \frac{1}{\tau} \, \supnorm{Q^{\pistarl}_{R(\lambda), \tau} - Q^{\pistarlp}_{R(\lambda'), \tau}} \\
   &\leq \frac{1}{\tau \, (1 - \gamma)} \norm{\lambda - \lambda'}_1 \tag{Using~\cref{lemma:qstar_lipschitz} } \\ 
\implies \max_{s \in \cS} \norm{\pistarl(\cdot | s) - \pistarlp(\cdot | s)}_1  &\leq \frac{\sqrt{m}}{\tau \, (1 - \gamma)} \, \norm{\lambda - \lambda'} \tag{For $x \in \R^m$, $\norm{x}_1 \leq \sqrt{m} \, \norm{x}$}
\end{align*}
Combining the above inequality with~\cref{eq:dual_smooth_bound_by_max_pi},
\begin{align*}
\implies \norm{\nabla \cD(\lambda) - \nabla \cD(\lambda')} &\leq \frac{m}{\tau (1 - \gamma)^3}  \, \norm{\lambda - \lambda'} \, .
\end{align*}
\end{proof}

\subsection{Proof of~\cref{lemma:meta_lemma_smooth}}
\smoothmetalemma*
\begin{proof}
By~\cref{lemma:dual_smooth}, the dual function $\cD(\lambda)$ is $G_\tau = \frac{m}{\tau \, (1 - \gamma)^3}$-smooth. Hence,
for all $t \geq 0$,
\begin{equation*}
   \cD(\ltt) \leq \cD(\lt) + \dpd{\nabla \cD(\lt), \ltt - \lt}  + \frac{G_\tau}{2} \, \normsq{\lt - \ltt} 
\end{equation*}
Furthermore, since $\cD(\lambda)$ is convex, for all $u, v \in \R^m$,
\begin{align*}
   \cD(v) \geq \cD(u) + \dpd{\nabla \cD(u), v - u}  
   \implies \dpd{\nabla \cD(u), u - v} & \geq \cD(u) - \cD(v) 
\end{align*}
Using this fact with $u = \lt$, $v = \lstar$, and rearranging,
\begin{align*}
\implies \cD(\lt) \leq \cD(\lstar) + \dpd{\nabla \cD(\lt), \lt - \lstar}   \numberthis \, \label{eq:dd_norm_bound3}
\end{align*}
Adding the above inequalities,
\begin{align*}
\implies &\cD(\ltt) \leq \cD(\lstar) + \dpd{\nabla \cD(\lt), \ltt - \lstar}  + \frac{G_\tau}{2} \, \normsq{\lt - \ltt}
\end{align*}
Add/Subtract $\dpd{\bdgt, \ltt - \lstar} $,
\begin{align}
\implies \cD(\ltt) \leq \cD(\lstar) + \underbrace{\dpd{\nabla \cD(\lt) - \bdgt, \ltt - \lstar}}_{\text{Term (i)}} +\underbrace{\dpd{\bdgt, \ltt - \lstar}}_{\text{Term (ii)}} + \frac{G_\tau}{2} \, \normsq{\lt - \ltt}  \label{eq:meta_smooth_bound_0}
\end{align}
We now bound Term (i),
\begin{align*}
\abs*{\dpd{\dgt -  \bdgt, \ltt - \lstar}} &\leq \norm{\bdgt - \dgt} \, \norm{\ltt - \lstar} \tag{Using Cauchy-Schwarz} \\
&\leq \delta_t \,  \norm{\ltt - \lstar} \tag{Assuming for all $ t \in [T]$, $\norm{\bdgt - \dgt} \leq \delta_t$}  \\
&\leq \delta_t \, \abs*{\Lambda}  \tag{Under~\cref{assumption:slater_condition}, by~\cref{lemma:optimal_dual_bounded}, $\norm{\ltt - \lstar}\leq \frac{\sqrt{m} \, (2 + 2 \, \tau \, \ln(A))}{(1 - \gamma) \, \min_{i \in [m]} \zeta_i} \coloneqq \abs*{\Lambda}$} 
\end{align*}
\begin{align*}
\implies \dpd{ \dgt - \bdgt, \ltt - \lstar} &\leq \delta_t \, \abs*{\Lambda} \numberthis  \, \label{eq:meta_smooth_bound_1}.
\end{align*}
To bound Term (ii), note the update in~\cref{algorithm:dd} (\cref{line:update}) with $\mu = 0$, 
\begin{equation*}
 \ltt = \text{Proj}_{\Lambda}[\lt - \beta \, \bdgt]   \, ,
\end{equation*}
is a projection onto a convex set $\Lambda$. Hence, for any $z \in \Lambda$
\begin{align*}
    \dpd{\ltt - z, \ltt - (\lt - \beta \, \bdgt)} &\leq 0 \\
    \intertext{Setting $z=\lstar$ and rearranging}
    \implies  \dpd{\ltt - \lstar, \bdgt} &\leq  \frac{1}{\beta} \, \dpd{\ltt - \lstar, \lt - \ltt} 
    \numberthis \label{eq:meta_smooth_bound_2}
\end{align*}
Putting~\cref{eq:meta_smooth_bound_0,eq:meta_smooth_bound_1,eq:meta_smooth_bound_2} together, 
\begin{align*}
    \MoveEqLeft
    \implies \cD(\ltt) \leq \cD(\lstar) + \delta_t \, \abs*{\Lambda} 
    +\frac{1}{\beta} \, \dpd{\ltt - \lstar, \lt - \ltt}  +\frac{G_\tau}{2} \, \normsq{\lt - \ltt} \\ 
    &= \cD(\lstar) + \delta_t \, \abs*{\Lambda}+\frac{1}{2 \, \beta} \parens*{\normsq{\lt - \lstar} - \normsq{\ltt - \lstar} - \normsq{\lt - \ltt}} + \frac{G_\tau}{2} \, \normsq{\lt - \ltt} \tag{For any $u, v \in \R^m$ $2 \, \dpd{u, v} = \normsq{u+v} - \normsq{u} - \normsq{v}$} \\
    &= \cD(\lstar) + \delta_t \, \abs*{\Lambda} + \frac{1}{2 \, \beta} \parens*{\normsq{\lt - \lstar} - \normsq{\ltt - \lstar} } + \underbrace{\parens*{\frac{G_\tau}{2} - \frac{1}{2 \, \beta}}}_{\leq 0}\, \normsq{\lt - \ltt}  \tag{Since $\beta \leq \frac{1}{G_\tau}$}
\end{align*}
\begin{align*}
\implies \cD(\ltt) - \cD(\lstar) &\leq \delta_t \, \abs{\Lambda} + \frac{1}{2 \, \beta} \parens*{\normsq{\lt - \lstar} - \normsq{\ltt - \lstar} } \numberthis \label{eq:smooth_recurse_1}
\end{align*}
Using~\cref{eq:smooth_recurse_1}, summing from $t=0$ to $T-1$, then dividing by $T$,
\begin{align*}
    \sum_{t=0}^{T-1} \frac{\cD(\ltt) - \cD(\lstar)}{T} &\leq \frac{\normsq{\lambda_0 - \lstar}}{2 \, \beta \, T} - \underbrace{\frac{\normsq{\lambda_T - \lstar}}{2 \, \beta \, T}}_{\geq 0} + \frac{\abs{\Lambda} \, \sum_{t=0}^{T-1} \delta_t}{T} \\
    &\leq \frac{\normsq{\lstar}}{2 \, \beta \, T}  + \frac{\abs{\Lambda} \, \sum_{t=0}^{T-1} \delta_t}{T} \tag{Since $\lambda_0 = 0$}
\end{align*}
Using Jensen's inequality since $\cD(\lambda)$ is convex and defining $\bar{\lambda}_T \coloneqq \frac{1}{T} \sum_{t=1}^{T} \lambda_{t}$,
\begin{equation*}
    \cD(\bar{\lambda}_T) - \cD(\lstar) \leq \frac{\normsq{\lstar}}{2 \, \beta \, T}  + \frac{\abs{\Lambda} \, \sum_{t=0}^{T-1} \delta_t}{T}  \, .
\end{equation*}
\end{proof}

\subsection{Proof of~\cref{lemma:bound_bias_subgradient}}
\boundbiassubgradientlemma*
\begin{proof}
According to~\cref{proposition:dual_gradient}, for a fixed $\lambda \in \R^m_{+}$ the gradient of the dual function of~\cref{problem:soft_max_min_lag} is defined as
\begin{align*}
[\nabla \cD(\lambda)]_i &\coloneqq V^{\pistargtau}_{c_i}(\rho) - b_i \quad \forall i \in [m]. \\
\intertext{where $\pistargtau = \argmax_{\pi} \lag(\pi, \lambda) = \argmax_{\pi} V^{\pi}_{R(\lambda), \tau}(\rho)$. Additionally, for a fixed policy $\tilde{\pi}$, denote the inexact gradient of the dual function as}
[\widetilde{\nabla} \cD(\lambda)]_{i} &\coloneqq V^{\tilde{\pi}}_{c_i}(\rho) - b_i  \quad \forall i \in [m].
\end{align*}
As a result, we have
\begin{align*}
\implies \norm{\dg - \bdg} &\leq \sqrt{m} \, \norm{\dg - \bdg}_\infty \\
&= \sqrt{m} \max_{i \in [m]}\abs*{V^{\pistarl}_{c_i}(\rho) - V^{\tilde{\pi}}_{c_i}(\rho)} \numberthis \label{eq:dd_sub_bound_0}.
\end{align*}
For a fixed $i \in [m]$, since $c_i \in [0, 1]^{SA}$, using~\cref{eq:vdl_upper_bound_lag} from~\cref{lemma:generic_vdl_upper_bound} with $\diamond = c_i$, $U=1$, $\pi' = \pistarl$, and $\pi = \tilde{\pi}$
\begin{equation*}
\abs*{V^{\pistarl}_{c_i}(\rho) - V^{\tilde{\pi}}_{c_i}(\rho)} \leq  \frac{\sqrt{2}}{(1 - \gamma)^{\nicefrac{3}{2}}} \frac{\sqrt{\abs*{\cL_\tau(\pistarl, \lambda) - \cL_\tau(\tilde{\pi}, \lambda)}}}{\sqrt{\tau}} \label{eq:dd_sub_bound_1} \, 
\end{equation*}
Further simplifying $\lag(\pistarl, \lambda) - \lag(\tilde{\pi}, \lambda)$ we have,
\begin{align*}
   &\lag(\pistarl, \lambda) - \lag(\tilde{\pi}, \lambda)  \\
   &= V^{\pistarl}_{r, \tau}(\rho) + \sum_{i=1}^m \lambda_i [V^{\pistarl}_{c_i}(\rho) - b_i] - V^{\tilde{\pi}}_{r, \tau}(\rho) - \sum_{i=1}^m \lambda_i [V^{\tilde{\pi}}_{c_i}(\rho) - b_i] \tag{Definition of $\lag(\pi, \lambda)$} \\
   &= V^{\pistarl}_{R(\lambda), \tau}(\rho) - V^{\tilde{\pi}}_{R(\lambda), \tau}(\rho) \tag{$R(\lambda) \coloneqq r + \sum_{i=1}^m \lambda_i \, c_i$}
\end{align*}
\begin{equation*}
   \implies \abs*{V^{\pistarl}_{c_i}(\rho) - V^{\tilde{\pi}}_{c_i}(\rho)}  \leq  
  \frac{\sqrt{2}}{(1 - \gamma)^{\nicefrac{3}{2}} \, \sqrt{\tau}}  \, \sqrt{V^{\pistarl}_{R(\lambda), \tau}(\rho) - V^{\tilde{\pi}}_{R(\lambda), \tau}(\rho)}
\end{equation*}
Combining the above inequality with~\cref{eq:dd_sub_bound_0} completes the proof.
\end{proof}    

\subsection{Helper Lemmas}
\begin{lemma}\label{lemma:generic_vdl_upper_bound}
For any pseudo-reward $\diamond \in [0, U]^{SA}$, $U > 0$ and policies $\pi$, $\pi'$
\begin{align}
    \abs{V^{\pi'}_{\diamond}(\rho) - V^{\pi}_\diamond(\rho)} &\leq \frac{U}{(1 - \gamma)^2} \, \sqrt{\sum_{s \in \cS} d^\pi(s) \, \norm{\pi(\cdot | s) - \pi'(\cdot | s)}^2_1} \, .\label{eq:vdl_upper_bound_norm} \\
    \intertext{Furthermore, for any $\lambda \in \R^m_+$,  if $\pi' = \argmax_{\pi} \cL_\tau(\pi, \lambda)$,}
   \abs*{V^{\pi'}_{\diamond}(\rho) - V^{\pi}_{\diamond}(\rho)} &\leq \frac{U \, \sqrt{2}}{(1 - \gamma)^{\nicefrac{3}{2}}} \,  \frac{\sqrt{\abs*{\cL_\tau(\pi', \lambda) - \cL_\tau(\pi, \lambda)}}}{\sqrt{\tau}} \, . \label{eq:vdl_upper_bound_lag}
\end{align}
\end{lemma}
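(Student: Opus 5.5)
For \cref{eq:vdl_upper_bound_norm} I will use the performance difference (value difference) lemma, exactly as in the proof of \cref{lemma:dual_smooth}, but with the roles arranged so that the occupancy measure is that of $\pi$. Writing
\begin{equation*}
V^{\pi'}_{\diamond}(\rho) - V^{\pi}_\diamond(\rho) = \frac{1}{1-\gamma}\sum_s d^{\pi}(s)\dpd{\pi'(\cdot|s)-\pi(\cdot|s), Q^{\pi'}_{\diamond}(s,\cdot)} ,
\end{equation*}
I bound each inner product by H\"older: $\norm{\pi(\cdot|s)-\pi'(\cdot|s)}_1 \supnorm{Q^{\pi'}_\diamond} \leq \frac{U}{1-\gamma}\norm{\pi(\cdot|s)-\pi'(\cdot|s)}_1$, since $\diamond\in[0,U]^{SA}$. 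Because $d^\pi$ is a probability distribution, Jensen (or Cauchy--Schwarz) gives $\sum_s d^\pi(s) x_s \leq \sqrt{\sum_s d^\pi(s) x_s^2}$, which yields \cref{eq:vdl_upper_bound_norm}. Absolute values follow because the same bound holds for the negated expression.

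For \cref{eq:vdl_upper_bound_lag}, I relate the weighted squared $\ell_1$ distance to the Lagrangian gap. Note that $\cL_\tau(\pi,\lambda) = V^{\pi}_{R(\lambda),\tau}(\rho) - \sum_i\lambda_i b_i$, so the gap equals $V^{\pi'}_{R(\lambda),\tau}(\rho)-V^{\pi}_{R(\lambda),\tau}(\rho)$, where $\pi'$ is the optimal soft policy. The key identity is the regularized performance difference lemma: for the optimal soft policy $\pi'$,
\begin{equation*}
V^{\pi'}_{R(\lambda),\tau}(\rho) - V^{\pi}_{R(\lambda),\tau}(\rho) = \frac{\tau}{1-\gamma}\sum_s d^\pi(s)\,\KL\big(\pi(\cdot|s)\,\|\,\pi'(\cdot|s)\big).
\end{equation*}
I will derive this from the soft performance difference lemma together with $\pi'(\cdot|s)\propto\exp(Q^{\pi'}_{R(\lambda),\tau}(s,\cdot)/\tau)$. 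The term $\dpd{\pi'-\pi,Q^{\pi'}} + \tau(\gH(\pi')-\gH(\pi))$ evaluated at each state collapses to $\tau\KL(\pi\|\pi')$, because $\dpd{p,Q}+\tau\gH(p) = \tau\ln Z - \tau\KL(p\|\sigma^*_\tau(Q))$.

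I then apply Pinsker's inequality, $\norm{\pi-\pi'}_1^2\leq 2\KL(\pi\|\pi')$, which gives
\begin{equation*}
\sum_s d^\pi(s)\norm{\pi-\pi'}_1^2 \leq \frac{2(1-\gamma)}{\tau}\,\abs{\cL_\tau(\pi',\lambda)-\cL_\tau(\pi,\lambda)} .
\end{equation*}
Substituting this into \cref{eq:vdl_upper_bound_norm} produces $\frac{U}{(1-\gamma)^2}\sqrt{2(1-\gamma)/\tau}\,\sqrt{\cdot} = \frac{U\sqrt2}{(1-\gamma)^{3/2}}\sqrt{\cdot/\tau}$, as claimed.

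The main obstacle is getting the regularized performance difference identity with the correct occupancy measure, namely $d^\pi$ and not $d^{\pi'}$, and with the correct KL direction. The expectation must be taken under the non-optimal policy, and this is precisely what lets the first inequality pair with it without any coverage assumption. I will verify it by unrolling $V^{\pi'}_\tau(s)-V^{\pi}_\tau(s)$ one step under $\pi$ and using the soft Bellman optimality equation $V^{\pi'}_\tau(s)=\tau\ln\sum_a\exp(Q^{\pi'}_\tau(s,a)/\tau)$.
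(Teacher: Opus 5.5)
Your proposal is correct and follows the paper's proof essentially step for step. For \cref{eq:vdl_upper_bound_norm} you use the value difference lemma with $d^{\pi}$ and $Q^{\pi'}$, followed by H\"older and Jensen; for \cref{eq:vdl_upper_bound_lag} you rewrite the Lagrangian gap as $\frac{\tau}{1-\gamma}\sum_s d^\pi(s)\KL(\pi(\cdot|s)\|\pi'(\cdot|s))$ and apply Pinsker. The only difference is that you derive this soft sub-optimality identity from the soft Bellman optimality equation, whereas the paper cites it as \cref{lemma:soft_suboptimality}.
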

\begin{proof}
We will first prove~\cref{eq:vdl_upper_bound_norm}. By the value difference lemma,
\begin{align*}
V^{\pi'}_{\diamond}(\rho) - V^{\pi}_{\diamond}(\rho) &= \frac{1}{1 - \gamma} \sum_{s \in \cS} d^{\pi}(s) \dpd{\pi'(\cdot | s) - \pi(\cdot | s), Q^{\pi'}_{\diamond}(s, \cdot)} \\
 &\leq  \frac{1}{1 - \gamma} \sum_{s \in \cS} d^{\pi}(s) 
\norm{\pi'(\cdot | s) - \pi(\cdot | s)}_1 \supnorm{Q^{\pi'}_{\diamond}(s, \cdot)}  \tag{Using H\"older's inequality} \\
&\leq \frac{U}{(1 - \gamma)^2} \sum_{s \in \cS} d^{\pi}(s)  \norm{\pi'(\cdot | s) - \pi(\cdot | s)}_1 \tag{Since $\diamond \in [0, U]^{SA}$}\\   
&= \frac{U}{(1 - \gamma)^2} \E_{s \sim d^{\pi}}\bracks*{\sqrt{\norm{\pi'(\cdot | s) - \pi(\cdot | s)}_1^2}} \\
&\leq \frac{U}{(1 - \gamma)^2} \sqrt{\E_{s \sim d^{\pi}}\bracks*{\norm{\pi'(\cdot | s) - \pi(\cdot | s)}_1^2}} \tag{Using Jensen's inequality} \\
\implies \abs*{V^{\pi'}_{\diamond}(\rho) - V^{\pi}_{\diamond}(\rho)} &\leq \frac{U}{(1 - \gamma)^2} \sqrt{\sum_{s \in \cS} d^{\pi}(s) \norm{\pi'(\cdot | s) - \pi(\cdot | s)}_1^2}  \, .
\end{align*}
\noindent We next prove~\cref{eq:vdl_upper_bound_lag}. Observe that for any $\lambda \in \R^m_+$,
\begin{align*}
    \lag(\pi, \lambda) &= V^{\pi}_{r, \tau}(\rho) + \sum_{i=1}^m \lambda_i \, [V^{\pi}_{c_i}(\rho) - b_i] = V^{\pi}_{R(\lambda), \tau}(\rho) - \sum_{i=1}^m \lambda_i \, b_i \tag{$R(\lambda) \coloneqq r + \sum_{i=1}^m \lambda_i \, c_i$} \\
    \implies \lag(\pi', \lambda) - \lag(\pi, \lambda) &= V^{\pi'}_{R(\lambda), \tau}(\rho) - V^{\pi}_{R(\lambda), \tau}(\rho) \\
    &= \frac{\tau}{1 - \gamma} \sum_{s \in \cS} d^\pi(s) \KL(\pi(\cdot | s) \| \pi'(\cdot | s)) \tag{Using~\cref{lemma:soft_suboptimality} with $\diamond = R(\lambda)$} \\
    \implies \abs*{\cL_\tau(\pi', \lambda) - \cL_\tau(\pi, \lambda)} &\geq \frac{\tau}{(1 - \gamma) \, 2} \, \sum_{s \in \cS} d^\pi(s) \norm{\pi(\cdot | s) - \pi'(\cdot | s)}^2_1  \\
    \tag{By Pinsker's inequality, for any $p, q \in \Delta_A$ $\KL(p \| q) \geq \frac{1}{2} \, \norm{p - q}^2_1$} \\
    \intertext{Combining the above inequality with~\cref{eq:vdl_upper_bound_norm},}
   \implies \abs*{V^{\pi'}_{\diamond}(\rho) - V^{\pi}_{\diamond}(\rho)} &\leq \frac{U \, \sqrt{2}}{(1 - \gamma)^{\nicefrac{3}{2}}} \, \frac{\sqrt{\abs*{\cL_\tau(\pi', \lambda) - \cL_\tau(\pi, \lambda)}}}{\sqrt{\tau}}
\end{align*}
\end{proof}

\begin{lemma}\label{lemma:optimal_dual_bounded}
Under~\cref{assumption:slater_condition}, the optimal dual variables $\lstar$ are bounded as
\begin{equation*}
    [\lstar]_i \leq \frac{2 + 2 \, \tau \, \ln(A)}{(1 - \gamma) \, \zeta_i} \quad \forall i \in [m]
\end{equation*}
where $\zeta_i = V^{\bar{\pi}}_{c_i}(\rho) - b_i > 0$ and $\bar{\pi}$ is the policy that satisfies Slater's condition.  Consequently, for any $\lambda \in \Lambda$,
\begin{equation*}
    \norm{\lambda - \lstar} \leq \frac{\sqrt{m} \, (2 + 2 \, \tau \, \ln(A))}{(1 - \gamma) \, \min_{i \in [m]} \, \zeta_i}
\end{equation*}
\end{lemma}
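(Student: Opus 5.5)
The plan is the standard Slater argument applied to the entropy-regularized problem. Fix $i \in [m]$ and let $\bar{\pi}$ be the Slater policy from~\cref{assumption:slater_condition}. Under this assumption, \cref{lemma:strong_duality} gives strong duality for~\cref{problem:soft_max_min_lag}, so $\cD(\lstar) = \lag(\pi^*_\tau, \lstar) = V^{\pi^*_\tau}_{r,\tau}(\rho)$ equals the optimal value of the regularized constrained problem. Here $\pi^*_\tau$ is its optimal policy, and complementary slackness gives $\sum_j [\lstar]_j (V^{\pi^*_\tau}_{c_j}(\rho) - b_j) = 0$. In particular,
\begin{equation*}
\cD(\lstar) \leq \max_\pi V^{\pi}_{r,\tau}(\rho) \leq \frac{1 + \tau \ln(A)}{1 - \gamma},
\end{equation*}
since $r \in [0,1]^{SA}$ and the discounted entropy is at most $\frac{\ln(A)}{1-\gamma}$.

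Next I lower bound $\cD(\lstar)$ by evaluating the maximum defining $\cD$ at $\bar{\pi}$:
\begin{equation*}
\cD(\lstar) \geq \lag(\bar{\pi}, \lstar) = V^{\bar{\pi}}_{r,\tau}(\rho) + \sum_{j=1}^m [\lstar]_j \, \zeta_j \geq [\lstar]_i \, \zeta_i.
\end{equation*}
The last inequality uses $V^{\bar{\pi}}_{r,\tau}(\rho) \geq 0$ (reward and entropy are both non-negative), $\lstar \geq 0$, and $\zeta_j > 0$ for every $j$. Combining the two displays gives $[\lstar]_i \leq \frac{1 + \tau\ln(A)}{(1-\gamma)\zeta_i}$. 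This is tighter than the claimed bound $\frac{2 + 2\tau\ln(A)}{(1-\gamma)\zeta_i}$ by a factor of two, so the stated bound follows. The factor $2$ presumably absorbs a cruder upper bound on $V^{\pi^*_\tau}_{r,\tau}(\rho)$, or a lower bound of $-\frac{1+\tau\ln(A)}{1-\gamma}$ on $V^{\bar{\pi}}_{r,\tau}(\rho)$, and either route gives the claim.

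For the second statement, take any $\lambda \in \Lambda$. Both $\lambda$ and $\lstar$ lie in the box $\prod_i [0, \frac{2+2\tau\ln(A)}{(1-\gamma)\zeta_i}]$, because $\lstar \in \Lambda$ by the first part. Hence $|\lambda_i - [\lstar]_i| \leq \frac{2 + 2\tau\ln(A)}{(1-\gamma)\zeta_i} \leq \frac{2+2\tau\ln(A)}{(1-\gamma)\min_j \zeta_j}$ for each $i$. Summing squares over the $m$ coordinates and taking the square root yields the factor $\sqrt{m}$.

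The only delicate step is the appeal to strong duality and the identification of $\cD(\lstar)$ with the regularized primal optimum, which I take from~\cref{lemma:strong_duality}. Even this can be avoided: weak duality alone gives $\cD(\lstar) \le \cD(0) = \max_\pi V^\pi_{r,\tau}(\rho)$, since $\lstar$ minimizes $\cD$ over $\R^m_+ \ni 0$. This gives the same upper bound with no duality theory. I would in fact use this simpler route, so the proof needs only the definition of $\cD$ and the boundedness of the soft values.
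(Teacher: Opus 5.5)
Your proof is correct. It is the same Slater argument as the paper's: upper-bound the optimal dual value, lower-bound it by $\lag(\bar{\pi}, \lstar) \geq \sum_j [\lstar]_j \zeta_j$, divide by $\zeta_i$, and finish with the box argument for the $\sqrt{m}$ bound.

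The upper bound is where you differ.
\begin{itemize}
\item The paper invokes \cref{lemma:strong_duality} to identify $\cD(\lstar)$ with the regularized primal optimum $V^{\pi^*_\tau}_{r,\tau}(\rho)$. It then bounds $V^{\pi^*_\tau}_{r,\tau}(\rho) - V^{\bar{\pi}}_{r,\tau}(\rho)$ using the loose range $[0, \frac{2 + 2\tau\ln(A)}{1-\gamma}]$ for soft values, which is where its factor $2$ comes from.
\item Your preferred route, $\cD(\lstar) \leq \cD(0) = \max_\pi V^{\pi}_{r,\tau}(\rho)$, uses only the minimality of $\lstar$ over $\R^m_+ \ni 0$. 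This is not really weak duality, just the definition of $\lstar$. It is exactly the argument the paper itself uses for the regularized analogue, \cref{lemma:reg_optimal_dual_bounded}.
\end{itemize}

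What your route buys:
\begin{itemize}
\item no dependence on strong duality;
\item the sharper constant $\frac{1+\tau\ln(A)}{(1-\gamma)\zeta_i}$;
\item as a bonus, the inequality $\cD(\lambda) \geq \lag(\bar{\pi},\lambda) \geq \sum_j \lambda_j \zeta_j$ shows that $\cD$ is coercive on $\R^m_+$. Hence the minimizer $\lstar$ actually exists, which both your write-up and the paper take for granted.
\end{itemize}

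The paper's route has no real advantage for this lemma. Its only benefit is reusing the identification $\cD(\lstar) = V^{\pi^*_\tau}_{r,\tau}(\rho)$, which it needs anyway in \cref{lemma:dual_optimality_to_sub}.
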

\begin{proof}
This proof follows from~\cite[Lemma 4.1]{jain2022towards} and is added for completeness.

Let $\pi^*_\tau$ denote the optimal policy of~\cref{problem:soft_max_min_lag}.  Hence,
\begin{align*}
    V^{\pi^*_\tau}_{r, \tau}(\rho) &= \min_{\lambda \in \R^m_+} \max_{\pi} \lag(\pi, \lambda)  \\
    &= \max_{\pi} \min_{\lambda \in \R^m_+} \lag(\pi, \lambda) \tag{Under~\cref{assumption:slater_condition}, by~\cref{lemma:strong_duality} strong duality holds} \\
    &= \max_{\pi} \lag(\pi, \lstar) \\
    &= \max_{\pi} V^{\pi}_{r, \tau}(\rho) + \sum_{i=1}^m [\lstar]_i \, [V^{\pi}_{c_i}(\rho)  - b_i]
\end{align*}
Under~\cref{assumption:slater_condition}, there exist a policy $\bar{\pi}$ such that $V^{\bar{\pi}}_{c_i}(\rho) - b_i = \zeta_i > 0$ for all $i \in [m]$. Thus,
\begin{align*}
   V^{\pi^*_\tau}_{r, \tau}(\rho)
   &\geq V^{\bar{\pi}}_{r, \tau}(\rho)
   + \sum_{i=1}^m \lambda_i^* \, (V^{\bar{\pi}}_{c_i}(\rho) - b_i) \\
   &= V^{\bar{\pi}}_{r, \tau}(\rho) + \sum_{i=1}^m \lambda_i^* \, \zeta_i \\
   \implies  \sum_{i=1}^m \lambda_i^* \, \zeta_i
&\leq V^{\pi^*_\tau}_{r, \tau}(\rho) - V^{\bar{\pi}}_{r, \tau}(\rho).
\end{align*}
Since $\zeta_i > 0$ for all $i \in [m]$, we have
\begin{align*}
\lambda_i^*
&\leq \frac{V^{\pi^*_\tau}_{r, \tau}(\rho) - V^{\bar{\pi}}_{r, \tau}(\rho)}{\zeta_i} \leq \frac{2 + 2 \, \tau \, \log(A)}{(1 - \gamma) \, \zeta_i} \tag{Since $V^{\pi}_{r, \tau}(\rho) \in \bracks*{0, \frac{2 + 2 \, \tau \, \log(A)}{1 - \gamma}}$}.
\end{align*}
Moreover, since for all $i \in [m]$, $\lambda_i \in \bracks*{0, \frac{2 + 2 \, \tau \, \log(A)}{(1 - \gamma) \, \min_{i \in [m]} \, \zeta_i}}$
\begin{align*}
    \supnorm{\lambda - \lstar} &\leq \frac{2 + 2 \, \tau \, \log(A)}{(1 - \gamma) \, \min_{i \in [m]} \, \zeta_i}  \implies \norm{\lambda - \lstar} \leq \frac{\sqrt{m} \, (2 + 2 \, \tau \, \log(A))}{(1 - \gamma) \, \min_{i \in [m]} \, \zeta_i}
\end{align*}
\end{proof}

\section{Convergence Results Using Smoothness}
\subsection{Using $K = \gO(\ln(\nicefrac{1}{\eps}))$ Inner-Iterations}\label{appendix:smooth_k_ln_eps}
\smoothddlastiteratetheorem*
\begin{proof}
We first construct a uniform bound of the dual gradient error. By~\cref{lemma:bound_bias_subgradient}, for all $t \geq 0$, we can relate the approximate gradient to the following optimization term:
\begin{equation*}
   \norm{\dgt - \bdgt} \leq  \frac{\sqrt{ 2 \, m }}{(1 - \gamma)^{\nicefrac{3}{2}} \, \sqrt{\tau}} \sqrt{ \max_{\pi} V^{\pi}_{R(\lambda_t), \tau}(\rho) - V^{\pi_{t}}_{R(\lambda_t), \tau}(\rho)}
\end{equation*}
To characterize the sub-optimality gap $\max_{\pi} V^{\pi}_{R(\lambda_t), \tau}(\rho) - V^{\pi_{t}}_{R(\lambda_t), \tau}(\rho)$, we instantiate the \texttt{PolicyOptimization} method in~\cref{algorithm:dd} using \texttt{NPG} update in~\cref{eq:entropy_npg_update}.
At iteration $t$, by~\cref{lemma:entropy_npg_supnorm_convergence}, with $\diamond = R(\lambda_t)$, $\pi_0 = \pi_{t-1}$, $\eta = \frac{1 - \gamma}{w \, \tau}$ and $w \geq 1$, the policy $\pi_{t} = \pi_{K}$ satisfies
\begin{align*}
    \max_{\pi} V^{\pi}_{R(\lambda_t), \tau}(\rho) - V^{\pi_{t}}_{R(\lambda_t), \tau}(\rho) \leq \gO\parens*{\frac{\tau}{T^2}} \, .
\end{align*}
after
\begin{equation}\label{eq:npg_updates}
   K \geq \gO\parens*{\frac{w}{1 - \gamma}\ln\parens*{\frac{T}{\tau}}} 
\end{equation}
iterations.
Hence, for all $t \in [T]$
\begin{equation*}
   \norm{\dgt - \bdgt} \leq  \gO\parens*{\frac{1}{T}} \coloneqq \delta_t.
\end{equation*}
and let $\hat{\delta} = \max_{t \in [T]} \delta_t$.

We now characterize the dual convergence using the above uniform gradient error. 
By~\cref{lemma:dual_smooth}, $\cD(\lambda)$ is $G_\tau = \frac{m}{\tau \, (1 - \gamma)^3}$-smooth.
Hence, under~\cref{assumption:slater_condition}, by~\cref{lemma:meta_lemma_smooth} with $\delta_t = \hat{\delta}$,~\cref{algorithm:dd} using step-size $\beta = \frac{1}{G_\tau}$, $\mu = 0$ and $\lambda_0 =0$ results in the following convergence:
\begin{align*}
\cD(\bar{\lambda}_T) - \cD(\lstar) \leq \frac{G_\tau \, \normsq{\lstar}}{2 \, T}  + \abs{\Lambda} \, \hat{\delta}  \leq \gO\parens*{\frac{G_\tau}{T}} = \gO\parens*{\frac{1}{\tau \, T}}
\end{align*}

Thus, by~\cref{lemma:dual_optimality_to_sub} with $\Delta = \gO\parens*{\frac{1}{\tau \, T}}$, the average dual variable $\bar{\lambda}_T$ returned by~\cref{algorithm:dd} satisfies
\begin{align*}
V^{\pi^*}_r(\rho) - V^{\pistarblT}_{r}(\rho) &\leq \tilde{\gO}\parens*{\frac{1}{{\sqrt{T} \, \tau}}} + \gO(\tau)  \\
\max\braces*{b_i - V^{\pistarblT}_{c_i}(\rho), 0} &\leq \tilde{\gO}\parens*{\frac{1}{{\sqrt{T} \, \tau}}} \quad \forall i \in [m] \, .
\end{align*}
Setting $\tau = \gO(\eps)$ and $T = \gO(\nicefrac{1}{\eps^4})$ ensures
\begin{align*}
V^{\pi^*}_r(\rho) - V^{\pistarblT}_{r}(\rho) \leq \gO(\eps) \quad \text{ and } \max\braces*{b_i - V^{\pistarblT}_{c_i}(\rho), 0} \leq \gO(\eps) \quad \forall i \in [m] \, .
\end{align*}
Then by~\cref{proposition:transfer_rate}, applying \texttt{NPG} to the pseudo-reward $R(\bar{\lambda}_T)$ recovers a policy $\pi_N$ that is also an $\gO(\eps)$-optimal policy after
$N=\gO\parens*{\ln\parens*{\frac{1}{\eps}}}$ iterations.

Finally, we account for the total number of gradient evaluations. Each of the $T = \gO(\nicefrac{1}{\eps^4})$ outer iterations requires $K = \gO(\ln(\nicefrac{T}{\tau})) = \gO(\ln(\nicefrac{1}{\eps}))$ inner-iterations, and the final policy optimization requires an additional $N = \gO(\ln(1/\eps))$ iterations. Therefore, the total number of gradient evaluations is
\begin{align*}
    TK + N &= \gO\parens*{T \ \ln\parens*{\frac{T}{\tau}}} + \gO\parens*{\ln\parens*{\frac{1}{\eps}}} = \tilde{\gO}\parens*{\frac{1}{\eps^4}}
\end{align*}
\end{proof}

\subsection{Using $K = 1$ Inner-Iterations}\label{appendix:k_one_smooth}
For convenience, we use the following notation in the remaining sections. At each iteration $t \geq 0$, $\tau > 0$, define 
\begin{align*}
    R_t \coloneqq r + \sum_{i=1}^m \lambda_t[i] \, c_i \quad \text{;} \quad
    \pi^*_t \coloneqq \argmax_{\pi} V^{\pi}_{R_t, \tau}(\rho) \quad \text{;} \quad
    Q^{\pi}_{t} \coloneqq Q^{\pi}_{R_t, \tau} \, .
\end{align*}
\biasboundedbyphilemma*
\begin{proof}
\begin{align*}
\norm{\nabla \cD(\lambda_t) - \widetilde{\nabla} \cD(\lambda_t)} &\leq \sqrt{m} \, \norm{\nabla \cD(\lambda_t) - \widetilde{\nabla} \cD(\lambda_t)}_\infty \\
&= \sqrt{m} \max_{i \in [m]}\abs*{V^{\pistart}_{c_i}(\rho) - b_i -(V^{\pi_t}_{c_i}(\rho) - b_i)} \tag{Since $[\bdgt]_i = V^{\pi_t}_{c_i}(\rho) - b_i$} \\
&= \sqrt{m} \max_{i \in [m]}\abs*{V^{\pistart}_{c_i}(\rho) - V^{\pi_t}_{c_i}(\rho)} 
\intertext{By the value difference lemma, for any $i \in [m]$, }
V^{\pistart}_{c_i}(\rho) - V^{\pi_t}_{c_i}(\rho) &= \frac{1}{1 - \gamma} \sum_{s \in \cS} d^{\pi_t}(s) \dpd{\pistart(\cdot | s) - \pi_t(\cdot | s), Q^{\pistart}_{c_i}(s, \cdot)} \\
 &\leq  \frac{1}{1 - \gamma} \sum_{s \in \cS} d^{\pi_t}(s) 
\norm{\pistart(\cdot | s) - \pi_t(\cdot | s)}_1 \supnorm{Q^{\pistart}_{c_i}(s, \cdot)}  \tag{Using H\"older's inequality} \\
 &\leq  \frac{1}{(1 - \gamma)^2} \sum_{s \in \cS} d^{\pi_t}(s) 
\norm{\pistart(\cdot | s) - \pi_t(\cdot | s)}_1 \tag{$c_i \in [0, 1]^{SA}$} \\
\implies \norm{\nabla \cD(\lambda_t) - \widetilde{\nabla} \cD(\lambda_t)} &\leq \frac{\sqrt{m}}{(1 - \gamma)^2} \sum_{s} d^{\pi_t}(s) \, \norm{\pistart(\cdot | s) - \pi_t(\cdot | s)}_1
\end{align*}
Furthermore, using the fact that $e^x$ is $1$-Lipschitz on $(-\infty, 0]$, for any $(s, a) \in \cS \times \cA$,
\begin{align*}
\abs{\pistart(a | s) - \pi_t(a | s)} 
= \abs{\exp(\ln(\pistart(a | s))) - \exp(\ln(\pi_t(a | s)))}
\leq \abs*{\ln(\pistart(a | s)) - \ln(\pi_t(a | s))}
\end{align*}
\begin{align*}
\implies \norm{\nabla \cD(\lambda_t) - \widetilde{\nabla} \cD(\lambda_t)} &\leq \frac{\sqrt{m}}{(1 - \gamma)^2} \sum_{s} d^{\pi_t}(s) \, \norm{\ln(\pistart(\cdot | s)) - \ln(\pi_t(\cdot | s))}_1 \\ 
&\leq \frac{\sqrt{m} \, A}{(1 - \gamma)^2} \sum_{s} d^{\pi_t}(s) \, \supnorm{\ln(\pistart(\cdot | s)) - \ln(\pi_t(\cdot | s))} \tag{Since for $x \in \R^A$ $\norm{x}_1 \leq A \supnorm{x}$}\\
&\leq \frac{\sqrt{m} \, A}{(1 - \gamma)^2} \max_s \supnorm{\ln(\pistart(\cdot | s)) - \ln(\pi_t(\cdot | s))}  \underbrace{\sum_{s} d^{\pi_t}(s)}_{=1} \,  \numberthis \label{eq:relate_to_linear_fa}\\
&= \frac{4 \sqrt{m} \, A}{\tau \, (1 - \gamma)^2} \, \frac{\tau}{4}\supnorm{\ln(\pistart) - \ln(\pi_t)} \\
&\leq \frac{4 \sqrt{m} \, A}{\tau \, (1 - \gamma)^2} \parens*{\underbrace{\frac{1}{1 - \gamma} \, \supnorm{Q^{\pistart}_{t} - Q^{\pi_{t}}_{t}}}_{\geq 0} + \frac{\tau}{4}\supnorm{\ln(\pistart) - \ln(\pi_t)}}
\end{align*}
Defining $\Phi_t \coloneqq \frac{1}{1 - \gamma} \, \supnorm{Q^{\pistart}_{t} - Q^{\pi_{t}}_{t}} + \frac{\tau}{4}\supnorm{\ln(\pistart) - \ln(\pi_t)}$,
\begin{align*}
\implies \norm{\dgt - \bdgt} &\leq \frac{4 \sqrt{m} \, A}{\tau \, (1 - \gamma)^2} \, \Phi_t
\end{align*}
\end{proof}

\phitrackinglemma*
\begin{proof}
At iteration $t+1$, let $\pi_{t+1} =\texttt{NPG}(\pi_t, R_{t+1}, \tau, K)$ using the \texttt{NPG} update in~\cref{eq:entropy_npg_update}.
By~\cref{lemma:entropy_npg_supnorm_convergence} with $\eta = \frac{1 - \gamma}{\tau}$, $\diamond = R_{t+1}$, $\pi_0 = \pi_t$, and $K \geq 1$ returns a policy $\pi_{t+1} = \pi_K$ satisflying
\begin{align}
\supnorm{Q^{\pistartt}_{t+1} - Q^{\pi_{t+1}}_{t+1}} &\leq \gamma^K \, \supnorm{Q^{\pistartt}_{t+1} - Q^{\pi_{t}}_{t+1}} \label{eq:tracking_bound1} \\
\supnorm{\ln(\pi^*_{t+1}) - \ln(\pi_{t+1})} &\leq \frac{2 \, \gamma^{K-1}}{\tau}  \supnorm{Q^{\pistartt}_{t+1} - Q^{\pi_{t}}_{t+1}} \label{eq:tracking_bound2}
\end{align}
The result follows using the above relationships with $E=0$ from~\cref{lemma:generic_phi_recurse}.
\end{proof}

\begin{restatable}{theorem}{smoothddlastitkoneeratetheorem}\label{theorem:smooth_last_iterate_K_1}
Under~\cref{assumption:slater_condition}, for a given $\eps \in (0, 1)$,~\cref{algorithm:dd} with (i) $\tau = \gO(\eps)$, $T = \gO(\nicefrac{1}{\eps^7})$, $\beta = \gO(\eps^4)$, $\mu = 0$, $\lambda_0 = 0$ and (ii) using the exact \texttt{E-NPG} update in~\cref{eq:entropy_npg_update} with $\eta = \frac{1 - \gamma}{\tau}$, $K = 1$, $J = \gO(\ln(\nicefrac{1}{\eps})$ return an $\gO(\eps)$-optimal policy for~\cref{problem:cmdp} after $\widetilde{\gO}\parens*{\nicefrac{1}{\eps^7}}$ gradient evaluations.
\end{restatable}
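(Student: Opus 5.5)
The plan follows the proof of \cref{theorem:meta_smooth}. Its only new ingredient is that the uniform per-iteration bias bound is replaced by a bound on the \emph{average} bias $\frac{1}{T}\sum_t \delta_t$, which is obtained from the tracking recursion. By \cref{lemma:bias_bounded_by_phi}, we may take $\delta_t = \frac{4\sqrt{m}A}{\tau(1-\gamma)^2}\Phi_t$. By \cref{lemma:phi_tracking_main} with $K=1$, $\Phi_{t+1}\le \frac{1+\gamma}{2}\bigl(\Phi_t + \frac{2m\|\lambda_t-\lambda_{t+1}\|_\infty}{(1-\gamma)^2}\bigr)$. With $\mu=0$, nonexpansiveness of $\projL$ gives $\|\lambda_{t+1}-\lambda_t\|_\infty\le \beta\|\bdgt\|_\infty\le \beta/(1-\gamma)$, since $|V^{\pi_t}_{c_i}(\rho)-b_i|\le \frac{1}{1-\gamma}$.

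Unrolling the linear recursion with contraction factor $\kappa=\frac{1+\gamma}{2}<1$ gives $\Phi_t\le \kappa^t\Phi_0 + \frac{\kappa}{1-\kappa}\cdot\frac{2m\beta}{(1-\gamma)^3}$. We have $\Phi_0=\gO(1/\tau)$ at worst for an arbitrary $\pi_{-1}$ (or $\gO(\ln A)$-type constants), since the $Q$-gap and the log-policy gap are bounded via the bounded pseudo-rewards on $\Lambda$ and \cref{proposition:pistarl_lipschitz}. Summing over $t$,
\begin{equation*}
\frac{1}{T}\sum_{t=0}^{T-1}\delta_t \le \gO\parens*{\frac{\Phi_0}{\tau\,T}} + \gO\parens*{\frac{\beta}{\tau}} .
\end{equation*}
Substituting into \cref{lemma:meta_lemma_smooth}, which applies since $\beta\le 1/G_\tau$ holds for small $\beta$, yields
\begin{equation*}
\cD(\bar\lambda_T)-\cD(\lstar)\le \gO\parens*{\frac{1}{\beta T}} + \gO\parens*{\frac{1}{\tau^{2} T}} + \gO\parens*{\frac{\beta}{\tau}} .
\end{equation*}

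By \cref{lemma:dual_optimality_to_sub}, $\eps$-optimality requires $\Delta\le \gO(\eps^3)$ when $\tau=\gO(\eps)$. The term $\beta/\tau\le\eps^3$ forces $\beta=\gO(\eps^4)$. Then $1/(\beta T)\le\eps^3$ forces $T=\gO(\eps^{-7})$, and with this choice the transient term $1/(\tau^2T)=\eps^{5}$ is harmless. Finally, the recovery step on $R(\bar\lambda_T)$ with $J=\gO(\ln(1/\eps))$ \texttt{E-NPG} iterations is handled exactly as in \cref{theorem:meta_smooth}, via \cref{proposition:transfer_rate}. The total count is $T\cdot 1+J=\tilde{\gO}(\eps^{-7})$.

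The main obstacle is the bookkeeping of $\Phi_0$ and the drift constant. One must confirm that $\Phi_0$ is bounded independently of $T$, which follows from $\lambda_0=0$ and bounded rewards, and that the $1/\tau$ factor in \cref{lemma:bias_bounded_by_phi} is what produces $\beta/\tau$ rather than something worse. If $\Phi_0$ instead carries an extra $1/\tau$, the transient term becomes $1/(\tau^3T)=\eps^4$, which is still fine.
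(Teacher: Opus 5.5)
Your proposal is correct and follows essentially the same route as the paper: bound the bias by $\Phi_t$ via \cref{lemma:bias_bounded_by_phi}, average the $K=1$ tracking recursion of \cref{lemma:phi_tracking_main} (the paper cites \cref{proposition:recursion-const} where you unroll it by hand), plug the result into \cref{lemma:meta_lemma_smooth}, and convert via \cref{lemma:dual_optimality_to_sub} with $\tau=\gO(\eps)$, $\beta=\gO(\eps^4)$, $T=\gO(\eps^{-7})$. The differences are only in constants. The paper uses $\Phi_0=\gO(1)$, which does hold because $\pi_0$ is a single soft-greedy step from $\pi_{-1}$ when $\eta=(1-\gamma)/\tau$; this gives the transient $\gO(1/(\tau T))$ rather than your more conservative $\gO(1/(\tau^2 T))$. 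Separately, your coordinatewise box-projection bound on $\|\lambda_{t+1}-\lambda_t\|_\infty$ saves a factor of $\sqrt{m}$ over the paper's Euclidean bound.
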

\begin{proof}
We first characterize the convergence of the average dual iterate. By~\cref{lemma:bias_bounded_by_phi}, for all $t \geq 0$, we can relate the approximate dual gradient to an optimization error term:
\begin{equation*}
\norm{\dgt - \bdgt} \leq \frac{4 \sqrt{m} \, A}{\tau \, (1 - \gamma)^2} \, \Phi_t
\end{equation*}
where $\Phi_t \coloneqq \frac{1}{1 - \gamma} \, \supnorm{Q^{\pistart}_{t} - Q^{\pi_{t}}_{t}} + \frac{\tau}{4} \, \supnorm{\ln(\pistart) - \ln(\pi_t)}$. 
Moreover, by~\cref{lemma:dual_smooth}, the dual $\cD(\lambda)$ is $\frac{m}{\tau \, (1 - \gamma)^3}$-smooth. Hence, under~\cref{assumption:slater_condition}, by~\cref{lemma:meta_lemma_smooth} with $\delta_t = \frac{4 \sqrt{m} \, A}{\tau \, (1 - \gamma)^2} \, \Phi_t$,~\cref{algorithm:dd} using step-size $\beta \leq \frac{\tau (1 - \gamma)^3}{m}$, $\mu =0$ and $\lambda_0 = 0$ results in the following convergence
\begin{align*}
   \cD(\bar{\lambda}_T) - \cD(\lstar) &\leq  \frac{\normsq{\lstar}}{2 \, \beta \, T}  +  \frac{4 \, \sqrt{m} \, A \, \abs{\Lambda}}{\tau \, (1 - \gamma)^2 T} \sum_{t=0}^{T-1} \Phi_t \numberthis \label{eq:smoothness_recursion_phi}
\end{align*}
It remains to bound $\frac{1}{T} \, \sum_{t=0}^{T-1} \Phi_t$. We next derive a recursive bound on the optimization error, relating iteration $t+1$ to iteration $t$ with additional drift term between $\lambda_t$ and $\lambda_{t+1}$.

At iteration $t+1$, instantiating the $\texttt{PolicyOptimization}$ method using the \texttt{NPG} update in~\cref{eq:entropy_npg_update} with $\eta = \frac{1 - \gamma}{\tau}$, $\diamond = R_{t+1}$, $\pi_0 = \pi_t$, and $K=1$. The policy returned $\pi_{t+1} = \pi_K$ thus satisfies,
by~\cref{lemma:phi_tracking_main} with $K=1$
\begin{align}
\implies \Phi_{t+1} \leq  \frac{1+\gamma}{2} \, \Phi_t + \frac{m \, (1+\gamma)}{(1 - \gamma)^2} \supnorm{\lt - \ltt}  \label{eq:tracking_bound9}
\end{align}
Using the Lipschitz property of $\bdgt$, we next bound the drift term $\supnorm{\lt - \ltt}$. 
Recall for any $\lambda \in \R^m_+$, $i \in [m]$,
\begin{align*}
   [\bdg]_i = V^{\tilde{\pi}}_{c_i}(\rho) - b_i &\leq \frac{1}{1 - \gamma} \tag{Since $c_i \in [0, 1]^{SA}$ and $b_i \geq 0$}.\\ 
   \implies \normsq{\bdg}  &\leq \frac{m}{(1 - \gamma)^2} \numberthis \label{eq:bias_gradient_bound_norm_square}
\end{align*}
Hence,
\begin{align*} 
\implies \supnorm{\lt - \ltt} &\leq \norm{\lt - \ltt} \tag{$\supnorm{\cdot} \leq \norm{\cdot}$} \\
&= \norm{\projL[\lt] - \text{Proj}_{\Lambda}[\lt - \beta \, \bdgt]} 
\tag{By the update in~\cref{algorithm:dd} (\cref{line:update}) with $\mu = 0$ and since $\lt \in \Lambda$} \\
&\leq \beta \, \norm{\bdgt} \tag{Since projections are non-expansive and $\lt \in \Lambda$} \\
\implies \supnorm{\lt - \ltt} \leq \norm{\lt - \ltt} &\leq \frac{\sqrt{m}}{1 - \gamma} \, \beta \numberthis \label{eq:lt_ltt_ub} \\
\implies \Phi_{t+1} &\leq  \frac{1+\gamma}{2} \, \Phi_t + \frac{m^{3/2} \, (1+\gamma)}{(1 - \gamma)^3} \, \beta  \numberthis \label{eq:phi_recurse}
\end{align*}
Note that since $\gamma < 1$, $\frac{1+\gamma}{2} < 1$.
Hence, by~\cref{proposition:recursion-const} with $\kappa = \frac{1+\gamma}{2}$, $\nu = \frac{m^{3/2} \, (1+\gamma)}{(1 - \gamma)^3} \, \beta$,
\begin{align*}
    \frac{1}{T} \sum_{t=0}^{T-1} \Phi_t &\leq \frac{2 \, \Phi_0}{(1 - \gamma) \, T} + \frac{2 \, m^{3/2} \, (1+\gamma) \, \beta}{(1 - \gamma)^4}
\end{align*}
Using the above bound with~\cref{eq:smoothness_recursion_phi}, we have
\begin{align*}
   \cD(\bar{\lambda}_T) - \cD(\lstar) &\leq  \frac{\normsq{\lstar}}{2 \, \beta \, T}  +  \frac{4 \, \sqrt{m} \, A \, \abs{\Lambda}}{\tau \, (1 - \gamma)^2 T} \sum_{t=0}^{T-1} \Phi_t  \\
   &\leq \gO\parens*{\frac{1}{\beta \, T}} + \gO\parens*{\frac{1}{\tau \, T}} + \gO\parens*{\frac{\beta}{\tau}}
\end{align*}
Using the above dual sub-optimality guarantee from~\cref{lemma:dual_optimality_to_sub} with $\Delta = 
\gO\parens*{\frac{1}{\beta \, T}} + \gO\parens*{\frac{1}{\tau \, T}} + \gO\parens*{\frac{\beta}{\tau}}$, we have
\begin{align*}
    V^{\pi^*}_r(\rho) - V^{\pistarblT}_r(\rho) &\leq \gO\parens*{\frac{1}{\sqrt{\tau}} \parens*{\frac{1}{\sqrt{\beta \, T}} + \frac{1}{\sqrt{\tau \, T} } + \frac{\sqrt{\beta}}{\sqrt{\tau}}}}
    + \gO(\tau) \\
    &\leq \gO\parens*{\frac{1}{\sqrt{\tau \, \beta \, T}}} + \gO\parens*{\frac{\sqrt{\beta}}{\tau}} + \gO(\tau) \\
   \max\braces*{b_i - V^{\pistarblT}_{c_i}(\rho), 0} &\leq \gO\parens*{\frac{1}{\sqrt{\tau}} \parens*{\frac{1}{\sqrt{\beta \, T}} + \frac{1}{\sqrt{\tau \, T} } + \frac{\sqrt{\beta}}{\sqrt{\tau}}}}
   \quad \forall i \in [m] \,  \\
   &\leq \gO\parens*{\frac{1}{\sqrt{\tau \, \beta \, T}}} 
  + \gO\parens*{\frac{\sqrt{\beta}}{\tau}} 
   \quad \forall i \in [m] \, .
\end{align*}
Hence, setting $\tau = \gO(\eps)$, $\beta = \gO(\eps^4)$ and $T = \gO(\nicefrac{1}{\eps^7})$ ensures
\begin{align*}
V^{\pi^*}_r(\rho) - V^{\pistarblT}_{r}(\rho) \leq \gO(\eps) \quad \text{ and } \max\braces*{b_i - V^{\pistarblT}_{c_i}(\rho), 0} \leq \gO(\eps) \quad \forall i \in [m] \, .
\end{align*}
Then by~\cref{proposition:transfer_rate}, applying \texttt{NPG} update to the pseudo-reward $R(\bar{\lambda}_T)$ recovers a policy $\pi_N$ that is also $\gO(\eps)$-optimal after $N=\gO\parens*{\ln\parens*{\nicefrac{1}{\eps}}}$ iterations.

Finally, we account for the total number of gradient evaluations. Each of the $T = \gO(\nicefrac{1}{\eps^7})$ outer iterations requires $K = 1$ iterations, and the final policy optimization requires an additional $N = \gO(\ln(1/\eps))$ iterations. Therefore, the total number of gradient evaluations is
\begin{align*}
    TK + N &= \gO\parens*{T} + \gO\parens*{\ln\parens*{\frac{1}{\eps}}} = \tilde{\gO}\parens*{\frac{1}{\eps^7}}
\end{align*}

\end{proof}

\subsection{Interpolating between $K=1$ and $K > 1$ Inner-Iterations}\label{appendix:k_geq_one_smooth}
\biasboundedbyLlemma*
\begin{proof}
Follows from~\cref{eq:relate_to_linear_fa} in~\cref{lemma:bias_bounded_by_phi}.
\end{proof}
\lrackinglemma*
\begin{proof}

At iteration $t +1$, let $\pi_{t+1} =\texttt{NPG}(\pi_t, R_{t+1}, \tau, K)$ using the \texttt{NPG} update in~\cref{eq:entropy_npg_update}. 
Furthermore let $\hat{\pi}_0 = \pi_t$ and $\hat{\pi}_k$ denote the intermediate policy for each $k \in [0, K-1]$.
Then, by~\cref{lemma:entropy_npg_supnorm_convergence} with $\eta = \frac{1- \gamma}{\tau}$, $\diamond = R_{t+1}$, $\pi_0 = \hat{\pi}_k$, the sequence of policies satisfy, for all $k \in [0, K-1]$
\begin{equation*}
\supnorm{\ln(\pi^*_{t+1}) - \ln(\hat{\pi}_{k+1})} \leq \frac{2}{\tau} \parens*{\supnorm{Q^{\pi^*_{t+1}}_{t+1} - Q^{\hat{\pi}_k}_{t+1}}}
\end{equation*}
The result then follows from~\cref{lemma:genericLrecurse} with $E = 0$.
\end{proof}

\begin{theorem}\label{theorem:smooth_last_iterate_K_geq_1}
Under~\cref{assumption:slater_condition}, for a given $\eps \in (0, 1)$,~\cref{algorithm:dd} with (i) $\tau = \gO(\eps)$, $T = \tilde{\gO}(\nicefrac{1}{\eps^{4 + 3/2^K}})$, $\beta = \gO\parens*{\eps^\frac{3 + 2^K}{2^K}}$, $\mu = 0$, $\lambda_0 = 0$ (ii) using the exact \texttt{E-NPG} update in~\cref{eq:entropy_npg_update} with $\eta = \frac{1 - \gamma}{\tau}$, $K \geq 1$, $J = \gO(\ln(\nicefrac{1}{\eps}))$ and
(iii) the initial policy satisfies $L_0 \leq \gO(\nicefrac{\beta}{\tau})^{2^K}$~\cref{algorithm:dd} return an $\gO(\eps)$-optimal policy for~\cref{problem:cmdp} after $\widetilde{\gO}(\nicefrac{1}{\eps^{4 + 3/2^K}})$ gradient evaluations.
\end{theorem}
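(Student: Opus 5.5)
The plan mirrors the proof of \cref{theorem:smooth_last_iterate_K_1}, except that the averaged tracking bound on $\Phi_t$ is replaced by a uniform bound on $L_t$. The argument has three steps.

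\textbf{Step 1: a uniform bound on $L_t$.} Combine \cref{lemma:l_tracking_main} with the drift bound \cref{eq:lt_ltt_ub}, $\norm{\ltt-\lt}\le \frac{\sqrt m}{1-\gamma}\beta$. This gives
\begin{equation*}
L_{t+1}\le c_\gamma\,(a\,\beta/\tau + L_t)^{2^K},
\end{equation*}
where $a=\frac{2m}{(1-\gamma)^2}$ and $c_\gamma=(\frac{\gamma}{1-\gamma})^{2^K-1}$. I then argue by induction. The hypothesis is $L_t\le M\coloneqq C(\beta/\tau)^{2^K}$ for a suitable constant $C$, and the base case holds by assumption (iii).

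For the inductive step, take $\beta/\tau$ small enough that $M\le a\beta/\tau$. Then
\begin{equation*}
L_{t+1}\le c_\gamma(2a)^{2^K}(\beta/\tau)^{2^K},
\end{equation*}
so choosing $C=c_\gamma(2a)^{2^K}$ closes the induction. The condition $C(\beta/\tau)^{2^K-1}\le a$ holds once $\beta/\tau$ is below a constant; with $\beta=\gO(\eps^{(3+2^K)/2^K})$ and $\tau=\gO(\eps)$ we have $\beta/\tau\to0$ as $\eps\to 0$. For $K=1$ the condition is $C\beta/\tau\le a$, which is also fine for small $\eps$.

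This step is the main obstacle. Constants like $c_\gamma(2a)^{2^K}$ are exponential in $2^K$, so for fixed $K$ they are constants, but the smallness condition on $\beta/\tau$ must be stated carefully.

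\textbf{Step 2: dual suboptimality of the average iterate.} By \cref{lemma:bias_bounded_by_L}, every step satisfies $\delta_t\le \frac{\sqrt m A}{(1-\gamma)^2}M=\gO((\beta/\tau)^{2^K})$. Plugging this into \cref{lemma:meta_lemma_smooth}, which requires $\beta\le 1/G_\tau=\gO(\tau)$ and holds here, gives
\begin{equation*}
\cD(\bar\lambda_T)-\cD(\lstar)\le \gO\!\left(\frac{1}{\beta T}\right)+\gO\!\left((\beta/\tau)^{2^K}\right)\eqqcolon\Delta.
\end{equation*}

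\textbf{Step 3: from dual to primal, and parameter choice.} \cref{lemma:dual_optimality_to_sub} then gives reward suboptimality and constraint violation of order $\sqrt{\Delta/\tau}+\tau$. So I need $\Delta\le \gO(\eps^3)$ with $\tau=\eps$. The bias term requires $(\beta/\eps)^{2^K}\le\eps^3$, i.e. $\beta=\eps^{1+3/2^K}=\eps^{(3+2^K)/2^K}$, which matches the statement. The optimization term requires $1/(\beta T)\le\eps^3$, i.e. $T=\eps^{-3}\beta^{-1}=\eps^{-(4+3/2^K)}$.

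Finally, \cref{proposition:transfer_rate} recovers a deployable policy with $J=\gO(\ln(1/\eps))$ additional E-NPG iterations on $R(\bar\lambda_T)$. The total count is $KT+J=\tilde{\gO}(\eps^{-(4+3/2^K)})$ for fixed $K$.
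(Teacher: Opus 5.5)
Your proposal is correct and follows the paper's proof essentially step for step. Your invariant $M = c_\gamma(2a)^{2^K}(\beta/\tau)^{2^K}$ and your smallness condition $C(\beta/\tau)^{2^K-1}\le a$ are exactly the paper's $2^n C_1^{n-1}(C_2\beta/\tau)^n$ and $2^n(C_1C_2\beta/\tau)^{n-1}\le 1$ from \cref{lemma:quadratic_recursion} with $e=0$, and the rest uses the same chain \cref{lemma:bias_bounded_by_L}, \cref{lemma:meta_lemma_smooth}, \cref{lemma:dual_optimality_to_sub}, \cref{proposition:transfer_rate} with the same parameter choices. The only addition in the paper is counting the $\gO(\ln(1/\eps))$ warm-start \texttt{E-NPG} iterations on $R(\lambda_0)$ needed to achieve (iii), which does not change the rate.
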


\begin{proof}
This proof proceeds similar to~\cref{theorem:smooth_last_iterate_K_1}.
By~\cref{lemma:bias_bounded_by_L}, for all $t \geq 0$, we can relate the approximate dual gradient to an optimization error term:
\begin{equation*}
\norm{\dgt - \bdgt} \leq \frac{\sqrt{m} \, A}{(1 - \gamma)^2} \, L_t
\end{equation*}
where $L_t \coloneqq \supnorm{\ln(\pistart) - \ln(\pi_t)}$. 
Moreover, by~\cref{lemma:dual_smooth}, the dual $\cD(\lambda)$ is $\frac{m}{\tau \, (1 - \gamma)^3}$-smooth. Hence, under~\cref{assumption:slater_condition}, by~\cref{lemma:meta_lemma_smooth} with $\delta_t = \frac{\sqrt{m} \, A}{(1 - \gamma)^2} \, L_t$,~\cref{algorithm:dd} using step-size $\beta  \leq \frac{\tau (1 - \gamma)^3}{m}$, $\mu =0$ and $\lambda_0 = 0$ results in the following convergence
\begin{align*}
   \cD(\bar{\lambda}_T) - \cD(\lstar) &\leq  \frac{\normsq{\lstar}}{2 \, \beta \, T}  +  \frac{\sqrt{m} \, A \, \abs{\Lambda}}{(1 - \gamma)^2 \, T} \sum_{t=0}^{T-1} L_t  \leq \frac{\normsq{\lstar}}{2 \, \beta \, T} + \frac{\sqrt{m} \, A \, \abs{\Lambda}}{(1 - \gamma)^2} \, \max_{t \in [0, T-1]} L_t
   \numberthis \label{eq:smoothness_recursion_L}
\end{align*}
We next analyze the sequence $\{L_t\}_{t=0}^{T-1}$ in order to derive a uniform bound. 

At iteration $t+1$, let $\pi_{t+1}$ denote the policy returned after running $K$ iterations of the \texttt{NPG} update in~\cref{eq:entropy_npg_update} with $\eta = \frac{1 - \gamma}{\tau}$ and $\diamond = R_{t+1}$ initialized with policy $\pi_0 = \pi_t$.
Then by~\cref{lemma:l_tracking_main} and defining $n \coloneqq 2^K$, $C_1 \coloneqq \frac{\gamma}{1 - \gamma}$,
\begin{align*}
   L_{t+1} \leq  C_1^{n-1} \, \parens*{\frac{2 \sqrt{m}}{\tau \, (1 - \gamma)} \, \norm{\ltt - \lt} + L_t}^n
\end{align*}
Furthermore, using the update in~\cref{line:update} of~\cref{algorithm:dd} with $\mu=0$ since $\lt \in \Lambda$, by~\cref{eq:lt_ltt_ub}
\begin{align*}
    \frac{2 \sqrt{m}}{\tau \, (1 - \gamma)} \, \norm{\lt - \ltt} &\leq \underbrace{\frac{2 \, m }{(1 - \gamma)^2}}_{\coloneqq C_2} \, \frac{\beta}{\tau} \\
    \implies L_{t+1} &\leq C_1^{n - 1} \parens*{\frac{C_2 \, \beta}{\tau} + L_t}^{n} \, .
\end{align*}
Assuming the initial policy $\pi_{0}$ satisfies,
\begin{equation}
    L_0 = \supnorm{\ln(\pi^*_0) - \ln(\pi_{0})} \leq 2^n \, C_1^{n -1} \, \parens*{\frac{C_2 \, \beta}{\tau}}^n \label{eq:warm-start-pi0}
\end{equation}
and the step-size satisfies 
\begin{equation*}
 \beta \leq \frac{\tau \, (1 - \gamma)^3}{8 \, m \, \gamma} = \frac{\tau}{4 \, C_1 \, C_2}    \implies \frac{C_1 \, C_2 \, \beta}{\tau} \leq \frac{1}{4}
\end{equation*}
we have,
\begin{align*}
2^n \, \parens*{\frac{C_1 \, C_2 \, \beta}{\tau}}^{n-1} \leq 2^n \, 4^{-(n-1)} \leq 2^{2 - n} = 2^{2 - 2^K} \leq 1 \tag{Since $n=2^K \geq 2$}.
\end{align*}
Hence, by~\cref{lemma:quadratic_recursion} with $\kappa = C_1^{n - 1}$, $\nu = \frac{C_2 \, \beta}{\tau}$ and $e= 0$ we have, for all $t \geq 0$
\begin{equation*}
    L_t \leq 2^n \, C_1^{n-1} \, \parens*{\frac{C_2 \, \beta}{\tau}}^n = \gO\parens*{\parens*{\frac{\beta}{\tau}}^n}
\end{equation*}
Using the above inequality with~\cref{eq:smoothness_recursion_L},
\begin{align*}
   \cD(\bar{\lambda}_T) - \cD(\lstar) &\leq  \frac{\normsq{\lstar}}{2 \, \beta \, T}  +  \frac{\sqrt{m} \, A \, \abs{\Lambda}}{(1 - \gamma)^2 \, T} \sum_{t=0}^{T-1} L_t \leq \gO\parens*{\frac{1}{\beta \, T} + \parens*{\frac{\beta}{\tau}}^n} \, 
\end{align*}
Combining the above result with~\cref{lemma:dual_optimality_to_sub} with $\Delta = \gO\parens*{\frac{1}{\beta \, T} + \parens*{\frac{\beta}{\tau}}^n}$
\begin{align*}
V^{\pi^*}_r(\rho) - V^{\pistarblT}_{r}(\rho) \leq \gO\parens*{\frac{1}{\sqrt{\tau}} \, \parens*{\frac{1}{\sqrt{\beta \, T}} + \parens*{\frac{\beta}{\tau}}^{n/2}}} + \gO(\tau) 
\\ 
\max\braces*{b_i - V^{\pistarblT}_{c_i}(\rho), 0} \leq \gO\parens*{\frac{1}{\sqrt{\tau}} \, \parens*{\frac{1}{\sqrt{\beta \, T}} + \parens*{\frac{\beta}{\tau}}^{n/2}}} \quad \forall i \in [m] \, .
\end{align*}

To obtain an $\gO(\eps)$-optimal policy, we require each term to be $\gO(\eps)$. Setting $\tau = \gO(\eps)$, it suffices to set the step-size such that
\begin{align*}
    \gO\parens*{\frac{1}{\sqrt{\eps}} \, \frac{\beta^{n/2}}{\eps^{n/2}}} &\leq \gO(\eps) 
    \implies \beta \leq \gO\parens*{\eps^\frac{3 + n}{n}}
\intertext{Next, to determine the number of outer-iterations $T$,}
    \gO\parens*{\frac{1}{\sqrt{\eps}} \, \parens*{\frac{1}{\sqrt{\beta \, T}}}} &\leq \gO(\eps) 
    \implies \gO\parens*{\frac{1}{\sqrt{\beta \, T}}} \leq \gO\parens*{\eps^{3/2}} \implies T \geq \gO\parens*{\frac{1}{\beta \, \eps^3}} = 
    \gO\parens*{\frac{1}{\eps^{\frac{4n + 3}{n}}}}  \,.
\end{align*}
Since $n = 2^K$, further setting $\beta = \gO\parens*{\eps^\frac{3 + 2^K}{2^K}}$, $T = \gO\parens*{\frac{1}{\eps^{4 + \frac{3}{2^K}}}}$, we have
\begin{align*}
V^{\pi^*}_r(\rho) - V^{\pistarblT}_{r}(\rho) \leq \gO(\eps) \quad \text{ and } \max\braces*{b_i - V^{\pistarblT}_{c_i}(\rho), 0} \leq \gO(\eps) \quad \forall i \in [m] \, .
\end{align*}
Then by~\cref{proposition:transfer_rate}, applying \texttt{NPG} update to the pseudo-reward $R(\bar{\lambda}_T)$ recovers a policy $\pi_N$ that is also $\gO(\eps)$-optimal after $N=\gO\parens*{\ln\parens*{\nicefrac{1}{\eps}}}$ iterations.

Finally, we account for the total number of gradient evaluations.   Each of the $T = \gO\parens*{\frac{1}{\eps^{4 + \frac{3}{2^K}}}}$ outer iterations requires $K \geq 1$ iterations.
For the first iteration only, in order to satisfy~\cref{eq:warm-start-pi0}, we require
\begin{align*}
 L_0 &\leq 2^n \, C_1^{n-1} \, \parens*{\frac{C_2 \, \beta}{\tau}}^n \leq \gO(\eps^3)
\end{align*}
By~\cref{lemma:entropy_npg_supnorm_convergence} with $\diamond = R(\lambda_0)$ and $\eta = \frac{1 - \gamma}{\tau}$ ensures
$L_0 = \supnorm{\ln(\pi^*_0) - \ln(\pi_0)} \leq \Theta(\eps^3)$ after $K_0 = \gO(\ln(\nicefrac{1}{\eps}))$ initial iterations. 
Finally, to recover a deployable policy  the final policy optimization requires an additional $N = \gO(\ln(1/\eps))$ iterations. Therefore, the total number of gradient evaluations is
$\tilde{\gO}\parens*{\frac{1}{\eps^{4 + \frac{3}{2^K}}}}$.
\end{proof}

\section{Faster Convergence Rates Using $\ell_2$ Regularization}
Throughout this section we analyze~\cref{algorithm:dd} with dual regularization $\ell_2$ parameter $\mu > 0$.
Recall that the dual update in~\cref{algorithm:dd} (\cref{line:update}), 
$\ltt = \projL\bracks*{(1 - \mu \, \betat) \, \lt - \betat \, \bdgt}$, is precisely the projected gradient descent with inexact gradients on the \emph{regularized dual}, i.e.,
\begin{equation}\label{eq:reg_dual}
    \F(\lambda) \coloneqq \cD(\lambda) + \frac{\mu}{2} \, \normsq{\lambda} \quad \text{;} \quad
    [\nabla \F(\lambda)]_i = \underbrace{V^{\pistarl}_{c_i}(\rho) - b_i}_{= [\dg]_i} + \mu \, \lambda_i \quad \forall i \in [m] \, ,
\end{equation}
and let $\bFgt \coloneqq \bdgt + \mu \, \lt$. We denote its minimizer by
\begin{equation}\label{eq:reg_dual_argmin}
   \lstarmu \coloneqq \argmin_{\lambda \in \R^m_+} \F(\lambda) \, ,
\end{equation}
and since $\cD(\lambda)$ is convex, $\F(\lambda)$ is $\mu$-strongly convex. 
Moreover, by the smoothness of $\cD(\lambda)$ established in~\cref{lemma:dual_smooth}, $\F$ is $\frac{m}{\tau \, (1 - \gamma)^3} + \mu$-smooth.
Before proceeding with the main theorem, we present a couple of technical tools. 
\begin{lemma}\label{lemma:reg_optimal_dual_bounded}
Under~\cref{assumption:slater_condition}, for any $\mu > 0$ and $\tau > 0$, the regularized dual  $\lstarmu = \argmin_{\lambda \in \R^m_{+}} \F(\lambda)$ satisfies
\begin{align*}
    [\lstarmu]_i \leq \frac{2 + 2 \, \tau \, \ln(A)}{(1 - \gamma) \, \zeta_i} \quad \forall i \in [m]
\end{align*}
where $\zeta_i = V^{\bar{\pi}}_{c_i}(\rho) - b_i > 0$ and $\bar{\pi}$ is the policy that satisfies Slater's condition. Consequently, for any $\lambda \in \Lambda$,
\begin{equation*}
    \norm{\lambda - \lstarmu} \leq \frac{\sqrt{m} \, (2 + 2 \tau \, \ln(A))}{(1 - \gamma) \, \min_{i \in [m]} \, \zeta_i}
\end{equation*}
\end{lemma}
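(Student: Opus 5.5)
The plan is to mimic the proof of \cref{lemma:optimal_dual_bounded}, substituting the first-order optimality (KKT) condition of the strongly convex problem $\min_{\lambda \in \R^m_+} \F(\lambda)$ for the saddle-point argument. Since $\F$ is differentiable (by \cref{proposition:dual_gradient}) and $\mu$-strongly convex, the minimizer $\lstarmu$ exists and is unique, and it satisfies, for every $i \in [m]$,
\begin{equation*}
[\nabla \cD(\lstarmu)]_i + \mu \, [\lstarmu]_i \geq 0, \qquad [\lstarmu]_i \parens*{[\nabla \cD(\lstarmu)]_i + \mu \, [\lstarmu]_i} = 0 .
\end{equation*}
Here $[\nabla \cD(\lstarmu)]_i = V^{\pi^*_{\lstarmu,\tau}}_{c_i}(\rho) - b_i$.

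Write $\hat\pi \coloneqq \pi^*_{\lstarmu,\tau}$. First, summing the complementary slackness identities over $i$ gives
\begin{equation*}
\sum_i [\lstarmu]_i \parens*{V^{\hat\pi}_{c_i}(\rho) - b_i} = -\mu \normsq{\lstarmu} \leq 0 .
\end{equation*}
Second, since $\hat\pi$ maximizes $\lag(\cdot,\lstarmu)$, comparing it with the Slater policy $\bar\pi$ yields
\begin{equation*}
V^{\hat\pi}_{r,\tau}(\rho) + \sum_i [\lstarmu]_i \parens*{V^{\hat\pi}_{c_i}(\rho)-b_i} \geq V^{\bar\pi}_{r,\tau}(\rho) + \sum_i [\lstarmu]_i \zeta_i .
\end{equation*}
Combining the two inequalities gives
\begin{equation*}
\sum_i [\lstarmu]_i \zeta_i \leq V^{\hat\pi}_{r,\tau}(\rho) - V^{\bar\pi}_{r,\tau}(\rho) - \mu\normsq{\lstarmu} \leq V^{\hat\pi}_{r,\tau}(\rho) - V^{\bar\pi}_{r,\tau}(\rho).
\end{equation*}
The soft reward value lies in $[0, \frac{1+\tau\ln A}{1-\gamma}]$, so this difference is at most $\frac{2+2\tau\ln(A)}{1-\gamma}$, the same crude bound used in \cref{lemma:optimal_dual_bounded}. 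Since every $\zeta_j > 0$ and $[\lstarmu]_j \geq 0$, the remaining terms of the sum are nonnegative and may be dropped, which leaves $[\lstarmu]_i \zeta_i \leq \frac{2+2\tau\ln A}{1-\gamma}$. This is the componentwise claim.

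For the distance bound, both $\lambda \in \Lambda$ and $\lstarmu$ (now shown to lie in $\Lambda$) have coordinates in $\bracks*{0, \frac{2+2\tau\ln A}{(1-\gamma)\min_i\zeta_i}}$. Each coordinate difference is therefore bounded by that quantity, and $\norm{\cdot}\leq\sqrt m\supnorm{\cdot}$ finishes the argument exactly as in \cref{lemma:optimal_dual_bounded}.

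The only delicate step is justifying the KKT conditions over the unbounded set $\R^m_+$. Existence of the minimizer follows from strong convexity and coercivity of $\F$, which holds because $\cD$ is bounded below by $\lag(\bar\pi,\lambda)\geq V^{\bar\pi}_{r,\tau}(\rho)$. Differentiability of $\cD$ makes the variational inequality $\dpd{\nabla\F(\lstarmu), \lambda-\lstarmu}\geq 0$ for all $\lambda\in\R^m_+$ valid. Testing it with $\lambda = \lstarmu \pm e_i$-type perturbations yields the coordinatewise conditions above. Notably, no strong duality is needed here, unlike in \cref{lemma:optimal_dual_bounded}.
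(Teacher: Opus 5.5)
Your proof is correct, but it reaches the key inequality by a different and heavier route than the paper.

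\textbf{The paper's route.} The paper never uses first-order conditions. It compares $\lstarmu$ only against the single feasible point $\lambda = 0$:
\begin{itemize}
\item on one side, $\F(\lstarmu) \le \F(0) = \cD(0) = \max_\pi V^{\pi}_{r,\tau}(\rho)$;
\item on the other, $\F(\lstarmu) \ge \cD(\lstarmu) \ge \lag(\bar\pi,\lstarmu) = V^{\bar\pi}_{r,\tau}(\rho) + \sum_i [\lstarmu]_i\zeta_i$.
\end{itemize}
Together these give $\sum_i [\lstarmu]_i\zeta_i \le \max_\pi V^{\pi}_{r,\tau}(\rho) - V^{\bar\pi}_{r,\tau}(\rho)$ immediately.

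\textbf{Your route.} You first derive complementary slackness, $\sum_i [\lstarmu]_i (V^{\hat\pi}_{c_i}(\rho)-b_i) = -\mu\normsq{\lstarmu}$. This is exactly what the paper proves separately as \cref{eq:reg_complementary} in \cref{lemma:reg_dual_kkt}. You then compare $\hat\pi$ with $\bar\pi$ inside $\lag(\cdot,\lstarmu)$.

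\textbf{What your route buys.} You get a slightly sharper intermediate inequality. It is stated in terms of $V^{\hat\pi}_{r,\tau}(\rho)$ for the specific maximizer rather than $\max_\pi V^{\pi}_{r,\tau}(\rho)$. It also retains $\mu\normsq{\lstarmu}$, whereas the zeroth-order comparison only yields $\frac{\mu}{2}\normsq{\lstarmu}$, which the paper discards anyway. Your argument also mirrors the decomposition later used in \cref{lemma:reg_dual_distance_to_sub}.

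\textbf{What it costs.} It costs exactly the ``delicate step'' you flag. You need differentiability of $\cD$ (Danskin, \cref{proposition:dual_gradient}), existence of the minimizer, and a variational-inequality justification of the KKT conditions over the unbounded set $\R^m_+$. The paper's argument uses only that $\lstarmu$ minimizes $\F$ and that $0\in\R^m_+$. It would therefore go through verbatim even for a nondifferentiable dual (e.g.\ $\tau=0$), where your version would have to be redone with subgradients.

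Your tighter range $[0,\frac{1+\tau\ln A}{1-\gamma}]$ for the soft value and the final $\sqrt{m}$ step are fine. Your remark that no strong duality is needed is correct, but it applies equally to the paper's proof of this lemma.
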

\begin{proof}
Since $\lstarmu = \argmin_{\lambda \in \R^m_{+}} \F(\lambda)$, we have
\begin{align*}
   \F(\lstarmu) &\leq \F(0) = \cD(0) = \max_{\pi} \lag(\pi, 0) = \max_{\pi} V^{\pi}_{r, \tau}(\rho) \numberthis \label{eq:reg_dual_ub}
\end{align*}
Furthermore,
\begin{align*}
   \F(\lstarmu) &= \cD(\lstarmu) + \frac{\mu}{2} \, \normsq{\lstarmu} \geq \cD(\lstarmu) \tag{Since $\frac{\mu}{2} \normsq{\lstarmu} \geq 0$} \\
   &= \max_\pi \lag(\pi, \lstarmu)  \geq \lag(\bar{\pi}, \lstarmu) 
  \tag{Under~\cref{assumption:slater_condition}, there exist a policy $\bar{\pi}$ such that $\zeta_i = V^{\bar{\pi}}_{c_i}(\rho) - b_i  > 0$ for all $i \in [m]$} 
   \\
   &= V^{\bar{\pi}}_{r, \tau}(\rho) + \sum_{i=1}^m [\lstarmu]_i \, \parens*{V^{\bar{\pi}}_{c_i}(\rho) - b_i} \tag{Definition of $\lag(\pi, \lambda)$} \\
   &= V^{\bar{\pi}}_{r, \tau}(\rho) + \sum_{i=1}^m [\lstarmu]_i \, \zeta_i  \\
   \implies \sum_{i=1}^m [\lstarmu]_i \, \zeta_i&\leq \F(\lstarmu) - V^{\bar{\pi}}_{r, \tau}(\rho) \numberthis \label{eq:reg_dual_lb}
\end{align*}
Combining~\cref{eq:reg_dual_ub,eq:reg_dual_lb} and noting that since $\lstarmu \in \R^m_+$ and $\zeta_i > 0$ for all $i \in [m]$, every term of the sum is non-negative, and hence for each $i \in [m]$,
\begin{align*}
   [\lstarmu]_i &\leq \frac{\max_\pi V^{\pi}_{r, \tau}(\rho) - V^{\bar{\pi}}_{r, \tau}(\rho)}{\zeta_i} \leq \frac{2 + 2 \, \tau \, \ln(A)}{(1 - \gamma) \, \zeta_i} \tag{Since $V^{\pi}_{r, \tau}(\rho) \in \bracks*{0, \frac{2 + 2 \, \tau \, \ln(A)}{1 - \gamma}}$} \, .
\end{align*}
Moreover, since for all $i \in [m]$, $\lambda_i \in \bracks*{0, \frac{2 + 2 \, \tau \, \ln(A)}{(1 - \gamma) \, \min_{i \in [m]} \, \zeta_i}}$
\begin{align*}
    \supnorm{\lambda - \lstarmu} &\leq \frac{2 + 2 \, \tau \, \ln(A)}{(1 - \gamma) \, \min_{i \in [m]} \, \zeta_i}  \implies \norm{\lambda - \lstarmu} \leq \frac{\sqrt{m} \, (2 + 2 \, \tau \, \ln(A))}{(1 - \gamma) \, \min_{i \in [m]} \, \zeta_i}
\end{align*}
\end{proof}

\begin{lemma}\label{lemma:reg_dual_kkt}
For any $\mu > 0$ and $\tau > 0$, the regularized dual solution $\lstarmu \coloneqq \argmin_{\lambda \in \R^m_+} \F(\lambda)$ satisfies, for all $i \in [m]$,
\begin{align*}
   [\nabla \F(\lstarmu)]_i &\geq 0  \numberthis \label{eq:reg_kkt} \\ 
   [\lstarmu]_i \, [\nabla \F(\lstarmu)]_i &= 0 \numberthis \label{eq:reg_kkt_2} \\
    \max\braces*{b_i - V^{\pistarlmu}_{c_i}(\rho), 0} &= \mu \, [\lstarmu]_i  \numberthis \label{eq:reg_bias_identity} \\
    \dpd{\lstarmu, V^{\pistarlmu}_{c}(\rho) - b} &= - \mu \, \normsq{\lstarmu} \leq 0 \numberthis \label{eq:reg_complementary}
\end{align*}
where $\pistarlmu \coloneqq \argmax_{\pi} \lag(\pi, \lstarmu)$.
\end{lemma}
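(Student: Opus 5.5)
The plan is to read off the four claims from the first-order optimality conditions of the convex problem $\min_{\lambda \in \R^m_+} \F(\lambda)$. The regularized dual $\F(\lambda) = \cD(\lambda) + \frac{\mu}{2}\normsq{\lambda}$ is differentiable: $\cD$ is differentiable by \cref{proposition:dual_gradient}, and \cref{lemma:dual_smooth} makes it smooth. It is also $\mu$-strongly convex, so the minimizer $\lstarmu$ over the nonnegative orthant is unique and is characterized by the variational inequality
\begin{equation*}
\dpd{\nabla \F(\lstarmu), \lambda - \lstarmu} \geq 0 \quad \forall \lambda \in \R^m_+ .
\end{equation*}
Because the orthant is a product of half-lines, this inequality decouples coordinatewise.

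\textbf{Step 1: \cref{eq:reg_kkt} and \cref{eq:reg_kkt_2}.} Fix $i$ and perturb only coordinate $i$, taking $\lambda = \lstarmu + e_i$. This gives $[\nabla \F(\lstarmu)]_i \geq 0$. If $[\lstarmu]_i > 0$, the choice $\lambda = \lstarmu - \epsilon e_i$ for small $\epsilon>0$ stays in $\R^m_+$ and gives $[\nabla \F(\lstarmu)]_i \leq 0$, so the component equals zero. If $[\lstarmu]_i = 0$, the product vanishes trivially. Together these give complementary slackness.

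\textbf{Step 2: \cref{eq:reg_bias_identity}.} By \cref{eq:reg_dual}, $[\nabla \F(\lstarmu)]_i = V^{\pistarlmu}_{c_i}(\rho) - b_i + \mu [\lstarmu]_i$, where $\pistarlmu$ is the entropy-regularized maximizer at $\lstarmu$. Split into two cases.
\begin{itemize}
\item If $[\lstarmu]_i > 0$, Step 1 gives $b_i - V^{\pistarlmu}_{c_i}(\rho) = \mu[\lstarmu]_i > 0$. Hence the max equals $\mu[\lstarmu]_i$.
\item If $[\lstarmu]_i = 0$, Step 1 gives $V^{\pistarlmu}_{c_i}(\rho) - b_i \geq 0$. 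Hence the max is $0 = \mu[\lstarmu]_i$.
\end{itemize}

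\textbf{Step 3: \cref{eq:reg_complementary}.} Sum \cref{eq:reg_kkt_2} over $i$ and substitute the gradient formula:
\begin{equation*}
0 = \sum_i [\lstarmu]_i \bigl(V^{\pistarlmu}_{c_i}(\rho) - b_i\bigr) + \mu \normsq{\lstarmu}.
\end{equation*}
Rearranging gives the identity, and the sign follows from $\mu > 0$.

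\textbf{Main obstacle.} The only real care needed is justifying the optimality conditions: existence of the minimizer over an unbounded set, and differentiability at boundary points of the orthant. Existence follows from strong convexity, which makes $\F$ coercive. For differentiability, $\cD$ is a finite convex function on all of $\R^m$, and Danskin's theorem with the unique entropy-regularized maximizer applies at every $\lambda$. So the gradient formula holds at the boundary, and the standard necessary and sufficient condition for convex differentiable minimization over a closed convex set can be invoked.
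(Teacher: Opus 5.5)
Your proposal is correct and follows essentially the same route as the paper. The paper writes the KKT system for the orthant-constrained problem with multipliers $\nu_i \geq 0$ and substitutes stationarity $\nu^* = \nabla \mathcal{D}_{\tau,\mu}(\lambda^*_{\tau,\mu})$, which is equivalent to your coordinatewise variational inequality; it then uses the same two-case split for the third identity and the same summation of complementary slackness for the fourth, while your remarks on existence (via strong convexity) and on differentiability of $\mathcal{D}_{\tau}$ on all of $\mathbb{R}^m$ fill in details the paper leaves implicit.
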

\begin{proof}
To prove the first and second identities, we consider the Lagrangian to the optimization problem in~\cref{eq:reg_dual_argmin}
\begin{equation*}
   \mathscr{L}(\lambda, \nu) \coloneqq \F(\lambda) - \sum_{i=1}^m \nu_i \, \lambda_i \, = \cD(\lambda) + \frac{\mu}{2}\, \normsq{\lambda} - \sum_{i=1}^m \nu_i \, \lambda_i.
\end{equation*}
where $\nu_i \geq 0$ is the Lagrangian multiplier for each $i \in [m]$.
Since  $\cD(\lambda)$ is convex and differentiable and the constraints are linear, the KKT conditions are both necessary and sufficient for optimality
of $\lstarmu$~\citep{boyd2004convex}. Therefore there exists $\nu^* \in \R^m$ such that
\begin{align*}
   \nabla_\lambda \mathscr{L}(\lstarmu, \nu^*) = \nabla \F(\lstarmu) - \nu^* &= 0 \tag{Stationarity} \\
   \nu_i^* &\geq 0 \quad \forall i \in [m] \tag{Dual feasibility} \\
   [\lstarmu]_i &\geq 0 \quad \forall i \in [m] \tag{Primal feasibility} \\
   \nu_i^* \, [\lstarmu]_i &= 0 \quad \forall i \in [m] \tag{Complementary slackness}
\end{align*}
Substituting $[\nu^*]_i = [\nabla \F(\lstarmu)]_i$ from stationarity into dual feasibility and  complementary slackness gives, for all $i \in [m]$,
\begin{equation*}
   [\nabla \F(\lstarmu)]_i \geq 0 \quad \text{;} \quad [\lstarmu]_i \, [\nabla \F(\lstarmu)]_i = 0 \, ,
\end{equation*}

To prove the third and forth inequalities, we consider the cases where for a fixed $i \in [m]$,
$[\lstarmu]_i = 0$ or $[\lstarmu]_i > 0$. If $[\lstarmu]_i = 0$, then by~\cref{eq:reg_kkt}
\begin{align*}
    0 \leq [\nabla \F(\lstarmu)]_i = V^{\pistarlmu}_{c_i}(\rho) - b_i + \mu \, \underbrace{[\lstarmu]_i}_{=0} \implies V^{\pistarlmu}_{c_i}(\rho) \geq b_i \, .
\end{align*}
\begin{equation*}
\implies \max\braces*{b_i - V^{\pistarlmu}_{c_i}(\rho), 0} = 0 = \mu \, [\lstarmu]_i    \, .
\end{equation*}
Otherwise if $[\lstarmu]_i > 0$,  by~\cref{eq:reg_kkt_2},
\begin{align*}
    V^{\pistarlmu}_{c_i}(\rho) - b_i + \mu \, [\lstarmu]_i = 0
    \implies b_i - V^{\pistarlmu}_{c_i}(\rho) = \mu \, [\lstarmu]_i > 0
\end{align*}
\begin{equation*}
\implies \max\braces*{b_i - V^{\pistarlmu}_{c_i}(\rho), 0} = \mu \, [\lstarmu]_i  \, .  
\end{equation*}
Combining both cases proves~\cref{eq:reg_bias_identity}.
Finally, multiplying $[\lstarmu]_i \, [\nabla \F(\lstarmu)]_i$ we have
\begin{equation*}
   [\lstarmu]_i \, [\nabla \F(\lstarmu)]_i = 0  \implies [\lstarmu]_i \, (V^{\pistarlmu}_{c_i}(\rho) - b_i) = - \mu \, [\lstarmu]^2_i \quad \forall i \in [m]
\end{equation*}
and summing over each $i \in [m]$  gives~\cref{eq:reg_complementary}.
\end{proof}

\regdualdisttosublemma*
\begin{proof}
Let $\mu > 0$, $\tau > 0$ and $\lambda \in \Lambda$.
We first consider the reward sub-optimality gap.
\begin{align*}
\MoveEqLeft V^{\pistarl}_{r}(\rho) + \tau \, \gH(\rho, \pistarl) + \sum_{i=1}^m \lambda_i \, [V^{\pistarl}_{c_i}(\rho) - b_i] = \lag(\pistarl, \lambda) \tag{$\pistarl \coloneqq \argmax_{\pi} \lag(\pi, \lambda)$}\\ 
   &\geq \lag(\pi^*, \lambda) = V^{\pi^*}_{r}(\rho) + \tau \, \gH(\rho, \pi^*) + \sum_{i=1}^m \lambda_i \, \underbrace{ \, [V^{\pi^*}_{c}(\rho) - b_i]}_{\geq 0} \\
   \tag{$\pi^*$ is the optimal policy for~\cref{problem:cmdp}} \\
   &\geq V^{\pi^*}_{r}(\rho) \tag{Since $\lambda \in \R^m_+$, and entropy is non-negative}
\end{align*}
Rearranging the above inequality and since $\gH(\rho, \pi) \leq \frac{\ln(A)}{1 - \gamma}$,
\begin{align*}
   \implies V^{\pi^*}_{r}(\rho) - V^{\pistarl}_{r}(\rho) &\leq \frac{\ln(A)}{1 - \gamma} \, \tau + 
\sum_{i=1}^m \lambda_i \, [V^{\pistarl}_{c_i}(\rho) - b_i] \numberthis \label{eq:reg_sub_opt_start} \, .
\end{align*}
We decompose the remaining sum as
\begin{align*}
    &\sum_{i=1}^m \lambda_i \, [V^{\pistarl}_{c_i}(\rho) - b_i] \\
    &= 
    \sum_{i=1}^m [\lambda - \lstarmu + \lstarmu]_i \, [V^{\pistarl}_{c_i}(\rho) - b_i] \tag{Add/Subtract $\lstarmu$} \\
    &=\sum_{i=1}^m [\lambda - \lstarmu]_i \, [V^{\pistarl}_{c_i}(\rho) - b_i] + 
 \sum_{i=1}^m [\lstarmu]_i \, [V^{\pistarl}_{c_i}(\rho) - b_i] \\  
    &=\underbrace{\sum_{i=1}^m [\lambda - \lstarmu]_i \, [V^{\pistarl}_{c_i}(\rho) - b_i]}_{\text{Term (i)}} + \underbrace{\sum_{i=1}^m[\lstarmu]_i \, [V^{\pistarl}_{c_i}(\rho) - V^{\pistarlmu}_{c_i}(\rho)]}_{\text{Term (ii)}} + \underbrace{\sum_{i=1}^m[\lstarmu]_i \, [V^{\pistarlmu}_{c_i}(\rho) - b_i]}_{\text{Term (iii)}}
    \tag{Add/Subtract $V^{\pistarlmu}_{c_i}(\rho)$}
\end{align*}

For Term (i), since $c_i \in [0, 1]^{SA}$ and $b_i \geq 0$ we have $\abs*{V^{\pistarl}_{c_i}(\rho) - b_i} \leq \frac{1}{1 - \gamma}$ for each $i \in [m]$, and hence using Cauchy-Schwarz,
\begin{align*}
   \sum_{i=1}^m [\lambda - \lstarmu]_i \, [V^{\pistarl}_{c_i}(\rho) - b_i] \leq \norm{\lambda - \lstarmu} \, \norm{V^{\pistarl}_{c}(\rho) - b} \leq \frac{\sqrt{m}}{1 - \gamma} \, \norm{\lambda - \lstarmu}
\end{align*}
For Term (ii), by~\cref{proposition:dual_gradient}, $[\nabla \cD(\lambda)]_i = V^{\pistarl}_{c_i}(\rho) - b_i$ and using Cauchy-Schwarz we have ,
\begin{align*}
    \sum_{i=1}^m[\lstarmu]_i \, [V^{\pistarl}_{c_i}(\rho) - V^{\pistarlmu}_{c_i}(\rho)] 
    &\leq \norm{\lstarmu} \, \norm{\dg - \nabla \cD(\lstarmu)} \\
    &\leq \norm{\lstarmu} \, G_\tau \, \norm{\lambda - \lstarmu} \tag{By~\cref{lemma:dual_smooth}}\\
    &\leq \sqrt{m} \, \supnorm{\lstarmu} \, G_\tau \, \norm{\lambda - \lstarmu} \tag{Since $\norm{x} \leq \sqrt{m} \, \supnorm{x}$ for $x \in \R^m$}\\
    &\leq \frac{\sqrt{m} (2 + 2 \, \tau \, \ln(A))}{(1 - \gamma) \min_{i \in [m]} \zeta_i} \, G_\tau \, \norm{\lambda - \lstarmu} \tag{By~\cref{lemma:reg_optimal_dual_bounded}}
\end{align*}
Finally, Term (iii) is non-positive by~\cref{eq:reg_complementary} of~\cref{lemma:reg_dual_kkt}. 
Substituting the three bounds into~\cref{eq:reg_sub_opt_start} results in
\begin{align*}
   V^{\pi^*}_{r}(\rho) - V^{\pistarl}_{r}(\rho) &\leq \frac{\ln(A)}{1 - \gamma} \, \tau + \parens*{\frac{\sqrt{m}}{1 - \gamma} + \frac{\sqrt{m} \, (2 + 2 \, \tau \, \ln(A))}{(1 - \gamma) \min_{i \in [m]} \, \zeta_i} \, G_\tau} \,  \norm{\lambda - \lstarmu} \\
   &\leq \gO(G_\tau \, \norm{\lambda - \lstarmu}) + \gO(\tau)
\end{align*}
We next consider the constraint violations for a fixed constraint $i \in [m]$. If the policy $\pistarl = \argmax_{\pi} \lag(\pi, \lambda)$ satisfies
$V^{\pistarl}_{c_i}(\rho) \geq b_i$
then
\begin{align*}
b_i - V^{\pistarl}_{c_i}(\rho) &\leq \max\braces*{b_i - V^{\pistarl}_{c_i}(\rho), 0} = 0 \\ &\leq \max\braces*{b_i - V^{\pistarlmu}_{c_i}(\rho), 0}  \tag{$\pistarlmu \coloneqq \argmax_{\pi} \lag(\pi, \lstarmu)$}\\
&\leq \max\braces*{b_i - V^{\pistarlmu}_{c_i}(\rho), 0} + \norm{\nabla \cD(\lstarmu) - \dg}_1
\end{align*}
Otherwise if $V^{\pistarl}_{c_i}(\rho) < b_i$ , we have
\begin{align*}
    b_i - V^{\pistarl}_{c_i}(\rho)
    &= \bracks*{b_i - V^{\pistarlmu}_{c_i}(\rho)} + \bracks*{V^{\pistarlmu}_{c_i}(\rho) - V^{\pistarl}_{c_i}(\rho)} \tag{Add/subtract $V^{\pistarlmu}_{c_i}(\rho)$} \\
    &\leq \max\braces*{b_i - V^{\pistarlmu}_{c_i}(\rho), 0} + \abs*{V^{\pistarlmu}_{c_i}(\rho) - V^{\pistarl}_{c_i}(\rho)} \\
    &= \max\braces*{b_i - V^{\pistarlmu}_{c_i}(\rho), 0}  + \abs*{[\nabla \cD(\lstarmu)]_i - [\dg]_i} \tag{By~\cref{proposition:dual_gradient}, $[\dg]_i = V^{\pistarl}_{c_i}(\rho) - b_i$} \\
    &\leq \max\braces*{b_i - V^{\pistarlmu}_{c_i}(\rho), 0} + \norm{\nabla \cD(\lstarmu) - \dg}_1
\end{align*}
Hence in either case we have,
\begin{align*}
   \max\braces*{b_i - V^{\pistarl}_{c_i}(\rho), 0}  &\leq \max\braces*{b_i - V^{\pistarlmu}_{c_i}(\rho), 0} + \norm{\nabla \cD(\lstarmu) - \dg}_1  \\
   &\leq \max\braces*{b_i - V^{\pistarlmu}_{c_i}(\rho), 0} + \sqrt{m} \, \norm{\nabla \cD(\lstarmu) - \dg}  \tag{Since for $x \in \R^m$, $\norm{x}_1 \leq \sqrt{m} \, \norm{x}$}\\
    &= \mu \, [\lstarmu]_i + \sqrt{m} \, \norm{\nabla \cD(\lstarmu) - \dg}  \tag{By~\cref{eq:reg_bias_identity} of~\cref{lemma:reg_dual_kkt}} \\
    &\leq \mu \, [\lstarmu]_i + \sqrt{m} \, G_\tau \, \norm{\lambda - \lstarmu} \tag{By~\cref{lemma:dual_smooth} where $G_\tau \coloneqq \frac{m}{\tau (1 - \gamma)^3}$}\\
    &\leq \frac{\mu \, (2 + 2 \, \tau \, \ln(A))}{(1 - \gamma) \, \zeta_i} + \sqrt{m} \, G_\tau \, \norm{\lambda - \lstarmu} \tag{Under~\cref{assumption:slater_condition}, by~\cref{lemma:reg_optimal_dual_bounded}} \\
    &\leq \gO(G_\tau \, \norm{\lambda - \lstarmu}) + \gO(\mu)
\end{align*}
which completes the proof.
\end{proof}

\regduallinearconvergencelemma*
\begin{proof}
For this proof, it will be useful to define the following update using the exact gradient
\begin{align*}
    \ltt' &= \projL[\lt - \beta \, \nabla \F(\lt)] = \projL[\lt - \beta \, (\dgt + \mu \, \lt)]
\end{align*}
Starting for a fixed $t \geq 0$,
\begin{align*}
   \norm{\ltt - \lambda^*_{\tau, \mu}} &=\norm{\ltt - \ltt' + \ltt' - \lambda^*_{\tau, \mu}}
   \tag{Add/Subtract $\ltt'$} \\
   &\leq  \norm{\ltt' - \lambda^*_{\tau, \mu}} + \norm{\ltt - \ltt'}  \tag{Using Triangle inequality}
\end{align*}
To bound the first term, since $\lstarmu = \projL[\lstarmu - \beta \, \nabla \F(\lstarmu)]$ for any $\beta > 0$, we have
\begin{align*}
   &\normsq{\ltt' - \lambda^*_{\tau, \mu}}  \\
   &= \normsq{\projL[\lt - \beta \, \nabla \F(\lt)] - \projL[\lstarmu - \beta \, \nabla \F(\lstarmu)]} \tag{$\lambda^*_{\tau, \mu}$ is a fixed-point} \\
   &\leq \normsq{\lt - \beta \, \nabla \F(\lt) -\parens*{\lstarmu - \beta \, \nabla \F(\lstarmu)}} \tag{$\projL$ is non-expansive} \\
   &= \normsq{\lt - \lstarmu} - 2 \, \beta \, \dpd{\nabla \F(\lt) - \nabla \F(\lstarmu), \lt - \lstarmu} + \beta^2 \, \normsq{\nabla \F(\lt) - \nabla \F(\lstarmu)}
\end{align*}
To lower-bound the inner product, by~\cref{lemma:dual_smooth}, $\cD(\lambda)$ is $\frac{m}{\tau \, (1 -\gamma)^3}$-smooth and hence $\F(\lambda) = \cD(\lambda) + \frac{\mu}{2} \, \normsq{\lambda}$ is $L_{\tau, \mu} \coloneqq \frac{m}{\tau \, (1 -\gamma)^3} + \mu$-smooth. Additionally $\F$ is $\mu$-strongly convex.
Hence, by~\cite[Theorem 2.1.12]{nesterov2013introductory}, 
$\F$ satisfies for any $x, y \in \Lambda$
\begin{align*}
\dpd{\nabla \F(x) - \nabla \F(y), x - y} &\geq \frac{\mu \, L_{\tau, \mu}}{\mu + L_{\tau, \mu}} \normsq{x - y} + \frac{1}{\mu + L_{\tau, \mu}} \normsq{\nabla \F(x) - \nabla \F(y)}  \, .
\end{align*}
Using the above inequality with $x = \lt$ and $y=\lambda^*_{\tau, \mu}$
\begin{align*}
    \implies \normsq{\ltt' - \lambda^*_{\tau, \mu}} \leq \parens*{1 - \frac{2 \, \beta \, \mu \, L_{\tau, \mu}}{\mu + L_{\tau, \mu}}}\normsq{\lt - \lambda^*_{\tau, \mu}}  + \beta \, \parens*{\beta - \frac{2}{\mu + L_{\tau, \mu}}} \, \normsq{\nabla \F(\lt) - \nabla \F(\lstarmu)}
\end{align*}
\begin{align*}
\implies \norm{\ltt' - \lambda^*_{\tau, \mu}}&\leq
\sqrt{\parens*{1 - \frac{2 \, \beta \, \mu \, L_{\tau, \mu}}{\mu + L_{\tau, \mu}}}} \, \norm{\lt - \lambda^*_{\tau, \mu}} \tag{Since $\beta \leq \frac{2}{\mu + L_{\tau, \mu}}$, the second term is non-positive}
\end{align*}
Furthermore, since $L_{\tau, \mu} \geq \mu$
and $4\mu L_{\tau,\mu} \leq (\mu + L_{\tau,\mu})^2$,
we have
\begin{equation*}
    0 < \frac{\beta \, \mu}{2} \leq \frac{\beta \, \mu \, L_{\tau, \mu}}{\mu + L_{\tau, \mu}} \leq \frac{2 \, \mu \, L_{\tau, \mu}}{(\mu + L_{\tau, \mu})^2} \leq \frac{1}{2}
\end{equation*}
Hence, using the fact that $\sqrt{1 - 2x} \leq 1 -x$ for $x \in [0, \nicefrac{1}{2}]$ with $x = \frac{\beta \, \mu \, L_{\tau, \mu}}{\mu + L_{\tau, \mu}}$
\begin{align*}
\implies \norm{\ltt' - \lambda^*_{\tau, \mu}} &\leq
\parens*{1 - \frac{\beta \, \mu \, L_{\tau, \mu}}{\mu + L_{\tau, \mu}}} \, \norm{\lt - \lambda^*_{\tau, \mu}} 
\leq \parens*{1 - \frac{\beta \, \mu}{2}} \, \norm{\lt - \lambda^*_{\tau, \mu}} \numberthis \label{linear_rec_1}
\end{align*}
To bound the second term,
\begin{align*}
   \norm{\ltt - \ltt'}  &= \norm{\projL[\lt - \beta \, \bFgt] - \projL[\lt - \beta \, \nabla \F(\lt)]} \tag{Using the updates of $\ltt'$ and $\ltt$} \\
  &\leq \beta \, \norm{\bFgt - \nabla \F(\lt)}  \tag{Since $\projL$ is non-expansive} \\
  &= \beta \, \norm{\bdgt - \dgt} \tag{The $\mu \, \lt$ terms cancel}
  \intertext{Assuming that $\norm{\bdgt - \dgt} \leq \delta_t$ for all $t \in [T]$}
  \implies \norm{\ltt - \ltt'}  &\leq \beta \, \delta_t \numberthis \label{linear_rec_2}
\end{align*}
Putting~\cref{linear_rec_1,linear_rec_2} together, since $\beta \leq \frac{2}{\mu + L_{\tau, \mu}}$
\begin{equation*}
   \implies \norm{\ltt - \lstarmu} \leq \parens*{1 - \frac{\beta \, \mu}{2}} \, \norm{\lt - \lambda^*_{\tau, \mu}} + \beta \, \delta_t \numberthis \label{eq:reg_recurse}
\end{equation*}
Using~\cref{eq:reg_recurse} and recursing from $t = T-1$ to $0$,
\begin{align*}
    \norm{\lambda_T - \lstarmu} &\leq \parens*{1 - \frac{\beta \, \mu}{2}}^T \, \norm{\lambda_0 - \lstarmu} + \beta \sum_{t=0}^{T-1} \parens*{1 - \frac{\beta \, \mu}{2}}^{T - 1 - t} \, \delta_t \\
    &\leq \exp\parens*{-\frac{\beta \, \mu \, T}{2}} \, \norm{\lstarmu} + \beta \sum_{t=0}^{T-1} \parens*{1 - \frac{\beta \, \mu}{2}}^{T - t - 1} \, \delta_t \, . \tag{Since $1 - x \leq \exp(-x)$ and $\lambda_0 = 0$}
\end{align*}
\end{proof}

\subsection{Using $K = \gO(\ln(\nicefrac{1}{\eps}))$ Inner-Iterations}\label{appendix:k_ln_eps_smooth_reg}
\regsmoothnessltworate*
\begin{proof}
Following the proof of~\cref{theorem:meta_smooth}, we construct a uniform bound of the dual gradient error.
By~\cref{lemma:bound_bias_subgradient}, for all $t \geq 0$, we can relate the approximate gradient to the following optimization term:
\begin{equation*}
   \norm{\dgt - \bdgt} \leq  \frac{\sqrt{ 2 \, m }}{(1 - \gamma)^{\nicefrac{3}{2}} \, \sqrt{\tau}} \sqrt{ \max_{\pi} V^{\pi}_{R(\lambda_t), \tau}(\rho) - V^{\pi_{t}}_{R(\lambda_t), \tau}(\rho)}
\end{equation*}
To characterize the sub-optimality gap $\max_{\pi} V^{\pi}_{R(\lambda_t), \tau}(\rho) - V^{\pi_{t}}_{R(\lambda_t), \tau}(\rho)$, we instantiate the \texttt{PolicyOptimization} method in~\cref{algorithm:dd} using \texttt{NPG} update in~\cref{eq:entropy_npg_update}.
At iteration $t$, by~\cref{lemma:entropy_npg_supnorm_convergence}, with $\diamond = R(\lambda_t)$, $\pi_0 = \pi_{t-1}$, $\eta = \frac{1 - \gamma}{w \, \tau}$ and $w \geq 1$, the policy $\pi_{t} = \pi_{K}$ satisfies
\begin{align*}
    \max_{\pi} V^{\pi}_{R(\lambda_t), \tau}(\rho) - V^{\pi_{t}}_{R(\lambda_t), \tau}(\rho) \leq \gO\parens*{\frac{\tau^{3} \, \mu^2}{T^4}} \, .
\end{align*}
after
\begin{equation}\label{eq:npg_updates_2}
   K \geq \gO\parens*{\frac{w}{1 - \gamma}\ln\parens*{\frac{T^4}{\tau^3\, \mu^2}}} 
\end{equation}
iterations.
Hence, for all $t \in [T]$
\begin{equation*}
   \norm{\dgt - \bdgt} \leq  \gO\parens*{\frac{\tau \, \mu}{T^2}} \coloneqq \delta_t.
\end{equation*}
and let $\hat{\delta} \coloneqq \max_{t \in [T]} \delta_t$.

We now characterize the dual convergence using the above uniform gradient error. 
By~\cref{lemma:dual_smooth}, $\cD(\lambda)$ is $G_\tau = \frac{m}{\tau \, (1 - \gamma)^3}$-smooth. Hence, under~\cref{assumption:slater_condition}, by~\cref{lemma:reg_dual_linear_convergence},~\cref{algorithm:dd} using step-size $\beta = \frac{2}{\mu + L_{\tau, \mu}} = \frac{2 \, \tau \, (1 - \gamma)^3}{m + 2 \, \mu \, \tau \, (1 - \gamma)^3}$, and $\lambda_0 = 0$, results in the following convergence: for $T \geq 1$,    
\begin{align*}
    \norm{\lambda_T - \lstarmu} &\leq \exp\parens*{-\frac{\beta \, \mu \, T}{2}} \, \norm{\lstarmu} + \beta \sum_{t=0}^{T-1} \parens*{1 - \frac{\beta \, \mu}{2}}^{T-t-1}\delta_t \\
    &\leq \exp\parens*{-\frac{\beta \, \mu \, T}{2}} \, \norm{\lstarmu} + \hat{\delta} \, \beta \sum_{t=0}^{T-1} \parens*{1 - \frac{\beta \, \mu}{2}}^{T-t-1} \\
    \intertext{Summing up the geometric series since $\frac{\beta \, \mu}{2} < 1$}
\implies \norm{\lambda_T - \lstarmu}    &\leq \exp\parens*{-\frac{\beta \, \mu \, T}{2}} \, \norm{\lstarmu} +  \frac{2 \, \hat{\delta}}{\mu} \numberthis \label{eq:l2_smooth_bound} \\
    &\leq \exp\parens*{-\frac{\beta \, \mu \, T}{2}} \, \norm{\lstarmu} + \gO\parens*{\frac{\tau}{T^2}}
\end{align*}
Combining the above result with~\cref{lemma:reg_dual_distance_to_sub}, we have
\begin{align*}
    V^{\pi^*}_{r}(\rho) - V^{\pi^*_{\lambda_T, \tau}}_{r}(\rho) &\leq \gO\parens*{\frac{1}{\tau} \exp\parens*{-\frac{\beta \, \mu \, T}{2}}} + \gO\parens*{\frac{1}{T^2}} + \gO\parens*{\tau} \, , \\
    \max\braces*{b_i - V^{\pi^*_{\lambda_T, \tau}}_{c_i}(\rho), 0} &\leq \gO\parens*{\frac{1}{\tau} \exp\parens*{-\frac{\beta \, \mu \, T}{2}}} + \gO\parens*{\frac{1}{T^2}} + \gO(\mu) \quad \forall i \in [m] \, .
\end{align*}
Hence, setting $\tau = \mu = \gO(\eps)$, and $T = \tilde{\gO}(\nicefrac{1}{\eps^2})$  ensures
\begin{align*}
V^{\pi^*}_r(\rho) - V^{\pi^*_{\lambda_T, \tau}}_{r}(\rho) \leq \gO(\eps) \quad \text{ and } \max\braces*{b_i - V^{\pi^*_{\lambda_T, \tau}}_{c_i}(\rho), 0} \leq \gO(\eps) \quad \forall i \in [m] \, .
\end{align*}
Then by~\cref{proposition:transfer_rate}, applying \texttt{NPG} update to the pseudo-reward $R(\lambda_T)$ recovers a policy $\pi_N$ that is also $\gO(\eps)$-optimal after $N=\gO\parens*{\ln\parens*{\nicefrac{1}{\eps}}}$ iterations.

Finally, we account for the total number of gradient evaluations. Each of the $T = \tilde{\gO}(\nicefrac{1}{\eps^2})$ outer iterations requires $K = \gO(\ln(\nicefrac{1}{\eps}))$ iterations, and the final policy optimization requires an additional $N = \gO(\ln(1/\eps))$ iterations. Therefore, the total number of gradient evaluations is
\begin{align*}
    TK + N &= \tilde{\gO}\parens*{T} + \gO\parens*{\ln\parens*{\frac{1}{\eps}}} = \tilde{\gO}\parens*{\frac{1}{\eps^2}}
\end{align*}
\end{proof}

\subsection{Using $K = 1$ Inner-Iterations}\label{appendix:k_one_smooth_reg}
\elltwolastiterateKone*
\begin{proof}
Following the proof of~\cref{theorem:smooth_last_iterate_K_1}, 
by~\cref{lemma:bias_bounded_by_phi}, for all $t \geq 0$, we can relate the approximate dual gradient to an optimization error term:
\begin{equation*}
\norm{\dgt - \bdgt} \leq \frac{4 \sqrt{m} \, A}{\tau \, (1 - \gamma)^2} \, \Phi_t
\end{equation*}
where $\Phi_t \coloneqq \frac{1}{1 - \gamma} \, \supnorm{Q^{\pistart}_{t} - Q^{\pi_{t}}_{t}} + \frac{\tau}{4} \, \supnorm{\ln(\pistart) - \ln(\pi_t)}$. 
Moreover by~\cref{lemma:dual_smooth}, the dual $\cD(\lambda)$ is $G_\tau \coloneqq \frac{m}{\tau \, (1 - \gamma)^3}$-smooth. Hence, under~\cref{assumption:slater_condition}, by~\cref{lemma:reg_dual_linear_convergence} with $\delta_t = \frac{4 \sqrt{m} \, A}{\tau \, (1 - \gamma)^2} \, \Phi_t$,~\cref{algorithm:dd}, $\beta \leq \frac{2}{L_{\tau,\mu} + \mu}$, and $\lambda_0 = 0$ results in the following convergence
\begin{align*}
    \norm{\lambda_T - \lstarmu} &\leq \exp\parens*{-\frac{\beta \, \mu \, T}{2}} \, \norm{\lstarmu} + \frac{4 \, \sqrt{m} \, A \, \beta}{\tau \, (1 - \gamma)^2} \sum_{t=0}^{T-1} \parens*{1 - \frac{\beta \, \mu}{2}}^{T - t - 1} \, \Phi_t \\
    \numberthis \label{eq:smoothness_l2_recursion_phi}
\end{align*}
Furthermore, recall from~\cref{eq:tracking_bound9} of~\cref{theorem:smooth_last_iterate_K_1} we have,
\begin{equation*}
\Phi_{t+1} \leq  \frac{1+\gamma}{2} \, \Phi_t + \frac{m \, (1+\gamma)}{(1 - \gamma)^2} \supnorm{\lt - \ltt}
\end{equation*}
By the update in~\cref{algorithm:dd} (\cref{line:update}) $\ltt = \projL[\lt - \beta \bFgt] = \projL[\lt - \beta(\bdgt + \mu \lt)]$, we have
\begin{align*}
   \supnorm{\lt - \ltt} &\leq \norm{\lt - \ltt} \tag{$\supnorm{\cdot} \leq \norm{\cdot}$} \\
   &\leq \beta \norm{\bFgt} \tag{Since projections are non-expansive} \\
   &=\beta \norm{\bdgt + \mu \lt} \\
   &\leq \beta \, \norm{\bdgt} + \beta \, \mu \norm{\lt} \tag{By triangle-inequality}\\
   &\leq \beta \, \parens*{\frac{\sqrt{m}}{1 - \gamma} + \mu \, \abs{\Lambda}} \tag{Since $\lt \in \Lambda$ and by ~\cref{eq:bias_gradient_bound_norm_square}} \\
   \implies \supnorm{\lt - \ltt} \leq \norm{\lt - \ltt} &\leq \beta \, \parens*{\frac{\sqrt{m}}{1 - \gamma} + \mu \, \abs{\Lambda}} \numberthis \label{eq:lt_ltt_mu_ub} \\
   \implies 
\Phi_{t+1} &\leq  \underbrace{\frac{1+\gamma}{2}}_{< 1} \Phi_t + \underbrace{\frac{m \, (1+\gamma)}{(1 - \gamma)^2} \parens*{\frac{\sqrt{m}}{1 - \gamma} + \mu \, \abs{\Lambda}}}_{\coloneqq C_\mu} \, \beta \,
\end{align*}
Next, we bound the weighted sum $\sum_{t=0}^{T-1} \alpha^{T-1-t} \, \Phi_t$, where $\alpha \coloneqq 1 - \frac{\mu \, \beta}{2}$.
Let $\kappa \coloneqq \frac{1+\gamma}{2}$, so that $1 - \kappa = \frac{1-\gamma}{2}$. Furthermore, since $\mu \beta > 0$, $\alpha < 1$, we show $\alpha \in (\kappa, 1)$.
Since $\beta \leq \frac{2}{\mu + L_{\tau,\mu}}$ and $L_{\tau,\mu} \geq G_\tau = \frac{m}{\tau (1 - \gamma)^3}$,
\begin{align*}
    \mu \, \beta \leq \frac{2 \, \mu}{G_\tau} = \frac{2 \, \mu \, \tau \, (1 - \gamma)^3}{m} \leq \frac{1 - \gamma}{2} = 1 - \kappa \tag{$\mu \tau \leq \frac{1}{4}$, $m \geq 1$} \\
    \implies \alpha - \kappa = (1 - \kappa) - \frac{\mu \, \beta}{2} \geq \frac{1 - \gamma}{4} > 0 \, .
    \numberthis \label{eq:alpha_lb}
\end{align*}
Hence, by~\cref{eq:x-weighted} of~\cref{proposition:recursion-const} with $\nu = C_\mu \, \beta$,
\begin{align*}
    \sum_{t=0}^{T-1} \parens*{1 - \frac{\mu \, \beta}{2}}^{T-1-t} \Phi_t
    &\leq \frac{\alpha^T \, \Phi_0}{\alpha - \kappa} + \frac{C_\mu \, \beta}{(1 - \kappa) \, (1 - \alpha)}
    \leq \frac{4 \, \Phi_0}{1 - \gamma} \exp\parens*{-\frac{\mu \, \beta \, T}{2}} + \frac{4 \, C_\mu}{\mu \, (1 - \gamma)} \\
    &\leq \gO\parens*{\exp\parens*{-\frac{\mu \, \beta \, T}{2}}} + \gO\parens*{\frac{1}{\mu}} \numberthis \label{eq:smoothness_l2_recursion_phi_bound}
\end{align*}
Putting~\cref{eq:smoothness_l2_recursion_phi,eq:smoothness_l2_recursion_phi_bound} together, we have
\begin{align*}
\norm{\lambda_T - \lstarmu} &\leq \gO\parens*{\exp\parens*{\frac{-\mu \, \beta \, T}{2}}} + \gO\parens*{\frac{\beta \, \exp(-\frac{\mu \, \beta \, T}{2})}{\tau}} + \gO\parens*{\frac{\beta}{\tau \, \mu}}
\end{align*}
Using the above dual sub-optimality guarantee with~\cref{lemma:reg_dual_distance_to_sub}
\begin{align*}
    V^{\pi^*}_{r}(\rho) - V^{\pi^*_{\lambda_T, \tau}}_{r}(\rho) &\leq \gO\parens*{\frac{1}{\tau} \, \exp\parens*{\frac{-\mu \, \beta \, T}{2}}} + \gO\parens*{\frac{\beta \, \exp(-\frac{\mu \, \beta \, T}{2})}{\tau^2}} + \gO\parens*{\frac{\beta}{\tau^2 \, \mu}} + \gO(\tau) \\
    \max\braces*{b_i - V^{\pi^*_{\lambda_T, \tau}}_{c_i}(\rho), 0} &\leq \gO\parens*{\frac{1}{\tau} \, \exp\parens*{\frac{-\mu \, \beta \, T}{2}}} + \gO\parens*{\frac{\beta \, \exp(-\frac{\mu \, \beta \, T}{2})}{\tau^2}} + \gO\parens*{\frac{\beta}{\tau^2 \, \mu}} + \gO(\mu)  \quad \forall i \in [m] \, ,
\end{align*}
Hence, setting $\tau = \mu = \gO(\eps)$, $\beta = \gO(\eps^4)$, and $T = \tilde{\gO}(\nicefrac{1}{\eps^5})$ ensures
\begin{align*}
V^{\pi^*}_r(\rho) - V^{\pi^*_{\lambda_T, \tau}}_{r}(\rho) \leq \gO(\eps) \quad \text{ and } \max\braces*{b_i - V^{\pi^*_{\lambda_T, \tau}}_{c_i}(\rho), 0} \leq \gO(\eps) \quad \forall i \in [m] \, .
\end{align*}
Then by~\cref{proposition:transfer_rate}, applying \texttt{NPG} update to the pseudo-reward $R(\lambda_T)$ recovers a policy $\pi_N$ that is also $\gO(\eps)$-optimal after $N=\gO\parens*{\ln\parens*{\nicefrac{1}{\eps}}}$ iterations.

Finally, we account for the total number of gradient evaluations. Each of the $T = \tilde{\gO}(\nicefrac{1}{\eps^5})$ outer-iterations requires $K = 1$ inner-iterations, and the final policy optimization requires an additional $N = \gO(\ln(1/\eps))$ iterations. Therefore, the total number of gradient evaluations is $\tilde{\gO}\parens*{\frac{1}{\eps^5}}$.

\end{proof}

\subsection{Interpolating Between $K=1$ and $K > 1$ Inner-Iterations} \label{appendix:k_geq_one_smooth_reg}
\ellinterpolatelastiteratetheorem*
\begin{proof}
This proof proceeds similar to~\cref{theorem:smooth_last_iterate_K_1}.
By~\cref{lemma:bias_bounded_by_L}, for all $t \geq 0$, we can relate the approximate dual gradient to an optimization error term:
\begin{equation*}
\norm{\dgt - \bdgt} \leq \frac{\sqrt{m} \, A}{(1 - \gamma)^2} \, L_t
\end{equation*}
where $L_t \coloneqq \supnorm{\ln(\pistart) - \ln(\pi_t)}$. 
Moreover, by~\cref{lemma:dual_smooth}, the dual $\cD(\lambda)$ is $\frac{m}{\tau \, (1 - \gamma)^3}$-smooth. Hence, under~\cref{assumption:slater_condition}, by~\cref{lemma:reg_dual_linear_convergence} with $\delta_t = \frac{\sqrt{m} \, A}{(1 - \gamma)^2} \, L_t$,~\cref{algorithm:dd} 
using $\beta \leq \min\braces*{\frac{\tau (1 - \gamma)^3}{m}, \frac{2}{G_\tau + 2\mu}}$
, and $\lambda_0 = 0$
results in the following convergence
\begin{align*}
    \norm{\lambda_T - \lstarmu} &\leq \exp\parens*{-\frac{\beta \, \mu \, T}{2}} \, \norm{\lstarmu} + \frac{\sqrt{m} \, A \, \beta}{(1 - \gamma)^2} \sum_{t=0}^{T-1} \parens*{1 - \frac{\beta \, \mu}{2}}^{T - t - 1} \, L_t  \\
    &\leq \exp\parens*{-\frac{\beta \, \mu \, T}{2}} \, \norm{\lstarmu} + \frac{\sqrt{m} \, A \, \beta}{(1 - \gamma)^2} \max_t L_t \sum_{t=0}^{\infty} \parens*{1 - \frac{\beta \, \mu}{2}}^{t}  \\
    &= \exp\parens*{-\frac{\beta \, \mu \, T}{2}} \, \norm{\lstarmu} + \frac{2 \, \sqrt{m} \, A }{\mu \, (1 - \gamma)^2} \max_t L_t \tag{Since $\frac{\beta \, \mu}{2} < 1$, $\sum_{t=0}^{\infty} \parens*{1 - \frac{\beta \, \mu}{2}}^{t} = \frac{2}{\beta \, \mu}$}\\
   \implies  \norm{\lambda_T - \lstarmu} &\leq \exp\parens*{-\frac{\beta \, \mu \, T}{2}} \, \norm{\lstarmu} + \frac{2 \, \sqrt{m} \, A }{\mu \, (1 - \gamma)^2} \max_t L_t
    \numberthis \label{eq:reg_gd_L}
\end{align*}
We next derive a recursive bound on the optimization error, relating iteration $t+1$ to iteration $t$. 
Let $\pi_{t+1}$ denote the policy returned after running $K$ iterations of the \texttt{NPG} update in~\cref{eq:entropy_npg_update} with $\eta = \frac{1 - \gamma}{\tau}$ and $\diamond = R_{t+1}$.
Then by~\cref{lemma:genericLrecurse} with $E = 0$ and setting $n = 2^K$ we have,
\begin{align*}
    L_{t+1}  &\leq C_1^{n-1} \, \parens*{\supnorm{\ln(\pistartt) - \ln(\pistart)} + L_t }^{n}
\end{align*}
Furthermore, by~\cref{eq:sup_to_sup_lipschitz} of~\cref{proposition:pistarl_lipschitz} and~\cref{lemma:qstar_lipschitz},
\begin{align*}
   \supnorm{\ln(\pistartt) - \ln(\pistart)} &\leq \frac{2 \, \sqrt{m}}{\tau \, (1 - \gamma)} \, \norm{\ltt - \lt} \\
   &= \frac{2 \, \sqrt{m}}{\tau\, (1 - \gamma)} \, \norm{\projL[(1 - \beta \, \mu) \lt - \beta \, \bdgt] - \projL[\lt]}  \tag{Using the update in~\cref{line:update} of~\cref{algorithm:dd} and since $\lt \in \Lambda$} \\
   &\leq \frac{2 \, \sqrt{m} \, \beta}{\tau\, (1 - \gamma)} \, \norm{\bdgt + \mu \, \lt}  \tag{Projections are non-expansive} \\
   &\leq \frac{2 \, \sqrt{m} \, \beta}{\tau\, (1 - \gamma)} \parens*{\, \norm{\bdgt} + \mu \norm{\lt}} \tag{By triangle-inequality}\\
   &\leq \frac{2 \, \sqrt{m} \, \beta}{\tau\, (1 - \gamma)} \parens*{\, \norm{\bdgt} + \mu \abs*{\Lambda}} \tag{Since $\lambda_t \in \Lambda$}\\
   &\leq \frac{\beta}{\tau} \, \underbrace{\frac{2 \, \sqrt{m}}{1 - \gamma} \, \parens*{\frac{\sqrt{m}}{1 - \gamma} + \mu \, \abs{\Lambda}}}_{\coloneqq C_\mu} \tag{By~\cref{eq:bias_gradient_bound_norm_square}}
\end{align*}
\begin{align*}
    \implies L_{t+1} \leq C_1^{n-1} \parens*{\frac{C_\mu \, \beta}{\tau} + L_t}^n
\end{align*}
Furthermore, assuming that $\pi_0$ satisfies,
\begin{equation}
    L_0 = \supnorm{\ln(\pi^*_0) - \ln(\pi_0)} \leq 2^n \, C_1^{n -1} \, \parens*{C_\mu \, \frac{\beta}{\tau}}^n \label{eq:warm-start-pi1}
\end{equation}
and  the step-size satisfies $\beta \leq \frac{\tau}{4 \, C_1 \, C_\mu}$ we have,
\begin{align*}
C_1 \, C_\mu \, \frac{\beta}{\tau} &\leq \frac{1}{4} 
\implies 2^n \, \parens*{C_1 \, C_\mu \, \frac{\beta}{\tau}}^{n-1} \leq 2^n \, 4^{-(n-1)} \leq 2^{2 - n} = 2^{2 - 2^K} \leq 1 \tag{Since $n=2^K \geq 2$}.
\end{align*}
Hence, by~\cref{lemma:quadratic_recursion} with $\kappa = C_1^{n - 1}$, $\nu = \frac{C_\mu \, \beta}{\tau}$ we have, for all $t \geq 0$
\begin{equation*}
    L_t \leq 2^n \, C_1^{n-1} \, \parens*{C_\mu \frac{\beta}{\tau}}^n = \gO\parens*{\parens*{\frac{\beta}{\tau}}^n}
\end{equation*}
Using the above inequality with~\cref{eq:reg_gd_L},
\begin{align*}
   \norm{\lambda_T - \lstarmu} &\leq \exp\parens*{-\frac{\beta \, \mu \, T}{2}} \, \norm{\lstarmu} + \frac{2 \, \sqrt{m} \, A }{\mu \, (1 - \gamma)^2} \max_t L_t \leq \gO\parens*{\exp\parens*{-\frac{\beta \, \mu \, T}{2}}} + \gO\parens*{\frac{1}{\mu} \, \parens*{\frac{\beta}{\tau}}^n}
\end{align*}
Combining the above result with~\cref{lemma:reg_dual_distance_to_sub},
\begin{align*}
V^{\pi^*}_r(\rho) - V^{\pi^*_{\lambda_T, \tau}}_{r}(\rho) \leq \gO\parens*{\frac{\exp\parens*{-\frac{\beta \, \mu \, T}{2}}}{\tau}} + \gO\parens*{\frac{1}{\mu \, \tau} \, \parens*{\frac{\beta}{\tau}}^n} + \gO(\tau)
\\ 
\max\braces*{b_i - V^{\pi^*_{\lambda_T, \tau}}_{c_i}(\rho), 0} \leq \gO\parens*{\frac{\exp\parens*{-\frac{\beta \, \mu \, T}{2}}}{\tau}} + \gO\parens*{\frac{1}{\mu \, \tau} \, \parens*{\frac{\beta}{\tau}}^n} + \gO(\mu)
\end{align*}
To obtain an $\gO(\eps)$-optimal policy, we require each term to be $\gO(\eps)$. Setting $\tau = \mu = \gO(\eps)$ it suffices to set the step-size such that
\begin{align*}
\frac{1}{\eps^2}\parens*{\frac{\beta}{\eps}}^n \leq \gO(\eps)
  &\implies \beta = \gO\parens*{\eps^{\frac{n+3}{n}}}, \\
\intertext{Next, to determine the number of outer-iterations $T$,}
 \gO\parens*{\frac{1}{\eps}\exp\parens*{-\frac{\beta\,\eps\,T}{2}}} \leq \gO(\eps) & \implies  T \geq \tilde{\gO}\parens*{\frac{1}{\eps^{\frac{2n + 3}{n}}}}.
\end{align*}
Since $n = 2^K$, setting $\beta = \gO\parens*{\eps^\frac{3 + 2^K}{2^K}}$, $T = \tilde{\gO}\parens*{\frac{1}{\eps^{2 + \frac{3}{2^K}}}}$, we have
\begin{align*}
V^{\pi^*}_r(\rho) - V^{\pi^*_{\lambda_T, \tau}}_{r}(\rho) \leq \gO(\eps) \quad \text{ and } \max\braces*{b_i - V^{\pi^*_{\lambda_T, \tau}}_{c_i}(\rho), 0} \leq \gO(\eps) \quad \forall i \in [m] \, .
\end{align*}
Then by~\cref{proposition:transfer_rate}, applying \texttt{NPG} update to the pseudo-reward $R(\lambda_T)$ recovers a policy $\pi_N$ that is also $\gO(\eps)$-optimal after $N=\gO\parens*{\ln\parens*{\frac{1}{\eps}}}$ iterations.

Finally, we account for the total number of gradient evaluations.   Each of the $T = \tilde{\gO}\parens*{\frac{1}{\eps^{2 + \frac{3}{2^K}}}}$ outer iterations requires $K \geq 1$ iterations.
For the first iteration only, in order to satisfy~\cref{eq:warm-start-pi1}, we require
\begin{align*}
 L_0 &\leq 2^n \, C_1^{n-1} \, \parens*{\frac{C_\mu \, \beta}{\tau}}^n \leq \gO(\eps^3)
\end{align*}
By~\cref{lemma:entropy_npg_supnorm_convergence} with $\diamond = R(\lambda_0)$ and $\eta = \frac{1 - \gamma}{\tau}$ ensures
$\supnorm{\ln(\pi^*_0) - \ln(\pi_0)} \leq \gO(\eps^3)$ after $K_0 = \gO(\ln(\nicefrac{1}{\eps}))$ iterations. 
Finally, to recover a deployable policy  the final policy optimization requires an additional $N = \gO(\ln(\nicefrac{1}{\eps}))$ iterations. Therefore, the total number of gradient evaluations is
$\tilde{\gO}\parens*{\frac{1}{\eps^{2 + \frac{3}{2^K}}}}$.
\end{proof}

\section{Convergence Results When Using Linear Function Approximation}\label{appendix:linear_fa}
\coresetlemma*
\begin{proof} 
For a fixed iteration $t \geq 0$, solving the following regression problem
\begin{equation*}
    \min_{w \in \R^d} \sum_{(s, a) \in \cC} \varsigma(s, a) \, [Q^{\pi_k}_{R_t, \tau}(s, a) - \dpd{w, \varphi(s, a)}]^2
\end{equation*}
results in the following explicit solution:
\begin{equation*}
  \hat{w}^\star = G_\varsigma^\dagger \left[ \sum_{(s', a') \in \mathcal{C}} \varsigma(s', a') Q_{R_t, \tau}^{\pi_k}(s', a') \varphi(s', a') \right]  
\end{equation*}
where $G_\varsigma := \sum_{(s, a) \in \mathcal{C}} \varsigma(s, a) \varphi(s, a) \varphi(s, a)^\top$.
Let $w^\star$ be the minimizer of the regression loss over the entire state-action space over all admissible policies and pseudo-rewards. Under \cref{assumption:q_realizability,assumption:eps_opt}, we know that for any $k \in [K]$ and $t \in [T]$,
\begin{align*}
    | \langle \varphi(s, a), w^\star \rangle - Q_{R_t, \tau}^{\pi_k}(s, a) | \le \epsilon_{\text{bias}}, \quad | \langle \varphi(s, a), w_k \rangle - \langle \varphi(s, a), \hat{w}^\star \rangle | \le \sqrt{\epsilon_{\text{opt}}}.    
\end{align*}

Then, for any arbitrary $(s, a)$, using the triangle inequality, we have that
\begin{align*}
    \MoveEqLeft
    | Q_{R_t, \tau}^{\pi_k}(s, a) - \langle \varphi(s, a), w_k \rangle 
    \leq | Q_{R_t, \tau}^{\pi_k}(s, a) - \langle \varphi(s, a), w^\star \rangle | + | \langle \varphi(s, a), w_k \rangle - \langle \varphi(s, a), w^\star \rangle | \tag{Add/Subtract $\langle \varphi(s, a), w^\star \rangle$}\\
    &\leq \eps_{\bias} + | \langle \varphi(s, a), w_k \rangle - \langle \varphi(s, a), w^\star \rangle | \tag{Under~\cref{assumption:q_realizability}} \\
    &\leq \epsilon_{\text{bias}} + | \langle \varphi(s, a), w_k \rangle - \langle \varphi(s, a), \hat{w}^\star \rangle | + | \langle \varphi(s, a), \hat{w}^\star \rangle - \langle \varphi(s, a), w^\star \rangle | \tag{Add/Subtract $\langle \varphi(s, a), \hat{w}^\star \rangle$}\\
    &\leq \epsilon_{\text{bias}} + \sqrt{\epsilon_{\text{opt}}} + | \langle \varphi(s, a), \hat{w}^\star - w^\star \rangle | \tag{Under~\cref{assumption:eps_opt}}\,.
\end{align*}
It now remains to bound $| \langle \varphi(s, a), \hat{w}^\star - w^\star \rangle |$. 
First, we define $Y(s', a') := Q_{R_t, \tau}^{\pi_k}(s', a') - \langle \varphi(s', a'), w^\star \rangle$ for any $(s', a') \in \mathcal{C}$. Then, we have that
\begin{align*}
    \hat{w}^\star &= G_\varsigma^\dagger \left[ \sum_{(s', a') \in \mathcal{C}} \varsigma(s', a') \left( Y(s', a') + \langle \varphi(s', a'), w^\star \rangle \right) \varphi(s', a') \right] \\
    &= G_\varsigma^\dagger \left[ \sum_{(s', a') \in \mathcal{C}} \varsigma(s', a') \varphi(s', a') \varphi(s', a')^\top \right] w^\star + G_\varsigma^\dagger \left[ \sum_{(s', a') \in \mathcal{C}} \varsigma(s', a') Y(s', a') \varphi(s', a') \right] \\
    &= w^\star + G_\varsigma^\dagger \left[ \sum_{(s', a') \in \mathcal{C}} \varsigma(s', a') Y(s', a') \varphi(s', a') \right] \,. \\
    \implies \hat{w}^\star - w^\star &= G_\varsigma^\dagger \left[ \sum_{(s', a') \in \mathcal{C}} \varsigma(s', a') Y(s', a') \varphi(s', a') \right] \,.
\end{align*}
Hence, for any arbitrary $(s, a) \in \cS \times \cA$, we have that
\begin{align*}
    | \langle \varphi(s, a), \hat{w}^\star - w^\star \rangle | &= \left| \sum_{(s', a') \in \mathcal{C}} \varsigma(s', a') Y(s', a') \varphi(s, a)^\top G_\varsigma^\dagger \varphi(s', a') \right| \\
    &\leq \sum_{(s', a') \in \mathcal{C}} | Y(s', a') | \varsigma(s', a') \left| \varphi(s, a)^\top G_\varsigma^\dagger \varphi(s', a') \right| \tag{Using triangle inequality}\\
    &\leq \left( \max_{(s', a') \in \mathcal{C}} | Y(s', a') | \right) \sum_{(s', a') \in \mathcal{C}} \varsigma(s', a') \left| \varphi(s, a)^\top G_\varsigma^\dagger \varphi(s', a') \right| \\
    &\leq \epsilon_{\text{bias}} \sum_{(s', a') \in \mathcal{C}} \varsigma(s', a') \left| \varphi(s, a)^\top G_\varsigma^\dagger \varphi(s', a') \right| \tag{Under~\cref{assumption:q_realizability}, $\abs{Y(s', a')} \leq \eps_{\bias}$}\\
    &= \epsilon_{\text{bias}} \sqrt{ \left( \mathbb{E}_{(s', a') \sim \varsigma} \bracks*{\left| \varphi(s, a)^\top G_\varsigma^\dagger \varphi(s', a') \right|} \right)^2 } \\
    &\leq \epsilon_{\text{bias}} \sqrt{ \mathbb{E}_{(s', a') \sim \varsigma} \left[ \left| \varphi(s, a)^\top G_\varsigma^\dagger \varphi(s', a') \right|^2 \right] } \tag{Using Jensen's inequality}\\
    &= \epsilon_{\text{bias}} \sqrt{ \varphi(s, a)^\top G_\varsigma^\dagger \left[ \sum_{(s', a') \in \mathcal{C}} \varsigma(s', a') \varphi(s', a') \varphi(s', a')^\top \right] G_\varsigma^\dagger \varphi(s, a) } \\
    &= \epsilon_{\text{bias}} \sqrt{ \varphi(s, a)^\top G_\varsigma^\dagger G_\varsigma G_\varsigma^\dagger \varphi(s, a) } \tag{Definition of $G_\varsigma$} \\
    &= \epsilon_{\text{bias}} \sqrt{ \varphi(s, a)^\top G_\varsigma^\dagger \varphi(s, a) } = \epsilon_{\text{bias}} \lVert \varphi(s, a) \rVert_{G_\varsigma^\dagger} \,.
\end{align*}
Putting everything together, we have that for any $(s, a) \in \cS \times \cA$,
\begin{equation*}
| Q_{R_t, \tau}^{\pi_k}(s, a) - \langle \varphi(s, a), w_k \rangle | \le \sqrt{\epsilon_{\text{bias}}} \left( 1 + \lVert \varphi(s, a) \rVert_{G_\varsigma^\dagger} \right) + \sqrt{\epsilon_{\text{opt}}}    \, .
\end{equation*}
Taking the maximum over $(s, a) \in \cS \times \cA$ finishes the proof.
\end{proof}

\begin{corollary}\label{corollary:ineact_npg_coreset}
Under~\cref{assumption:q_realizability,assumption:eps_opt}, for a fixed pseudo-reward $R_t$ and $\tau > 0$, the \texttt{Q-E-NPG} update in~\cref{eq:q_npg_update} with $\eta = \frac{1 - \gamma}{\tau}$ initialized at $\pi_0$ satisfies, for all $k \geq 0$,
\begin{align}
\supnorm{\ln(\pi^*_{t}) - \ln(\pi_{k+1})} &\leq \frac{2}{\tau} \parens*{\supnorm{Q^{\pi^*_t}_{t} - Q^{\pi_{k}}_{t}} + \eps_{\cC}} \, , \label{eq:qnpg_one_step}
\end{align}
and for all $K \geq 0$,
\begin{align*}
    \supnorm{Q^{\pi^*_t}_{t} - Q^{\pi_{K+1}}_{t}} &\leq \gamma^{K+1} \supnorm{Q^{\pi^*_t}_{t} - Q^{\pi_0}_{t}} + \frac{2 \, \gamma \, \eps_{\cC}}{(1 - \gamma)^2} \, , \\
    \supnorm{\ln(\pi^*_{t}) - \ln(\pi_{K+1})} &\leq \frac{2 \, \gamma^{K}}{\tau} \supnorm{Q^{\pi^*_t}_{t} - Q^{\pi_0}_{t}} + \frac{4 \, \eps_{\cC}}{\tau \, (1 - \gamma)^2} \, , \\
    V^{\pi^*_t}_{R_t, \tau}(\rho) - V^{\pi_{K+1}}_{R_t, \tau}(\rho) &\leq (2 + \gamma) \bracks*{\gamma^{K} \supnorm{Q^{\pi^*_t}_{t} - Q^{\pi_0}_{t}} + \frac{2 \, \eps_{\cC}}{(1 - \gamma)^2}} \, ,
\end{align*}
where $\pi^*_{t} \coloneqq \argmax_{\pi} V^{\pi}_{R_t, \tau}(\rho)$ and $\eps_\cC \coloneqq \eps_{\bias}(1 + \sqrt{d}) + \sqrt{d \, \eps_{\opt}}$.
\end{corollary}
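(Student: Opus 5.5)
The plan is to reduce \cref{corollary:ineact_npg_coreset} to the inexact \texttt{E-NPG} guarantee of \cref{lemma:inexact_npg}, supplying the uniform value-estimation error from \cref{lemma:coreset}.

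\textbf{Step 1: bound the estimation error.} Under the Kiefer--Wolfowitz coreset we have $\sup_{(s,a)}\norm{\varphi(s,a)}_{G_\varsigma^\dagger}\le\sqrt d$. Plugging this into \cref{lemma:coreset} gives, for every inner iterate,
\[
\supnorm{Q^{\pi_k}_{t}-\dpd{\varphi,w_k}}\le \eps_{\bias}(1+\sqrt d)+\sqrt{\eps_{\opt}}\le\eps_\cC .
\]
The relaxation $\sqrt{\eps_{\opt}}\le\sqrt{d\,\eps_{\opt}}$ is harmless. Since log-linear \texttt{Q-E-NPG} with $\eta=(1-\gamma)/\tau$ is exactly the inexact update \cref{eq:inexact_entropy_npg_update} with $\hat Q^{\pi_k}_{t}=\dpd{\varphi,w_k}$, we may take $\gE=\eps_\cC$.

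\textbf{Step 2: the one-step log-policy bound \cref{eq:qnpg_one_step}.} With $\eta=(1-\gamma)/\tau$ the exponent $1-\eta\tau/(1-\gamma)$ vanishes. Hence $\pi_{k+1}(\cdot|s)=\sigma^*_\tau(\hat Q^{\pi_k}_t(s,\cdot))$, while $\pi^*_t(\cdot|s)=\sigma^*_\tau(Q^{\pi^*_t}_t(s,\cdot))$. By \cref{eq:sup_to_sup_lipschitz} of \cref{proposition:pistarl_lipschitz}, the log-policy gap is at most $\frac2\tau\supnorm{Q^{\pi^*_t}_t-\hat Q^{\pi_k}_t}$. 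The triangle inequality then splits this into $\frac2\tau\bigl(\supnorm{Q^{\pi^*_t}_t-Q^{\pi_k}_t}+\eps_\cC\bigr)$.

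\textbf{Step 3: the $K$-step bounds.} I would specialize \cref{lemma:inexact_npg} with $1-\eta\tau=\gamma$, which is how the exact-update versions in \cref{lemma:phi_tracking_main} read it. The $\ln\pi$ term in $C$ has coefficient $1-\eta\tau/(1-\gamma)=0$, so $C=\supnorm{Q^{\pi^*_t}_t-Q^{\pi_0}_t}$. The error floor is
\[
E=\frac{2\eps_\cC}{1-\gamma}\Bigl(1+\frac{\gamma}{1-\gamma}\Bigr)=\frac{2\eps_\cC}{(1-\gamma)^2}.
\]
Substituting into the three displays of \cref{lemma:inexact_npg} gives the $Q$ bound $\gamma^{K+1}C+2\gamma\eps_\cC/(1-\gamma)^2$, the log bound $\frac{2}{\tau}\gamma^K C+4\eps_\cC/(\tau(1-\gamma)^2)$, and the value bound $(2+\gamma)[\gamma^K C+E]$. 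These match the claims.

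\textbf{Main obstacle.} The contraction factor is the delicate point. Literally, $1-\eta\tau$ equals $\gamma$ only under the convention that \cref{lemma:inexact_npg}'s rate with $\eta=(1-\gamma)/\tau$ is $\gamma^K$, which is Cen et al.'s form. If that is in doubt, I would re-derive the $Q$ recursion directly. Combine Step 2 with the soft-policy-iteration contraction $\supnorm{Q^{\pi^*}-Q^{\pi_{k+1}}}\le\gamma\supnorm{Q^{\pi^*}-Q^{\pi_k}}+\frac{2\gamma\eps_\cC}{1-\gamma}$, which follows from a performance-difference argument for the approximate greedy step. Unrolling this geometric recursion yields the stated floor $2\gamma\eps_\cC/(1-\gamma)^2$.
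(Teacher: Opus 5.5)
Your proposal is correct and follows the paper's proof essentially step for step. The paper also applies \cref{lemma:coreset} with the Kiefer--Wolfowitz coreset to get $\gE=\eps_\cC$, uses \cref{eq:sup_to_sup_lipschitz} plus the triangle inequality for \cref{eq:qnpg_one_step}, and specializes \cref{lemma:inexact_npg} with $C=\supnorm{Q^{\pi^*_t}_t-Q^{\pi_0}_t}$ and $E=2\eps_\cC/(1-\gamma)^2$. The ``obstacle'' you flag is not one: with $\eta=(1-\gamma)/\tau$ one has $1-\eta\tau=\gamma$ identically, so you need neither a convention nor the fallback soft-policy-iteration derivation (which is nonetheless also valid).
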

\begin{proof}
Under~\cref{assumption:slater_condition,assumption:q_realizability,assumption:eps_opt}, using~\cref{lemma:coreset}, we have that for all $k \in [K]$, 
\begin{align}
    \norm*{\dpd{\varphi(s, a), w_k} - Q_{R_t, \tau}^{\pi_k}(s, a)}_\infty \leq \eps_{\bias} \, \parens*{1 +\max_{s,a}\lVert \varphi(s, a) \rVert_{G_\varsigma^\dagger}} + \sqrt{\eps_{\opt}} \,.
\end{align}
Furthermore, by the Kiefer-Wolfowitz theorem~\citep{kiefer1960equivalence}, there exists a coreset $\gC$ such that $\gC \leq \frac{d (d + 1)}{2}$ and distribution $\varsigma \in (\Delta_A)^S$ that ensures $\max_{s, a} \|\varphi(s,a)\|_{G^\dagger} \leq \sqrt{d}$. 
Hence, we have that
\begin{align*}
\norm*{\dpd{\varphi(s, a), w_k} - Q_{R_t, \tau}^{\pi_k}(s, a)}_\infty \leq \eps_{\bias} \, \parens*{1 + \sqrt{d}} + \sqrt{d \, \eps_{\opt}}  \coloneqq \eps_\cC
\end{align*}
Since the \texttt{Q-E-NPG} update in~\cref{eq:q_npg_update} is equivalent to the inexact \texttt{NPG} update in~\cref{eq:inexact_npg_update_appendix} with $\eta = \frac{1 - \gamma}{\tau}$ and $\diamond = R_t$, the last three inequalities follow by~\cref{lemma:inexact_npg} with $\diamond = R_t$ and $E = \eps_{\cC}$.

Finally for~\cref{eq:qnpg_one_step}: for each $s$, $\pi_{k+1}(\cdot|s) = \sigma^*_\tau(\dpd{\varphi(s,\cdot), w_k})$ and $\pi^*_t(\cdot|s) = \sigma^*_\tau(Q^{\pi^*_t}_t(s,\cdot))$, so by~\cref{eq:sup_to_sup_lipschitz} of~\cref{proposition:pistarl_lipschitz} and the triangle inequality, $\supnorm{\ln \pi^*_t - \ln \pi_{k+1}} \leq \frac{2}{\tau}\parens*{\supnorm{Q^{\pi^*_t}_t - Q^{\pi_k}_t} + \eps_\cC}$.
\end{proof}

\subsection{Convergence Results Under Smoothness}
\begin{corollary}\label{corollary:smooth_last_iterate_K_ln_eps_FA}
Under~\cref{assumption:slater_condition,assumption:q_realizability,assumption:eps_opt}, for a given $\eps \in (0, 1)$ assuming $\eps_{\bias} \leq \gO(\eps^{8})$ and $\eps_{\opt} \leq \gO(\eps^{16})$,~\cref{algorithm:dd} with (i) $\tau = \gO(\eps)$, $T = \gO(\nicefrac{1}{\eps^4})$, $\beta = \gO(\eps)$, $\mu = 0$, $\lambda_0 = 0$ and (ii) using the \texttt{Q-E-NPG} update in~\cref{eq:inexact_entropy_npg_update} with $\eta = \frac{1 - \gamma}{\tau}$ and $K = J = \gO(\ln(\nicefrac{1}{\eps}))$  returns an $\gO(\eps)$-optimal policy for~\cref{problem:cmdp} after $\widetilde{\gO}\parens*{\nicefrac{1}{\eps^4}}$ gradient evaluations.
\end{corollary}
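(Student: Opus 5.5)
The plan is to mirror the proof of \cref{theorem:meta_smooth}. The only change is that the exact \texttt{E-NPG} guarantee is replaced by its inexact counterpart, \cref{corollary:ineact_npg_coreset}; this adds an additive floor driven by $\eps_\cC = \eps_{\bias}(1+\sqrt{d}) + \sqrt{d\,\eps_{\opt}}$.

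\textbf{Step 1: uniform bias bound.} For each outer iteration $t$, I would run $K$ steps of \texttt{Q-E-NPG} warm-started at $\pi_{t-1}$ with pseudo-reward $R(\lambda_t)$. Since $\lambda_t \in \Lambda$, we have $\supnorm{Q^{\pi^*_t}_t - Q^{\pi_{t-1}}_t} = \gO(1/(1-\gamma)^2)$ uniformly in $t$ (for $\tau \le 1$). The last inequality of \cref{corollary:ineact_npg_coreset} then gives
\[
\max_\pi V^{\pi}_{R(\lambda_t),\tau}(\rho) - V^{\pi_t}_{R(\lambda_t),\tau}(\rho) \le \gO(\gamma^{K}) + \gO(\eps_\cC).
\]
Plugging this into \cref{lemma:bound_bias_subgradient} yields
\[
\delta_t \le \gO\parens*{\sqrt{(\gamma^K + \eps_\cC)/\tau}}.
\]
The first term is made $\gO(1/T)$ by choosing $K = \gO(\ln(T/\tau)) = \gO(\ln(1/\eps))$, exactly as in \cref{theorem:meta_smooth}.

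\textbf{Step 2: dual suboptimality.} I would apply \cref{lemma:meta_lemma_smooth} with the step size capped at $\beta \le 1/G_\tau = \tau(1-\gamma)^3/m$. (The statement's $\beta = \gO(\eps)$ is consistent with this, since $\tau = \gO(\eps)$.) This gives
\[
\cD(\bar\lambda_T) - \cD(\lstar) \le \gO\parens*{\frac{1}{\tau T}} + \gO\parens*{\frac{1}{T}} + \gO\parens*{\abs{\Lambda}\sqrt{\eps_\cC/\tau}} =: \Delta.
\]
\Cref{lemma:dual_optimality_to_sub} then bounds the reward suboptimality and constraint violation of $\pistarblT$ by $\gO(\sqrt{\Delta/\tau}) + \gO(\tau)$.

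\textbf{Step 3: parameter choice.} With $\tau = \gO(\eps)$ and $T = \gO(\eps^{-4})$, the first two terms of $\Delta$ contribute $\gO(\eps)$. The approximation term contributes $\gO\parens*{\eps_\cC^{1/4}\tau^{-3/4}}$. Requiring this to be $\gO(\eps)$ means $\eps_\cC \le \gO(\eps^{7})$, which holds under the assumed $\eps_{\bias} \le \gO(\eps^8)$ and $\eps_{\opt} \le \gO(\eps^{16})$ (so that $\sqrt{\eps_{\opt}} \le \gO(\eps^8)$), treating $d$ as a constant. The stated orders leave slack.

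\textbf{Step 4: recovery.} This is the step I expect to be the main obstacle. \Cref{proposition:transfer_rate} (the recovery step) was used with exact \texttt{E-NPG}, but now the recovered $\pi_J$ only approaches $\pistarblT$ up to the FA floor. I would bound the loss directly using \cref{lemma:generic_vdl_upper_bound}, \cref{eq:vdl_upper_bound_lag}, with $\lambda = \bar\lambda_T$ and $\diamond \in \{r, c_i\}$. This gives
\[
\abs{V^{\pistarblT}_\diamond - V^{\pi_J}_\diamond} \le \gO\parens*{\sqrt{(\gamma^J + \eps_\cC)/\tau}}.
\]
This is $\gO(\eps)$ once $J = \gO(\ln(1/\eps))$ and $\eps_\cC \le \gO(\eps^3)$, which is weaker than Step 3 and hence already satisfied. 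If the cruder $\sup$-norm route through the log-policy bound is needed instead, it costs $\eps_\cC/\tau$, which is also fine.

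\textbf{Counting.} The total number of gradient evaluations is $TK + J = \gO(\eps^{-4}\ln(1/\eps)) = \widetilde{\gO}(\eps^{-4})$.
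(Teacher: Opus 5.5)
Your proposal is correct and follows the paper's proof essentially step for step: a uniform bias bound from \cref{lemma:bound_bias_subgradient} and \cref{corollary:ineact_npg_coreset} with $K = \gO(\ln(T/\tau))$, then \cref{lemma:meta_lemma_smooth} with $\beta = 1/G_\tau$, then \cref{lemma:dual_optimality_to_sub}. Your Step 4 recovery argument is exactly the paper's \cref{proposition:transfer_rate_fa}. The only difference is that you keep the inner-loop floor at $\gO(\eps_\cC)$, which is what \cref{corollary:ineact_npg_coreset} actually gives, rather than the paper's looser $\gO(\eps_\cC/\tau)$. You therefore need only $\eps_\cC \le \gO(\eps^7)$ instead of $\gO(\eps^8)$, and the stated assumptions cover either requirement. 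One small point: to get the $\gO(1/(\tau T))$ term you need $\beta = \Theta(1/G_\tau)$, not merely $\beta \le 1/G_\tau$.
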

\begin{proof}
This proof proceeds similarly to~\cref{theorem:meta_smooth}.  
We first construct a uniform bound of the dual gradient error. By~\cref{lemma:bound_bias_subgradient}, for all $t \geq 0$, we can relate the approximate gradient to the following optimization term:
\begin{equation*}
   \norm{\dgt - \bdgt} \leq  \frac{\sqrt{ 2 \, m }}{(1 - \gamma)^{\nicefrac{3}{2}} \, \sqrt{\tau}} \sqrt{ \max_{\pi} V^{\pi}_{R(\lambda_t), \tau}(\rho) - V^{\pi_{t}}_{R(\lambda_t), \tau}(\rho)}
\end{equation*}
To characterize the sub-optimality gap $\max_{\pi} V^{\pi}_{R(\lambda_t), \tau}(\rho) - V^{\pi_{t}}_{R(\lambda_t), \tau}(\rho)$, we instantiate the \texttt{PolicyOptimization} method in~\cref{algorithm:dd} using \texttt{Q-E-NPG} update in~\cref{eq:q_npg_update}.
At iteration $t$, by~\cref{corollary:ineact_npg_coreset}, with $\diamond = R(\lambda_t)$, $\pi_0 = \pi_{t-1}$, $\eta = \frac{1 - \gamma}{\tau}$, the policy $\pi_{t} = \pi_{K}$ satisfies
\begin{align*}
    \max_{\pi} V^{\pi}_{R(\lambda_t), \tau}(\rho) - V^{\pi_{t}}_{R(\lambda_t), \tau}(\rho) \leq \gO\parens*{\frac{\tau}{T^2} + \frac{\eps_{\cC}}{\tau}} \, .
\end{align*}
after $K \geq \gO\parens*{\ln\parens*{\frac{T}{\tau}}}$ iterations where $\eps_{\cC} = \gO(\eps_{\bias} + \sqrt{\eps_{\opt}})$. 
Hence, for all $t \in [T]$
\begin{equation}
   \norm{\dgt - \bdgt} \leq \gO\parens*{\frac{1}{T} + \frac{\sqrt{\eps_{\cC}}}{\tau}} \coloneqq \delta_t \, .\label{eq:q_npg_gradient_bound}
\end{equation}
and let $\hat{\delta} = \max_{t \in [T]} \delta_t$.

We now characterize the dual convergence using the above uniform gradient error.  By~\cref{lemma:dual_smooth}, $\cD(\lambda)$ is $G_\tau = \frac{m}{\tau \, (1 - \gamma)^3}$-smooth.
Hence, under~\cref{assumption:slater_condition}, by~\cref{lemma:meta_lemma_smooth} with $\delta_t = \hat{\delta}$,~\cref{algorithm:dd} using step-size $\beta = \frac{1}{G_\tau}$, $\mu = 0$ and $\lambda_0 =0$ results in the following convergence:
\begin{align*}
\cD(\bar{\lambda}_T) - \cD(\lstar) \leq \frac{G_\tau \, \normsq{\lstar}}{2 \, T}  + \abs{\Lambda} \, \hat{\delta}  \leq \gO\parens*{\frac{G_\tau}{T}} = \gO\parens*{\frac{1}{\tau \, T} + \frac{\sqrt{\eps_{\cC}}}{\tau}}
\end{align*}
Thus, by~\cref{lemma:dual_optimality_to_sub} with $\Delta = \gO\parens*{\frac{1}{\tau \, T} + \frac{\sqrt{\eps_{\cC}}}{\tau}}$, the average dual variable $\bar{\lambda}_T$ returned by~\cref{algorithm:dd} satisfies
\begin{align*}
V^{\pi^*}_r(\rho) - V^{\pistarblT}_{r}(\rho) &\leq \tilde{\gO}\parens*{\frac{1}{{\sqrt{T} \, \tau}} + \frac{\sqrt[4]{\eps_{\cC}}}{\sqrt{\tau}}} + \gO(\tau)  \\
\max\braces*{b_i - V^{\pistarblT}_{c_i}(\rho), 0} &\leq \tilde{\gO}\parens*{\frac{1}{{\sqrt{T} \, \tau}} + \frac{\sqrt[4]{\eps_{\cC}}}{\sqrt{\tau}}} \quad \forall i \in [m] \, .
\end{align*}
Assuming that $\eps_{\bias} \leq \gO(\eps^{8})$ and $\eps_{\opt} \leq \gO(\eps^{16})$, setting $\tau = \gO(\eps)$ and $T = \gO(\nicefrac{1}{\eps^4})$ ensures
\begin{align*}
V^{\pi^*}_r(\rho) - V^{\pistarblT}_{r}(\rho) \leq \gO(\eps) \quad \text{ and } \max\braces*{b_i - V^{\pistarblT}_{c_i}(\rho), 0} \leq \gO(\eps) \quad \forall i \in [m] \, .
\end{align*}
Then by~\cref{proposition:transfer_rate_fa}, applying \texttt{Q-E-NPG} to the pseudo-reward $R(\bar{\lambda}_T)$ recovers a policy $\pi_N$ that is also an $\gO(\eps)$-optimal policy after $N=\gO\parens*{\ln\parens*{\frac{1}{\eps}}}$ iterations.
Therefore, the total number of gradient evaluations is $\tilde{\gO}\parens*{\frac{1}{\eps^4}}$
\end{proof}

\begin{restatable}{corollary}{smoothddlastitkoneeratecorollary}\label{corollary:smooth_last_iterate_K_1_FA}
Under~\cref{assumption:slater_condition,assumption:q_realizability,assumption:eps_opt}, for a given $\eps \in (0, 1)$ assuming $\eps_{\bias} \leq \gO(\eps^4)$ and $\eps_{\opt} \leq \gO(\eps^8)$,~\cref{algorithm:dd} with (i) $\tau = \gO(\eps)$, $T = \gO(\nicefrac{1}{\eps^7})$, $\beta = \gO(\eps^4)$, $\mu = 0$, $\lambda_0 = 0$ and (ii) using the \texttt{Q-E-NPG} update in~\cref{eq:inexact_entropy_npg_update} with $\eta = \frac{1 - \gamma}{\tau}$, $K = 1$, $J = \gO(\ln(\nicefrac{1}{\eps}))$ returns an $\gO(\eps)$-optimal policy for~\cref{problem:cmdp} after $\widetilde{\gO}\parens*{\nicefrac{1}{\eps^7}}$ gradient evaluations.
\end{restatable}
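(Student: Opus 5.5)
The plan is to mirror the proof of \cref{theorem:smooth_last_iterate_K_1}, replacing the exact \texttt{E-NPG} contraction with its inexact counterpart from \cref{corollary:ineact_npg_coreset}. The approximation error $\eps_{\cC} = \gO(\eps_{\bias} + \sqrt{\eps_{\opt}})$ is carried as an additive floor through the tracking recursion.

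First, by \cref{lemma:bias_bounded_by_phi}, $\norm{\dgt - \bdgt} \leq \frac{4\sqrt{m} A}{\tau(1-\gamma)^2}\Phi_t$. This bound only uses the value difference lemma, so it holds for any policy $\pi_t$. Plugging it into \cref{lemma:meta_lemma_smooth} with $\beta \leq \tau(1-\gamma)^3/m$, $\mu=0$ and $\lambda_0=0$ gives \cref{eq:smoothness_recursion_phi}. The dual gap is then $\gO(1/(\beta T))$ plus $\gO(\frac{1}{\tau T}\sum_t \Phi_t)$.

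Second, I would derive the inexact tracking recursion. For $K=1$, apply \cref{corollary:ineact_npg_coreset} with $\diamond = R_{t+1}$ and $\pi_0 = \pi_t$. This gives
\[
\supnorm{Q^{\pistartt}_{t+1} - Q^{\pi_{t+1}}_{t+1}} \leq \gamma \supnorm{Q^{\pistartt}_{t+1} - Q^{\pi_t}_{t+1}} + \gO(\eps_{\cC}),
\]
together with the analogous log-policy bound with additive $\gO(\eps_{\cC}/\tau)$. Feeding these into \cref{lemma:generic_phi_recurse} with $E=\eps_{\cC}$, as in \cref{lemma:phi_tracking_main}, handles the change of target from $\pistart$ to $\pistartt$ through \cref{lemma:qstar_lipschitz} and \cref{proposition:pistarl_lipschitz}. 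The result is
\[
\Phi_{t+1} \leq \tfrac{1+\gamma}{2}\Phi_t + \gO(\beta) + \gO(\eps_{\cC}),
\]
where \cref{eq:lt_ltt_ub} is used for the drift. Note that the $\tau/4$ weighting in $\Phi_t$ cancels the $1/\tau$ in the log-policy error floor, so the floor is $\gO(\eps_{\cC})$ and not $\gO(\eps_{\cC}/\tau)$. \cref{proposition:recursion-const} then gives $\frac1T\sum_t\Phi_t \leq \gO(\frac{1}{T}) + \gO(\beta) + \gO(\eps_{\cC})$.

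Third, I would combine the pieces. The dual gap becomes $\Delta = \gO(\frac{1}{\beta T} + \frac{1}{\tau T} + \frac{\beta}{\tau} + \frac{\eps_{\cC}}{\tau})$. \cref{lemma:dual_optimality_to_sub} then gives suboptimality and violation of order $\frac{1}{\sqrt{\tau\beta T}} + \frac{\sqrt\beta}{\tau} + \frac{\sqrt{\eps_{\cC}}}{\tau} + \tau$. With $\tau=\gO(\eps)$, $\beta=\gO(\eps^4)$ and $T=\gO(1/\eps^7)$, every term is $\gO(\eps)$ provided $\sqrt{\eps_{\cC}} \leq \eps^2$, i.e.\ $\eps_{\cC} \leq \eps^4$. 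That requirement is exactly $\eps_{\bias} \leq \gO(\eps^4)$ and $\eps_{\opt} \leq \gO(\eps^8)$. Finally, \cref{proposition:transfer_rate_fa} with $J = \gO(\ln(1/\eps))$ \texttt{Q-E-NPG} steps recovers a deployable policy, for a total of $\tilde{\gO}(1/\eps^7)$ gradient evaluations.

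The main obstacle is the second step. I must check that the inexact error enters $\Phi$ additively with the $\tau$-scaling described, and that the constants $\gamma^{K-1}(1+\gamma)/2$ survive when $E>0$. This relies on \cref{lemma:generic_phi_recurse} accepting an additive $E$ term, which I assume, as its use with $E=0$ suggests. If the floor instead came out as $\eps_{\cC}/\tau$, the bias condition would tighten by a factor of $\eps$. I would resolve this by tracking the $Q$-part and log-part of $\Phi$ separately, as above.
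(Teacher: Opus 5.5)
Your proposal is correct and matches the paper's proof essentially step for step. The chain is the same: the bias bound of \cref{lemma:bias_bounded_by_phi} goes into \cref{lemma:meta_lemma_smooth}; the $K=1$ bounds of \cref{corollary:ineact_npg_coreset} go into \cref{lemma:generic_phi_recurse}; \cref{proposition:recursion-const} averages the $\Phi$-recursion; and \cref{lemma:dual_optimality_to_sub} and \cref{proposition:transfer_rate_fa} finish with the same parameter choices. The only difference is a constant: the paper takes $E = \frac{2\eps_{\cC}}{(1-\gamma)^2}$ rather than $E = \eps_{\cC}$, which gives the floor $\frac{(3-\gamma)\eps_{\cC}}{(1-\gamma)^3}$, i.e.\ your $\gO(\eps_{\cC})$.

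Your concern about the second step is unfounded. \cref{lemma:generic_phi_recurse} is stated for arbitrary $E \geq 0$ with the same contraction factor $\frac{(1+\gamma)\gamma^{K-1}}{2}$. So the floor is $\gO(\eps_{\cC})$, and $\eps_{\cC} \leq \gO(\eps^4)$ is exactly the condition the paper imposes.
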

\begin{proof}
This proof proceeds similarly to~\cref{theorem:smooth_last_iterate_K_1}. Starting from~\cref{eq:smoothness_recursion_phi} we have
\begin{align*}
   \cD(\bar{\lambda}_T) - \cD(\lstar) &\leq  \frac{\normsq{\lstar}}{2 \, \beta \, T}  +  \frac{4 \, \sqrt{m} \, A \, \abs{\Lambda}}{\tau \, (1 - \gamma)^2 T} \sum_{t=0}^{T-1} \Phi_t \numberthis \label{eq:smoothness_recursion_phi_la}
\end{align*}

Next, we derive a recursive bound on the optimization error, relating iteration $t+1$ to iteration $t$ with an additional drift term between $\lambda_t$ and $\lambda_{t+1}$ and error term due to the linear approximation. 
By~\cref{corollary:ineact_npg_coreset} with $\pi_0 = \pi_t$, running $K=1$ iterations of \texttt{Q-E-NPG} update in~\cref{eq:q_npg_update} with $\eta = \frac{1 - \gamma}{\tau}$ returns a policy $\pi_{t+1} = \pi_K$ such that
\begin{align}
\supnorm{Q^{\pistartt}_{t+1} - Q^{\pi_{t+1}}_{t+1}} &\leq \gamma \, \supnorm{Q^{\pistartt}_{t+1} - Q^{\pi_{t}}_{t+1}} + \frac{2 \, \eps_{\cC}}{(1 - \gamma)^2} \label{eq:linear_tracking_bound1} \\
\supnorm{\ln(\pi^*_{t+1}) - \ln(\pi_{t+1})} &\leq \frac{2}{\tau}  \supnorm{Q^{\pistartt}_{t+1} - Q^{\pi_{t}}_{t+1}} + \frac{4 \,  \eps_{\cC}}{\tau \, (1 - \gamma)^2} \label{eq:linear_tracking_bound2}
\end{align}
As a result, by~\cref{lemma:generic_phi_recurse} with $E = \frac{2 \, \eps_\cC}{(1 - \gamma)^2}$ we have
\begin{align*}
    \Phi_{t+1} &\leq \frac{1+\gamma}{2} \, \Phi_t + \frac{m \, (1+\gamma)}{(1 - \gamma)^2} \supnorm{\lt - \ltt} + \frac{(3 - \gamma) \, \eps_\cC}{(1 - \gamma)^3} \, \numberthis \label{eq:phi_recurse_fa}\\
    &\leq \frac{1+\gamma}{2} \, \Phi_t + \frac{m^{3/2} \, (1+\gamma)}{(1 - \gamma)^3} \, \beta + \frac{(3 - \gamma) \, \eps_\cC}{(1 - \gamma)^3} \tag{Using~\cref{eq:lt_ltt_ub}}
\end{align*}
By~\cref{proposition:recursion-const} with $\kappa = \frac{1+\gamma}{2}$ and $\nu = \frac{m^{3/2} (1+\gamma)}{(1 - \gamma)^3} \beta + \frac{(3 - \gamma) \eps_\cC}{(1 - \gamma)^3}$,
\begin{align*}
    \frac{1}{T} \sum_{t=0}^{T-1} \Phi_t &\leq \frac{2 \, \Phi_0}{(1 - \gamma) \, T} + \frac{2 \, m^{3/2} \, (1+\gamma) \, \beta}{(1 - \gamma)^4} + \frac{2 \, (3 - \gamma) \, \eps_\cC}{(1 - \gamma)^4} \, .
\end{align*}
Using the above bound with~\cref{eq:smoothness_recursion_phi}, we have
\begin{align*}
   \cD(\bar{\lambda}_T) - \cD(\lstar) &\leq  \frac{\normsq{\lstar}}{2 \, \beta \, T}  +  \frac{4 \, \sqrt{m} \, A \, \abs{\Lambda}}{\tau \, (1 - \gamma)^2 T} \sum_{t=0}^{T-1} \Phi_t  \\
  &\leq \gO\parens*{\frac{1}{\beta \, T}} + \gO\parens*{\frac{1}{\tau \, T}} + \gO\parens*{\frac{\beta}{\tau}} + \gO\parens*{\frac{\eps_{\cC}}{\tau}}
\end{align*}
Using the above dual sub-optimality guarantee with~\cref{lemma:dual_optimality_to_sub}, we have
\begin{align*}
    V^{\pi^*}_r(\rho) - V^{\pistarblT}_r(\rho) &\leq \gO\parens*{\frac{1}{\sqrt{\tau \, \beta \, T}}} + \gO\parens*{\frac{\sqrt{\beta}}{\tau}} + \gO\parens*{\frac{\sqrt{\eps_{\cC}}}{\tau}} + \gO(\tau) \\
   \max\braces*{b_i - V^{\pistarblT}_{c_i}(\rho), 0} &\leq  \gO\parens*{\frac{1}{\sqrt{\tau \, \beta \, T}}} + \gO\parens*{\frac{\sqrt{\beta}}{\tau}} + \gO\parens*{\frac{\sqrt{\eps_{\cC}}}{\tau}}+ \gO(\tau)
   \quad \forall i \in [m] \, .
\end{align*}
Since $\eps_\cC = \gO(\eps_{\bias} + \sqrt{\eps_{\opt}})$, assuming $\eps_{\bias} \leq \gO(\eps^4)$ and $\eps_{\opt} \leq \gO(\eps^8)$, setting $\tau = \gO(\eps)$, $\beta = \gO(\eps^4)$ and $T = \gO(\nicefrac{1}{\eps^7})$ ensures
\begin{align*}
V^{\pi^*}_r(\rho) - V^{\pistarblT}_{r}(\rho) \leq \gO(\eps) \quad \text{ and } \max\braces*{b_i - V^{\pistarblT}_{c_i}(\rho), 0} \leq \gO(\eps) \quad \forall i \in [m] \, .
\end{align*}
Then by~\cref{proposition:transfer_rate_fa}, applying \texttt{Q-E-NPG} update to the pseudo-reward $R(\bar{\lambda}_T)$ recovers a policy $\pi_N$ that is also $\gO(\eps)$-optimal after $N=\gO\parens*{\ln\parens*{\nicefrac{1}{\eps}}}$ iterations. Furthermore, following the same argument as~\cref{theorem:smooth_last_iterate_K_1}, the total number of gradient evaluations is $\gO(\nicefrac{1}{\eps^7})$.
\end{proof}

\begin{corollary}\label{corollary:smooth_last_iterate_K_geq_1_fa}
Under~\cref{assumption:slater_condition,assumption:q_realizability,assumption:eps_opt}, for a given $\eps \in (0, 1)$ assuming $\eps_{\bias} \leq \gO(\eps^4)$ and $\eps_{\opt} \leq \gO(\eps^8)$,~\cref{algorithm:dd} with (i) $\tau = \gO(\eps)$, $T = \tilde{\gO}(\nicefrac{1}{\eps^{4 + 3/2^K}})$, $\beta = \gO\parens*{\eps^\frac{3 + 2^K}{2^K}}$, $\mu = 0$, $\lambda_0 = 0$ (ii) using the \texttt{Q-E-NPG} update in~\cref{eq:q_npg_update} with $\eta = \frac{1 - \gamma}{\tau}$, $K \geq 1$, $J = \gO(\ln(\nicefrac{1}{\eps}))$ and
(iii) the initial policy satisfies $L_0 \leq B$ for a constant $B > 0$,~\cref{algorithm:dd} return an
$\gO(\eps)$-optimal policy for~\cref{problem:cmdp} 
after $\widetilde{\gO}(\nicefrac{1}{\eps^{4 + 3/2^K}})$ gradient evaluations.
\end{corollary}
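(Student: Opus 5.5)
The plan is to mirror the proof of \cref{theorem:smooth_last_iterate_K_geq_1}, carrying an additional function-approximation error $\eps_\cC \coloneqq \eps_{\bias}(1+\sqrt{d}) + \sqrt{d\,\eps_{\opt}}$ from \cref{corollary:ineact_npg_coreset} through every step.

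\textbf{Step 1: reduce to a uniform bound on $L_t$.} I start from \cref{eq:smoothness_recursion_L}, obtained by combining \cref{lemma:bias_bounded_by_L} with \cref{lemma:meta_lemma_smooth}:
\[
\cD(\bar\lambda_T)-\cD(\lstar) \le \frac{\normsq{\lstar}}{2\beta T} + \frac{\sqrt{m}\,A\,\abs{\Lambda}}{(1-\gamma)^2}\max_t L_t .
\]
This step uses only the structure of the dual and is unaffected by function approximation. Everything therefore reduces to a uniform bound on $L_t$.

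\textbf{Step 2: perturbed recursion for $L_t$.} I would use the one-step bound \cref{eq:qnpg_one_step} in place of its exact counterpart. Invoking \cref{lemma:genericLrecurse} with $E=\eps_\cC$ then gives a recursion of the form
\[
L_{t+1} \le C_1^{n-1}\Big(\tfrac{C_2\beta}{\tau} + L_t + e\Big)^n, \qquad n=2^K,\quad e = \gO\big(\eps_\cC/\tau\big).
\]
The drift term is bounded exactly as before via \cref{eq:lt_ltt_ub}.

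\textbf{Step 3 (main obstacle): invariance of a small neighbourhood.} Condition (iii) only gives $L_0\le B$ for a constant $B$, not the tiny initial error used earlier. I would first run $K_0=\gO(\ln(1/\eps))$ \texttt{Q-E-NPG} iterations on $R(\lambda_0)$. By the second inequality of \cref{corollary:ineact_npg_coreset}, the $\gamma^{K_0}$ term drives $L_0$ down to $\gO((\beta/\tau)^n) + \gO(\eps_\cC/\tau)$; the constant $B$, together with the bounded reward, bounds the initial $Q$-gap.

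Next I apply \cref{lemma:quadratic_recursion} with $\nu = C_2\beta/\tau$ and error $e$. The conditions to verify are:
\begin{itemize}
\item $C_1C_2\beta/\tau\le 1/4$, which holds since $\beta=\gO(\eps^{(3+n)/n})$ and $\tau=\gO(\eps)$;
\item $e$ is small compared to $\nu$, or at least below the basin where the squaring map is contractive.
\end{itemize}
The difficulty is that $\eps_\cC/\tau$ is not raised to the power $n$, so it need not stay below $\nu$. I expect the lemma to yield
\[
\max_t L_t = \gO\big((\beta/\tau)^n + \eps_\cC/\tau\big).
\]
With $\eps_{\bias}\le\gO(\eps^4)$ and $\eps_{\opt}\le\gO(\eps^8)$ we get $\eps_\cC/\tau=\gO(\eps^3)$. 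This is the same order as the required $(\beta/\tau)^n\sqrt{\tau}$-scale target, so the basin condition should hold up to constants. If it does not, I would shrink $\beta$ slightly or absorb $e$ into $\nu$.

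\textbf{Step 4: conclude and count.} Plugging into Step 1 gives $\Delta = \gO\big(1/(\beta T) + (\beta/\tau)^n + \eps_\cC/\tau\big)$. \cref{lemma:dual_optimality_to_sub} then bounds both the optimality gap and the constraint violation by $\gO(\sqrt{\Delta/\tau})+\gO(\tau)$. With $\tau=\gO(\eps)$:
\begin{itemize}
\item $\eps_\cC/\tau^2 = \gO(\eps^2)$, so the function-approximation term contributes $\gO(\eps)$;
\item the choices of $\beta$ and $T=\gO(\eps^{-(4+3/2^K)})$ make the remaining terms $\gO(\eps)$, exactly as in \cref{theorem:smooth_last_iterate_K_geq_1}.
\end{itemize}
Finally, \cref{proposition:transfer_rate_fa} recovers a deployable policy in $J=\gO(\ln(1/\eps))$ steps. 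The total cost is $TK+K_0+J=\widetilde{\gO}(\eps^{-(4+3/2^K)})$ gradient evaluations.
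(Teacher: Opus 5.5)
Your overall plan matches the paper's: reduce to $\max_t L_t$ via \cref{eq:smoothness_recursion_L}, push the approximation error through \cref{lemma:genericLrecurse}, show that a neighbourhood of radius $\gO((\beta/\tau)^n+\eps_\cC/\tau)$ is invariant, then match parameters and recover with \cref{proposition:transfer_rate_fa}. Your explicit warm start is a sensible reading of (iii); the paper instead takes $B$ to be that tiny radius and assumes $L_0\le B$.

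The gap is in Steps~2--3. For $E>0$, \cref{lemma:genericLrecurse} does not give the perturbed quadratic recursion unconditionally. Part~(ii) requires
\begin{equation*}
\frac{\gamma}{1-\gamma}\supnorm{\ln(\pi^*_{t+1})-\ln(\pi_t)}+\frac{2\gamma E}{\tau(1-\gamma)}\le\frac14 .
\end{equation*}
The reason is that each inner step only gives $x_{k+1}\le x_k^2+\tilde e$, with $x_k=\tfrac{\gamma}{1-\gamma}\supnorm{\ln(\pi^*_{t+1})-\ln(\hat\pi_k)}$, and the additive errors compound unless $x_0+\tilde e\le\tfrac14$. So $L_{t+1}\le C_1^{n-1}(\nu+L_t)^n+4E/\tau$ is available only once $L_t$ is already small. \cref{lemma:quadratic_recursion} presumes the recursion at every $t$, so it cannot be applied as you do.

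The ``basin'' you allude to is this hypothesis, not the condition $2^nC_1^{n-1}(\nu+e)^{n-1}\le1$, and it must be re-verified at every outer step inside the induction:
\begin{enumerate}
\item assume $L_t\le B\coloneqq 2^nC_1^{n-1}(\nu+e)^n+e$;
\item bound $\supnorm{\ln(\pi^*_{t+1})-\ln(\pi_t)}\le\nu+B$ via \cref{eq:ln_star_tt_ln_star_t_ub,eq:lt_ltt_ub};
\item check the displayed condition;
\item only then apply the recursion to get $L_{t+1}\le B$.
\end{enumerate}
This is exactly what the paper does with \cref{lemma:invariant_recursion} and the requirement $B\le R$ in \cref{eq:basin_condition}. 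The condition holds for small $\eps$, since $\nu=\gO(\eps^{3/n})$ and $B,\,E/\tau=\gO(\eps^3)$. It is nonetheless the real obstacle in the function-approximation extension and has to be checked.

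Relatedly, your Step~2 display puts $e$ inside the $n$-th power. \cref{lemma:genericLrecurse} gives it additively outside, which is what your conclusion $\max_t L_t=\gO((\beta/\tau)^n+\eps_\cC/\tau)$ actually reflects. With $e$ outside, nothing requires $e\lesssim\nu$: \cref{lemma:quadratic_recursion}, or its conditional version, already allows $e\ge0$. Your fallbacks would not help anyway. Shrinking $\beta$ leaves $e$ unchanged. And $e$ cannot be absorbed into $\nu$: with $\kappa=C_1^{n-1}$, the inequality $\kappa(L_t+\nu)^n+e\le\kappa(L_t+\nu+e)^n$ fails whenever $n\kappa(L_t+\nu+e)^{n-1}<1$, which is precisely the contractive regime you are in.

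Finally, with $E=\eps_\cC$ your warm start via \cref{corollary:ineact_npg_coreset} only reaches $L_0\le\gO(\gamma^{K_0}/\tau)+4\eps_\cC/(\tau(1-\gamma)^2)$, whose floor exceeds $e=4\eps_\cC/\tau$. Take $E=\eps_\cC/(1-\gamma)^2$ as the paper does, or enlarge $e$, so that the warm-started $L_0$ actually lies in the invariant set.
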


\begin{proof}
This proof proceeds similarly to~\cref{theorem:smooth_last_iterate_K_geq_1}. Starting from~\cref{eq:smoothness_recursion_L} we have
\begin{align}
   \cD(\bar{\lambda}_T) - \cD(\lstar) &\leq  \frac{\normsq{\lstar}}{2 \, \beta \, T}  +  \frac{\sqrt{m} \, A \, \abs{\Lambda}}{(1 - \gamma)^2 \, T} \sum_{t=0}^{T-1} L_t  \leq \frac{\normsq{\lstar}}{2 \, \beta \, T} + \frac{\sqrt{m} \, A \, \abs{\Lambda}}{(1 - \gamma)^2} \, \max_{t \in [0, T-1]} L_t
   \numberthis \label{eq:smoothness_recursion_L_fa}
\end{align}
We next show that for all $t \in [0, T-1]$,
\begin{equation}
    L_t \leq B \coloneqq 2^n \, \parens*{\frac{\gamma}{1 - \gamma}}^{n-1} \, \parens*{\frac{2 \, m \, \beta}{\tau \, (1 - \gamma)^2} + \frac{4 \, \eps_{\cC}}{\tau \, (1 - \gamma)^2}}^n + \frac{4 \, \eps_{\cC}}{\tau \, (1 - \gamma)^2}
\end{equation}
where $\eps_{\cC} \coloneqq \sqrt{\bias}(1 + \sqrt{d}) + \sqrt{d \, \eps_{\opt}}$. 
To do so, we assume that the initial policy satisfies
\begin{equation*}
    \supnorm{\ln(\pi^*_0) - \ln(\pi_0)} = L_0 \leq B \, ,
\end{equation*}
where $B$ is within the following basin,
\begin{equation}\label{eq:basin_condition}
    B \leq R \coloneqq \frac{1- \gamma}{\gamma} \bracks*{\frac{1}{4} - \frac{2 \, \gamma \, \eps_{\cC}}{\tau \, (1 - \gamma)^3}} - \frac{2 \, m \, \beta}{\tau \, (1 - \gamma)^2}
\end{equation}
and that
\begin{equation}\label{eq:fa_invariant_assumption_smooth}
    2^n \parens*{\frac{\gamma}{1 - \gamma}}^{n-1}
    \parens*{\frac{2 \, m \, \beta + 4 \, \eps_{\cC}}{\tau \, (1 - \gamma)^2}}^{n-1} \leq 1 \, .
\end{equation}
We will prove that $L_t \leq B$, for all $t \geq 0$ by an inductive argument.
Suppose that there exists an $t$ such that $L_t \leq B$. Then at iteration $t+1$, by~\cref{corollary:ineact_npg_coreset} with $\pi_0 = \pi_{t}$, running $K \geq 1$ iterations of \texttt{Q-E-NPG} update in~\cref{eq:q_npg_update} with $\eta = \frac{1 - \gamma}{\tau}$produces a sequence of policies $\{\hat{\pi}_k\}_{k=0}^{K}$ such that for $k \in [0, K-1]$,
\begin{align}
\supnorm{\ln(\pi^*_{t+1}) - \ln(\hat{\pi}_{k+1})} \leq \frac{2}{\tau} \parens*{\supnorm{Q^{\pi^*_{t+1}}_{t+1} - Q^{\hat{\pi}_k}_{t+1}} + \frac{\eps_{\cC}}{(1 - \gamma)^2}} \, ,
\end{align}
where $\hat{\pi}_0 = \pi_t$ and $\hat{\pi}_{K} = \pi_{t+1}$.
In order to use the above result to show a contraction using~\cref{lemma:genericLrecurse}, we require that the following condition holds:
\begin{equation*}
    \frac{\gamma}{1 - \gamma} \supnorm{\ln(\pi^*_{t+1}) - \ln(\pi_t)}  + \frac{2 \, \gamma \, \eps_{\cC}}{\tau \, (1 - \gamma)^3} \leq \frac{1}{4}
\end{equation*}

Observe that
\begin{align*}
   \supnorm{\ln(\pi^*_{t+1}) - \ln(\pi_t)}  &\leq  \supnorm{\ln(\pi^*_{t+1}) - \ln(\pi^*_t) + \ln(\pi^*_t) - \ln(\pi_t)} \tag{Add/Subtract $\pi^*_t$} \\
&\leq  \supnorm{\ln(\pi^*_{t+1}) - \ln(\pi^*_t)} + \supnorm{\ln(\pi^*_t) - \ln(\pi_t)} \tag{Using triangle inequality} \\ 
&\leq  \supnorm{\ln(\pi^*_{t+1}) - \ln(\pi^*_t)} + B \tag{Since $L_t \leq B$} \\ 
&\leq \frac{2 \, \sqrt{m}}{\tau \, (1 - \gamma)} \, \norm{\ltt - \lt} + B\tag{By~\cref{eq:ln_star_tt_ln_star_t_ub}} \\
&\leq \frac{2 \, m \, \beta}{\tau \, (1 - \gamma)^2} + B\tag{By~\cref{eq:lt_ltt_ub}}
\end{align*}
Therefore, using the above relation,
\begin{align*}
&\frac{\gamma}{1 - \gamma} \, \supnorm{\ln(\pi^*_{t+1}) - \ln(\pi_t)} + \frac{2 \, \gamma \, \eps_{\cC}}{\tau \, (1 - \gamma)^3} \\
&\leq \frac{\gamma}{1 - \gamma} \parens*{\frac{2 \, m \, \beta}{\tau \, (1 - \gamma)^2} + B} + \frac{2 \, \gamma \, \eps_{\cC}}{\tau \, (1 - \gamma)^3}
\\
&\leq \frac{\gamma}{1 - \gamma} \, \frac{2 \, m \, \beta}{\tau \, (1 - \gamma)^2} + \frac{\gamma}{1 - \gamma} \, R + \frac{2 \, \gamma \, \eps_{\cC}}{\tau \, (1 - \gamma)^3} \tag{Since $B \leq R$ and $\frac{\gamma}{1 - \gamma} > 0$} \\
&= \frac{\gamma}{1 - \gamma} \, \frac{2 \, m \, \beta}{\tau \, (1 - \gamma)^2}
    + \bracks*{\frac{1}{4} - \frac{2 \, \gamma \, \eps_{\cC}}{\tau \, (1 - \gamma)^3} - \frac{\gamma}{1 - \gamma} \, \frac{2 \, m \, \beta}{\tau \, (1 - \gamma)^2}} + \frac{2 \, \gamma \, \eps_{\cC}}{\tau \, (1 - \gamma)^3}
    \tag{Definition of $R$} \\
   &= \frac{1}{4} 
\end{align*}
Hence by~\cref{lemma:genericLrecurse} with $E = \frac{\eps_{\cC}}{(1 - \gamma)^2}$,
\begin{align*}
L_{t+1} &\leq  \parens*{\frac{\gamma}{1 - \gamma}}^{n-1} \, \parens*{\frac{2 \, \sqrt{m}}{\tau \, (1- \gamma)} \, \norm{\ltt - \lt} + L_t }^n + \frac{4 \, \eps_{\cC}}{\tau \, (1 - \gamma)^2}  \\
&\leq \parens*{\frac{\gamma}{1 - \gamma}}^{n-1} \, \parens*{\frac{2 \, m \, \beta}{\tau \, (1- \gamma)^2} \,  + L_t }^n + \frac{4 \, \eps_{\cC}}{\tau \, (1 - \gamma)^2} \tag{Using~\cref{eq:lt_ltt_ub}} 
\end{align*}
Using the above contraction, we now show that $L_t \leq B$, for all $t \geq 0$. By the definition of $B$ and~\cref{eq:fa_invariant_assumption_smooth}
\begin{align*}
B + \frac{2 \, m \, \beta}{\tau \, (1 - \gamma)^2}
&= \frac{2 \, m \, \beta + 4 \, \eps_{\cC}}{\tau \, (1 - \gamma)^2}
   \bracks*{2^n \parens*{\frac{\gamma}{1 - \gamma}}^{n-1}
   \parens*{\frac{2 \, m \, \beta + 4 \, \eps_{\cC}}{\tau \, (1 - \gamma)^2}}^{n-1} + 1}  \leq 2 \, \frac{2 \, m \, \beta + 4 \, \eps_{\cC}}{\tau \, (1 - \gamma)^2} \numberthis \label{eq:inv_step_condition}\, .
\end{align*}
Therefore, since $x \mapsto x^n$ is non-decreasing on $x \geq 0$,
\begin{align*}
\parens*{\frac{\gamma}{1 - \gamma}}^{n-1} \parens*{B + \frac{2 \, m \, \beta}{\tau \, (1 - \gamma)^2}}^n
+ \frac{4 \, \eps_{\cC}}{\tau \, (1 - \gamma)^2}
&\leq 2^n \parens*{\frac{\gamma}{1 - \gamma}}^{n-1}
   \parens*{\frac{2 \, m \, \beta + 4 \, \eps_{\cC}}{\tau \, (1 - \gamma)^2}}^{n}
   + \frac{4 \, \eps_{\cC}}{\tau \, (1 - \gamma)^2} = B \, .
\end{align*}
Since the conditions of~\cref{lemma:invariant_recursion} are satisfied  with $\kappa = \parens*{\frac{\gamma}{1 - \gamma}}^{n-1}$, $\nu = \frac{2 \, m \, \beta}{\tau \, (1- \gamma)^2}$ and $e = \frac{4 \, \eps_{\cC}}{\tau \,  (1 - \gamma)^2}$, we have for all $t \geq 0$.
\begin{equation*}
    L_t \leq B
\end{equation*}
Combining the above result with~\cref{eq:smoothness_recursion_L_fa},
\begin{align*}
   \cD(\bar{\lambda}_T) - \cD(\lstar) &\leq  \frac{\normsq{\lstar}}{2 \, \beta \, T}  +  \frac{\sqrt{m} \, A \, \abs{\Lambda}}{(1 - \gamma)^2 \, T} \leq \gO\parens*{\frac{1}{\beta \, T}} + \gO\parens*{\frac{\beta}{\tau} + \frac{\eps_{\cC}}{\tau}}^n + \gO\parens*{\frac{\eps_{\cC}}{\tau}} \,.
\end{align*}
Combining the dual sub-optimality bound with~\cref{lemma:dual_optimality_to_sub} with $\Delta = \gO\parens*{\frac{1}{\beta \, T}} + \gO\parens*{\frac{\beta}{\tau} + \frac{\eps_{\cC}}{\tau}}^n + \gO\parens*{\frac{\eps_{\cC}}{\tau}}$
\begin{align*}
V^{\pi^*}_r(\rho) - V^{\pistarblT}_{r}(\rho)
&\leq \gO\parens*{\frac{1}{\sqrt{\tau}} \parens*{\frac{1}{\sqrt{\beta \, T}} + \parens*{\frac{\beta + \eps_{\cC}}{\tau}}^{n/2} + \sqrt{\frac{\eps_{\cC}}{\tau}}}} + \gO(\tau) \, , \\
\max\braces*{b_i - V^{\pistarblT}_{c_i}(\rho), 0}
&\leq \gO\parens*{\frac{1}{\sqrt{\tau}} \parens*{\frac{1}{\sqrt{\beta \, T}} + \parens*{\frac{\beta + \eps_{\cC}}{\tau}}^{n/2} + \sqrt{\frac{\eps_{\cC}}{\tau}}}} \quad \forall i \in [m] \, .
\end{align*}
To obtain an $\gO(\eps)$-optimal policy, we require each term to be $\gO(\eps)$ and hence, set $\tau = \gO(\eps)$.
For the bound the approximation error $\eps_{\cC}$,
\begin{align*}
    \gO\parens*{\frac{1}{\sqrt{\eps}} \, \sqrt{\frac{\eps_{\cC}}{\eps}}} = \gO\parens*{\frac{\sqrt{\eps_{\cC}}}{\eps}} \leq \gO(\eps)
    \implies \eps_{\cC} \leq \gO\parens*{\eps^4} \, ,
\end{align*}
which since $\eps_{\cC} = \eps_{\bias}(1 + \sqrt{d}) + \sqrt{d \, \eps_{\opt}}$, holds when 
$\eps_{\bias} \leq \gO\parens*{\eps^4}$ and $\eps_{\opt} \leq \gO\parens*{\eps^8}$. \\
For the step-size $\beta$,
\begin{align*}
    \gO\parens*{\frac{1}{\sqrt{\eps}} \, \parens*{\frac{\beta + \eps_{\cC}}{\eps}}^{n/2}} \leq \gO(\eps)
    \implies \beta + \eps_{\cC} \leq \gO\parens*{\eps^{\frac{3 + n}{n}}} \, ,
\end{align*}
and since $\eps_{\cC} \leq \gO(\eps^4) \leq \gO\parens*{\eps^{\frac{3 + n}{n}}}$ for $n \geq 1$, it suffices that
$\beta \leq \gO\parens*{\eps^{\frac{3 + n}{n}}}$.

For the number of outer-iterations $T$
\begin{align*}
    \gO\parens*{\frac{1}{\sqrt{\eps}} \, \frac{1}{\sqrt{\beta \, T}}} \leq \gO(\eps)
    \implies T \geq \gO\parens*{\frac{1}{\beta \, \eps^3}} = \gO\parens*{\frac{1}{\eps^{\frac{4n + 3}{n}}}} \, .
\end{align*}
Since $n = 2^K$, setting $\beta = \gO\parens*{\eps^{\frac{3 + 2^K}{2^K}}}$ and $T = \gO\parens*{\frac{1}{\eps^{4 + \frac{3}{2^K}}}}$, we have
\begin{align*}
V^{\pi^*}_r(\rho) - V^{\pistarblT}_{r}(\rho) \leq \gO(\eps) \quad \text{and} \quad \max\braces*{b_i - V^{\pistarblT}_{c_i}(\rho), 0} \leq \gO(\eps) \quad \forall i \in [m] \, .
\end{align*}

Next, we verify that~\cref{eq:fa_invariant_assumption_smooth} and~\cref{eq:basin_condition} hold. 
With the above choices
\begin{equation*}
\frac{2 \, m \, \beta + 4 \, \eps_{\cC}}{\tau} = \gO\parens*{\eps^{3/n} + \eps^3}   
\end{equation*}
Assuming $\eps$ sufficiently small, such that
\begin{equation*}
    x \coloneqq \frac{\gamma}{1 - \gamma} \, \frac{2 \, m \, \beta + 4 \, \eps_{\cC}}{\tau \, (1 - \gamma)^2} \leq \frac{1}{10} \, .
\end{equation*}
This implies~\cref{eq:fa_invariant_assumption_smooth}, as
\begin{align*}
    2^n \parens*{\frac{\gamma}{1 - \gamma}}^{n-1}
    \parens*{\frac{2 \, m \, \beta + 4 \, \eps_{\cC}}{\tau \, (1 - \gamma)^2}}^{n-1} \leq 2^n \, 10^{-(n-1)} \leq 1 \, \tag{since $n = 2^K \geq 2$}.
\end{align*}
Finally, $B \leq R$ in~\cref{eq:basin_condition} is equivalent to
\begin{equation*}
\frac{\gamma}{1 - \gamma} \parens*{B + \frac{2 \, m \, \beta}{\tau \, (1 - \gamma)^2}} + \frac{2 \, \gamma \, \eps_{\cC}}{\tau \, (1 - \gamma)^3} \leq \frac{1}{4} \, .
\end{equation*}
By the definition of $B$ and~\cref{eq:inv_step_condition},
\begin{align*}
B + \frac{2 \, m \, \beta}{\tau \, (1 - \gamma)^2}
&= \frac{2 \, m \, \beta + 4 \, \eps_{\cC}}{\tau \, (1 - \gamma)^2}
   \bracks*{2^n \parens*{\frac{\gamma}{1 - \gamma}}^{n-1}
   \parens*{\frac{2 \, m \, \beta + 4 \, \eps_{\cC}}{\tau \, (1 - \gamma)^2}}^{n-1} + 1}
\leq 2 \, \frac{2 \, m \, \beta + 4 \, \eps_{\cC}}{\tau \, (1 - \gamma)^2} \, ,
\end{align*}
so multiplying by $\frac{\gamma}{1 - \gamma} > 0$ gives
$\frac{\gamma}{1 - \gamma} \parens*{B + \frac{2 \, m \, \beta}{\tau \, (1 - \gamma)^2}} \leq 2 \, x$
Moreover,
\begin{equation*}
\frac{2 \, \gamma \, \eps_{\cC}}{\tau \, (1 - \gamma)^3}
= \frac{\gamma}{1 - \gamma} \, \frac{2 \, \eps_{\cC}}{\tau \, (1 - \gamma)^2}
\leq \frac{\gamma}{1 - \gamma} \, \frac{m \, \beta + 2 \, \eps_{\cC}}{\tau \, (1 - \gamma)^2}
= \frac{x}{2} \, .
\end{equation*}
Hence the left-hand side is at most $2 \, x + \frac{x}{2} = \frac{5 \, x}{2} \leq \frac{1}{4}$.

Then by~\cref{proposition:transfer_rate}, applying \texttt{Q-E-NPG} update to the pseudo-reward $R(\bar{\lambda}_T)$ recovers a policy $\pi_N$ that is also $\gO(\eps)$-optimal after $N=\gO\parens*{\ln\parens*{\nicefrac{1}{\eps}}}$ iterations. The total number of gradient evaluations follows the same argument as~\cref{theorem:smooth_last_iterate_K_geq_1}.
\end{proof}

\subsection{Faster Convergence Results Using $\ell_2$-regularization}
\begin{corollary}\label{corollary:smooth_l2_last_iterate_K_ln_eps_FA}
Under~\cref{assumption:slater_condition,assumption:q_realizability,assumption:eps_opt}, for a given $\eps \in (0, 1)$ assuming $\eps_{\bias} \leq \gO(\eps^{8})$ and $\eps_{\opt} \leq \gO(\eps^{16})$,~\cref{algorithm:dd} with (i) $\tau = \mu = \gO(\eps)$, $T = \gO(\nicefrac{1}{\eps^2})$, $\beta = \gO(\eps)$, $\lambda_0 = 0$ and (ii) using the \texttt{Q-E-NPG} update in~\cref{eq:q_npg_update} with $\eta = \frac{1 - \gamma}{\tau}$ and $K = J = \gO(\ln(\nicefrac{1}{\eps}))$ returns $\gO(\eps)$-optimal policy for~\cref{problem:cmdp} after $\widetilde{\gO}\parens*{\nicefrac{1}{\eps^2}}$ gradient evaluations.
\end{corollary}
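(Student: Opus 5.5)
The proof will follow the argument of \cref{theorem:reg_smoothness_l2_rate}, with the exact \texttt{E-NPG} bound replaced by the inexact \texttt{Q-E-NPG} bound of \cref{corollary:ineact_npg_coreset}. The approximation error $\eps_\cC$ will be carried through as an extra additive bias.

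First, we bound the per-iteration dual gradient error uniformly in $t$.
\begin{itemize}
\item By \cref{lemma:bound_bias_subgradient},
\[
\norm{\dgt - \bdgt} \leq \gO\parens*{\tau^{-1/2}\sqrt{\max_\pi V^{\pi}_{R(\lambda_t),\tau}(\rho) - V^{\pi_t}_{R(\lambda_t),\tau}(\rho)}}.
\]
\item By the last inequality of \cref{corollary:ineact_npg_coreset}, with $\pi_0 = \pi_{t-1}$, this sub-optimality is at most $(2+\gamma)\bracks*{\gamma^{K} C + 2\eps_\cC/(1-\gamma)^2}$.
\item The constant $C$ is uniformly bounded, since $\lambda_t \in \Lambda$ keeps $R(\lambda_t)$ bounded.
\item Choosing $K = \gO(\ln(T/(\tau\mu)))$ makes the first term at most $\gO(\tau^3\mu^2/T^4)$.
\item Together this gives $\delta_t \leq \hat\delta \coloneqq \gO\parens*{\tau\mu/T^2 + \sqrt{\eps_\cC/\tau}}$.
\end{itemize}

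Second, we run the dual argument. We invoke \cref{lemma:reg_dual_linear_convergence} with $\beta = 2/(\mu + L_{\tau,\mu}) = \Theta(\tau)$ and $\lambda_0 = 0$. Summing the geometric series as in \cref{eq:l2_smooth_bound} gives
\[
\norm{\lambda_T - \lstarmu} \leq e^{-\beta\mu T/2}\norm{\lstarmu} + 2\hat\delta/\mu .
\]
We then plug this into \cref{lemma:reg_dual_distance_to_sub}, whose error is $\gO(G_\tau \norm{\lambda_T - \lstarmu}) = \gO(\tau^{-1}\norm{\lambda_T - \lstarmu})$. Both the reward sub-optimality and the constraint violation of $\pi^*_{\lambda_T,\tau}$ are then bounded by
\[
\gO\parens*{\tau^{-1} e^{-\beta\mu T/2}} + \gO\parens*{T^{-2}} + \gO\parens*{\frac{\sqrt{\eps_\cC}}{\tau^{3/2}\mu}} + \gO(\tau+\mu).
\]

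Third, we set parameters.
\begin{itemize}
\item Take $\tau = \mu = \Theta(\eps)$, so that $\beta\mu = \Theta(\eps^2)$.
\item Then $T = \tilde{\gO}(1/\eps^2)$ makes the exponential term $\gO(\eps)$.
\item The approximation term needs $\sqrt{\eps_\cC} \leq \eps^{3.5}$, i.e.\ $\eps_\cC \leq \gO(\eps^7)$.
\item Since $\eps_\cC = \gO(\sqrt d\,\eps_\bias + \sqrt{d\,\eps_\opt})$, the hypotheses $\eps_\bias \leq \gO(\eps^8)$ and $\eps_\opt \leq \gO(\eps^{16})$ suffice.
\end{itemize}
Finally, \cref{proposition:transfer_rate_fa} is applied to \texttt{Q-E-NPG} on $R(\lambda_T)$ for $J = \gO(\ln(1/\eps))$ iterations to recover a deployable $\gO(\eps)$-optimal policy. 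The gradient-evaluation count is $TK + J = \tilde{\gO}(1/\eps^2)$.

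The main obstacle is the $\sqrt{\eps_\cC}$ floor in step one. It comes from the square root in \cref{lemma:bound_bias_subgradient} and is then amplified by $1/\mu$ in the dual recursion and by $G_\tau \sim 1/\tau$ in \cref{lemma:reg_dual_distance_to_sub}. We must check that this chain is exactly what forces $\eps_\bias \leq \gO(\eps^8)$, and not something stronger. If the stated exponents turn out not to suffice, the fallback is to bound the bias through $L_t$ via \cref{lemma:bias_bounded_by_L} together with the log-policy bound of \cref{corollary:ineact_npg_coreset}. That route is linear in $\eps_\cC$ and avoids the square root, giving a milder requirement.
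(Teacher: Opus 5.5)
Your proposal is correct and follows the paper's proof essentially step for step. It bounds the gradient bias uniformly via \cref{lemma:bound_bias_subgradient} and \cref{corollary:ineact_npg_coreset}, sums the geometric series as in \cref{eq:l2_smooth_bound}, transfers to the CMDP via \cref{lemma:reg_dual_distance_to_sub}, sets $\tau=\mu=\Theta(\eps)$ and $T=\tilde{\gO}(1/\eps^2)$, and recovers the final policy with \cref{proposition:transfer_rate_fa}. The only difference is bookkeeping, and it works in your favour: you use the $\gO(\eps_\cC)$ floor on the inner sub-optimality that \cref{corollary:ineact_npg_coreset} actually gives (with $\eta=(1-\gamma)/\tau$ the floor $2\eps_\cC/(1-\gamma)^2$ does not depend on $\tau$), whereas the paper carries a looser $\gO(\eps_\cC/\tau)$ and so needs $\eps_\cC\le\gO(\eps^8)$ rather than your $\gO(\eps^7)$. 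Either way the stated hypotheses suffice, so the fallback through $L_t$ is unnecessary.
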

\begin{proof}
Following the initial proof of~\cref{corollary:smooth_last_iterate_K_ln_eps_FA}, using \texttt{Q-E-NPG} update in~\cref{eq:q_npg_update} with $\eta = \frac{1-\gamma}{\tau}$ returns a policy such that by~\cref{eq:q_npg_gradient_bound}, for all $t \in [T]$
\begin{equation*}
   \norm{\dgt - \bdgt} \leq \gO\parens*{\frac{\tau \, \mu}{T^2} + \frac{\sqrt{\eps_{\cC}}}{\tau}} \coloneqq \delta_t 
\end{equation*}
after $K = \gO(\ln(\nicefrac{T^4}{\tau^3\, \mu^2}))$ iterations.
Furthermore, by~\cref{eq:l2_smooth_bound} of~\cref{theorem:reg_smoothness_l2_rate},~\cref{algorithm:dd} using step-size $\beta = \frac{2 \, \tau \, (1 - \gamma)^3}{m + 2 \, \mu \, \tau \, (1 - \gamma)^3}$ and $\lambda_0$ results in the following convergence: for $T \geq 1$
\begin{align*}
\norm{\lambda_T - \lstarmu}  &\leq \exp\parens*{-\frac{\beta \, \mu \, T}{2}} \, \norm{\lstarmu} +  \frac{2 \, \hat{\delta}}{\mu} \\ 
&\leq \exp\parens*{-\frac{\beta \, \mu \, T}{2}} \, \norm{\lstarmu} + \gO\parens*{\frac{\tau}{T^2} + \frac{\sqrt{\eps_{\cC}}}{\tau \, \mu}}
\end{align*}
Combining the above result with~\cref{lemma:reg_dual_distance_to_sub}, we have
\begin{align*}
    V^{\pi^*}_{r}(\rho) - V^{\pi^*_{\lambda_T, \tau}}_{r}(\rho) &\leq \gO\parens*{\frac{1}{\tau} \exp\parens*{-\frac{\beta \, \mu \, T}{2}}} + \gO\parens*{\frac{1}{T^2}} + \gO\parens*{\frac{\sqrt{\eps_{\cC}}}{\tau^2 \, \mu}} + \gO\parens*{\tau} \, , \\
    \max\braces*{b_i - V^{\pi^*_{\lambda_T, \tau}}_{c_i}(\rho), 0} &\leq \gO\parens*{\frac{1}{\tau} \exp\parens*{-\frac{\beta \, \mu \, T}{2}}} + \gO\parens*{\frac{1}{T^2}} + \gO\parens*{\frac{\sqrt{\eps_{\cC}}}{\tau^2 \, \mu}} + \gO(\mu) \quad \forall i \in [m] \, .
\end{align*}
Assuming that $\eps_{\bias} \leq \gO(\eps^{8})$ and $\eps_{\opt} \leq \gO(\eps^{16})$, setting $\tau = \mu = \gO(\eps)$, and $T = \tilde{\gO}(\nicefrac{1}{\eps^2})$  ensures
\begin{align*}
V^{\pi^*}_r(\rho) - V^{\pi^*_{\lambda_T, \tau}}_{r}(\rho) \leq \gO(\eps) \quad \text{ and } \max\braces*{b_i - V^{\pi^*_{\lambda_T, \tau}}_{c_i}(\rho), 0} \leq \gO(\eps) \quad \forall i \in [m] \, .
\end{align*}
Then by~\cref{proposition:transfer_rate_fa}, applying \texttt{Q-E-NPG} update to the pseudo-reward $R(\lambda_T)$ recovers a policy $\pi_N$ that is also $\gO(\eps)$-optimal after $N=\gO\parens*{\ln\parens*{\nicefrac{1}{\eps}}}$ iterations.
\end{proof}

\begin{restatable}{corollary}{smoothscddlastitkoneeratecorollary}\label{corollary:smooth_l2_last_iterate_K_1_FA}
Under~\cref{assumption:slater_condition,assumption:q_realizability,assumption:eps_opt}, for a given $\eps \in (0, 1)$ assuming $\eps_{\bias} \leq \gO(\eps^4)$ and $\eps_{\opt} \leq \gO(\eps^8)$,~\cref{algorithm:dd} with (i) $\tau = \mu = \gO(\eps)$, $T = \gO(\nicefrac{1}{\eps^5})$, $\beta = \gO(\eps^4)$, $\lambda_0 = 0$ and (ii) using the \texttt{Q-E-NPG} update in~\cref{eq:q_npg_update} with $\eta = \frac{1 - \gamma}{\tau}$ and $K = 1$, $J = \gO(\ln(\nicefrac{1}{\eps}))$ returns an $\gO(\eps)$-optimal policy for~\cref{problem:cmdp} after $\widetilde{\gO}\parens*{\nicefrac{1}{\eps^5}}$ gradient evaluations.
\end{restatable}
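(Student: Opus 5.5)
The proof follows that of \cref{theorem:l2_last_iterate_K_1}, with the tabular tracking recursion replaced by its function-approximation counterpart, in the same way that \cref{corollary:smooth_last_iterate_K_1_FA} extends \cref{theorem:smooth_last_iterate_K_1}.

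\textbf{Step 1 (dual recursion).} By \cref{lemma:bias_bounded_by_phi}, $\norm{\dgt - \bdgt} \leq \frac{4\sqrt{m}A}{\tau(1-\gamma)^2}\Phi_t$. This bound only uses the definition of $\bdgt$, so it remains valid with \texttt{Q-E-NPG}. Plugging it into \cref{lemma:reg_dual_linear_convergence} with $\beta \leq \nicefrac{2}{(L_{\tau,\mu}+\mu)}$ and $\lambda_0=0$ gives \cref{eq:smoothness_l2_recursion_phi} unchanged. The quantity left to control is the discounted sum $\sum_t \alpha^{T-1-t}\Phi_t$ with $\alpha = 1-\nicefrac{\beta\mu}{2}$.

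\textbf{Step 2 (tracking with approximation error).} Apply \cref{corollary:ineact_npg_coreset} with $K=1$ and warm start $\pi_t$. This gives \cref{eq:linear_tracking_bound1,eq:linear_tracking_bound2}. Then \cref{lemma:generic_phi_recurse} with $E = \frac{2\eps_\cC}{(1-\gamma)^2}$ yields \cref{eq:phi_recurse_fa}:
\[
\Phi_{t+1} \leq \kappa\Phi_t + \tfrac{m(1+\gamma)}{(1-\gamma)^2}\supnorm{\lt-\ltt} + \tfrac{(3-\gamma)\eps_\cC}{(1-\gamma)^3},
\qquad \kappa = \tfrac{1+\gamma}{2}.
\]
The drift term is bounded with the regularized step, \cref{eq:lt_ltt_mu_ub}, and this gives the constant $\nu = C_\mu\beta + \gO(\eps_\cC)$. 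As in the proof of \cref{theorem:l2_last_iterate_K_1}, \cref{eq:alpha_lb} shows $\alpha - \kappa \geq \nicefrac{(1-\gamma)}{4}$. The weighted bound \cref{eq:x-weighted} of \cref{proposition:recursion-const} then gives
\[
\sum_t \alpha^{T-1-t}\Phi_t \leq \gO\!\left(e^{-\mu\beta T/2}\right) + \gO\!\left(\frac{\beta + \eps_\cC}{\mu\beta}\right).
\]

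\textbf{Step 3 (assembly and parameters).} Substituting into Step 1 gives
\[
\norm{\lambda_T - \lstarmu} \leq \gO\!\left(e^{-\mu\beta T/2}\right) + \gO\!\left(\frac{\beta}{\tau\mu}\right) + \gO\!\left(\frac{\eps_\cC}{\tau\mu}\right).
\]
\cref{lemma:reg_dual_distance_to_sub} multiplies this by $G_\tau = \gO(\nicefrac{1}{\tau})$. The resulting sub-optimality and violation bounds are
\[
\gO\!\left(\tfrac{1}{\tau}e^{-\mu\beta T/2}\right) + \gO\!\left(\tfrac{\beta}{\tau^2\mu}\right) + \gO\!\left(\tfrac{\eps_\cC}{\tau^2\mu}\right) + \gO(\tau+\mu).
\]
Take $\tau=\mu=\gO(\eps)$, $\beta=\gO(\eps^4)$ and $T=\tilde\gO(\nicefrac{1}{\eps^5})$. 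With these choices every term is $\gO(\eps)$ as soon as $\eps_\cC \leq \gO(\eps^4)$. Since $\eps_\cC = \gO(\eps_\bias + \sqrt{\eps_\opt})$ up to $d$ factors, this holds under the assumed $\eps_\bias \leq \gO(\eps^4)$ and $\eps_\opt \leq \gO(\eps^8)$.

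The final recovery step uses \cref{proposition:transfer_rate_fa}: running $J = \gO(\ln(\nicefrac{1}{\eps}))$ \texttt{Q-E-NPG} iterations on $R(\lambda_T)$ produces the deployable policy. The gradient count is $T + J = \tilde\gO(\nicefrac{1}{\eps^5})$.

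The main obstacle is Step 2. The difficulty is that the additive approximation floor is amplified twice: once by $\nicefrac{1}{\mu}$, through the geometric sum in the strongly convex recursion, and once by $G_\tau$ in \cref{lemma:reg_dual_distance_to_sub}. One must check that this amplification still permits the stated $\eps_\bias$ tolerance, and also that the constant $E$ enters \cref{lemma:generic_phi_recurse} with the right power of $(1-\gamma)$. The drift and geometric-sum steps are otherwise identical to the tabular case.
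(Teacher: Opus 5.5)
Your proposal is correct and follows the paper's proof essentially step for step. It uses the same $\Phi_t$-based bias bound fed into \cref{lemma:reg_dual_linear_convergence}, the function-approximation tracking recursion \cref{eq:phi_recurse_fa} with the regularized drift bound \cref{eq:lt_ltt_mu_ub}, and the weighted geometric sum from \cref{eq:x-weighted}. The parameter choices are also the same, giving the $\eps_\cC/(\tau^2\mu)$ floor and hence the requirement $\eps_\cC \leq \gO(\eps^4)$.

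Your closing remark has one small descriptive slip. The approximation floor picks up three inverse factors, not two: $1/\tau$ from \cref{lemma:bias_bounded_by_phi}, $1/\mu$ from the geometric sum, and $1/\tau$ from $G_\tau$. Your displayed bounds already account for all three.
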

\begin{proof}
This proof proceeds similarly to~\cref{theorem:l2_last_iterate_K_1}. Starting from~\cref{eq:smoothness_l2_recursion_phi} we have
\begin{align*}
    \norm{\lambda_T - \lstarmu} &\leq \exp\parens*{-\frac{\beta \, \mu \, T}{2}} \, \norm{\lstarmu} + \frac{4 \, \sqrt{m} \, A \, \beta}{\tau \, (1 - \gamma)^2} \sum_{t=0}^{T-1} \parens*{1 - \frac{\beta \, \mu}{2}}^{T - t - 1} \, \Phi_t \\
    \numberthis \label{eq:smoothness_l2_recursion_phi_fa}
\end{align*}
Furthermore by~\cref{eq:phi_recurse_fa} and~\cref{eq:lt_ltt_mu_ub}
\begin{align*}
    \Phi_{t+1} &\leq \frac{1+\gamma}{2} \, \Phi_t + \underbrace{\frac{m \, (1 + \gamma)}{(1 - \gamma)^2}  \parens*{\frac{\sqrt{m}}{1 - \gamma} + \mu \, \abs*{\Lambda}}}_{\coloneqq C_\mu}\, \beta + \frac{(3 - \gamma) \, \eps_\cC}{(1 - \gamma)^3} 
\end{align*}
Let $\kappa \coloneqq \frac{1+\gamma}{2}$ and $\alpha \coloneqq 1 - \frac{\mu \, \beta}{2}$. Since $\beta = \gO(\eps^4)$ and $\mu = \gO(\eps)$, for $\eps$ sufficiently small $\mu \, \beta \leq \frac{1 - \gamma}{2}$, so that
\begin{equation*}
    \alpha - \kappa = \frac{1 - \gamma}{2} - \frac{\mu \, \beta}{2} \geq \frac{1 - \gamma}{4} \, , \qquad 1 - \kappa = \frac{1 - \gamma}{2} \, , \qquad 1 - \alpha = \frac{\mu \, \beta}{2} \, .
\end{equation*}
Hence, by~\cref{eq:x-weighted} of~\cref{proposition:recursion-const} with $\nu = C_\mu \, \beta + \frac{(3 - \gamma) \, \eps_\cC}{(1 - \gamma)^3}$ and since $\alpha^T \leq \exp\parens*{-\frac{\mu \, \beta \, T}{2}}$,
\begin{align*}
    \sum_{t=0}^{T-1} \alpha^{T-1-t} \, \Phi_t
    &\leq \frac{\alpha^T \, \Phi_0}{\alpha - \kappa} + \frac{\nu}{(1 - \kappa) \, (1 - \alpha)} \\
    &\leq \frac{4 \, \Phi_0}{1 - \gamma} \exp\parens*{-\frac{\mu \, \beta \, T}{2}} + \frac{4 \, C_\mu}{\mu \, (1 - \gamma)} + \frac{4 \, (3 - \gamma) \, \eps_\cC}{\mu \, \beta \, (1 - \gamma)^4} \\
    &= \gO\parens*{\exp\parens*{-\frac{\mu \, \beta \, T}{2}}} + \gO\parens*{\frac{1}{\mu}} + \gO\parens*{\frac{\eps_\cC}{\mu \, \beta}} \, . \numberthis \label{eq:smoothness_l2_recursion_phi_bound_fa}
\end{align*}
Putting~\cref{eq:smoothness_l2_recursion_phi_fa,eq:smoothness_l2_recursion_phi_bound_fa} together,
\begin{align*}
\norm{\lambda_T - \lstarmu} &\leq \gO\parens*{\exp\parens*{-\frac{\mu \, \beta \, T}{2}}} + \gO\parens*{\frac{\beta}{\tau} \exp\parens*{-\frac{\mu \, \beta \, T}{2}}} + \gO\parens*{\frac{\beta}{\tau \, \mu}} + \gO\parens*{\frac{\eps_\cC}{\tau \, \mu}} \, .
\end{align*}
Using the above bound with~\cref{lemma:reg_dual_distance_to_sub}, where $G_\tau = \gO(\nicefrac{1}{\tau})$,
\begin{align*}
    V^{\pi^*}_{r}(\rho) - V^{\pi^*_{\lambda_T, \tau}}_{r}(\rho) &\leq \gO\parens*{\frac{1}{\tau} \, \exp\parens*{-\frac{\mu \, \beta \, T}{2}}} + \gO\parens*{\frac{\beta}{\tau^2} \exp\parens*{-\frac{\mu \, \beta \, T}{2}}} + \gO\parens*{\frac{\beta}{\tau^2 \, \mu}} + \gO\parens*{\frac{\eps_\cC}{\tau^2 \, \mu}} + \gO(\tau) \, , \\
    \max\braces*{b_i - V^{\pi^*_{\lambda_T, \tau}}_{c_i}(\rho), 0} &\leq \gO\parens*{\frac{1}{\tau} \, \exp\parens*{-\frac{\mu \, \beta \, T}{2}}} + \gO\parens*{\frac{\beta}{\tau^2} \exp\parens*{-\frac{\mu \, \beta \, T}{2}}} + \gO\parens*{\frac{\beta}{\tau^2 \, \mu}} + \gO\parens*{\frac{\eps_\cC}{\tau^2 \, \mu}} + \gO(\mu)  \quad \forall i \in [m] \, .
\end{align*}
Since $\eps_\cC = \gO(\eps_{\bias} + \sqrt{\eps_{\opt}})$, assuming $\eps_{\bias} \leq \gO(\eps^4)$ and $\eps_{\opt} \leq \gO(\eps^8)$, setting $\tau = \mu = \gO(\eps)$, $\beta = \gO(\eps^4)$, and $T = \tilde{\gO}(\nicefrac{1}{\eps^5})$ ensures
\begin{align*}
V^{\pi^*}_r(\rho) - V^{\pi^*_{\lambda_T, \tau}}_{r}(\rho) \leq \gO(\eps) \quad \text{ and } \max\braces*{b_i - V^{\pi^*_{\lambda_T, \tau}}_{c_i}(\rho), 0} \leq \gO(\eps) \quad \forall i \in [m] \, .
\end{align*}
Then by~\cref{proposition:transfer_rate_fa}, applying \texttt{Q-E-NPG} update to the pseudo-reward $R(\lambda_T)$ recovers a policy $\pi_N$ that is also $\gO(\eps)$-optimal after $N=\gO\parens*{\ln\parens*{\nicefrac{1}{\eps}}}$ iterations.
\end{proof}

\smoothscddlastitkgeqoneratecorollary*
\begin{proof}
This proof proceeds similarly to~\cref{theorem:l2_last_iterate_K_geq_1}. Starting from~\cref{eq:reg_gd_L} we have
\begin{align}
   \norm{\lambda_T - \lstarmu} &\leq \exp\parens*{-\frac{\beta \, \mu \, T}{2}} \, \norm{\lstarmu} + \frac{2 \, \sqrt{m} \, A }{\mu \, (1 - \gamma)^2} \max_t L_t
    \numberthis \label{eq:reg_gd_L_fa}
\end{align}
We next show that for all $t \in [0, T-1]$,
\begin{equation}
    L_t \leq B \coloneqq 2^n \, \parens*{\frac{\gamma}{1 - \gamma}}^{n-1} \, \parens*{\frac{2 \, \sqrt{m} \, \beta}{\tau \, (1 - \gamma)} \parens*{\frac{\sqrt{m}}{1 - \gamma} + \mu \, \abs{\Lambda}} + \frac{4 \, \eps_{\cC}}{\tau \, (1 - \gamma)^2}}^n + \frac{4 \, \eps_{\cC}}{\tau \, (1 - \gamma)^2}
\end{equation}
where $\eps_{\cC} \coloneqq \eps_{\bias}(1 + \sqrt{d}) + \sqrt{d \, \eps_{\opt}}$. 
To do so, we assume that the initial policy satisfies
\begin{equation*}
    \supnorm{\ln(\pi^*_0) - \ln(\pi_0)} = L_0 \leq B \, ,
\end{equation*}
where $B$ is within the following basin,
\begin{equation}\label{eq:basin_condition_l2}
    B \leq R \coloneqq \frac{1- \gamma}{\gamma} \bracks*{\frac{1}{4} - \frac{2 \, \gamma \, \eps_{\cC}}{\tau \, (1 - \gamma)^3}} - \frac{2 \, \sqrt{m} \, \beta}{\tau \, (1 - \gamma)} \parens*{\frac{\sqrt{m}}{1 - \gamma} + \mu \, \abs{\Lambda}}
\end{equation}
and that
\begin{equation}\label{eq:fa_invariant_assumption_l2}
    2^n \parens*{\frac{\gamma}{1 - \gamma}}^{n-1}
    \parens*{\frac{2 \, \sqrt{m} \, \beta}{\tau \, (1 - \gamma)} \parens*{\frac{\sqrt{m}}{1 - \gamma} + \mu \, \abs{\Lambda}} + \frac{4 \, \eps_{\cC}}{\tau \, (1 - \gamma)^2}}^{n-1} \leq 1 \, .
\end{equation}
Suppose that there exists an $t$ such that $L_t \leq B$. Then at iteration $t+1$, by~\cref{corollary:ineact_npg_coreset} with $\pi_0 = \pi_{t}$, running $K \geq 1$ iterations of \texttt{Q-E-NPG} update in~\cref{eq:q_npg_update} with $\eta = \frac{1 - \gamma}{\tau}$ produces a sequence of policies $\{\hat{\pi}_k\}_{k=0}^{K}$ such that for $k \in [0, K-1]$,
\begin{align}
\supnorm{\ln(\pi^*_{t+1}) - \ln(\hat{\pi}_{k+1})} \leq \frac{2}{\tau} \parens*{\supnorm{Q^{\pi^*_{t+1}}_{t+1} - Q^{\hat{\pi}_k}_{t+1}} + \frac{\eps_{\cC}}{(1 - \gamma)^2}} \, ,
\end{align}
where $\hat{\pi}_0 = \pi_t$ and $\hat{\pi}_{K} = \pi_{t+1}$.
In order to use the above result to show a contraction using~\cref{lemma:genericLrecurse}, we require that the following condition holds:
\begin{equation*}
    \frac{\gamma}{1 - \gamma} \supnorm{\ln(\pi^*_{t+1}) - \ln(\pi_t)}  + \frac{2 \, \gamma \, \eps_{\cC}}{\tau \, (1 - \gamma)^3} \leq \frac{1}{4}
\end{equation*}
Compared to the case when $\mu = 0$, the dual drift now includes the regularization term. 
By~\cref{eq:ln_star_tt_ln_star_t_ub,eq:lt_ltt_mu_ub}, we hve
\begin{equation}
    \supnorm{\ln(\pi^*_{t+1}) - \ln(\pi^*_t)} \leq \frac{2 \, \sqrt{m}}{\tau \, (1 - \gamma)} \, \norm{\ltt - \lt}
    \leq \frac{\beta}{\tau} \, \frac{2 \, \sqrt{m}}{1 - \gamma} \parens*{\frac{\sqrt{m}}{1 - \gamma} + \mu \, \abs{\Lambda}}
    \label{eq:ln_star_drift_mu}
\end{equation}
Therefore, using the above relation,
\begin{align*}
&\frac{\gamma}{1 - \gamma} \, \supnorm{\ln(\pi^*_{t+1}) - \ln(\pi_t)} + \frac{2 \, \gamma \, \eps_{\cC}}{\tau \, (1 - \gamma)^3} \\
&\leq \frac{\gamma}{1 - \gamma} \parens*{\frac{2 \, \sqrt{m} \, \beta}{\tau \, (1 - \gamma)} \parens*{\frac{\sqrt{m}}{1 - \gamma} + \mu \, \abs{\Lambda}} + B} + \frac{2 \, \gamma \, \eps_{\cC}}{\tau \, (1 - \gamma)^3}
\\
&\leq \frac{\gamma}{1 - \gamma} \, \frac{2 \, \sqrt{m} \, \beta}{\tau \, (1 - \gamma)} \parens*{\frac{\sqrt{m}}{1 - \gamma} + \mu \, \abs{\Lambda}} + \frac{\gamma}{1 - \gamma} \, R + \frac{2 \, \gamma \, \eps_{\cC}}{\tau \, (1 - \gamma)^3} \tag{Since $B \leq R$ and $\frac{\gamma}{1 - \gamma} > 0$} \\
&= \frac{\gamma}{1 - \gamma} \, \frac{2 \, \sqrt{m} \, \beta}{\tau \, (1 - \gamma)} \parens*{\frac{\sqrt{m}}{1 - \gamma} + \mu \, \abs{\Lambda}}
    + \bracks*{\frac{1}{4} - \frac{2 \, \gamma \, \eps_{\cC}}{\tau \, (1 - \gamma)^3} - \frac{\gamma}{1 - \gamma} \, \frac{2 \, \sqrt{m} \, \beta}{\tau \, (1 - \gamma)} \parens*{\frac{\sqrt{m}}{1 - \gamma} + \mu \, \abs{\Lambda}}} + \frac{2 \, \gamma \, \eps_{\cC}}{\tau \, (1 - \gamma)^3}
    \tag{Definition of $R$} \\
   &= \frac{1}{4} 
\end{align*}
Hence by~\cref{lemma:genericLrecurse} with $E = \frac{\eps_{\cC}}{(1 - \gamma)^2}$,
\begin{align*}
L_{t+1} &\leq  \parens*{\frac{\gamma}{1 - \gamma}}^{n-1} \, \parens*{\frac{2 \, \sqrt{m}}{\tau \, (1- \gamma)} \, \norm{\ltt - \lt} + L_t }^n + \frac{4 \, \eps_{\cC}}{\tau \, (1 - \gamma)^2}  \\
&\leq \parens*{\frac{\gamma}{1 - \gamma}}^{n-1} \, \parens*{\frac{2 \, \sqrt{m} \, \beta}{\tau \, (1 - \gamma)} \parens*{\frac{\sqrt{m}}{1 - \gamma} + \mu \, \abs{\Lambda}} + L_t }^n + \frac{4 \, \eps_{\cC}}{\tau \, (1 - \gamma)^2} \tag{Using~\cref{eq:lt_ltt_mu_ub}} 
\end{align*}
Using the above contraction, we now show that $L_t \leq B$, for all $t \geq 0$. By the definition of $B$ and~\cref{eq:fa_invariant_assumption_l2}
\begin{align*}
&B + \frac{2 \, \sqrt{m} \, \beta}{\tau \, (1 - \gamma)} \parens*{\frac{\sqrt{m}}{1 - \gamma} + \mu \, \abs{\Lambda}} \\
&= \parens*{\frac{2 \, \sqrt{m} \, \beta}{\tau \, (1 - \gamma)} \parens*{\frac{\sqrt{m}}{1 - \gamma} + \mu \, \abs{\Lambda}} + \frac{4 \, \eps_{\cC}}{\tau \, (1 - \gamma)^2}}
   \bracks*{2^n \parens*{\frac{\gamma}{1 - \gamma}}^{n-1}
   \parens*{\frac{2 \, \sqrt{m} \, \beta}{\tau \, (1 - \gamma)} \parens*{\frac{\sqrt{m}}{1 - \gamma} + \mu \, \abs{\Lambda}} + \frac{4 \, \eps_{\cC}}{\tau \, (1 - \gamma)^2}}^{n-1} + 1} \\
&\leq 2 \, \parens*{\frac{2 \, \sqrt{m} \, \beta}{\tau \, (1 - \gamma)} \parens*{\frac{\sqrt{m}}{1 - \gamma} + \mu \, \abs{\Lambda}} + \frac{4 \, \eps_{\cC}}{\tau \, (1 - \gamma)^2}} \numberthis \label{eq:inv_step_condition_l2}\, .
\end{align*}
Therefore, since $x \mapsto x^n$ is non-decreasing on $x \geq 0$,
\begin{align*}
&\parens*{\frac{\gamma}{1 - \gamma}}^{n-1} \parens*{B + \frac{2 \, \sqrt{m} \, \beta}{\tau \, (1 - \gamma)} \parens*{\frac{\sqrt{m}}{1 - \gamma} + \mu \, \abs{\Lambda}}}^n
+ \frac{4 \, \eps_{\cC}}{\tau \, (1 - \gamma)^2} \\
&\leq 2^n \parens*{\frac{\gamma}{1 - \gamma}}^{n-1}
   \parens*{\frac{2 \, \sqrt{m} \, \beta}{\tau \, (1 - \gamma)} \parens*{\frac{\sqrt{m}}{1 - \gamma} + \mu \, \abs{\Lambda}} + \frac{4 \, \eps_{\cC}}{\tau \, (1 - \gamma)^2}}^{n}
   + \frac{4 \, \eps_{\cC}}{\tau \, (1 - \gamma)^2} = B \, .
\end{align*}
Since the conditions of~\cref{lemma:invariant_recursion} are satisfied with $\kappa = \parens*{\frac{\gamma}{1 - \gamma}}^{n-1}$, $\nu = \frac{2 \, \sqrt{m} \, \beta}{\tau \, (1 - \gamma)} \parens*{\frac{\sqrt{m}}{1 - \gamma} + \mu \, \abs{\Lambda}}$ and $e = \frac{4 \, \eps_{\cC}}{\tau \,  (1 - \gamma)^2}$, we have for all $t \geq 0$,
\begin{equation*}
    L_t \leq B
\end{equation*}
Combining the above result with~\cref{eq:reg_gd_L_fa}, and since $\mu \, \abs{\Lambda} = \gO(1)$,
\begin{align*}
   \norm{\lambda_T - \lstarmu} &\leq \exp\parens*{-\frac{\beta \, \mu \, T}{2}} \, \norm{\lstarmu} + \frac{2 \, \sqrt{m} \, A }{\mu \, (1 - \gamma)^2} \, B \\
   &\leq \gO\parens*{\exp\parens*{-\frac{\beta \, \mu \, T}{2}}} + \gO\parens*{\frac{1}{\mu}\parens*{\frac{\beta + \eps_{\cC}}{\tau}}^n} + \gO\parens*{\frac{\eps_{\cC}}{\tau \, \mu}} \,.
\end{align*}
Combining the above bound with~\cref{lemma:reg_dual_distance_to_sub}, where $G_\tau = \gO(\nicefrac{1}{\tau})$,
\begin{align*}
V^{\pi^*}_r(\rho) - V^{\pi^*_{\lambda_T, \tau}}_{r}(\rho)
&\leq \gO\parens*{\frac{1}{\tau} \exp\parens*{-\frac{\beta \, \mu \, T}{2}}} + \gO\parens*{\frac{1}{\tau \, \mu}\parens*{\frac{\beta + \eps_{\cC}}{\tau}}^n} + \gO\parens*{\frac{\eps_{\cC}}{\tau^2 \, \mu}} + \gO(\tau) \, , \\
\max\braces*{b_i - V^{\pi^*_{\lambda_T, \tau}}_{c_i}(\rho), 0}
&\leq \gO\parens*{\frac{1}{\tau} \exp\parens*{-\frac{\beta \, \mu \, T}{2}}} + \gO\parens*{\frac{1}{\tau \, \mu}\parens*{\frac{\beta + \eps_{\cC}}{\tau}}^n} + \gO\parens*{\frac{\eps_{\cC}}{\tau^2 \, \mu}} + \gO(\mu) \quad \forall i \in [m] \, .
\end{align*}
To obtain an $\gO(\eps)$-optimal policy, we require each term to be $\gO(\eps)$ and hence, set $\tau = \mu = \gO(\eps)$.
For the bound the approximation error $\eps_{\cC}$,
\begin{align*}
    \gO\parens*{\frac{\eps_{\cC}}{\eps^3}} \leq \gO(\eps)
    \implies \eps_{\cC} \leq \gO\parens*{\eps^4} \, ,
\end{align*}
which since $\eps_{\cC} = \eps_{\bias}(1 + \sqrt{d}) + \sqrt{d \, \eps_{\opt}}$, holds when 
$\eps_{\bias} \leq \gO\parens*{\eps^4}$ and $\eps_{\opt} \leq \gO\parens*{\eps^8}$. \\
For the step-size $\beta$,
\begin{align*}
    \gO\parens*{\frac{1}{\eps^2} \, \parens*{\frac{\beta + \eps_{\cC}}{\eps}}^{n}} \leq \gO(\eps)
    \implies \beta + \eps_{\cC} \leq \gO\parens*{\eps^{\frac{3 + n}{n}}} \, ,
\end{align*}
and since $\eps_{\cC} \leq \gO(\eps^4) \leq \gO\parens*{\eps^{\frac{3 + n}{n}}}$ for $n \geq 1$, it suffices that
$\beta \leq \gO\parens*{\eps^{\frac{3 + n}{n}}}$.

For the number of outer-iterations $T$
\begin{align*}
    \gO\parens*{\frac{1}{\eps} \exp\parens*{-\frac{\beta \, \mu \, T}{2}}} \leq \gO(\eps)
    \impliedby T \geq \frac{2 \ln(\nicefrac{1}{\eps^2})}{\beta \, \mu} = \tilde{\gO}\parens*{\frac{1}{\eps^{\frac{2n + 3}{n}}}} \, .
\end{align*}
Since $n = 2^K$, setting $\beta = \gO\parens*{\eps^{\frac{3 + 2^K}{2^K}}}$ and $T = \tilde{\gO}\parens*{\frac{1}{\eps^{2 + \frac{3}{2^K}}}}$, we have
\begin{align*}
V^{\pi^*}_r(\rho) - V^{\pi^*_{\lambda_T, \tau}}_{r}(\rho) \leq \gO(\eps) \quad \text{and} \quad \max\braces*{b_i - V^{\pi^*_{\lambda_T, \tau}}_{c_i}(\rho), 0} \leq \gO(\eps) \quad \forall i \in [m] \, .
\end{align*}
Moreover, $\beta \leq \frac{2}{G_\tau + 2 \mu}$ for $\eps$ sufficiently small, as required by~\cref{lemma:reg_dual_linear_convergence}.

Next, we verify that~\cref{eq:fa_invariant_assumption_l2} and~\cref{eq:basin_condition_l2} hold. 
With the above choices, 
\begin{equation*}
\frac{2 \, \sqrt{m} \, \beta}{\tau \, (1 - \gamma)} \parens*{\frac{\sqrt{m}}{1 - \gamma} + \mu \, \abs{\Lambda}} + \frac{4 \, \eps_{\cC}}{\tau \, (1 - \gamma)^2} = \gO\parens*{\eps^{3/n} + \eps^3}   
\end{equation*}
Assuming $\eps$ sufficiently small, such that
\begin{equation*}
    x \coloneqq \frac{\gamma}{1 - \gamma} \parens*{\frac{2 \, \sqrt{m} \, \beta}{\tau \, (1 - \gamma)} \parens*{\frac{\sqrt{m}}{1 - \gamma} + \mu \, \abs{\Lambda}} + \frac{4 \, \eps_{\cC}}{\tau \, (1 - \gamma)^2}} \leq \frac{1}{10} \, .
\end{equation*}
This implies~\cref{eq:fa_invariant_assumption_l2}, as
\begin{align*}
    2^n \parens*{\frac{\gamma}{1 - \gamma}}^{n-1}
    \parens*{\frac{2 \, \sqrt{m} \, \beta}{\tau \, (1 - \gamma)} \parens*{\frac{\sqrt{m}}{1 - \gamma} + \mu \, \abs{\Lambda}} + \frac{4 \, \eps_{\cC}}{\tau \, (1 - \gamma)^2}}^{n-1} = 2^n \, x^{n-1} \leq 2^n \, 10^{-(n-1)} \leq 1 \, \tag{since $n = 2^K \geq 2$}.
\end{align*}
Finally, $B \leq R$ in~\cref{eq:basin_condition_l2} is equivalent to
\begin{equation*}
\frac{\gamma}{1 - \gamma} \parens*{B + \frac{2 \, \sqrt{m} \, \beta}{\tau \, (1 - \gamma)} \parens*{\frac{\sqrt{m}}{1 - \gamma} + \mu \, \abs{\Lambda}}} + \frac{2 \, \gamma \, \eps_{\cC}}{\tau \, (1 - \gamma)^3} \leq \frac{1}{4} \, .
\end{equation*}
By~\cref{eq:inv_step_condition_l2}, multiplying by $\frac{\gamma}{1 - \gamma} > 0$ gives
$\frac{\gamma}{1 - \gamma} \parens*{B + \frac{2 \, \sqrt{m} \, \beta}{\tau \, (1 - \gamma)} \parens*{\frac{\sqrt{m}}{1 - \gamma} + \mu \, \abs{\Lambda}}} \leq 2 \, x$. 
Moreover,
\begin{equation*}
\frac{2 \, \gamma \, \eps_{\cC}}{\tau \, (1 - \gamma)^3}
= \frac{\gamma}{1 - \gamma} \, \frac{2 \, \eps_{\cC}}{\tau \, (1 - \gamma)^2}
\leq \frac{1}{2} \, \frac{\gamma}{1 - \gamma} \parens*{\frac{2 \, \sqrt{m} \, \beta}{\tau \, (1 - \gamma)} \parens*{\frac{\sqrt{m}}{1 - \gamma} + \mu \, \abs{\Lambda}} + \frac{4 \, \eps_{\cC}}{\tau \, (1 - \gamma)^2}}
= \frac{x}{2} \, .
\end{equation*}
Hence the left-hand side is at most $2 \, x + \frac{x}{2} = \frac{5 \, x}{2} \leq \frac{1}{4}$.

Then by~\cref{proposition:transfer_rate}, applying \texttt{Q-E-NPG} update to the pseudo-reward $R(\bar{\lambda}_T)$ recovers a policy $\pi_N$ that is also $\gO(\eps)$-optimal after $N=\gO\parens*{\ln\parens*{\nicefrac{1}{\eps}}}$ iterations. The total number of gradient evaluations follows the same argument as~\cref{theorem:l2_last_iterate_K_geq_1}.
\end{proof}

\section{Proof of Additional Technical Tools}\label{appendix:tools}
\begin{proposition}\label{proposition:transfer_rate}
For a given $\eps \in (0, 1)$ and $\tau = \gO(\eps)$, suppose there exists an dual variable $\lambda \in \Lambda$ such that $\pi^*_{\lambda, \tau} = \argmax_{\pi} V^{\pi}_{R(\lambda), \tau}$ is an $\gO(\eps)$-optimal policy to~\cref{problem:cmdp}. 
Then the policy recovered using the $\texttt{NPG}$ update in~\cref{eq:entropy_npg_update} with $\diamond = R(\lambda)$ and $\eta = \frac{1 - \gamma}{\tau}$ is an $\gO(\eps)$-optimal policy after $N=\gO(\ln(\nicefrac{1}{\eps})))$ iterations.
\end{proposition}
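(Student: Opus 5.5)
The plan is to run \texttt{E-NPG} on the fixed pseudo-reward $\diamond = R(\lambda)$ from the initial policy $\pi_{-1}$ for $N$ iterations. I then transfer closeness of the iterate $\pi_N$ to $\pi^*_{\lambda,\tau}$ into closeness of the reward and constraint values. Since $\lambda \in \Lambda$, the pseudo-reward satisfies $R(\lambda) \in [0,U]^{SA}$ with $U = 1 + \sum_i \frac{2+2\tau\ln(A)}{(1-\gamma)\zeta_i} = \gO(1)$. Hence \cref{lemma:inexact_npg} (exact case, $E=0$) with $\eta = \frac{1-\gamma}{\tau}$ applies. At this step size the factor $(1-\frac{\eta\tau}{1-\gamma})$ vanishes, so $C = \supnorm{Q^{\pi^*_{\lambda,\tau}}_{R(\lambda),\tau} - Q^{\pi_0}_{R(\lambda),\tau}}$. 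This quantity is bounded by $\gO\parens*{\frac{U + \tau\ln A}{1-\gamma}} = \gO(1)$ independently of $\pi_0$. The $\ln$-policy bound in \cref{lemma:inexact_npg} then gives
\begin{equation*}
\supnorm{\ln(\pi^*_{\lambda,\tau}) - \ln(\pi_{N})} \leq \frac{2}{\tau}\gamma^{N-1} C .
\end{equation*}
(With $\eta\tau = 1-\gamma$, the factor $(1-\eta\tau)$ in that lemma is $\gamma$.)

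Next, I bound the value differences without any coverage assumption, exactly as in the proof of \cref{lemma:bias_bounded_by_phi}. For any $\diamond \in \{r, c_1,\dots,c_m\}$ with values in $[0,1]$, I apply the value difference lemma, H\"older's inequality, and the $1$-Lipschitzness of $e^x$ on $(-\infty,0]$. This yields
\begin{equation*}
\abs*{V^{\pi^*_{\lambda,\tau}}_{\diamond}(\rho) - V^{\pi_N}_{\diamond}(\rho)} \leq \frac{A}{(1-\gamma)^2}\supnorm{\ln(\pi^*_{\lambda,\tau}) - \ln(\pi_N)} \leq \frac{2 A C \gamma^{N-1}}{\tau(1-\gamma)^2}.
\end{equation*}
Alternatively, I could use \cref{eq:vdl_upper_bound_lag} with the value gap bound from \cref{lemma:inexact_npg}, which gives $\gO(\sqrt{\gamma^N/\tau})$. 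Either route suffices.

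Finally, I combine the two bounds through the triangle inequality:
\begin{equation*}
V^{\pi^*}_r(\rho) - V^{\pi_N}_r(\rho) \leq \bracks*{V^{\pi^*}_r(\rho) - V^{\pi^*_{\lambda,\tau}}_r(\rho)} + \abs*{V^{\pi^*_{\lambda,\tau}}_r(\rho) - V^{\pi_N}_r(\rho)} .
\end{equation*}
For each $i$, I use $\max\{b_i - V^{\pi_N}_{c_i}(\rho),0\} \leq \max\{b_i - V^{\pi^*_{\lambda,\tau}}_{c_i}(\rho),0\} + \abs*{V^{\pi^*_{\lambda,\tau}}_{c_i}(\rho) - V^{\pi_N}_{c_i}(\rho)}$. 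The first terms are $\gO(\eps)$ by hypothesis. The second terms are $\gO(\gamma^{N}/\eps)$ since $\tau = \Theta(\eps)$. Choosing $N \geq 1 + \frac{\ln(2AC/((1-\gamma)^2\tau\eps))}{\ln(1/\gamma)} = \gO(\ln(\nicefrac{1}{\eps}))$ makes them $\gO(\eps)$.

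The only delicate point is ensuring the constant $C$ does not depend on $\eps$ or on the initial policy (for instance, through $\ln \pi_0$ blowing up). This is exactly why the choice $\eta = \frac{1-\gamma}{\tau}$ matters: it kills the log-policy term in $C$. The remaining $1/\tau$ factor costs only an additive $\ln(1/\eps)$ in $N$.
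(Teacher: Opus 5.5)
Your proof is correct. It uses the same decomposition as the paper: a triangle inequality through $\pi^*_{\lambda,\tau}$, the hypothesis for the first term, and linear convergence of \texttt{E-NPG} on $R(\lambda)$ for the second. Your main transfer step, however, is different.

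The paper takes what you list as your alternative. It applies \cref{eq:vdl_upper_bound_lag} of \cref{lemma:generic_vdl_upper_bound}, which combines the soft sub-optimality identity (\cref{lemma:soft_suboptimality}) with Pinsker's inequality. It pairs this with the value-gap bound of \cref{lemma:entropy_npg_supnorm_convergence}, giving an error of $\gO(\sqrt{\gamma^{N-1}/\tau})$ with no dependence on $A$.

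Your primary route instead uses the sup-norm log-policy bound with the value-difference, H\"older and Lipschitz-$e^x$ argument from \cref{lemma:bias_bounded_by_phi}. This gives $\gO(A\gamma^{N-1}/\tau)$. It avoids the KL/Pinsker machinery and the square root, at the cost of an extra factor $A$. That factor is an artifact: if $\|\ln p - \ln q\|_\infty \le L$ then $\|p-q\|_1 \le e^{L}-1$. Both routes give $N=\gO(\ln(1/\eps))$, as long as $\tau$ is polynomial in $\eps$, which is how the paper reads $\tau=\gO(\eps)$.

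One thing your write-up adds is an explicit check that $C=\|Q^{\pi^*_{\lambda,\tau}}_{R(\lambda),\tau}-Q^{\pi_0}_{R(\lambda),\tau}\|_\infty=\gO(1)$, uniformly in the initial policy. This holds because $\eta=\tfrac{1-\gamma}{\tau}$ removes the $\ln\pi_0$ term and $\lambda\in\Lambda$ bounds the pseudo-reward. The paper's proof leaves this point implicit.
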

\begin{proof}
Let $\pi_N$ denote the policy returned by \texttt{NPG}$(\pi_{-1}, R(\lambda), \tau, N)$ after $N$ iterations where $\pi_{-1}$ is the initial policy.
We first consider the sub-optimality gap:
\begin{align*}
    &V^{\pistar}_{r}(\rho) - V^{\pi_N}_r(\rho) \\
    &=  
    V^{\pistar}_{r}(\rho) - V^{\pistarl}_r(\rho) +
    V^{\pistarl}_{r}(\rho) - V^{\pi_N}_{r}(\rho)
    \tag{Add/Subtract $V^{\pistarl}_{r}(\rho)$} \\
    &\leq \gO(\eps) + V^{\pistarl}_{r}(\rho) - V^{\pi_N}_{r}(\rho) \tag{Since $\pistarl$ is an $\gO(\eps)$-optimal policy to~\cref{problem:cmdp}} \\
    &\leq \gO(\eps) + \gO\parens*{\frac{\sqrt{\lag(\pistarl, \lambda) - \lag(\pi_N, \lambda)}}{\sqrt{\tau}}}\\
    \tag{Using~\cref{eq:vdl_upper_bound_lag} from~\cref{lemma:generic_vdl_upper_bound} with $\diamond = r$, $\pi' = \pi^*_{\lambda, \tau}$, and $\pi = \pi_N$} \\
    &= \gO(\eps) + \gO\parens*{\frac{\sqrt{V^{\pistarl}_{R(\lambda), \tau}(\rho) - V^{\pi_N}_{R(\lambda), \tau}(\rho)}}{\sqrt{\tau}}}\\
    \tag{Since $\lag(\pi, \lambda) = V^{\pi}_r(\rho) + \sum_{i=1}^m \lambda_i \, [V^{\pi}_{c_i}(\rho) - b_i]$} \\
    &\leq \gO(\eps) + \gO\parens*{\frac{\sqrt{\gamma^{N-1}}}{\sqrt{\tau}}}    \tag{By~\cref{lemma:entropy_npg_supnorm_convergence} with $\diamond = R(\lambda)$ and $\eta = \frac{1-\gamma}{\tau}$}
\end{align*}
Hence setting $N = \gO(\ln(\nicefrac{1}{\eps}))$ implies
\begin{equation*}
   V^{\pistar}_{r}(\rho) - V^{\pi_N}_r(\rho)  \leq \gO(\eps).
\end{equation*}
Following a similar argument for the constraint violation, we have
\begin{equation*}
  \max\braces*{b_i - V^{\pi_N}_{c_i}(\rho), 0} \leq \gO(\eps)
\end{equation*} 
\end{proof}
\begin{proposition}\label{proposition:transfer_rate_fa}
Under~\cref{assumption:q_realizability,assumption:eps_opt}, for a given $\eps \in (0, 1)$, $\tau = \gO(\eps)$,  suppose that (i) $\eps_{\bias} \leq \gO(\eps^3)$, $\eps_{\opt} \leq \gO(\eps^6)$ (ii) there exists an dual variable $\lambda \in \R^m_+$ such that $\pi^*_{\lambda, \tau} = \argmax_{\pi} V^{\pi}_{R(\lambda), \tau}$ is an $\gO(\eps)$-optimal policy to~\cref{problem:cmdp}. 
Then the policy recovered using the $\texttt{Q-E-NPG}$ update in~\cref{eq:q_npg_update} with $\diamond = R(\lambda)$ and $\eta = \frac{1 - \gamma}{\tau}$ is an $\gO(\eps)$-optimal policy after $N=\gO(\ln(\nicefrac{1}{\eps})))$ iterations.
\end{proposition}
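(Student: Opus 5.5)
The plan mirrors the proof of \cref{proposition:transfer_rate}, with \cref{corollary:ineact_npg_coreset} used in place of \cref{lemma:entropy_npg_supnorm_convergence} to account for the function approximation error. Let $\pi_N$ be the output of $N$ iterations of \texttt{Q-E-NPG} on $R(\lambda)$, started from $\pi_{-1}$ with $\eta = \frac{1-\gamma}{\tau}$. For the reward, add and subtract $V^{\pistarl}_r(\rho)$:
\begin{equation*}
V^{\pistar}_r(\rho) - V^{\pi_N}_r(\rho) \leq \gO(\eps) + \abs*{V^{\pistarl}_r(\rho) - V^{\pi_N}_r(\rho)},
\end{equation*}
where the $\gO(\eps)$ term comes from hypothesis (ii). For each constraint $i$, split $b_i - V^{\pi_N}_{c_i}(\rho) \leq \max\{b_i - V^{\pistarl}_{c_i}(\rho),0\} + \abs{V^{\pistarl}_{c_i}(\rho) - V^{\pi_N}_{c_i}(\rho)}$. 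Again the first term is $\gO(\eps)$ by hypothesis.

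Both difference terms are handled the same way. Apply \cref{eq:vdl_upper_bound_lag} of \cref{lemma:generic_vdl_upper_bound} with $\pi' = \pistarl$ and $\pi = \pi_N$, taking $\diamond = r$ or $c_i$ and $U = 1$. This bounds each difference by
\begin{equation*}
\gO\parens*{\sqrt{V^{\pistarl}_{R(\lambda),\tau}(\rho) - V^{\pi_N}_{R(\lambda),\tau}(\rho)}\,/\sqrt{\tau}},
\end{equation*}
using the identity $\lag(\pi',\lambda)-\lag(\pi,\lambda) = V^{\pi'}_{R(\lambda),\tau}(\rho)-V^{\pi}_{R(\lambda),\tau}(\rho)$ from the proof of \cref{lemma:bound_bias_subgradient}. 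The soft suboptimality is then controlled by the last inequality of \cref{corollary:ineact_npg_coreset} with $K = N-1$, which gives
\begin{equation*}
(2+\gamma)\bracks*{\gamma^{N-1}\supnorm{Q^{\pistarl}_{R(\lambda),\tau} - Q^{\pi_{-1}}_{R(\lambda),\tau}} + \frac{2\eps_\cC}{(1-\gamma)^2}}.
\end{equation*}
Since $\lambda\in\Lambda$, the pseudo-reward is bounded by $\gO(1/\eps)$. The initial $Q$-gap is therefore $\gO(1/\tau)$, up to $\ln A$ and $(1-\gamma)$ factors. (The statement says $\lambda\in\R^m_+$; I would use the boundedness provided by $\Lambda$ as in the applications, and otherwise carry $\norm{\lambda}_1$ as a constant.)

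Substituting back, each difference term is $\gO\parens*{\sqrt{\gamma^{N-1}}/\tau + \sqrt{\eps_\cC/\tau}}$. Choosing $N = \gO(\ln(1/\eps))$ makes the first piece $\gO(\eps)$. Recall $\eps_\cC = \eps_{\bias}(1+\sqrt d) + \sqrt{d\,\eps_{\opt}}$. Under (i), $\eps_\cC = \gO(\eps^3)$, so $\sqrt{\eps_\cC/\tau} = \gO(\sqrt{\eps^3/\eps}) = \gO(\eps)$. This is exactly the calibration that explains the exponents $3$ and $6$ in the hypothesis. Combining the two splits gives both the reward suboptimality and the constraint violation of $\pi_N$ as $\gO(\eps)$.

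The main obstacle is making sure \cref{corollary:ineact_npg_coreset} applies to a fixed pseudo-reward $R(\lambda)$ that is not one of the iterates $R_t$. \cref{assumption:q_realizability,assumption:eps_opt} are stated as suprema over the algorithm's $(t,k)$. I would read them as covering the recovery run, which is consistent with how the paper invokes them, and then apply \cref{lemma:coreset} verbatim. The remaining care is tracking the $1/\tau$ factors, since these are what pin down the required $\eps_{\bias}$ and $\eps_{\opt}$ scalings.
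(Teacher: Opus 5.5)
Your proposal is correct and is essentially the paper's own proof: add and subtract $V^{\pistarl}$, apply \cref{eq:vdl_upper_bound_lag} with the identity $\lag(\pi',\lambda)-\lag(\pi,\lambda) = V^{\pi'}_{R(\lambda),\tau}(\rho)-V^{\pi}_{R(\lambda),\tau}(\rho)$, and bound the soft suboptimality by the last inequality of \cref{corollary:ineact_npg_coreset}, which gives $\gO(\sqrt{\gamma^{N-1}/\tau}) + \gO(\sqrt{\eps_{\cC}/\tau})$ for the reward gap and for each constraint violation. Two minor remarks: for $\lambda \in \Lambda$ the pseudo-reward and the initial $Q$-gap are actually $\gO(1)$, not $\gO(1/\eps)$ and $\gO(1/\tau)$, though your looser bound only changes the constant in $N = \gO(\ln(1/\eps))$; and your calibration $\eps_{\cC} = \gO(\eps^3)$ is exactly what hypothesis (i) delivers, whereas the paper's proof text asks for the unnecessarily strong $\eps_{\bias} \leq \gO(\eps^6)$.
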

\begin{proof}
Let $\pi_N$ denote the policy returned by \texttt{Q-E-NPG}$(\pi_{-1}, R(\lambda), \tau, N)$ after $N$ iterations where $\pi_{-1}$ is the initial policy.
We first consider the sub-optimality gap. Following the initial proof of~\cref{proposition:transfer_rate}
\begin{align*}
    V^{\pi^*}_r(\rho) - V^{\pi_N}_r(\rho) &\leq \gO(\eps) + \gO\parens*{\frac{\sqrt{V^{\pistarl}_{R(\lambda), \tau}(\rho) - V^{\pi_N}_{R(\lambda), \tau}(\rho)}}{\sqrt{\tau}}}  \\
   &= \gO(\eps) + \gO\parens*{\sqrt{\frac{\gamma^{N-1}}{\tau}}} + \gO\parens*{\sqrt{\frac{\eps_\cC}{\tau}}}  \tag{By~\cref{corollary:ineact_npg_coreset} with $\diamond = R(\lambda)$ and $\eta = \frac{1-\gamma}{\tau}$}
\end{align*}
where $\eps_{\cC} = \gO(\eps_{\bias} + \sqrt{\eps_{\opt}})$.
Assuming that $\eps_{\bias} \leq \gO(\eps^6)$ and $\eps_{\opt} \leq \gO(\eps^6)$, setting $\tau = \gO(\eps)$ and $N = \gO(\ln(\nicefrac{1}{\eps}))$ implies
\begin{equation*}
   V^{\pistar}_{r}(\rho) - V^{\pi_N}_r(\rho)  \leq \gO(\eps).
\end{equation*}
Following a similar argument for the constraint violation, we have
\begin{equation*}
  \max\braces*{b_i - V^{\pi_N}_{c_i}(\rho), 0} \leq \gO(\eps)
\end{equation*} 
\end{proof}

\begin{lemma}\label{lemma:soft_bellman_lipschitz}
For a fixed policy $\pi$ and pseudo-reward $\diamond, \diamond' \in [0, U]$, $U > 0$    
\begin{equation}\label{eq:soft_bellman_lipschitz}
    \supnorm{Q^{\pi}_{\diamond, \tau} - Q^{\pi}_{\diamond', \tau}} \leq \frac{1}{1- \gamma} \supnorm{\diamond - \diamond'} \, .
\end{equation}
\end{lemma}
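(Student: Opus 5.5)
The plan is to view $Q^\pi_{\diamond,\tau}$ as the unique fixed point of the policy-evaluation soft Bellman operator for the fixed policy $\pi$, and then use the same add/subtract contraction argument as in the proof of \cref{lemma:qstar_lipschitz}. For a fixed policy $\pi$, define for any $Q \in \R^{SA}$
\begin{equation*}
[\cT^\pi_{\diamond,\tau} Q](s,a) \coloneqq \diamond(s,a) + \gamma \sum_{s'} P(s'|s,a) \sum_{a'} \pi(a'|s') \bracks*{Q(s',a') - \tau \ln \pi(a'|s')} .
\end{equation*}
By the definition of the soft value and soft state-action value functions, $V^\pi_{\diamond,\tau}(s') = \sum_{a'}\pi(a'|s')[Q^\pi_{\diamond,\tau}(s',a') - \tau\ln\pi(a'|s')]$. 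Hence $Q^\pi_{\diamond,\tau} = \cT^\pi_{\diamond,\tau} Q^\pi_{\diamond,\tau}$, and the same identity holds with $\diamond'$ in place of $\diamond$.

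I would then write
\begin{equation*}
Q^\pi_{\diamond,\tau} - Q^\pi_{\diamond',\tau} = \parens*{\cT^\pi_{\diamond,\tau} Q^\pi_{\diamond,\tau} - \cT^\pi_{\diamond,\tau} Q^\pi_{\diamond',\tau}} + \parens*{\cT^\pi_{\diamond,\tau} Q^\pi_{\diamond',\tau} - \cT^\pi_{\diamond',\tau} Q^\pi_{\diamond',\tau}}
\end{equation*}
and bound the two brackets separately.

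For the first bracket, the entropy terms $-\tau\ln\pi$ and the reward cancel, because both operators use the same $\pi$ and the same $\diamond$. What remains is $\gamma \sum_{s'}P(s'|s,a)\sum_{a'}\pi(a'|s')[Q^\pi_{\diamond,\tau} - Q^\pi_{\diamond',\tau}](s',a')$. Since $P$ and $\pi$ are probability distributions, this is at most $\gamma \supnorm{Q^\pi_{\diamond,\tau} - Q^\pi_{\diamond',\tau}}$ in absolute value.

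For the second bracket, the two operators differ only in the pseudo-reward, so it equals $\diamond(s,a)-\diamond'(s,a)$ pointwise and is bounded by $\supnorm{\diamond-\diamond'}$.

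Taking the supremum over $(s,a)$ and applying the triangle inequality gives
\begin{equation*}
\supnorm{Q^\pi_{\diamond,\tau} - Q^\pi_{\diamond',\tau}} \le \gamma \supnorm{Q^\pi_{\diamond,\tau} - Q^\pi_{\diamond',\tau}} + \supnorm{\diamond-\diamond'} .
\end{equation*}
Rearranging yields the claim, since $\gamma<1$.

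The only point needing care is that the supremum norms are finite, so that the rearrangement is legitimate. This holds because $\diamond,\diamond'\in[0,U]$ and the entropy is bounded by $\ln A$, so each soft $Q$ is bounded by $(U+\gamma\tau\ln A)/(1-\gamma)$. Alternatively, one can argue directly from the series definition: the entropy contributions of $Q^\pi_{\diamond,\tau}$ and $Q^\pi_{\diamond',\tau}$ are identical for the fixed $\pi$, so the difference is $\E[\sum_h \gamma^h (\diamond-\diamond')(s_h,a_h)]$, which is bounded by $\supnorm{\diamond-\diamond'}/(1-\gamma)$. This is essentially a one-line proof and could replace the operator argument.
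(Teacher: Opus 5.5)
Your proposal is correct and follows the paper's proof essentially line for line: both treat $Q^\pi_{\diamond,\tau}$ as the fixed point of the soft policy-evaluation operator $\gT^\pi_{\diamond,\tau}$, add and subtract $\gT^\pi_{\diamond,\tau}Q^\pi_{\diamond',\tau}$, bound the first term by the $\gamma$-contraction and the second by the pointwise reward difference, and then rearrange. Your extra remarks are valid and make the argument more self-contained than the paper's: the explicit cancellation of the $-\tau\ln\pi$ terms, the finiteness of the sup norms that justifies the rearrangement, and the one-line trajectory-sum alternative.
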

\begin{proof}
Recall the definition of the soft bellman operator
\begin{align*}
    Q^{\pi}_{\diamond, \tau} - Q^\pi_{\diamond', \tau} &= \gT_{\diamond, \tau}^\pi Q^{\pi}_{\diamond, \tau} - \gT_{\diamond', \tau}^\pi Q^\pi_{\diamond', \tau} \\
    &= (\gT_{\diamond, \tau}^\pi Q^{\pi}_{\diamond, \tau} - \gT_{\diamond, \tau}^\pi Q^{\pi}_{\diamond',\tau}) + (\gT_{\diamond}^\pi Q^{\pi}_{\diamond',\tau} -  \gT_{\diamond',\tau}^\pi Q^\pi_{\diamond',\tau}) \tag{Add/Subtract $\gT_{\diamond, \tau}^\pi Q^{\pi}_{\diamond',\tau}$} \\
    \implies \supnorm{Q^{\pi}_{\diamond, \tau} - Q^\pi_{\diamond',\tau}} &\leq \supnorm{\gT_{\diamond, \tau}^\pi Q^{\pi}_{\diamond, \tau} - \gT_{\diamond, \tau}^\pi Q^{\pi}_{\diamond',\tau}} + \supnorm{\gT_{\diamond, \tau}^\pi Q^{\pi}_{\diamond',\tau} -  \gT_{\diamond',\tau}^\pi Q^\pi_{\diamond',\tau}} \\
    &\leq \gamma \, \supnorm{Q^{\pi}_{\diamond, \tau} - Q^\pi_{\diamond',\tau}} + \supnorm{\gT_{\diamond, \tau}^\pi Q^{\pi}_{\diamond',\tau} -  \gT_{\diamond',\tau}^\pi Q^\pi_{\diamond',\tau}} \tag{$\gamma$-contraction} \\
    &= \gamma \, \supnorm{Q^{\pi}_{\diamond, \tau} - Q^\pi_{\diamond',\tau}} + \supnorm{\diamond - \diamond'} \tag{Definition of soft Bellman operator} \\
    \implies \supnorm{Q^{\pi}_{\diamond, \tau} - Q^\pi_{\diamond',\tau}} &\leq \frac{1}{1 - \gamma} \supnorm{\diamond - \diamond'} \, .
\end{align*}
\end{proof}

\section{Proof of Technical Tools when $K=1$ Inner-Iterations}\label{appendix:tools_k1}
\begin{lemma}\label{lemma:generic_phi_recurse}
Assuming that for all $t \geq 0$ the sequence of policies $\pi_t$, $\pi_{t+1}$ satisfies
\begin{align*}
\supnorm{Q^{\pistartt}_{t+1} - Q^{\pi_{t+1}}_{t+1}} &\leq \gamma^K \, \supnorm{Q^{\pistartt}_{t+1} - Q^{\pi_{t}}_{t+1}} + E \, , \\
\supnorm{\ln(\pi^*_{t+1}) - \ln(\pi_{t+1})} &\leq \frac{2 \, \gamma^{K-1}}{\tau} \supnorm{Q^{\pistartt}_{t+1} - Q^{\pi_{t}}_{t+1}} + \frac{2 \, E}{\tau}  \, ,
\end{align*}
for an error $E \geq 0$, then
\begin{equation*}
\Phi_{t+1} \leq \frac{(1+\gamma)\,\gamma^{K-1}}{2} \, \Phi_t
+ \frac{m \, (1+\gamma) \, \gamma^{K-1}}{(1 - \gamma)^2} \supnorm{\lt - \ltt}
+ \frac{(3 - \gamma) \, E}{2 \, (1 - \gamma)}
\end{equation*}
where $\Phi_t \coloneqq \frac{1}{1 - \gamma} \supnorm{Q^{\pistart}_{t} - Q^{\pi_{t}}_{t}} + \frac{\tau}{4} \supnorm{\ln(\pistart) - \ln(\pi_t)}$ as in~\cref{lemma:bias_bounded_by_phi}.
\end{lemma}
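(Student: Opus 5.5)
The plan is to bound the two components of $\Phi_{t+1}$ separately using the two assumed inequalities. Both of them involve the same quantity $\supnorm{Q^{\pistartt}_{t+1} - Q^{\pi_t}_{t+1}}$, which is the error of the old policy $\pi_t$ measured on the new problem. The key step is then to relate this quantity back to $\Phi_t$ plus a drift term in $\lambda$.

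Writing $\Delta \coloneqq \supnorm{Q^{\pistartt}_{t+1} - Q^{\pi_t}_{t+1}}$, the two assumptions give
\begin{align*}
\Phi_{t+1} &\leq \frac{\gamma^K}{1-\gamma}\Delta + \frac{E}{1-\gamma} + \frac{\tau}{4}\cdot\frac{2\gamma^{K-1}}{\tau}\Delta + \frac{E}{2} \\
&= \gamma^{K-1}\Delta\,\frac{2\gamma + 1 - \gamma}{2(1-\gamma)} + \frac{(3-\gamma)E}{2(1-\gamma)} \\
&= \frac{(1+\gamma)\gamma^{K-1}}{2(1-\gamma)}\,\Delta + \frac{(3-\gamma)E}{2(1-\gamma)}.
\end{align*}
The coefficient of $E$ already matches the claim. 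It therefore suffices to show
\[
\Delta \leq (1-\gamma)\Phi_t + \frac{2m}{1-\gamma}\supnorm{\lt-\ltt}.
\]

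To bound $\Delta$, I would use the triangle inequality through the intermediate quantities $Q^{\pistart}_t$ and $Q^{\pi_t}_t$:
\[
\Delta \leq \supnorm{Q^{\pistartt}_{t+1} - Q^{\pistart}_{t}} + \supnorm{Q^{\pistart}_{t} - Q^{\pi_t}_{t}} + \supnorm{Q^{\pi_t}_{t} - Q^{\pi_t}_{t+1}}.
\]
The first term is at most $\frac{1}{1-\gamma}\norm{\lt-\ltt}_1 \leq \frac{m}{1-\gamma}\supnorm{\lt-\ltt}$ by \cref{lemma:qstar_lipschitz}. The third term is at most $\frac{1}{1-\gamma}\supnorm{R_t - R_{t+1}} \leq \frac{m}{1-\gamma}\supnorm{\lt-\ltt}$ by \cref{lemma:soft_bellman_lipschitz}, since $c_i\in[0,1]$. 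The middle term equals $(1-\gamma)$ times the first summand of $\Phi_t$, so it is bounded by $(1-\gamma)\Phi_t$.

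Combining these gives the required bound on $\Delta$, and hence
\[
\Phi_{t+1} \leq \frac{(1+\gamma)\gamma^{K-1}}{2}\,\Phi_t + \frac{m(1+\gamma)\gamma^{K-1}}{(1-\gamma)^2}\supnorm{\lt-\ltt} + \frac{(3-\gamma)E}{2(1-\gamma)},
\]
which is exactly the claim.

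The main obstacle is the middle term of the triangle inequality: it must be controlled purely through the $Q$-part of $\Phi_t$, with no dependence on $\tau$ entering the bound. This works because the weight $\frac{1}{1-\gamma}$ in $\Phi_t$ absorbs the factor $(1-\gamma)$ exactly. The remainder of the argument is arithmetic bookkeeping of the constants.
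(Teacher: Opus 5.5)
Your proposal is correct and follows the same route as the paper. Both arguments use a three-term triangle inequality through $Q^{\pi^*_t}_t$ and $Q^{\pi_t}_t$, bound the outer terms with \cref{lemma:qstar_lipschitz} and \cref{lemma:soft_bellman_lipschitz}, and then weight the two assumed inequalities by $\frac{1}{1-\gamma}$ and $\frac{\tau}{4}$, absorbing the nonnegative log-term into $\Phi_t$. The only difference is that you combine the assumptions in terms of $\Delta$ before bounding $\Delta$, which is cosmetic, and your constants all check out.
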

\begin{proof}
For a fixed iteration $t \geq 0$ and entropy-regularization parameter $\tau > 0$, assume that the sequence of policies $\pi_t$, $\pi_{t+1}$ satisfies
\begin{align}
\supnorm{Q^{\pistartt}_{t+1} - Q^{\pi_{t+1}}_{t+1}} &\leq \gamma^K \, \supnorm{Q^{\pistartt}_{t+1} - Q^{\pi_{t}}_{t+1}} + E  \label{eq:phi_bound0} \\
\supnorm{\ln(\pi^*_{t+1}) - \ln(\pi_{t+1})} &\leq \frac{2 \, \gamma^{K-1}}{\tau}  \supnorm{Q^{\pistartt}_{t+1} - Q^{\pi_{t}}_{t+1}} + \frac{2 \, E}{\tau} \label{eq:phi_bound1}
\end{align}
The goal is to derive a recursive bound on the optimization error, relating iteration $t+1$ to iteration $t$ with additional drift and error terms.
By~\cref{lemma:qstar_lipschitz} with $\lambda = \ltt$ and $\lambda' = \lt$,
\begin{align*}
\supnorm{Q^{\pistartt}_{t+1} - Q^{\pistart}_{t}} \leq \frac{1}{1 - \gamma} \norm{\lt - \ltt}_1  
\leq \frac{m}{1- \gamma} \supnorm{\lt - \ltt} \numberthis \label{eq:tracking_bound5} .
\end{align*}
Furthermore, by~\cref{lemma:soft_bellman_lipschitz} with $\diamond = R_{t+1}$, $\diamond' = R_{t}$ and $\pi = \pi_t$,
\begin{align*}
\supnorm{Q^{\pi_t}_{t+1} - Q^{\pi_t}_{t}} \leq \frac{1}{1- \gamma} \supnorm{\gammat - \gammatt} \, .
\end{align*}
For a fixed $(s, a) \in \cS \times \cA$ pair,
\begin{align*}
\abs{[\gammat - \gammatt](s, a)} &= \abs*{\sum_{i=1}^m c_i(s, a) \, (\lt[i] - \ltt[i])} \leq\sum_{i=1}^m c_i(s, a) \, \abs*{\lt[i] - \ltt[i]} \tag{Using triangle inequality} \\
&\leq \sum_{i=1}^m \abs*{\lt[i] - \ltt[i]}  \tag{Since $c_i \in [0, 1]^{SA}$} \\
&= \norm{\lt - \ltt}_1 \leq m \, \supnorm{\lt - \ltt} \tag{Since for $x \in \R^m$, $\norm{x}_1 \leq m \, \supnorm{x}$}
\end{align*}
\begin{align*}
\implies  \supnorm{Q^{\pi_t}_{t+1} - Q^{\pi_t}_{t}} &\leq \frac{m}{1- \gamma} \supnorm{\lt - \ltt} \numberthis \label{eq:tracking_bound6}
\end{align*}
Hence, adding and subtracting $Q^{\pistart}_t$ and $Q^{\pi_t}_t$, and using the triangle inequality with~\cref{eq:tracking_bound5,eq:tracking_bound6},
\begin{align*}
\supnorm{Q^{\pistartt}_{t+1} - Q^{\pi_{t}}_{t+1}}
&\leq \supnorm{Q^{\pistartt}_{t+1} - Q^{\pistart}_t} + \supnorm{Q^{\pistart}_{t} - Q^{\pi_t}_t} + \supnorm{Q^{\pi_t}_t - Q^{\pi_{t}}_{t+1}} \\
&\leq \supnorm{Q^{\pistart}_{t} - Q^{\pi_t}_t} + \frac{2 \, m}{1 - \gamma} \supnorm{\lt - \ltt} \, . \numberthis \label{eq:tracking_bound_mid}
\end{align*}
Using~\cref{eq:tracking_bound5,eq:tracking_bound6} to bound the second last term of~\cref{eq:phi_bound0},
\begin{align*}
\implies \underbrace{\supnorm{Q^{\pistartt}_{t+1} - Q^{\pi_{t+1}}_{t+1}}}_{\substack{\text{optimization error at} \\ \text{iteration } t + 1}} &\leq \gamma^K \, \underbrace{\supnorm{Q^{\pistart}_{t} - Q^{\pi_{t}}_{t}}}_{\substack{\text{optimization error at} \\ \text{iteration } t}}  + \underbrace{\frac{2 \, m \, \gamma^K}{1 - \gamma} \, \supnorm{\lt - \ltt}}_{\substack{\text{drift from} \\ \text{iteration } t \to t + 1}} + E
\numberthis \label{eq:tracking_bound7}
\intertext{Moreover, combining~\cref{eq:tracking_bound7} with~\cref{eq:phi_bound1},}
\implies \supnorm{\ln(\pistartt) - \ln(\pi_{t+1})} &\leq \frac{2 \, \gamma^{K-1}}{\tau} \, \supnorm{Q^{\pistart}_{t} - Q^{\pi_{t}}_{t}}  + \frac{4 \, m \, \gamma^K}{(1 - \gamma) \, \tau} \, \supnorm{\lt - \ltt} + \frac{2\, E}{\tau} \numberthis \label{eq:tracking_bound8}
\end{align*}
We now combine the two inequalities. Multiplying~\cref{eq:tracking_bound7} by $\frac{1}{1-\gamma}$ and~\cref{eq:tracking_bound8} by $\frac{\tau}{4}$ and summing,
\begin{align*}
\Phi_{t+1} &\coloneqq \frac{1}{1 - \gamma} \, \supnorm{Q^{\pistartt}_{t+1} - Q^{\pi_{t+1}}_{t+1}} +  \frac{\tau}{4} \, \supnorm{\ln(\pistartt) - \ln(\pi_{t+1})} \\ 
&\leq \gamma^{K-1} \parens*{\frac{\gamma}{1 - \gamma} + \frac{1}{2}} \supnorm{Q^{\pistart}_{t} - Q^{\pi_{t}}_{t}}  + \frac{m \, \gamma^{K-1}}{1 - \gamma} \parens*{\frac{2 \, \gamma}{1-\gamma} + 1} \supnorm{\lt - \ltt}  +  \parens*{\frac{1}{1 - \gamma} + \frac{1}{2}} E \\
&= \frac{(1 + \gamma) \, \gamma^{K-1}}{2 \, (1 - \gamma)} \, \supnorm{Q^{\pistart}_{t} - Q^{\pi_{t}}_{t}}  + \frac{m \, (1 + \gamma) \, \gamma^{K-1}}{(1 - \gamma)^2} \supnorm{\lt - \ltt}  +  \frac{(3 - \gamma) \, E}{2 \, (1 - \gamma)} \\
&= \frac{(1 + \gamma) \, \gamma^{K-1}}{2} \parens*{\frac{1}{1 - \gamma} \, \supnorm{Q^{\pistart}_{t} - Q^{\pi_{t}}_{t}}} + \frac{m \, (1 + \gamma) \, \gamma^{K-1}}{(1 - \gamma)^2} \supnorm{\lt - \ltt}  +  \frac{(3 - \gamma) \, E}{2 \, (1 - \gamma)} \\
&\leq \frac{(1 + \gamma) \, \gamma^{K-1}}{2} \, \parens*{\frac{1}{1 - \gamma} \, \supnorm{Q^{\pistart}_{t} - Q^{\pi_{t}}_{t}} + \frac{\tau}{4} \, \supnorm{\ln(\pistart) - \ln(\pi_t)}} \tag{Since$\frac{\tau}{4} \, \supnorm{\ln(\pistart) - \ln(\pi_t)} \geq 0$} \\ 
&\quad + \frac{m \, (1 + \gamma) \, \gamma^{K-1}}{(1 - \gamma)^2} \supnorm{\lt - \ltt}  +  \frac{(3 - \gamma) \, E}{2 \, (1 - \gamma)}
\end{align*}
\begin{align}
\implies \Phi_{t+1} \leq  \frac{(1 + \gamma) \, \gamma^{K-1}}{2} \, \Phi_t + \frac{m \, (1 + \gamma) \, \gamma^{K-1}}{(1 - \gamma)^2} \supnorm{\lt - \ltt}  +  \frac{(3 - \gamma) \, E}{2 \, (1 - \gamma)}
\end{align}
which completes the proof.
\end{proof}

\begin{proposition}\label{proposition:recursion-const}
Let $\{\Phi_t\}_{t \geq 0}$ be a sequence satisfying, for all $t \geq 0$,
\begin{align}
    \Phi_{t+1} \leq \kappa \, \Phi_t + \nu \,,
    \label{eq:x-recursion-const}
\end{align}
where $\kappa \in [0,1)$, $\nu \geq 0$. Then, 
for all $T \geq 1$,
\begin{align}
    \frac{1}{T} \sum_{t=0}^{T-1} \Phi_t
    \leq \frac{\Phi_0}{(1 - \kappa) \, T} + \frac{\nu}{1 - \kappa} \,.
    \label{eq:x-average}
\end{align}
and 
\begin{align}
    \sum_{t=0}^{T-1} \alpha^{T-1-t} \, \Phi_t
    \leq \frac{\alpha^T}{\alpha - \kappa} \, \Phi_0 + \frac{\nu}{(1 - \kappa)(1 - \alpha)} \,.
    \label{eq:x-weighted}
\end{align}
where $\alpha \in (\kappa, 1)$.
\end{proposition}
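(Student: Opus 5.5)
The plan is to unroll the recursion \cref{eq:x-recursion-const} into a closed-form bound on each $\Phi_t$, and then sum that bound with the appropriate weights. By induction on $t$, I will show that for all $t \geq 0$
\begin{equation*}
\Phi_t \leq \kappa^t \, \Phi_0 + \nu \sum_{j=0}^{t-1} \kappa^j \leq \kappa^t \, \Phi_0 + \frac{\nu}{1 - \kappa}.
\end{equation*}
The base case $t=0$ is trivial. For the inductive step, apply \cref{eq:x-recursion-const} and use $\kappa \geq 0$, so that multiplying the inductive hypothesis by $\kappa$ preserves the inequality.

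For the averaged bound \cref{eq:x-average}, I will sum the bound above over $t=0,\dots,T-1$. The geometric series gives $\sum_{t=0}^{T-1}\kappa^t \leq \frac{1}{1-\kappa}$, so
\begin{equation*}
\sum_{t=0}^{T-1}\Phi_t \leq \frac{\Phi_0}{1-\kappa} + \frac{T\nu}{1-\kappa}.
\end{equation*}
Dividing by $T$ yields \cref{eq:x-average}.

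For the weighted bound \cref{eq:x-weighted}, I will substitute the same bound on $\Phi_t$ into the weighted sum and split it into two parts.
\begin{itemize}
\item The $\Phi_0$ part is $\Phi_0 \sum_{t=0}^{T-1} \alpha^{T-1-t}\kappa^t$. Since $\kappa < \alpha$, this equals $\Phi_0 \, \alpha^{T-1}\sum_{t=0}^{T-1}(\kappa/\alpha)^t \leq \Phi_0 \, \frac{\alpha^{T-1}}{1-\kappa/\alpha} = \Phi_0 \, \frac{\alpha^{T}}{\alpha-\kappa}$. This is the only place where the hypothesis $\alpha > \kappa$ is needed.
\item The $\nu$ part is at most $\frac{\nu}{1-\kappa}\sum_{t=0}^{T-1}\alpha^{T-1-t} \leq \frac{\nu}{(1-\kappa)(1-\alpha)}$, which uses $\alpha < 1$.
\end{itemize}
Adding the two parts gives \cref{eq:x-weighted}.

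No step is a real obstacle. The only care needed is to factor $\alpha^{T-1}$ out of the mixed geometric sum correctly, so that the ratio $\kappa/\alpha < 1$ appears, and to check the sign conditions: nonnegativity of $\kappa$ and $\nu$ is required for the induction, and $\alpha > 0$ follows from $\alpha > \kappa \geq 0$.
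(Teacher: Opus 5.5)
Your proposal is correct and is essentially the paper's proof. Both unroll the recursion to $\Phi_t \le \kappa^t\Phi_0 + \nu/(1-\kappa)$ and then sum with weights $1$ or $\alpha^{T-1-t}$; factoring out $\alpha^{T-1}$ is equivalent to the paper's closed form $\sum_{t=0}^{T-1}\alpha^{T-1-t}\kappa^t = (\alpha^T-\kappa^T)/(\alpha-\kappa)$.

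Your sign checklist is missing one condition, which the paper also uses silently. Bounding $\Phi_0\sum_{t}\kappa^t$ and $\Phi_0\sum_t \alpha^{T-1-t}\kappa^t$ from above by dropping the tails of the geometric sums requires $\Phi_0 \ge 0$. Without it the statement is false: take $T=1$, $\nu=0$, $\kappa>0$, $\Phi_t=-\kappa^t$. This is harmless here, since every $\Phi_t$ the paper applies the result to is a nonnegative potential.
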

\begin{proof}
Recursing~\cref{eq:x-recursion-const} from $t-1$ to $0$, we have
\begin{align*}
    \Phi_t
    &\leq \kappa \, \Phi_{t-1} + \nu 
    \leq \kappa \left( \kappa \, \Phi_{t-2} + \nu \right) + \nu 
    = \kappa^{2} \, \Phi_{t-2} + \nu ( 1 + \kappa ) \\
    &\leq \kappa^{t} \, \Phi_0 +  \nu\sum_{j=0}^{t-1} \kappa^{j} \leq \kappa^{t} \, \Phi_0 +  \nu\sum_{j=0}^{\infty} \kappa^{j} \tag{$\kappa \in [0, 1)$}  \\
    &= \kappa^{t} \, \Phi_0 +  \frac{\nu}{1 - \kappa} \numberthis \label{eq:x-unrolled}
\end{align*}
Multiplying~\cref{eq:x-unrolled} by weights $w_t \geq 0$ and recursing from $t = T-1$ to $0$,
\begin{align}
    \sum_{t=0}^{T-1} w_t \, \Phi_t
    \leq \Phi_0 \sum_{t=0}^{T-1} w_t \, \kappa^t + \frac{\nu}{1 - \kappa} \sum_{t=0}^{T-1} w_t \,.
    \label{eq:x-weighted-generic}
\end{align}
Setting $w_t = 1$ then dividing by $T$ proves~\cref{eq:x-average}.

To prove the second equation, let $w_t = \alpha^{T-1-t}$. Since $\alpha \in (\kappa, 1)$,
\begin{align*}
    \sum_{t=0}^{T-1} \alpha^{T-1-t} \, \kappa^t
    = \frac{\alpha^T - \kappa^T}{\alpha - \kappa}
    \leq \frac{\alpha^T}{\alpha - \kappa} \,,
    \qquad
    \sum_{t=0}^{T-1} \alpha^{T-1-t}
    \leq \sum_{t=0}^{\infty} \alpha^t
    = \frac{1}{1 - \alpha} \,,
\end{align*}
Substituting the above relations into~\cref{eq:x-weighted-generic} gives~\cref{eq:x-weighted}.
\end{proof}

\section{Proof of Technical Tools when Interpolating Between $K=1$ and $K > 1$ Inner-Iterations}\label{appendix:tools_kgeq1}
\begin{proposition}\label{proposition:kl_sq_bound}
Let $p, q \in \Delta_A$ such that $p(a) > 0$ and $q(a) > 0$ for all $a \in \cA$. Assuming $\max_{a} \abs*{\ln\parens*{\frac{p(a)}{q(a)}}} \leq L$, then
\begin{equation*}
    \KL(p \| q) \, \leq \, \frac{L^2}{2} \, .
\end{equation*}
\end{proposition}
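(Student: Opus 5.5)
Write $X(a) \coloneqq \ln\parens*{\nicefrac{p(a)}{q(a)}}$, viewed as a random variable under $a \sim q$. By hypothesis $X \in [-L, L]$ almost surely. The plan is to control $\KL(p \| q)$ through Hoeffding's lemma applied to $X$ under $q$, as the text before \cref{lemma:l_tracking_main} suggests. The starting point is to write the KL divergence as a cumulant-type expression. Since $p(a) = q(a) e^{X(a)}$ and $\sum_a p(a) = 1$, we get $\E_q[e^{X}] = 1$. By definition $\KL(p\|q) = \E_p[X] = \E_q[X e^{X}]$.

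The cleanest route uses the exponential tilting identity. Define $\psi(s) \coloneqq \ln \E_q[e^{sX}]$ for $s \in [0,1]$. Then $\psi(0) = 0$ and $\psi(1) = \ln \E_q[e^X] = 0$. The derivative $\psi'(s)$ is the mean of $X$ under the tilted distribution $q_s \propto q\, e^{sX}$, so $\psi'(1) = \E_p[X] = \KL(p\|q)$. Moreover $\psi''(s) = \mathrm{Var}_{q_s}(X)$. Because $X \in [-L,L]$ under every $q_s$, Popoviciu's inequality (equivalently, the variance bound used inside the proof of Hoeffding's lemma) gives $\psi''(s) \leq \frac{(2L)^2}{4} = L^2$.

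The key step combines these facts through Taylor's expansion at $s=1$:
\begin{equation*}
0 = \psi(0) = \psi(1) - \psi'(1) + \int_0^1 s\, \psi''(s)\, ds .
\end{equation*}
Substituting $\psi(1) = 0$ gives
\begin{equation*}
\KL(p\|q) = \psi'(1) = \int_0^1 s\,\psi''(s)\,ds \leq L^2 \int_0^1 s\,ds = \frac{L^2}{2}.
\end{equation*}
This is exactly the claimed bound. No fact beyond the variance bound for bounded variables is needed. Positivity of $p,q$ ensures that $X$ is finite and that all the tilted distributions are well defined.

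The main obstacle is obtaining the sharp constant $\nicefrac{1}{2}$. Cruder approaches lose constants. One such approach is $\KL \leq \chi^2$ combined with $\abs{e^X - 1} \leq L e^L$, which only gives a bound of order $L^2 e^{2L}$. Another is to apply Hoeffding's lemma to $\ln\E_q[e^{X}]$ directly, which yields $\E_q[X] \geq -\nicefrac{L^2}{2}$ and so bounds the reverse KL rather than $\KL(p\|q)$. The integral representation over $s \in [0,1]$ avoids both problems. I would verify carefully that the Taylor remainder weighting is $s$, not $1-s$, when expanding around $s=1$. In the displayed identity, the integral $\int_0^1 \psi''(u)\,(u-0)\,du$ is indeed the remainder for expanding $\psi(0)$ around $1$, so the constant comes out as $\nicefrac{1}{2}$.
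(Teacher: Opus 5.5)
Your proof is correct and is essentially the paper's argument with Hoeffding's lemma proved inline: the paper notes $\E_{a\sim p}[e^{-X(a)}]=1$ and applies Hoeffding's lemma under $p$ with $y=-1$ and range $[-L,L]$. Since $\E_{p}[e^{yX}]=\E_{q}[e^{(1+y)X}]=e^{\psi(1+y)}$, your expansion of $\psi$ about $s=1$ with the variance bound $\psi''\le L^2$ is exactly the standard proof of that instance of the lemma. Your concern that Hoeffding's lemma only controls the reverse divergence is also not a real obstacle: the hypothesis is symmetric in $p$ and $q$, so the bound $\KL(q\|p)\le L^2/2$ obtained under $q$ gives $\KL(p\|q)\le L^2/2$ once you swap the roles of $p$ and $q$.
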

\begin{proof}
Let $p, q \in \Delta_A$ such that $p(a) > 0$, $q(a) > 0$ and define $X(a) \coloneqq \ln\parens*{\frac{p(a)}{q(a)}}$ for all $a \in \cA$. Observe that
\begin{equation*}
    \E_{a \sim p}\bracks*{\exp(-X(a))} = \sum_a p(a) \frac{q(a)}{p(a)}  =  \sum_a q(a)  =  1 \, \implies \ln\parens*{\E_{a \sim p}\bracks*{\exp(-X(a))}} = 0. 
\end{equation*}
As a result,  assuming that $\max_{a} \abs*{X(a)} \leq L$, then by Hoeffding's Lemma (\cref{lemma:hoeffding_lemma}) with $y = -1$, $a=-L$, $b=L$, we have,
\begin{align*}
    \E_{a \sim p}[\exp(-X(a) + \E_{a' \sim p}[X(a')])]  &\leq \exp\parens*{\frac{(2L)^2}{8}} = \exp\parens*{\frac{L^2}{2}} \\
    \implies \underbrace{\ln\parens*{\E_{a \sim p}[\exp(-X(a))]}}_{=0}+ \E_{a' \sim p}[X(a')] &\leq \frac{L^2}{2}\tag{Taking the logarithms on both sides} \\
    \implies \KL(p \| q) = \E_{a' \sim p}[X(a')] &\leq \frac{L^2}{2} \, .
\end{align*}
\end{proof}

\begin{lemma}\label{lemma:genericLrecurse}
Fix $t \geq 0$ and $\tau > 0$. Let $\hat{\pi}_0 = \pi_t$ and for $k \in [0,K-1]$, suppose that the policy $\hat{\pi}_{k+1}$ satisfies    
\begin{align*}
\supnorm{\ln(\pi^*_{t+1}) - \ln(\hat{\pi}_{k+1})} &\leq \frac{2}{\tau} \parens*{\supnorm{Q^{\pi^*_{t+1}}_{t+1} - Q^{\hat{\pi}_k}_{t+1}} + E}
\end{align*}
for an error $E \geq 0$. Then, for $\pi_{t+1} = \hat{\pi}_K$ and $n = 2^K$, \\
\textit{(i)}  If $ E = 0$
\begin{equation*}
   \supnorm{\ln(\pistartt) - \ln(\pi_{t+1})} \leq \parens*{\frac{\gamma}{1 - \gamma}}^{n-1} \, \supnorm{\ln(\pistartt) - \ln(\pi_{t})}^{n} 
\end{equation*}
\textit{(ii)}  If $ E > 0$ and assuming $\frac{\gamma}{1 - \gamma} \, \supnorm{\ln(\pistartt) - \ln(\pi_t)} + \frac{2 \,  E \, \gamma}{\tau \, (1 - \gamma)} \leq \frac{1}{4}$
\begin{align*}
    \supnorm{\ln(\pistartt) - \ln(\pi_{t+1})} \leq \parens*{\frac{\gamma}{1 - \gamma}}^{n-1} \, \supnorm{\ln(\pistartt) - \ln(\pi_{t})}^{n} + \frac{4 \, E}{\tau} 
\end{align*}
Consequently, in either case,
\begin{equation}
L_{t+1} \leq  \parens*{\frac{\gamma}{1 - \gamma}}^{n-1} \, \parens*{\frac{2 \, \sqrt{m}}{\tau \, (1- \gamma)} \, \norm{\ltt - \lt} + L_t }^n + \frac{4 \, E}{\tau}
\end{equation}
\end{lemma}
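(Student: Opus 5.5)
The plan is to prove a one-step \emph{quadratic} contraction of the log-policy error. We then iterate it $K$ times, handle the additive error $E$ by induction, and finally convert $\supnorm{\ln(\pistartt) - \ln(\pi_t)}$ into $L_t$ plus a drift term.

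\textbf{Step 1 (one-step bound).} Fix $k$ and write $x_k \coloneqq \supnorm{\ln(\pistartt) - \ln(\hat{\pi}_k)}$. By the soft Bellman equations,
\begin{equation*}
Q^{\pistartt}_{t+1}(s,a) - Q^{\hat{\pi}_k}_{t+1}(s,a) = \gamma \, \E_{s' \sim \cP(\cdot|s,a)}\bracks*{V^{\pistartt}_{R_{t+1},\tau}(s') - V^{\hat{\pi}_k}_{R_{t+1},\tau}(s')} .
\end{equation*}
We apply the soft sub-optimality identity (\cref{lemma:soft_suboptimality}, used already in \cref{lemma:generic_vdl_upper_bound}) with initial distribution $\delta_{s'}$. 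This expresses each value gap as $\frac{\tau}{1-\gamma}$ times a $d^{\hat{\pi}_k}_{s'}$-average of $\KL(\hat{\pi}_k(\cdot|s)\|\pistartt(\cdot|s))$. Bounding that average by the maximum over states gives
\begin{equation*}
\supnorm{Q^{\pistartt}_{t+1} - Q^{\hat{\pi}_k}_{t+1}} \leq \frac{\gamma\tau}{1-\gamma}\max_s \KL(\hat{\pi}_k(\cdot|s)\|\pistartt(\cdot|s)) .
\end{equation*}
Both policies have full support, since they are softmax policies. \cref{proposition:kl_sq_bound} therefore bounds each KL term by $x_k^2/2$. Plugging this into the hypothesis of the lemma yields
\begin{equation*}
x_{k+1} \leq c\, x_k^2 + \frac{2E}{\tau}, \qquad c \coloneqq \frac{\gamma}{1-\gamma}.
\end{equation*}

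\textbf{Step 2 (iterating over $k$).} Rescale by setting $y_k \coloneqq c\,x_k$ and $e \coloneqq \frac{2cE}{\tau}$, so that $y_{k+1}\leq y_k^2 + e$.

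If $E=0$, induction gives $y_K \leq y_0^{2^K}$. Hence $x_K \leq c^{n-1}x_0^{n}$, which is (i).

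If $E>0$, the assumption of (ii) reads $y_0 + e \leq \tfrac14$. We prove by induction that $y_k \leq y_0^{2^k} + 2e$. For the inductive step,
\begin{equation*}
(y_0^{2^k}+2e)^2 + e = y_0^{2^{k+1}} + 4e\, y_0^{2^k} + 4e^2 + e .
\end{equation*}
This is at most $y_0^{2^{k+1}}+2e$ as soon as $4y_0^{2^k}+4e\leq 1$. That inequality holds because $y_0^{2^k}\leq y_0$ (as $y_0\leq\tfrac14$) and $y_0+e\leq \tfrac14$. Dividing by $c$ at $k=K$ gives $x_K \leq c^{n-1}x_0^n + 2e/c = c^{n-1}x_0^n + 4E/\tau$, which is (ii). This bookkeeping is the main (mild) obstacle: the induction hypothesis must be chosen so that the error stays additive, at $2e$, rather than compounding.

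\textbf{Step 3 (consequence).} Recall $x_0 = \supnorm{\ln(\pistartt)-\ln(\pi_t)}$ and $x_K = L_{t+1}$. By the triangle inequality, $x_0 \leq \supnorm{\ln(\pistartt)-\ln(\pistart)} + L_t$. Next apply \cref{eq:sup_to_sup_lipschitz} of \cref{proposition:pistarl_lipschitz} statewise with $q = Q^{\pistartt}_{t+1}(s,\cdot)$ and $q'=Q^{\pistart}_t(s,\cdot)$. Combining it with \cref{lemma:qstar_lipschitz} and $\norm{\cdot}_1\leq\sqrt{m}\norm{\cdot}$ gives
\begin{equation*}
\supnorm{\ln(\pistartt)-\ln(\pistart)} \leq \frac{2\sqrt{m}}{\tau(1-\gamma)}\norm{\ltt-\lt} .
\end{equation*}
Monotonicity of $x\mapsto x^n$ on $x\geq 0$ then yields the stated bound on $L_{t+1}$ in both cases.
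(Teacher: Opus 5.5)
Your proposal is correct and follows the paper's proof essentially step for step. Like the paper, you reduce the soft-$Q$ gap to a value gap, apply the soft sub-optimality identity and the Hoeffding-based bound $\KL \le L^2/2$ to get $x_{k+1} \le \frac{\gamma}{1-\gamma}x_k^2 + \frac{2E}{\tau}$, rescale to $y_{k+1}\le y_k^2+e$ with the same induction hypothesis $y_k \le y_0^{2^k}+2e$, and finish with the triangle inequality plus \cref{proposition:pistarl_lipschitz} and \cref{lemma:qstar_lipschitz}. Your per-state use of \cref{lemma:soft_suboptimality} with initial distribution $\delta_{s'}$ is, if anything, a slightly more careful rendering of the paper's sup-norm step.
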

\begin{proof}
Fix $t \geq 0$, $\tau > 0$ and let $\hat{\pi}_0 = \pi_t$. Assuming that for each $k \in [K]$, the policy $\hat{\pi}_{k+1}$ satisfies with error $E \geq 0$,
\begin{align*}
\supnorm{\ln(\pi^*_{t+1}) - \ln(\hat{\pi}_{k+1})} &\leq \frac{2}{\tau} \parens*{\supnorm{Q^{\pi^*_{t+1}}_{t+1} - Q^{\hat{\pi}_k}_{t+1}} + E}
= \frac{2}{\tau} \supnorm{Q^{\pi^*_{t+1}}_{t+1} - Q^{\hat{\pi}_k}_{t+1}} + \frac{2 \, E}{\tau} \\
&\leq \frac{2 \, \gamma}{\tau} \supnorm{V^{\pi^*_{t+1}}_{R_{t+1}, \tau} - V^{\hat{\pi}_k}_{R_{t+1}, \tau}} + \frac{2 \, E}{\tau} \tag{The pseudo-reward are identical} \\
&\leq \frac{2 \, \gamma}{1 - \gamma} \, \max_s \KL(\hat{\pi}_k(\cdot | s) \| \pi^*_{t+1}(\cdot | s)) \underbrace{\sum_{s' \in \cS} d^{\hat{\pi}_k}(s')}_{=1} + \frac{2 \, E}{\tau} \tag{Using~\cref{lemma:soft_suboptimality} with $\pi' = \pi^*_{t+1}$, $\pi = \hat{\pi}_k$ and $\diamond = R_{t+1}$} \\
\intertext{Since $\max_{a} \abs*{\ln\parens*{\hat{\pi}_k(a | s) / \pi^*_{t+1}(a | s)}} \leq \supnorm{\ln(\pi^*_{t+1}) - \ln(\hat{\pi}_{k})}$ for every $s \in \cS$, by~\cref{proposition:kl_sq_bound} with $p = \hat{\pi}_k(\cdot | s)$, $q = \pi^*_{t+1}(\cdot | s)$, and $L = \supnorm{\ln(\pi^*_{t+1}) - \ln(\hat{\pi}_{k})}$,}
\implies \supnorm{\ln(\pi^*_{t+1}) - \ln(\hat{\pi}_{k+1})} &\leq \frac{\gamma}{1 - \gamma} \, \supnorm{\ln(\pi^*_{t+1}) - \ln(\hat{\pi}_{k})}^2 + \frac{2 \, E}{\tau} \\
\implies \frac{\gamma}{1 - \gamma} \, \supnorm{\ln(\pi^*_{t+1}) - \ln(\hat{\pi}_{k+1})} &\leq \parens*{\frac{\gamma}{1 - \gamma}}^{2} \, \supnorm{\ln(\pi^*_{t+1}) - \ln(\hat{\pi}_{k})}^2 + \frac{2 \, E \, \gamma}{\tau \, (1- \gamma)}\, \numberthis \label{eq:quad_recurse} 
\end{align*}
If $E=0$, using~\cref{eq:quad_recurse} and recursing from $k=K-1$ to $0$, we have
\begin{equation*}
   \frac{\gamma}{1 - \gamma} \, \supnorm{\ln(\pi^*_{t+1}) - \ln(\hat{\pi}_{K})} \leq \parens*{\frac{\gamma}{1 - \gamma}}^{2^K} \, \supnorm{\ln(\pi^*_{t+1}) - \ln(\hat{\pi}_{0})}^{2^K}  \,. 
\end{equation*}
Dividing by $\frac{\gamma}{1 - \gamma}$ and noting that $\hat{\pi}_{K} = \pi_{t+1}$ and $\hat{\pi}_0 = \pi_t$ gives 
\begin{equation*}
\supnorm{\ln(\pi^*_{t+1}) - \ln(\pi_{t+1})} \leq \parens*{\frac{\gamma}{1 - \gamma}}^{2^K - 1} \, \supnorm{\ln(\pi^*_{t+1}) - \ln(\pi_t)}^{2^K}  \,. 
\end{equation*}
Otherwise, when $E > 0$ we use induction. For this, let 
\begin{equation*}
 a_k \coloneqq \frac{\gamma}{1 - \gamma} \, \supnorm{\ln(\pi^*_{t+1}) - \ln(\hat{\pi}_{k})}  \quad \text{ ; }  \quad \tilde{e} \coloneqq \frac{2 \,  E \,  \gamma}{\tau \, (1 - \gamma)}
\end{equation*}
We show by induction that $a_k \leq a_0^{2^k} + 2 \, \tilde{e}$ for all $k \in [K]$. \\
\textbf{Base case:} For $k = 0$, $a_0 \leq a_0 + 2 \, \tilde{e}$. \\
\textbf{Induction step:} Suppose that $a_k \leq a_0^{2^k} + 2 \, \tilde{e}$ for some $k \geq 0$. Then, by~\cref{eq:quad_recurse}
\begin{align*}
    a_{k+1} &\leq a_k^2 + \tilde{e} \\
       &\leq \parens*{a_0^{2^k} + 2 \, \tilde{e}}^2 + \tilde{e} \tag{Induction hypothesis} \\
    &= a_0^{2^{k+1}} + 4 \, \tilde{e} \, a_0^{2^k} + 4 \, \tilde{e}^2 + \tilde{e} \tag{Expanding the square} \\
    &= a_0^{2^{k+1}} + \tilde{e} \, \parens*{4 \, \parens*{a_0^{2^k} + \tilde{e}} + 1} \\
    &\leq a_0^{2^{k+1}} + 2 \, \tilde{e} \, . \tag{Since $a_0 < 1$ implies $a_0^{2^k} \leq a_0$, and assuming $a_0 + \tilde{e} \leq \nicefrac{1}{4}$}
\end{align*}
This completes the induction. Setting $k = K$ and dividing by $\frac{\gamma}{1 - \gamma}$, we have 
\begin{equation*}
\supnorm{\ln(\pi^*_{t+1}) - \ln(\pi_{t+1})} \leq \parens*{\frac{\gamma}{1 - \gamma}}^{2^K - 1} \, \supnorm{\ln(\pi^*_{t+1}) - \ln(\pi_t)}^{2^K} + \frac{4 \, E}{\tau}
\end{equation*}
Combining both cases and defining $L_t \coloneqq \supnorm{\ln(\pi^*_{t}) - \ln(\pi_{t})}$ and $n = 2^K$,
\begin{align*}
L_{t+1} &\leq \parens*{\frac{\gamma}{1 - \gamma}}^{n - 1} \, \supnorm{\ln(\pi^*_{t+1}) - \ln(\pi_t)}^{2^K} + \frac{4 \, E}{\tau} \\
&= \parens*{\frac{\gamma}{1 - \gamma}}^{n-1} \, \supnorm{\ln(\pistartt) - \ln(\pistart) + \ln(\pistart) - \ln(\pi_t)}^{n} + \frac{4 \, E}{\tau}\tag{Add/Subtract $\ln(\pistart)$} \\
&\leq \parens*{\frac{\gamma}{1 - \gamma}}^{n-1} \, \parens*{\supnorm{\ln(\pistartt) - \ln(\pistart)} + \underbrace{\supnorm{\ln(\pistart) - \ln(\pi_t)}}_{\coloneqq L_t} }^{n} + \frac{4 \, E}{\tau} \tag{Using triangle inequality}
\end{align*}
\begin{align*}
    \implies L_{t+1} \leq  \parens*{\frac{\gamma}{1 - \gamma}}^{n-1} \, \parens*{\supnorm{\ln(\pistartt) - \ln(\pistart)} + L_t }^n + \frac{4 \, E}{\tau}
\end{align*}
We now bound $\supnorm{\ln(\pistartt) - \ln(\pistart)}$ using the Lipschitz property of $Q^*_{R(\lambda),\tau}$ in $\lambda$. For a fixed $s \in \cS$, by~\cref{eq:sup_to_sup_lipschitz} of~\cref{proposition:pistarl_lipschitz} with $q= Q^{*}_{R_{t+1}, \tau}(s, \cdot)$ and $q' = Q^{*}_{R_{t}, \tau}(s, \cdot)$, we have
\begin{align*}
   \supnorm{\ln(\pistartt(\cdot | s)) - \ln(\pistart(\cdot | s))} &\leq \frac{2}{\tau } \, \supnorm{Q^{\pistartt}_{R_{t+1}, \tau}(s, \cdot) - Q^{\pistart}_{R_{t}, \tau}(s, \cdot)} \\ 
   &\leq \frac{2}{\tau\, (1 - \gamma)} \, \norm{\ltt - \lt}_1 \tag{Using~\cref{lemma:qstar_lipschitz} with $\lambda = \lt$ and $\lambda' = \ltt$} \\
   &\leq \frac{2 \, \sqrt{m}}{\tau \, (1 - \gamma)} \, \norm{\ltt - \lt}  \tag{Since for any $x \in \R^m$, $\norm{x}_1 \leq \sqrt{m} \, \norm{x}$}
\end{align*}
Taking the max over all $s \in \cS$ 
\begin{align}
\implies   \supnorm{\ln(\pistartt) - \ln(\pistart)}  &\leq \frac{2 \, \sqrt{m}}{\tau \, (1 - \gamma)} \, \norm{\ltt - \lt} \label{eq:ln_star_tt_ln_star_t_ub}
\end{align}
\begin{align*}
\implies L_{t+1} \leq  \parens*{\frac{\gamma}{1 - \gamma}}^{n-1} \, \parens*{\frac{2 \, \sqrt{m}}{\tau \, (1- \gamma)} \, \norm{\ltt - \lt} + L_t }^n + \frac{4 \, E}{\tau}
\end{align*}
\end{proof}
\begin{lemma}\label{lemma:quadratic_recursion}
Let $K \geq 1$ and $n = 2^K$. If the non-negative sequence $\{L_t\}_{t\geq 0}$ satisfies
\begin{equation*}
L_0 \leq 2^{n} \, \kappa \, (\nu + e)^{n} + e \quad \text{;} \quad L_{t+1} \leq \kappa \, (L_t+\nu)^{n} + e \quad \forall  t\geq 0.
\end{equation*}
where $\kappa \geq 0$, $\nu>0$ $e \geq 0$ and $2^n \, \kappa \, (\nu + e)^{n-1} \leq 1$. 
Then for all $t \geq 0$, 
\begin{align*}
    L_t &\leq 2^n \, \kappa \, (\nu + e)^{n} + e \, .
\end{align*}
\end{lemma}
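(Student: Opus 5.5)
The plan is to prove the claim by induction on $t$, showing that the bound $B \coloneqq 2^n \kappa (\nu+e)^n + e$ is an invariant of the recursion $L_{t+1} \leq \kappa (L_t + \nu)^n + e$. The base case $t=0$ is exactly the hypothesis on $L_0$. For the inductive step I assume $L_t \leq B$. Since $x \mapsto \kappa (x+\nu)^n$ is non-decreasing on $x \geq 0$ (because $\kappa \geq 0$ and $n \geq 1$), the recursion gives
\begin{equation*}
L_{t+1} \leq \kappa (B + \nu)^n + e .
\end{equation*}
It therefore suffices to show $\kappa (B+\nu)^n \leq 2^n \kappa (\nu+e)^n$, which reduces to $B + \nu \leq 2(\nu + e)$ whenever $\kappa > 0$. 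The case $\kappa = 0$ is immediate, since then $L_{t+1} \leq e \leq B$.

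The key computation is this comparison. Writing $B + \nu = (\nu + e) + 2^n \kappa (\nu+e)^n$, I factor out $(\nu+e)$ to get
\begin{equation*}
B + \nu = (\nu+e)\bracks*{1 + 2^n \kappa (\nu+e)^{n-1}} \leq 2(\nu+e).
\end{equation*}
The inequality uses the standing assumption $2^n \kappa (\nu+e)^{n-1} \leq 1$. Raising both sides to the power $n$, multiplying by $\kappa$, and adding $e$ yields $L_{t+1} \leq 2^n\kappa(\nu+e)^n + e = B$, which closes the induction. This is the same manipulation as in~\cref{eq:inv_step_condition}.

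The only delicate point is bookkeeping. The bound must be exactly self-reproducing, so the slack factor $2$ coming from the condition on $\kappa(\nu+e)^{n-1}$ has to match the $2^n$ prefactor in $B$. This is also why $e$ is included inside $(\nu+e)$ rather than only added at the end.
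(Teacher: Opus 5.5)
Your proof is correct and follows the paper's argument: induction with the base case taken from the hypothesis, then monotonicity of $x \mapsto \kappa(x+\nu)^n$, then the factorization $B+\nu = (\nu+e)\bigl(1 + 2^n\kappa(\nu+e)^{n-1}\bigr) \le 2(\nu+e)$. Your separate treatment of $\kappa = 0$ is harmless but unnecessary, since the same chain of inequalities already holds when $\kappa = 0$.
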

\begin{proof}
We use induction to prove this fact for a fixed $K \geq 1$. \\   
\textbf{Base case:} For $t =0$, by assumption $L_0 \leq 2^n \, \kappa \, (\nu + e)^{n} + e$. \\
\textbf{Induction Step:} Suppose $L_t \leq 2^n \, \kappa \, (\nu + e)^{n} + e$ for some $t \geq 0$. 
Then, since $\x \to \kappa x^n$ is non-decreasing for $x \geq 0$,
\begin{align*}
L_{t+1} &\leq \kappa \, (L_t + \nu)^{n} + e \\
&\leq \kappa \, \parens*{2^n \, \kappa \, (\nu + e)^{n} + e + \nu}^{n} + e
    \tag{Induction hypothesis} \\
&= \kappa \, \parens*{(\nu + e) \parens*{2^n \, \kappa \, (\nu + e)^{n-1} + 1}}^{n} + e \\
&= \kappa \, (\nu + e)^{n} \, \parens*{2^n \, \kappa \, (\nu + e)^{n-1} + 1}^{n} + e \\
&\leq 2^n \, \kappa \, (\nu + e)^{n} + e \, ,
    \tag{Since $2^n \, \kappa \, (\nu + e)^{n-1} \leq 1$}
\end{align*}
which completes the proof.
\end{proof}

\begin{lemma}\label{lemma:invariant_recursion}
Let $\{L_t\}_{t\geq 0}$ be a non-negative sequence.
Suppose there exists a region $B > 0$ such that $L_t \leq B$ implies
\begin{equation}\label{eq:conditional_recursion}
    L_{t+1}
    \leq
    \kappa (L_t+\nu)^n + e,
\end{equation}
for constants $\kappa\geq 0$, $\nu\geq 0$, and $e\geq 0$. Then, if
\begin{align}
    L_0 &\leq B, \label{eq:invariant_init}\\
    \kappa(B+\nu)^n+e &\leq B, \label{eq:invariant_condition}
\end{align}
we have, for all $t \geq 0$, $L_t \leq B$.
\end{lemma}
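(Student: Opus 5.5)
The plan is a direct induction on $t$. The base case $t=0$ is exactly the assumption \eqref{eq:invariant_init}, $L_0 \leq B$.

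For the inductive step, suppose $L_t \leq B$ for some $t \geq 0$. Because $L_t$ lies in the region where the conditional recursion is valid, \eqref{eq:conditional_recursion} applies and gives $L_{t+1} \leq \kappa (L_t + \nu)^n + e$. I then use monotonicity. Since $L_t \geq 0$ and $\nu \geq 0$, we have $0 \leq L_t + \nu \leq B + \nu$. The map $x \mapsto \kappa x^n$ is non-decreasing on $x \geq 0$ because $\kappa \geq 0$ and $n \geq 1$. Hence $\kappa (L_t+\nu)^n + e \leq \kappa (B+\nu)^n + e$, and by \eqref{eq:invariant_condition} this is at most $B$. Therefore $L_{t+1} \leq B$, which closes the induction and gives $L_t \leq B$ for all $t \geq 0$.

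There is no real obstacle here. The only point needing care is the order of the argument: the recursion \eqref{eq:conditional_recursion} may only be invoked once the inductive hypothesis $L_t \leq B$ is in hand. Non-negativity of $L_t$ and $\nu$ is what makes the monotonicity step legitimate.
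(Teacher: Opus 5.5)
Your proof is correct and follows the paper's argument: induction on $t$, with the base case given by $L_0 \le B$ and the inductive step chaining the conditional recursion, the monotonicity of $x \mapsto \kappa (x+\nu)^n$ on $x \ge 0$, and the condition $\kappa(B+\nu)^n + e \le B$. The implicit assumption $n \ge 1$ that you flag holds in every application, since there $n = 2^K \ge 2$.
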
 
\begin{proof}
We prove the result by induction.  \\
\textbf{Base case:} For $t=0$, by the initial condition in~\cref{eq:conditional_recursion}, $L_0 \leq B$. \\
\textbf{Induction Step:} Suppose that $L_t\leq B$ for some $t\geq 0$.
Then by~\cref{eq:conditional_recursion} and since $x \mapsto \kappa \, (x + \nu)^n$ is non-decreasing on $x \geq 0$,
\begin{equation*}
L_{t+1} \leq \kappa \, (L_t + \nu)^n + e \leq \kappa \, (B + \nu)^n + e \leq B \tag{by~\cref{eq:invariant_condition}}
\end{equation*}
\end{proof}

\section{Tabular Experiments}\label{appendix:experiments}
All experiments were run on a local desktop on an AMD Ryzen 9 3900X CPU.

\subsection{Constrained Bandit}\label{appendix:experiments_bandit}
In~\cref{fig:bandit}, we use the following constrained armed bandit problem with deterministic rewards and constrained rewards: $r = [1.0, 0.0]$, $c = [0.0, 1.0]$ and $b = 0.7$. The optimal policy is thus $\pi^* = [0.3, 0.7]$. \\
For \texttt{NPG-PD}~\cite{ding2020natural}, we use the following parameters: $\eta_{\text{primal}} = 2 \, \ln(A)$, $\eta_{\text{dual}} = 2 / \sqrt{T}$ and $T = 1000$. \\
For our proposed primal-dual method ($\texttt{DD (K=1)}$), we use the following parameters: $\beta = 0.01$, $\tau = \mu = 0.01$, $K=1$, $T = 950$ and $J = 50$ where we use \texttt{E-NPG} as the inner sub-problem solver with $\eta  = (1 - \gamma) / \tau$.

\subsection{Constrained MDPs}\label{appendix:experiments_mdp}

Following the experimental set-up in~\cite{lu2026augmented}, we consider the following tabular environments. In each of the following environments, we set the initial state distribution to be uniform, i.e. for all $s \in \gS$, $\rho(s) = \nicefrac{1}{S}$. 

\textbf{Constrained Cliff World:} 
This environment modifies the original Cliff World~\citep[Example 6.6]{sutton2018reinforcement} and adds additional constrained rewards.
The environment consists of $21$ states and $4$ actions. The objective is for an agent to each the goal state while avoiding a chasm. 
If the agent falls into the chasm (indicated by the red region), the agent receives a reward of $0$ and a constrained reward of $0$. 
If the agent reaches the goal (the green region), the agent receives a reward of $+1$ and a constrained reward of $0$.  All other rewards are $0$ and constrained rewards are $+1$. 
In this environment $\gamma = 0.9$ and $b = 9.83$.

\begin{figure}[!h]
    \centering
    \includegraphics[scale=0.8]{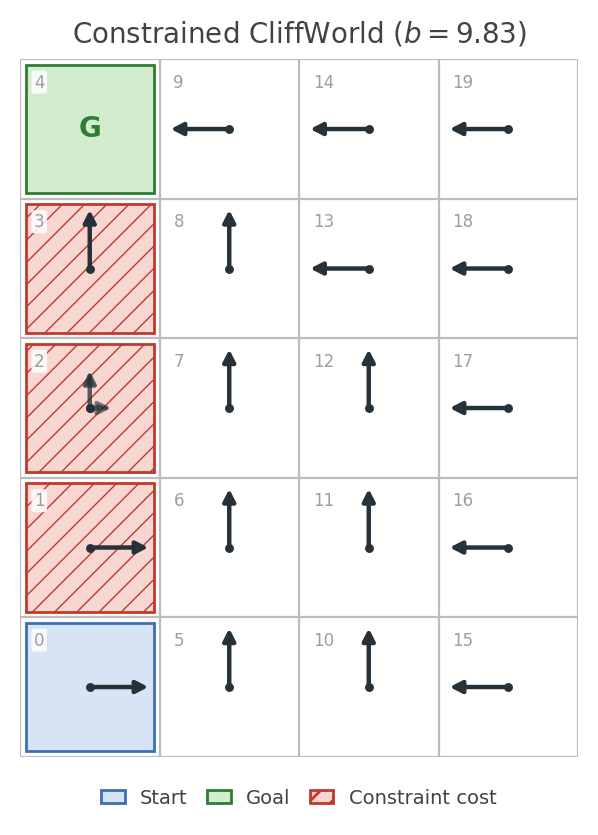}
    \caption{Visualization of the optimal policy for the cliff world environment. A longer arrows indicates a higher the probability of taking a particular action.}
\end{figure}

\textbf{Constrained Deep Sea Treasure} 
This environment modifies the original Deep sea treasure
~\citep{osband2019behaviour} and adds additional constrained rewards.
The environment consists $25$ states and $2$ actions. The agent descends one row per each time it takes an action. The goal of the agent is to stay right in order to reach the treasure (indicated by the green region). If the agent transitions to the left, it receives a reward of $0$. Otherwise if the agent reaches the treasure, it receives a reward of $+1$.  Additionally, in the environment there are two landmines. 
If the agent reaches a landmine (indicated by the red region), the agent receives a constrained reward of $0$. All other constrained rewards are $+1$. In this environment $\gamma = 0.9$ and $b = 9.99$.

\begin{figure}[!h]
    \centering
    \includegraphics[scale=0.8]{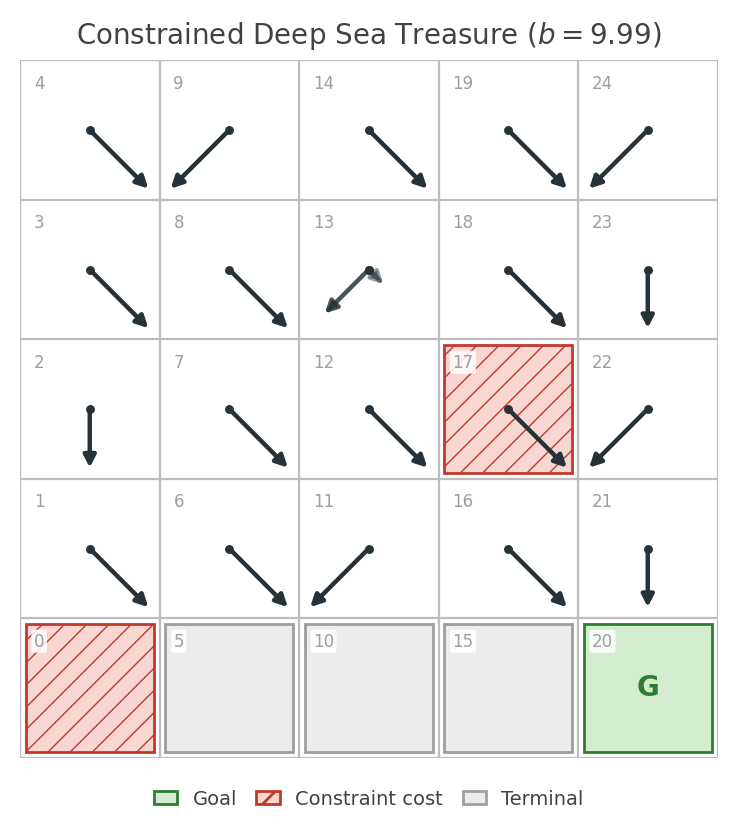}
    \caption{Visualization of the optimal policy for the deep sea treasure environment. A longer arrows indicates a higher the probability of taking a particular action. }
\end{figure}

\begin{figure}[!h]
    \centering
    \includegraphics[scale=0.8]{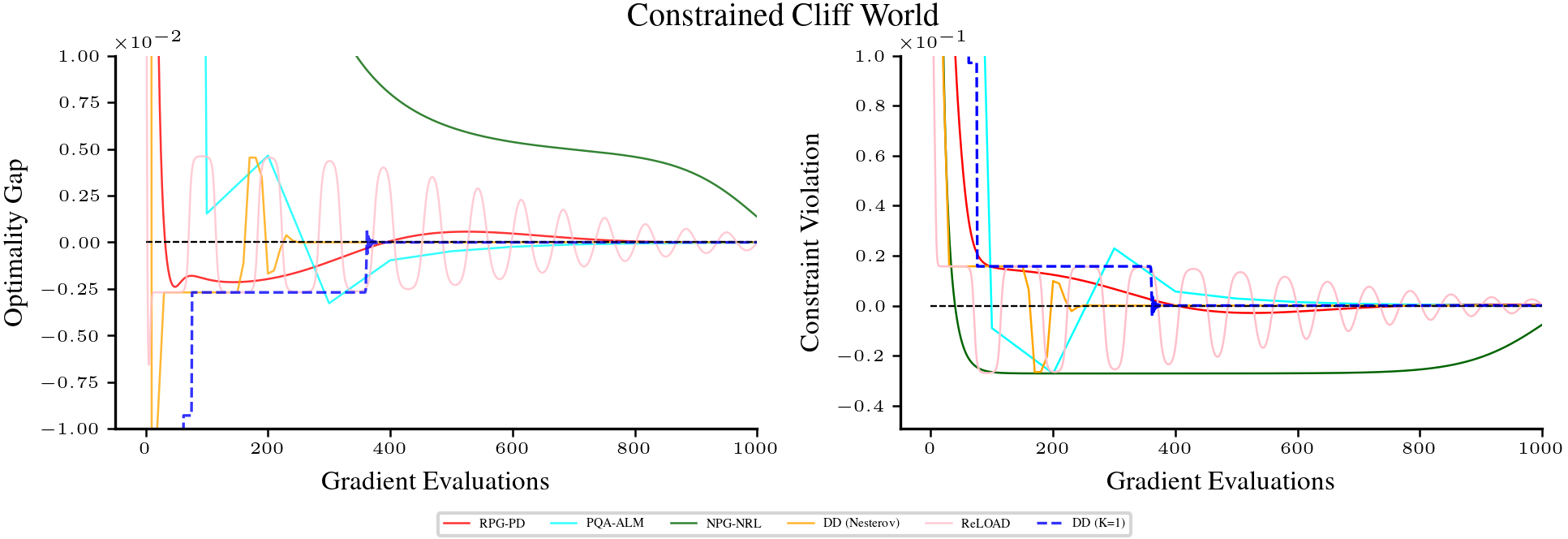}
    \includegraphics[scale=0.8]{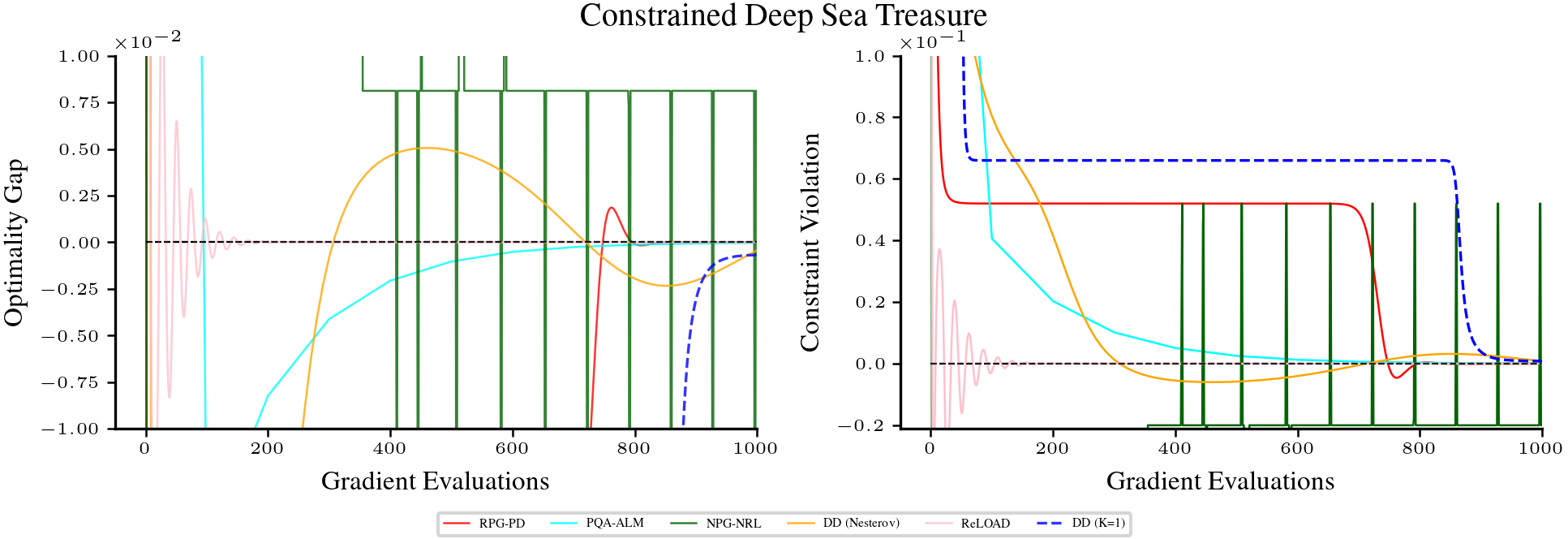}
    \caption{Comparing our regularized inexact \texttt{DD} method $\texttt{DD (K=1)}$ when $K=1$, $\mu > 0$ and warm-starting the initial policy with $50$ iterations. 
    We compare against the following practical last-iterate methods from prior work: \texttt{RPG-PD}~\cite{ding2023last}, \texttt{PQA-ALM}~\cite{lu2026augmented}, \texttt{NPG-NRL}~\cite{muller2024truly}, \texttt{ReLoad}~\cite{moskovitz2023reload}, and the related DD method of Ying et al.~\cite{ying2022dual} (\texttt{DD (Nesterov)}). Our method converges steadily to zero optimality gap and constraint violations without any oscillations }
\end{figure}

For all methods, hyperparameters were selected via grid search, and the full configurations are listed in \cref{table:tabular_grid_search}.
We report results using the configuration that achieved the smallest optimality gap subject to the final policy satisfying the constraint violation $b - V^{\pi_{T}}_{c}(\rho) \leq 0.001$.

\begin{table}[h!]
\centering
\caption{Grid search ranges for the tabular experiments
}
\resizebox{0.9\textwidth}{!}{
\begin{tabular}{l l l}
\toprule
\textbf{Method} & \textbf{Fixed Params.} & \textbf{Grid Search Space} \\
\midrule
\texttt{DD (K=1)} 
& $T=950$, $K=1$, $J=50$, $\tau = \mu = 0.0001$ & Dual step-size ($\beta)$: $\{0.0001, 0.001, 0.01, 0.1\}$  \\
& Warm-up iterations=50 & \\
\midrule
\texttt{PQA-ALM} 
& $T=10$, $K=100$, $\beta=10$ 
& Primal step-size: $\{0.1,1,10\}$ \\
\midrule
\texttt{RPG-PD}~\citep{ding2023last} 
& $T=1000$  & Entropy Coefficient $\{10^{-4},10^{-2},10^{-1}\}$ \\
& & Primal step-size: $\{0.01,0.1,1,10,100\}$  \\
& & Dual step-size: $\{0.01,0.1,1,10,100\}$ \\
\midrule
\texttt{NPG-NRL}~\citep{muller2024truly} 
& $T=1000$ $\mu = 0.0001$ & Entropy Coefficient $\{10^{-4},10^{-2},10^{-1}\}$ \\
& & Primal step-size: $\{0.01,0.1,1,10,100\}$  \\
& & Dual step-size: $\{0.01,0.1,1,10,100\}$ \\
\midrule
\texttt{DD (Nesterov)}~\citep{ying2022dual} 
& $T=100$, $K=10$ 
& Entropy Coefficient: $\{0.001,0.01,0.1,1\}$  \\
& & Dual step-size: $\{0.001,0.01,0.1,1\}$ \\
\midrule
\texttt{ReLOAD}~\citep{moskovitz2023reload} 
& $T=1000$  & Primal step-size: $\{0.01,0.1,1,10,100\}$ \\
& & Dual step-size $\{0.01,0.1,1,10,100\}$ \\
\bottomrule
\end{tabular}
}
\label{table:tabular_grid_search}
\end{table}

\section{Technical Tools for Natural Policy Gradient with Entropy Regularization}\label{appendix:npg}
The following results correspond to~\cite[Theorem 1]{cen2022fast} and~\cite[Theorem 2]{cen2022fast} and is added for completeness.
\subsection{Exact Setting}
At iteration $k \geq 0$, the entropy-regularized \texttt{NPG} update~\cite{cen2022fast} is given by:
\begin{equation}\label{eq:entropy_npg_update}
    \pi_{k+1}(a | s) \propto \pi_k(a | s)^{1 - \frac{\eta \, \tau}{1 - \gamma}} \, \exp\parens*{\frac{\eta \, Q^{\pi_k}_{\diamond, \tau}(s, a)}{1 - \gamma}} \quad \forall (s, a) \in \cS \times \cA \, , 
\end{equation}
where $\diamond \in \bracks*{0, U}^{SA}$, $U > 0$ is a bounded pseudo-reward and $Q^{\pi_k}_{\diamond, \tau} \in \R^{SA}$ is the soft Q-function of policy $\pi_k$.
\begin{restatable}{lemma}{entropynpgsupnormconvergencelemma}\label{lemma:entropy_npg_supnorm_convergence}
For any pseudo-reward $\diamond \in [0, U]^{SA}$, $U > 0$ the \texttt{NPG} update in~\cref{eq:entropy_npg_update} with $\tau > 0$, $\eta \leq \frac{1 - \gamma}{\tau}$ converges as: for $K \geq 0$:
\begin{align*}
\supnorm{Q^{\pi^*_\tau}_{\diamond, \tau} - Q^{\pi_{K+1}}_{\diamond, \tau}} &\leq \gamma \, C \, (1 - \eta \, \tau)^{K}  \\
\supnorm{\ln(\pi^*_{\diamond, \tau}) - \ln(\pi_{K+1})} &\leq \frac{2}{\tau} \, C \, (1 - \eta \, \tau)^{K} \\
\supnorm{V^{\pi^*_\tau}_{\diamond, \tau} - V^{\pi_{K+1}}_{\diamond, \tau}} &\leq (2 + \gamma) \, C \, (1 - \eta \, \tau)^{K}
\end{align*}
where $\pi^*_{\diamond, \tau} \coloneqq \argmax_\pi V^{\pi}_{\diamond, \tau}(\rho)$, $\pi_0$ is the initial policy, and $C=\|Q^{\pi^*_{\diamond, \tau}}_{\diamond, \tau} - Q^{\pi_0}_{\diamond, \tau}\|_\infty + 2 \tau (1-\tfrac{\eta \tau}{1 - \gamma}) \supnorm{\ln(\pi^*_{\diamond, \tau}) - \ln (\pi_0)}$.
\end{restatable}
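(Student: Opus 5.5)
This is the linear convergence result of \citet[Theorem 1]{cen2022fast}, specialized to $E=0$. I would reproduce their argument in the $\sup$-norm. The main object is the log-policy recursion.

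First, rewrite the update in~\cref{eq:entropy_npg_update}. Set $\alpha \coloneqq 1-\frac{\eta\tau}{1-\gamma} \in [0,1)$ and $\xi_k \coloneqq \tau\ln\pi_k$. Then, up to a state-dependent normalization,
\begin{equation*}
\xi_{k+1} = \alpha\,\xi_k + (1-\alpha)\,Q^{\pi_k}_{\diamond,\tau}.
\end{equation*}
The optimal policy satisfies $\xi^* = \tau\ln\pi^*_{\diamond,\tau} = Q^{*}_{\diamond,\tau} - \tau\ln Z^*$. Hence it is a fixed point of the same map, also up to normalization. I would work with the unnormalized iterates $\tilde\xi_k$, defined through the same affine recursion without normalizing, so that $\tilde\xi_{k+1}-\xi^* = \alpha(\tilde\xi_k-\xi^*) + (1-\alpha)(Q^{\pi_k}-Q^*)$ holds pointwise. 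Normalization only shifts by state-dependent constants, which are controlled in the same way as in~\cref{proposition:pistarl_lipschitz}.

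Second, I would control $Q^*-Q^{\pi_{k+1}}$ by $\gamma$ times a soft-Bellman-type contraction of the previous errors. This uses two facts, the monotonic improvement $Q^{\pi_{k+1}}\ge Q^{\pi_k}$ and the one-step bound
\begin{equation*}
\|Q^*-Q^{\pi_{k+1}}\|_\infty \le \gamma\big[\alpha\,\|\tilde\xi_k-\xi^*\|_\infty\text{-type term} + (1-\alpha)\|Q^*-Q^{\pi_k}\|_\infty\big].
\end{equation*}
Both come from writing $Q^{\pi_{k+1}}$ through the soft Bellman operator of $\pi_{k+1}$ and comparing it with $\tau\ln\sum_a\exp(\tilde\xi_{k+1}/\tau)$.

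Combining the two recursions, the potential $\|Q^*-Q^{\pi_k}\|_\infty + 2\tau\alpha\|\ln\pi^*-\ln\pi_k\|_\infty$ contracts by the factor $\max\{\alpha,\gamma\}\cdot$(something), and this yields the rate $(1-\eta\tau)^k$ with the constant $C$. This is the step I expect to be the main obstacle. Getting exactly $1-\eta\tau$, rather than a weaker factor, requires the careful convex-combination bookkeeping of Cen et al.\ (their Lemma 1 and inequalities (32)--(35)), and I would follow it line by line.

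The remaining claims then follow directly. The log-policy bound follows from $\pi_{K+1}=\sigma^*_\tau(\tilde\xi_{K+1})$ and~\cref{eq:sup_to_sup_lipschitz}, which gives $\|\ln\pi^*-\ln\pi_{K+1}\|_\infty \le \frac{2}{\tau}\|\tilde\xi_{K+1}-\xi^*\|_\infty\cdot\tau^{-1}\tau$, and this is at most $\frac2\tau C(1-\eta\tau)^K$. The value bound is $V^*-V^{\pi_{K+1}} = \langle\pi^*,Q^*\rangle+\tau\mathcal H(\pi^*)-\langle\pi_{K+1},Q^{\pi_{K+1}}\rangle-\tau\mathcal H(\pi_{K+1})$. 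I would split it into a $Q$-gap term, bounded by $\gamma C(1-\eta\tau)^K$, and a policy term. The policy term is bounded by the log-policy bound via the identity $V^*(s)=\tau\ln\sum_a\exp(Q^*(s,a)/\tau)$ and a Lipschitz estimate of log-sum-exp, giving at most $2C(1-\eta\tau)^K$. The total is the stated $(2+\gamma)C(1-\eta\tau)^K$.
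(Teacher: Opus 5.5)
Your route is the paper's: the paper gives no proof here and attributes the lemma to Theorem~1 of \citet{cen2022fast}. Your final two steps are fine; for the value bound, the policy term is exactly $\tau\,\KL(\pi_{K+1}(\cdot|s)\,\|\,\pi^*_{\diamond,\tau}(\cdot|s))\le\tau\supnorm{\ln\pi^*_{\diamond,\tau}-\ln\pi_{K+1}}$. However, the core step does not go through as you sketch it.

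First, with $\xi^*=\tau\ln\pi^*_{\diamond,\tau}$ your pointwise identity $\tilde\xi_{k+1}-\xi^*=\alpha(\tilde\xi_k-\xi^*)+(1-\alpha)(Q^{\pi_k}-Q^*)$ is off by $(1-\alpha)V^*(s)$. The fixed point of the unnormalized map is $Q^*=\tau\ln\pi^*_{\diamond,\tau}+V^*$. To obtain the constant $C$ you must also start from $\tilde\xi_0=\tau\ln\pi_0+V^*$ (Cen et al.'s $\|e^{Q^*(s,\cdot)/\tau}\|_1\,\pi_0(\cdot|s)$), so that $u_0\coloneqq\supnorm{Q^*-\tilde\xi_0}=\tau\supnorm{\ln\pi^*_{\diamond,\tau}-\ln\pi_0}$.

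Second, and more seriously, your one-step bound $x_{k+1}\le\gamma[\alpha u_k+(1-\alpha)x_k]$ is false, where $x_k\coloneqq\supnorm{Q^*-Q^{\pi_k}}$ and $u_k\coloneqq\supnorm{Q^*-\tilde\xi_k}$. Take a single self-looping state with two actions, constant reward, $\gamma=0.9$ and $\pi_0=(0.9,0.1)$, and let $\eta\to0$. Then $x_1\to x_0=\frac{\gamma\tau}{1-\gamma}\KL(\pi_0\|\pi^*)\approx3.3\,\tau$, while the right-hand side tends to $\gamma u_0\approx1.45\,\tau$. For the same reason, no per-step contraction of $x_k+2\alpha u_k$ is available. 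Note also that $1-\eta\tau=\alpha+(1-\alpha)\gamma\ge\max\{\alpha,\gamma\}$, so a contraction ``by $\max\{\alpha,\gamma\}$'' would be too strong.

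The missing idea is to telescope monotone improvement back to the initialization rather than use it for a single step. Write $V^{\pi_{k+1}}(s)=\tau\ln\sum_a e^{\tilde\xi_{k+1}(s,a)/\tau}+\langle\pi_{k+1}(\cdot|s),(Q^{\pi_{k+1}}-\tilde\xi_{k+1})(s,\cdot)\rangle$. Then $Q^{\pi_{k+1}}\ge Q^{\pi_k}$ gives, pointwise,
\[
\tilde\xi_{k+1}-Q^{\pi_{k+1}}\le\alpha(\tilde\xi_k-Q^{\pi_k})\le\dots\le\alpha^{k+1}(\tilde\xi_0-Q^{\pi_0}).
\]
Since $V^*(s)=\tau\ln\sum_a e^{Q^*(s,a)/\tau}$ and log-sum-exp is $1$-Lipschitz in $\supnorm{\cdot}$, this yields $V^*(s)-V^{\pi_{k+1}}(s)\le u_{k+1}+\alpha^{k+1}(x_0+u_0)$, and hence
\begin{equation*}
x_{k+1}\le\gamma\,u_{k+1}+\gamma\,\alpha^{k+1}(x_0+u_0),\qquad u_{k+1}\le\alpha\,u_k+(1-\alpha)\,x_k .
\end{equation*}
After substitution this is a rank-one system, whose nonzero eigenvalue is $\rho\coloneqq1-\eta\tau$, plus a transient. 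With $C=x_0+2\alpha u_0$, induction gives $u_{k+1}\le\rho^kC-\alpha^{k+1}(x_0+u_0)$ and $x_{k+1}\le\gamma\rho^kC$. The base case holds with equality because $\alpha u_0+(1-\alpha)x_0=C-\alpha(x_0+u_0)$, and the inductive step uses $\alpha+(1-\alpha)\gamma=\rho$. So the $2\alpha$ in $C$ comes from the transient, not from a contracting potential. Your log-policy bound then follows from $u_{K+1}\le C\rho^K$ and the factor-$2$ normalization estimate, as you planned.
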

\subsection{Inexact Setting}\label{appendix:inexact_npg}
At iteration $k \geq 0$, the \emph{inexact} entropy-regularized \texttt{NPG} update~\cite{cen2022fast} is given by:
\begin{equation}\label{eq:inexact_npg_update_appendix}
    \pi_{k+1}(\cdot | s) \propto (\pi_k(\cdot | s))^{1 - \frac{\eta \, \tau}{1 - \gamma}} \exp\parens*{\frac{\eta \, \hat{Q}^{\pi_k}_{\diamond, \tau}(s, \cdot)}{1 - \gamma}}
\end{equation}
where $\hat{Q}^{\pi_k}_{\diamond, \tau} \in \R^{SA}$ is the estimated soft Q-function of policy $\pi_k$. 

\inexactentropynpgsupnormconvergencelemma*
\section{External Results}
For completeness, we provide external results below.
\propositiondualgradient*
\begin{proof}
A proof of this fact may be found in~\cite[Proposition 3.6]{ying2022dual}.
\end{proof}

\begin{lemma}\label{lemma:entropy_saom_strictly_concave}
\cite[Lemma 3.1]{ho2016generative}
The function $H(\saom) = - \sum_{s, a} \saom(s,a)\ln\parens*{\frac{\saom(s, a)}{\sum_{a'} \saom(s, a')}}$ is strictly concave. Moreover for all $\pi$ and its corresponding unnormalized state-action occupancy measure $\saom$, we have $\gH(\pi, \rho) = H(\saom)$.
\end{lemma}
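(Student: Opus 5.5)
The plan is to treat the two claims separately, with the identity $\gH(\pi,\rho)=H(\saom)$ proved first because it fixes the domain on which strict concavity should be read. Throughout, let $\saom(s,a)=\sum_{s_0}\rho(s_0)\sum_{h\ge0}\gamma^h\Pr^\pi[s_h=s,a_h=a\mid s_0]$ be the unnormalized occupancy measure, and $\saom(s)\coloneqq\sum_{a'}\saom(s,a')$ its state marginal.

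\textbf{Identity.} Since $\Pr^\pi[s_h=s,a_h=a]=\Pr^\pi[s_h=s]\,\pi(a|s)$, we get $\saom(s,a)=\saom(s)\,\pi(a|s)$. Hence $\saom(s,a)/\saom(s)=\pi(a|s)$ whenever $\saom(s)>0$. States with $\saom(s)=0$ contribute nothing, using the convention $0\ln 0=0$. Substituting gives
\[
H(\saom)=\sum_s\saom(s)\,\gH(\pi(\cdot|s))=\E\Big[\sum_{h\ge0}\gamma^h\gH(\pi(\cdot|s_h))\Big],
\]
which is $\gH(\pi,\rho)$ by the definition of the discounted entropy.

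\textbf{Concavity.} For each $s$, write $-\sum_a\saom(s,a)\ln\frac{\saom(s,a)}{\sum_{a'}\saom(s,a')}$ as $-\sum_a f\big(\saom(s,a),\saom(s)\big)$, where $f(x,y)=x\ln(x/y)$. The function $f$ is the perspective of $x\ln x$, so it is jointly convex on $\R_+^2$. Since $\saom(s)$ is linear in $\saom$, each term is a convex function of $\saom$ composed with a linear map. Summing over $s$ shows that $H$ is concave.

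\textbf{Strict concavity.} This is the main obstacle. On all of $\R_+^{SA}$, $H$ is positively homogeneous of degree one, so it is linear along rays and cannot be strictly concave there. The statement should therefore be read on the set of valid occupancy measures, i.e.\ the affine slice cut out by the Bellman flow constraints, which I will argue on explicitly. Take two distinct occupancy measures $\saom_1\neq\saom_2$ and $\alpha\in(0,1)$. The convexity inequality for $f$ is the log-sum inequality applied to the two-term mixture. Its equality case requires $\saom_1(s,a)/\saom_1(s)=\saom_2(s,a)/\saom_2(s)$ for every $s$ with positive mass. Taking $\saom_1(s)>0$ for all $s$ (full-support states, as in Ho--Ermon), this says $\pi_1=\pi_2$. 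But the map from policies to occupancy measures is one-to-one on such states: the occupancy of $\pi$ is determined by solving the flow equations with that $\pi$. So $\pi_1=\pi_2$ forces $\saom_1=\saom_2$, a contradiction. Hence equality fails and $H(\alpha\saom_1+(1-\alpha)\saom_2)>\alpha H(\saom_1)+(1-\alpha)H(\saom_2)$.

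For this step I would cite \cite[Lemma 3.1]{ho2016generative} for the policy--occupancy bijection and its handling of zero-mass states, rather than re-deriving it.
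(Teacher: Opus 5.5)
Your route matches the standard argument behind \cite[Lemma 3.1]{ho2016generative}, which the paper only cites without proof. It has three parts: the identity from $\mu^\pi(s,a)=\mu^\pi(s)\,\pi(a\mid s)$, concavity via the perspective of $x\ln x$, and strictness via the equality case of the log-sum inequality plus uniqueness of solutions to the Bellman flow equations. Your remark that $H$ is positively homogeneous of degree one is correct. So strict concavity can only hold on the flow set $\gK$, not on $\R_+^{SA}$, and that is the right way to read the statement.

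\textbf{The gap: strictness without full support.} Restricting to measures with $\mu_i(s)>0$ for all $s$ amounts to $\min_s\rho(s)>0$, since $\mu(s)\geq\rho(s)$ on $\gK$. That is exactly the coverage assumption this paper is built to avoid. Deferring the remaining case to Ho--Ermon does not close it, because a policy is determined by its occupancy measure only up to its behaviour on zero-occupancy states.

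In general, set $S_i\coloneqq\{s:\mu_i(s)>0\}$. Equality in the log-sum inequality forces $\pi_1(\cdot\mid s)=\pi_2(\cdot\mid s)$ only for $s\in S_1\cap S_2$; at states charged by one measure alone, equality holds automatically. So you cannot conclude $\pi_1=\pi_2$ everywhere.

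\textbf{The fix.} Glue the two policies: let $\pi\coloneqq\pi_1$ on $S_1$, $\pi\coloneqq\pi_2$ on $S_2\setminus S_1$, and arbitrary elsewhere.
\begin{itemize}
\item Then $\mu_i(s,a)=\mu_i(s)\,\pi(a\mid s)$ for all $(s,a)$ and $i=1,2$. On $S_1\cap S_2$ this holds because the two policies agree there; off $S_i$ both sides vanish.
\item Hence both state marginals solve $\nu(s')=\rho(s')+\gamma\sum_{s,a}\cP(s'\mid s,a)\,\pi(a\mid s)\,\nu(s)$. This linear system has a unique solution since $\gamma<1$.
\item Therefore $\mu_1=\mu_2$, contradicting $\mu_1\neq\mu_2$, and the inequality is strict.
\end{itemize}

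This also pins down the property your argument actually uses. It is not injectivity of $\pi\mapsto\mu^\pi$, which is the converse direction. It is that every element of $\gK$ solves the flow equations of a policy consistent with it, and that solution is unique.

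For the paper's only use of this lemma, the strong duality argument in \cref{lemma:strong_duality}, plain concavity already suffices.
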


\begin{lemma}\label{lemma:strong_duality}
Under~\cref{assumption:slater_condition}, strong duality holds for~\cref{problem:soft_max_min_lag}, i.e.,
\begin{equation*}
\max_{\pi} \min_{\lambda \in \R^m_+} \lag(\pi, \lambda) = \min_{\lambda \in \R^m_+}\max_{\pi}  \lag(\pi, \lambda)
\end{equation*}
\end{lemma}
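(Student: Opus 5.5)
The plan is to derive strong duality for \cref{problem:soft_max_min_lag} by passing to state-action occupancy measures. In that space the soft Lagrangian is concave in the primal variable and the constraints are linear, so the problem becomes a convex program. Slater's condition then gives zero duality gap and attainment of the dual optimum, by standard convex duality \citep{boyd2004convex}.

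\textbf{Step 1: reformulate over occupancy measures.} For each policy $\pi$, let $\saom$ be its unnormalized state-action occupancy measure under $\rho$. I would use the following standard facts:
\begin{itemize}
\item the map $\pi\mapsto\saom$ is a bijection between (stationary) policies and the polytope
\begin{equation*}
\mathcal{M}=\Big\{\mu\ge 0:\ \sum_a\mu(s,a)=\rho(s)+\gamma\sum_{s',a'}P(s\mid s',a')\mu(s',a')\ \ \forall s\Big\};
\end{equation*}
\item the inverse map is $\pi(a\mid s)=\mu(s,a)/\sum_{a'}\mu(s,a')$, on states with positive mass;
\item value functions are linear in the occupancy measure: $V^\pi_\diamond(\rho)=\dpd{\saom,\diamond}$.
\end{itemize}
By \cref{lemma:entropy_saom_strictly_concave}, the discounted entropy satisfies $\gH(\pi,\rho)=H(\saom)$, and $H$ is strictly concave. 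Hence
\begin{equation*}
\lag(\pi,\lambda)=\dpd{\mu,r}+\tau H(\mu)+\sum_i\lambda_i\big(\dpd{\mu,c_i}-b_i\big)=:\widetilde{\cL}(\mu,\lambda).
\end{equation*}
This function is concave in $\mu$ on the compact convex set $\mathcal{M}$ and affine in $\lambda$. Both sides of the claimed identity are therefore unchanged if $\max_\pi$ is replaced by $\max_{\mu\in\mathcal{M}}$.

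\textbf{Step 2: apply convex duality.} The problem $\max_{\mu\in\mathcal{M}}\{\dpd{\mu,r}+\tau H(\mu)\}$ subject to $\dpd{\mu,c_i}\ge b_i$ for all $i$ has the following structure:
\begin{itemize}
\item a concave objective, continuous on the compact set $\mathcal{M}$ (since $x\ln x$ extends continuously to $x=0$);
\item finitely many affine inequality constraints.
\end{itemize}
\cref{assumption:slater_condition} provides $\bar\mu=\mu^{\bar\pi}\in\mathcal{M}$ that satisfies every constraint strictly. Slater's theorem for convex programs over a convex domain then gives
\begin{equation*}
\max_{\mu\in\mathcal{M}}\min_{\lambda\ge0}\widetilde{\cL}=\min_{\lambda\ge0}\max_{\mu\in\mathcal{M}}\widetilde{\cL},
\end{equation*}
with the outer minimum attained. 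Alternatively, I could invoke Sion's minimax theorem: $\mathcal{M}$ is compact convex, $\widetilde{\cL}$ is concave and upper semicontinuous in $\mu$, and it is convex and continuous in $\lambda$. Sion yields equality with $\sup/\inf$. Attainment of the infimum over $\lambda$ then follows from the boundedness argument of \cref{lemma:optimal_dual_bounded}, which restricts $\lambda$ to a compact set.

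\textbf{Step 3: translate back to policies.} Using the bijection from Step 1 and the equality $\lag(\pi,\lambda)=\widetilde{\cL}(\saom,\lambda)$, both sides of the minimax identity over $\mu$ become the corresponding expressions over $\pi$, which is the claim.

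The main obstacle is the technical part of Step 1: the bijection and the identity $\gH(\pi,\rho)=H(\saom)$ on states that are never visited. On such states the policy is not determined by $\saom$. These states contribute nothing to either $\lag$ or $H$, however, so any extension of the policy there gives the same Lagrangian value. I would state this explicitly rather than prove the occupancy-measure correspondence in full.
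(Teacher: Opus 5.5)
Your proposal is correct and follows essentially the same route as the paper: you rewrite the soft Lagrangian over occupancy measures using the concavity of $H$ from \cref{lemma:entropy_saom_strictly_concave}, apply Slater's theorem to the resulting concave program with affine constraints, and map back to policies; your extra care about unvisited states and about keeping $\mu\in\mathcal{M}$ as an abstract set constraint only tightens the paper's argument. One caution on your optional Sion route: the paper's proof of \cref{lemma:optimal_dual_bounded} itself invokes strong duality, so citing it for attainment of the minimum over $\lambda$ is circular; use instead the direct coercivity bound $\cD(\lambda)\ge V^{\bar\pi}_{r,\tau}(\rho)+\sum_i\lambda_i\zeta_i$ (the same comparison with $\bar\pi$ as in \cref{lemma:reg_optimal_dual_bounded}), which bounds the sublevel sets of $\cD$ on $\R^m_+$ without any duality assumption.
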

\begin{proof}
A proof is provided in~\cite[Theorem 1]{paternain2022safe}. We provide an alternative proof using state-action occupancy measures.

For this, we define $\Pr^\pi[s_h = s, a_h = a \mid s_0]$ as the probability of visiting state $s$ and taking action $a$ at step $h$ when starting at the initial state $s_0$ and the unnormalized \textit{state-action occupancy measure} $\mu^\pi(s, a) \coloneq \sum_{s_0} \rho(s_0) \, \sum_{h =0}^\infty \gamma^h \, \Pr^\pi[s_h = s, a_h = a \mid s_0]$. Note for any $\saom$, there exists a unique one-to-one mapping between $\saom$ and $\pi$:
\begin{equation}\label{eq:one_one_saom_pi_main}
   \pi(a \mid s) = \frac{\mu^{\pi}(s, a)}{\sum_{a' \in \cA}\mu^{\pi}(s, a')} \quad \forall (s, a) \in \cS\times\cA.
\end{equation}
Consequently, the objective can be expressed directly in terms of the occupancy measure $\saom$, i.e.,  $V^{\pi}_r(\rho) = \dpd{\saom, r}$, and $V^{\pi}_{c_i}(\rho) = \dpd{\saom, c_i}$ for each constraint $i \in [m]$.
Additionally, by~\cref{lemma:entropy_saom_strictly_concave}, the entropy regularizer admits the equivalent form:
\begin{equation*}
   \gH(\pi, \rho) = H(\saom) \coloneqq - \sum_{s, a} \saom(s, a) \log \frac{\saom(s, a)}{\sum_{a'} \saom(s, a')} 
\end{equation*}
and that $H(\saom)$ is concave w.r.t $\saom$. Hence,~\cref{problem:soft_max_min_lag} is equivalent to the following optimization problem over occupancy measures~\cite{altman2021constrained}:
\begin{equation}\label{eq:soft_cmdp_problem_lp}
  \max_{\saom \in \gK} \dpd{\saom, r} + \tau \, H(\saom) \quad \mathrm{ s.t. } \quad \dpd{\saom , c_i} \geq b_i\, \quad \forall i \in [m],
\end{equation}
\noindent where
\begin{equation*}
    \gK \coloneq \braces*{\saom \in \R^{SA}_+ \mid \forall s' \in \cS, \sum_{a' \in \cA} \saom(s', a') = \rho(s') + \gamma \sum_{s \in \cS} \sum_{a \in \cA} \cP(s' | s, a) \, \saom(s, a) }
\end{equation*}
represents the set of all feasible state-action occupancy measures $\saom$ satisfying the Bellman flow constraints. 
As a result~\cref{eq:soft_cmdp_problem_lp} is a concave maximization problem over a convex set with linear constraints.

By~\cref{assumption:slater_condition}, there exists a policy $\bar{\pi}$ such that
\[
V^{\bar{\pi}}_{c_i}(\rho) > b_i, \quad \forall i \in [m],
\]
Using the one-to-one mapping between $\pi$ and $\saom$ in~\cref{eq:one_one_saom_pi_main}, the state-action occupancy measure $\mu^{\bar{\pi}}$ satisfies
\[
\dpd{\mu^{\bar{\pi}}, c_i} > b_i , \quad \forall i \in [m].
\]
which implies that Slater's condition holds for~\cref{eq:soft_cmdp_problem_lp} Hence, strong duality holds~\cite[Section 5.2.3]{boyd2004convex}, i.e., \begin{equation*}
\max_{\saom} \min_{\lambda \in \R^m_+} F_\tau(\saom, \lambda) = \min_{\lambda \in \R^m_+}\max_{\saom}  F_\tau(\saom, \lambda)
\end{equation*}
where $F_\tau(\saom, \lambda) \coloneqq \dpd{\saom, r} + \tau \, H(\saom) + \sum_{i=1}^m \lambda_i [\dpd{\saom, c_i} - b_i]$ is the Lagrangian of~\cref{eq:soft_cmdp_problem_lp}.
Finally, using the one-to-one correspondence between $\saom$ and $\pi$ in~\cref{eq:one_one_saom_pi_main}, we have for any $\saom$ and $\lambda$ that
\[
F_\tau(\mu^\pi, \lambda) = \lag(\pi, \lambda),
\]
yielding
\begin{equation*}
\max_{\pi} \min_{\lambda \in \R^m_+} \lag(\pi, \lambda)
=
\min_{\lambda \in \R^m_+} \max_{\pi} \lag(\pi, \lambda),
\end{equation*}
which completes the proof.
\end{proof}

\begin{lemma}\label{lemma:soft_suboptimality}
\cite[Lemma 26]{mei2020global}
For any bounded pseudo-reward function $\diamond \in [0, U]^{SA}$, $U > 0$ and $\tau > 0$, let $\pi' \coloneqq \argmax_{\pi} V^{\pi}_{\diamond, \tau}(\rho)$.
Then for any policy $\pi$,
\begin{equation*}
   V^{\pi'}_{\diamond, \tau}(\rho) - V^{\pi}_{\diamond, \tau}(\rho)  = \frac{\tau}{1 - \gamma} \sum_{s \in \cS} d^{\pi}(s) \,  \KL(\pi(\cdot | s) \| \pi'(\cdot | s)) \, .
\end{equation*}
\end{lemma}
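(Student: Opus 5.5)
We plan to prove the identity by a soft analogue of the performance difference lemma. The idea is to express the per-state gap between $V^{\pi'}_{\diamond,\tau}$ and $V^{\pi}_{\diamond,\tau}$ as a KL term plus a discounted gap at the next state, and then unroll this recursion along trajectories of $\pi$.

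\textbf{Step 1 (Gibbs form of $\pi'$).} We use the standard characterization of the optimal entropy-regularized policy~\citep{nachum2017bridging,geist2019theory}. Write $Q' \coloneqq Q^{\pi'}_{\diamond,\tau}$ and $V' \coloneqq V^{\pi'}_{\diamond,\tau}$. Then $\pi'(\cdot|s)=\sigma^*_\tau(Q'(s,\cdot))$, so $V'(s)=\tau\ln\sum_a\exp(Q'(s,a)/\tau)$, and $Q'(s,a)=\diamond(s,a)+\gamma\,\E_{s'\sim\cP(\cdot|s,a)}[V'(s')]$. Equivalently, for every action,
\begin{equation*}
\tau\ln\pi'(a|s)=Q'(s,a)-V'(s).
\end{equation*}
This is the only structural input. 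It follows from the soft Bellman optimality equation, i.e.\ the fixed point of the operator $\cT_{\diamond,\tau}$ used in the proof of \cref{lemma:qstar_lipschitz}.

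\textbf{Step 2 (one-step identity).} Fix any policy $\pi$ and state $s$. By the definition of the soft value, $V^{\pi}_{\diamond,\tau}(s)=\E_{a\sim\pi(\cdot|s)}\bigl[\diamond(s,a)-\tau\ln\pi(a|s)+\gamma\,\E_{s'}V^{\pi}_{\diamond,\tau}(s')\bigr]$. Taking $\E_{a\sim\pi(\cdot|s)}$ of the identity in Step 1 gives $\E_{a\sim\pi}[Q'(s,a)-\tau\ln\pi(a|s)] = V'(s)-\tau\,\KL(\pi(\cdot|s)\|\pi'(\cdot|s))$. Substituting $Q'=\diamond+\gamma\E V'$ and subtracting the two expressions yields
\begin{equation*}
V'(s)-V^{\pi}_{\diamond,\tau}(s)=\tau\,\KL(\pi(\cdot|s)\|\pi'(\cdot|s))+\gamma\,\E_{a\sim\pi(\cdot|s),\,s'\sim\cP(\cdot|s,a)}\bigl[V'(s')-V^{\pi}_{\diamond,\tau}(s')\bigr].
\end{equation*}

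\textbf{Step 3 (unrolling).} We iterate this identity $H$ times from $s_0\sim\rho$ along trajectories generated by $\pi$. The remainder term is $\gamma^H$ times a difference of soft values, which are bounded by $(U+\tau\ln A)/(1-\gamma)$, so it vanishes as $H\to\infty$. This leaves
\begin{equation*}
V'(\rho)-V^{\pi}_{\diamond,\tau}(\rho)=\tau\sum_h\gamma^h\,\E\bigl[\KL(\pi(\cdot|s_h)\|\pi'(\cdot|s_h))\bigr].
\end{equation*}
Rewriting with the normalized discounted state distribution $d^{\pi}(s)=(1-\gamma)\sum_h\gamma^h\Pr^\pi[s_h=s]$ produces the factor $\frac{1}{1-\gamma}$ in the statement.

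The only delicate point is Step 1, namely that the maximizer of $V^{\pi}_{\diamond,\tau}(\rho)$ is the Gibbs policy with respect to its own soft $Q$-function, simultaneously at all states. If we need to justify this rather than cite it, we would take the fixed point $Q^\star$ of the $\gamma$-contraction $\cT_{\diamond,\tau}$ and define the Gibbs policy from it. Steps 2--3 applied to this candidate in place of $\pi'$ then show that its value minus the value of any $\pi$ equals a nonnegative KL sum, so it is optimal. Uniqueness of the maximizer then identifies it with $\pi'$.
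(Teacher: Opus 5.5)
Your proposal is correct and follows the same argument as the source the paper relies on. The paper gives no proof of its own and imports \citet[Lemma 26]{mei2020global}, whose proof is exactly this per-state soft performance-difference identity (via $\tau\ln\pi'(a|s)=Q'(s,a)-V'(s)$), unrolled along trajectories of $\pi$.

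One nit concerns your fallback paragraph. Without full support of $\rho$, the maximizer of $V^{\pi}_{\diamond,\tau}(\rho)$ is unique only on states reachable from $\rho$, so the appeal to uniqueness is not literally valid. This is harmless for three reasons. Any maximizer has zero KL to the Gibbs policy on $\mathrm{supp}(d^{\pi'})$. That set is therefore closed under every action, since the Gibbs policy has full action support. Hence it contains $\mathrm{supp}(d^{\pi})$ for every $\pi$, which is all the identity ever sees.
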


\begin{lemma}[Hoeffding's Lemma]\label{lemma:hoeffding_lemma}
Let $X$ be any real-valued random variable, almost surely, such that $a \leq X \leq b$, then for any $y \in \R$,
\begin{equation*}
    \E[\exp(y (X - \E[X]))] \leq \exp\parens*{\frac{y^2 \, (b - a)^2}{8}} \, .
\end{equation*}
\end{lemma}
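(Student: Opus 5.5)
The statement to prove is Hoeffding's Lemma: for $a \le X \le b$ almost surely and any $y \in \R$, $\E[\exp(y(X-\E X))] \le \exp(y^2(b-a)^2/8)$. I would give the standard convexity-plus-cumulant argument.

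\textbf{Centering.} Set $Z = X - \E[X]$. Then $Z$ takes values in $[a',b'] = [a-\E X,\, b-\E X]$, with $a' \le 0 \le b'$, $b'-a' = b-a$, and $\E Z = 0$. If $a' = b'$ then $Z=0$ almost surely and the bound is trivial, so I would assume $a'<b'$.

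\textbf{Convexity step.} Since $z \mapsto e^{yz}$ is convex, every $z \in [a',b']$ satisfies
\[
e^{yz} \le \frac{b'-z}{b'-a'}\,e^{ya'} + \frac{z-a'}{b'-a'}\,e^{yb'}.
\]
Taking expectations and using $\E Z = 0$ gives
\[
\E e^{yZ} \le \frac{b'}{b'-a'}\,e^{ya'} - \frac{a'}{b'-a'}\,e^{yb'}.
\]

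\textbf{Reparametrization.} Write $p = -a'/(b'-a') \in [0,1]$ and $u = y(b'-a')$. The right-hand side becomes $e^{\varphi(u)}$, where
\[
\varphi(u) = -pu + \ln\bigl(1-p+pe^{u}\bigr).
\]
This function satisfies $\varphi(0)=0$. Its derivative is $\varphi'(u) = -p + \frac{pe^u}{1-p+pe^u}$, so $\varphi'(0)=0$.

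\textbf{Second derivative (the main obstacle).} The second derivative is $\varphi''(u) = q(1-q)$, where $q = \frac{pe^u}{1-p+pe^u} \in [0,1]$. Hence $\varphi''(u) \le 1/4$. Taylor's theorem with Lagrange remainder then gives $\varphi(u) \le u^2/8$. Substituting $u = y(b-a)$ yields $\E e^{yZ} \le \exp(y^2(b-a)^2/8)$, which is the claim.
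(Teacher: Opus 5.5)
Your proof is correct and complete. It is the standard argument: bound $e^{yz}$ by its chord on $[a',b']$, use $\E Z = 0$ to reduce to $e^{\varphi(u)}$ with $\varphi(u) = -pu + \ln(1-p+pe^{u})$, and conclude $\varphi(u) \le u^2/8$ from $\varphi(0)=\varphi'(0)=0$ and $\varphi'' = q(1-q) \le 1/4$. The paper gives no proof of this lemma; it quotes Hoeffding's Lemma as a classical external result, applied only with $y=-1$ in \cref{proposition:kl_sq_bound}. Your argument therefore supplies exactly what the paper assumes, and it covers every real $y$, including that case.
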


\end{document}